\documentclass{article}
\usepackage[preprint]{neurips_2024}

\usepackage{amsmath,amsthm}
\newtheorem{theorem}{Theorem}
\newtheorem{definition}[theorem]{Definition}
\newtheorem{example}[theorem]{Example}
\newtheorem{observation}[theorem]{Observation}
\newtheorem{corollary}[theorem]{Corollary}
\newtheorem{proposition}[theorem]{Proposition}
\newtheorem{attempt}[theorem]{Attempt}

\usepackage{amsfonts}
\usepackage{amssymb}
\usepackage{bbm}
\usepackage{algorithm}
\usepackage{algpseudocode}

\usepackage[font=small]{caption}
\usepackage{float}  
\usepackage{placeins}  
\usepackage[dvipsnames,table]{xcolor}

\usepackage{tikz}
\usetikzlibrary{bayesnet}

\usetikzlibrary{shapes,decorations,arrows,calc,arrows.meta,fit,positioning}
\tikzset{
    -Latex,auto,node distance =0.5 cm and 0.5 cm,semithick,
    state/.style ={ellipse, draw, minimum width = 0.7 cm},
    point/.style = {circle, draw, inner sep=0.04cm,fill,node contents={}},
    bidirected/.style={Latex-Latex,dashed},
    el/.style = {inner sep=2pt, align=left, sloped}
}

\usepackage[colorinlistoftodos]{todonotes}
\presetkeys{todonotes}{inline}{}
\usepackage{url}
\usepackage{booktabs}
\usepackage{rotating}
\usepackage{enumitem}
\usepackage{siunitx}
\usepackage{xspace}
\tikzset{%
  zeroarrow/.style = {-stealth,dashed},
  onearrow/.style = {-stealth,solid},
  c/.style = {circle,draw,solid,minimum width=2em,
        minimum height=2em},
  r/.style = {rectangle,draw,solid,minimum width=2em,
        minimum height=2em}
}

\makeatletter
\DeclareRobustCommand{\varname}[1]{\ensuremath{\begingroup\newmcodes@\mathit{#1}\endgroup}}
\makeatother

\newcommand{\pn}{\ensuremath{\mathit{PN}}}
\newcommand{\ps}{\ensuremath{\mathit{PS}}}
\newcommand{\pns}{\ensuremath{\mathit{PNS}}}

\definecolor{petroil2} {RGB} {36, 165, 175}
\definecolor{gold2} {RGB} {255, 130, 0}

\usepackage{pifont}
\usepackage{makecell}
\definecolor{pcihl}{RGB}{230,241,251}
\definecolor{okgreen}{RGB}{29,126,88}
\definecolor{nogray}{RGB}{150,148,140}
\definecolor{partialamber}{RGB}{186,117,23}
\definecolor{pciblue}{RGB}{24,95,165}
\newcommand{\yes}{\textcolor{okgreen}{\ding{51}}}
\newcommand{\no}{\textcolor{nogray}{\ding{55}}}
\newcommand{\partialm}{\textcolor{partialamber}{$\sim$}}
\newcommand{\na}{\textcolor{nogray}{--}}
\newcommand{\pciyes}{\textcolor{pciblue}{\ding{51}}}

\usepackage{listings}

\definecolor{strings}{rgb}{.624,.251,.259}
\definecolor{keywords}{rgb}{.224,.451,.686}
\definecolor{comment}{rgb}{.322,.451,.322}

\lstdefinelanguage{python}{
  morekeywords={from, import, as, for, in, while, def, return, 
  =, +, -, /, *, lambda},
  keywords=[3]{sample, param, module, Marginal, Posterior, Trace, Poutine, Distribution},
  morecomment=[l]{\#},
  morecomment=[s]{"""}{"""},
  morestring=[b]',
  morestring=[b]",
  alsoletter={<>=-+/*},
  sensitive=true
}

\usepackage[most]{tcolorbox}
\newtcolorbox{docode}[1][]{
    colback=gray!20,        
    colframe=black,         
    width=0.9\textwidth,    
    arc=4mm,                
    boxrule=0.5pt,          
    left=2em,               
    enhanced,
    fontupper=\ttfamily,    
    #1                      
}

\renewcommand{\texttt}[1]{\lstinline[basicstyle=\fontsize{8pt}{8.25pt}\selectfont\ttfamily]{#1}}

\newcommand{\ourapproach}{PCI\xspace}
\newcommand{\Ourapproach}{PCI\xspace}
\newcommand{\dom}{\operatorname{dom}\,}
\usepackage{hyperref}
\usepackage[capitalise]{cleveref}

\hypersetup{
    colorlinks=true,      
    linkcolor=blue,       
    urlcolor=blue,        
    citecolor=blue,       
    filecolor=blue        
}

\newcommand{\nbsite}{https://rfl-urbaniak.github.io/explainable\_paper}
\newcommand{\nb}[1]{\href{\nbsite/#1.html}{\texttt{#1}}}

\title{A Computationally Feasible Framework for Causal Probabilistic Explanation\thanks{\raggedright Code at \url{https://github.com/rfl-urbaniak/explainable_paper} and companion notebooks at \url{https://rfl-urbaniak.github.io/explainable_paper/}. Correspondence to \texttt{rfl.urbaniak@gmail.com} (\url{https://rfl-urbaniak.github.io/}).}}

\author{%
  Rafal Urbaniak \\
  Basis Research Institute
  \And
  Sam Witty \\
  Sorbus AI
  \And
  Daniel Waxman \\
  Basis Research Institute \\
  Massachusetts Institute of Technology
  \And
  Andy Zane \\
  Basis Research Institute \\
  University of Massachusetts Amherst
  \And
  Poorva Garg \\
  University of California, Los Angeles
  \And
  Emily Bunnapradist \\
  Basis Research Institute
  \And
  Sankaran Vaidyanathan \\
  University of Massachusetts Amherst
  \And
  Jack Feser \\
  Basis Research Institute
  \And
  Drew Lehe \\
  HouseIQ
  \And
  Eli Bingham \\
  Basis Research Institute
}

\begin{document}

\maketitle

\begin{abstract}%
  Explaining why a specific outcome occurred, and which inputs deserve the
  blame or credit, is central to philosophical, scientific, and policy analysis.
  Existing tools split into two camps. The theory of
  \emph{actual causality} (AC) gives principled verdicts, but only for toy-sized
  models, because computing them requires enumerating counterfactual scenarios.
  Scalable attribution methods like SHAP (or even causal SHAP) at least partially 
  ignore the causal structure that generated the data, and can give answers that conflict with a careful
  causal analysis. We close this gap with Probabilistic Causal Impact
  (\Ourapproach). 
  
  \ourapproach builds on actual causality and on Pearl's notions of probability
  of necessity and sufficiency, but recasts the question of explainability as an estimation problem on a probabilistic causal
  model that is easily approximated via Monte Carlo. By specifying a distribution over ``candidate explanations,'' a distribution over counterfactual values, and a scoring function, \ourapproach provides tractable, causally grounded, graded explanations, generalizing AC and Pearl's probability of causation as degenerate cases. 

  We evaluate \ourapproach in synthetic and real-world examples, spanning consistency checks with AC, scaling experiments, complex continuous-valued dynamical systems, and a real-world deployed causal machine learning model trained on millions of datapoints.

  \noindent\textbf{Keywords:} Probabilistic Causal Models, Explanation, Responsibility, Actual Causality, Probability of Necessity and Sufficiency.
\end{abstract}

\newpage
\begingroup
\renewcommand{\baselinestretch}{0.95}\small
\setlength{\parskip}{1pt}
\let\oldaddvspace\addvspace
\renewcommand{\addvspace}[1]{\oldaddvspace{0.4em}}
\setcounter{tocdepth}{2}
\tableofcontents
\let\addvspace\oldaddvspace
\endgroup
\newpage

\section{Introduction}
\label{sec:intro}

Causal attribution, figuring out which factors brought about an outcome and to what degree, matters wherever we act on a model's predictions: in lending,
healthcare, or automated decision systems, knowing \emph{why} something happened is
often as important as the prediction itself
\citep{Shepherd2024,CheshireMagrini2009,Papageorgiou2022,HeckmanPinto2022CausalPolicyAnalysis}.
The tools most machine learning practitioners reach for by default often mistake
predictive importance for causal responsibility, which makes their attributions
causally uninterpretable.

Consider the Old Boys' Club Bank (OBCB), whose policy is to check an applicant's
credit score only if they are male; everyone else is rejected outright. Alice, a
woman with bad credit, applies and is denied. Was she denied for her gender,
or her credit? Intuitively the answer is clear: the bank never even looked 
at her credit score, so it played no role in her denial. 
SHAP \citep{lundberg2017unified} and a range of other feature-attribution tools,
however, tend to split responsibility between gender and credit roughly evenly, because they
reason from correlation and ignore which causal pathway was actually active for
Alice specifically. Even standard ``but for'' tests fail here: hypothetically, if Alice were male her low score would have been checked, but still rejected; likewise, had Alice's credit score been higher, she still would have been rejected due to that score not being considered. A naive assessment concludes that, since neither hypothetical changed the outcome, \textit{neither} gender nor credit score is a cause. 

\emph{Actual causality} (AC), a line of work spanning philosophy and AI
\citep{actualCausalityHalpern}, gets this case right: it varies candidate causes while holding fixed certain aspects of the causal story, and assesses changes in outcomes. For Alice's case, it hypothetially varies gender while holding fixed that Alice did not, in fact, fail a credit check (she never received one). Had she been a man who did not fail a credit check, Alice would have received a loan (Examples \ref{example:bobatobcb}, \ref{example:aliceatobcb}).
This correctly names her gender, not her credit, as the \textit{actual} cause of her denial. 

Existing formalisations of actual causality, however, involve some combination of the following three limitations: they are
all-or-nothing, treating a variable as either a cause or not with no notion of
degree; they are computationally intractable beyond toy examples, because
verifying that \emph{some} witness set (a selection of variables held fixed at their actual values, our formal stand-in for a context) satisfies the definition requires an exhaustive search over exponentially many candidates;
and they were developed primarily for deterministic models.

This paper introduces \emph{probabilistic causal impact} (\Ourapproach), an
attribution method that returns a
graded score, scales to large stochastic models
without exhaustive search, and applies directly to
machine-learned causal models (neural, Gaussian-process, or otherwise probabilistic) that practitioners actually build. It still gets Alice's case right. At a high level,
\ourapproach asks two counterfactual questions per candidate cause, echoing Pearl's probabilities of necessity
and sufficiency: a \emph{sufficiency} question (would the outcome have persisted had we
changed nothing about a given feature?) and a \emph{necessity} question (would it have changed had we
intervened on it?). \ourapproach restricts both
questions to the causal pathways that were genuinely active, averaging over
the reasonable ways of specifying which pathways those are. Because every quantity involved is a standard counterfactual
query, the analyst can estimate the score by sampling from the same probabilistic-programming
machinery already used to build the underlying model.

\begin{figure*}[t]
\centering
\includegraphics[width=\textwidth]{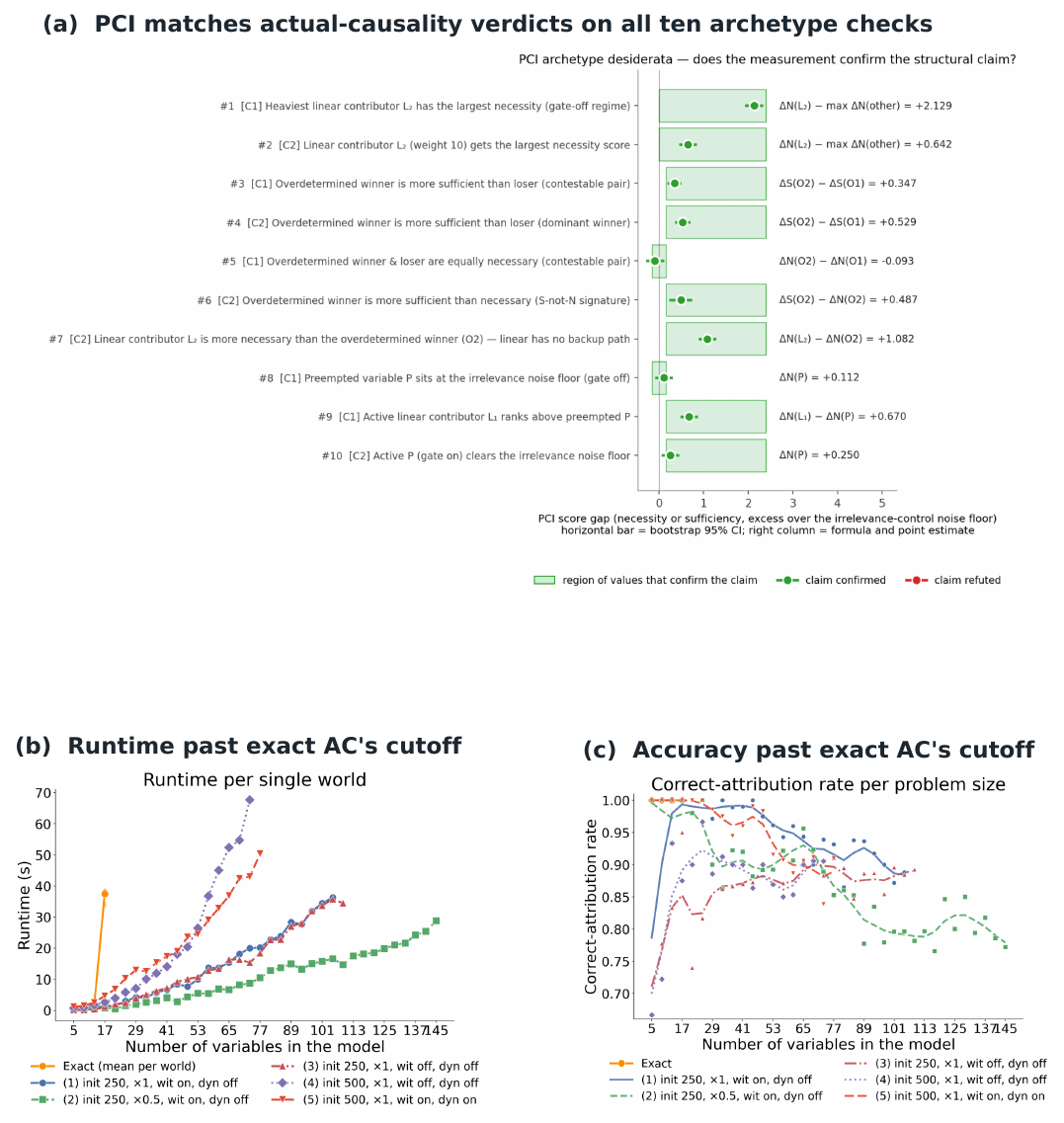}
\caption{\textbf{The headline result.} \textbf{(a)} On a synthetic model
with four causal archetypes (linear necessity/sufficiency,
overdetermination, preemption, irrelevance), \ourapproach's score gaps agree with
the actual-causality verdict across all ten checks, shown in green throughout.
\textbf{(b), (c)} \ourapproach stays tractable, in runtime and in
correct-attribution rate, well past the problem size at which exact
actual-cause enumeration's cost grows exponentially and becomes infeasible.
Both methods run under a modest 60-minute compute budget on inexpensive
hardware, chosen to gauge computational feasibility; the sizes reached
scale with the compute and sample budget allotted.
\cref{sec:evaluation} reads this and the
rest of the empirical story at a glance; full protocols are in
Appendices~\ref{sub:synthetic} and~\ref{sec:ac_benchmark}.}
\label{fig:hero}
\end{figure*}

The rest of this section makes this precise, positions \ourapproach{} against
existing methods, and introduces its central technical move.

At minimum, a satisfactory causal attribution method
should be \phantomsection\label{des:grounded}(i) \textit{causally grounded},
attributing responsibility based on whether ``intervening on'' (manipulating) a candidate cause
would change the outcome; \phantomsection\label{des:context}(ii) \textit{context-sensitive},
distinguishing which causal pathways were active in a specific
instance,\footnote{We use \emph{context-sensitive} in Halpern's sense: sensitivity to which
\emph{endogenous} mediators are held fixed at their factual values during a
counterfactual evaluation. Section~\ref{sec:causal_impact} formalises this as the
witness mechanism. This differs from sensitivity to the exogenous noise realisation
$\mathbf{u}$ itself. Halpern's literature calls $\mathbf{u}$ a \emph{context} only
because his models are deterministic: fixing $\mathbf{u}$ fixes every endogenous
value outright, so ``the context'' and ``the factual world'' coincide there. Once
mechanisms are stochastic, as throughout this paper, that coincidence breaks.
Noise-realisation sensitivity is then a separate axis, on which every counterfactual
or actual-cause query, ours included, trivially qualifies, unlike a population-level
interventional query. \ourapproach{} and actual causality differ from interventional
measures precisely in the mediator-holding sense of context sensitivity: a
``vanilla'' structural counterfactual, which recomputes every mediator downstream of
an intervention instead of holding any fixed, fails this test.}
\phantomsection\label{des:broad}(iii) \textit{broadly applicable}, handling stochastic models,
discrete and continuous variables, and counterfactual changes of any size; and
\phantomsection\label{des:tractable}(iv) \textit{tractable},
estimable from samples
without exhaustive enumeration over witness sets and alternatives.%
\footnote{This trades enumeration for Monte Carlo sampling: achieving a fixed
estimator variance still requires a sample count that scales with the
cardinalities of the suspect (candidate-cause) and witness sets, so the
asymptotic cost does not improve, and we do not
have a general way to know in advance when fewer samples suffice.
Section~\ref{sec:ac_benchmark} reports empirical scaling; Section~\ref{sec:discussion}
returns to this and other limitations.}
A further, stronger demand is a graded notion of causal explanation; we flag it
here but do not treat it as a minimum bar, and develop it in full in
Section~\ref{sec:causal_impact}.

Existing methods each forfeit at least one of these. Actual causality
\citep{actualCausalityHalpern} gets Alice's case right but, as noted above, remains
intractable beyond toy examples, with no clear approximation
\citep{eiterComplexityResultsStructurebased2002}. Methods that scale give up
causal grounding or context-sensitivity instead. SHAP \citep{lundberg2017unified}
attributes by correlation, so a feature can score highly merely for correlating with
a true cause \citep{janzing2020feature}. Causal SHAP \citep{heskes2020causal}
improves on this, yet still conflicts with the causal structure in some cases,
which we return to in Section~\ref{sec:shap_examples}. The Differential Causal Effect
\citep{butler2022differential} is causally grounded but local: relying on gradients,
it misses non-infinitesimal effects, which are often of interest to a practitioner.
The average and conditional treatment effect
(ATE/CATE) are grounded and often tractable but not context-sensitive: they let an
intervention's effects propagate downstream through the structural equations
regardless of which causal paths were active in the specific instance
(Section~\ref{sec:motivations}). Table~\ref{tab:method_comparison} summarises the
tradeoffs.

\begin{table}[thbp]
\centering
\renewcommand{\arraystretch}{1.45}
\begin{tabular}{@{}l c c c c c c >{\columncolor{pcihl}}c @{}}
\toprule
 & \makecell{Actual\\causality} & SHAP & \makecell{Causal\\SHAP} & \makecell{PN/PS\\/PNS} & DCE & \makecell{ATE/\\CATE} & \textcolor{pciblue}{\textbf{\ourapproach}} \\
\midrule
(i) causally grounded    & \yes & \no  & \partialm & \yes & \yes & \yes & \pciyes \\
(ii) context-sensitive   & \yes & \no  & \no       & \no  & \na  & \no  & \pciyes \\
(iii) broadly applicable & \no  & \yes & \yes      & \no  & \no  & \yes & \pciyes \\
(iv) tractable           & \no  & \yes & \yes      & \yes & \yes & \yes & \pciyes \\
\bottomrule
\end{tabular}
\caption{Which of the four desiderata each attribution method satisfies. \partialm\ = partial; \na\ = not addressed by design.}
\label{tab:method_comparison}
\end{table}

Table~\ref{tab:method_comparison}'s row (ii) is Alice's case: actual
causality names her gender as the cause because its witness mechanism holds the
untriggered credit check fixed, while Pearl's probability-of-necessity/sufficiency
family (PN/PS/PNS, three related quantities we define in
Section~\ref{sec:motivations}), lacking any such mechanism, assigns her credit
comparable responsibility despite it never being checked.
Section~\ref{sec:motivations} works through this example, and its stochastic
counterpart, in full.

To overcome these tradeoffs, \Ourapproach combines the context-sensitivity of actual
causality with the scalability of probability-of-causation methods. \Ourapproach
generalises Pearl's probability of necessary and sufficient causation
\citep{pearl2022probabilities} from binary to arbitrary variable
domains.\footnote{\citet{kawakammi_2024_poc_for_cont_and_vec} independently extend PNS to
continuous and vector-valued variables by a different route. Their
Definition~3.1 fixes two treatment levels $x_0,x_1$ and an outcome threshold $y$,
and reads necessity and sufficiency off the inequality $Y_{x_0} < y \le Y_{x_1}$;
exogeneity and monotonicity then identify these quantities nonparametrically, from
observational distributions alone. \Ourapproach substitutes an alternative-value
distribution $\Delta$ for the fixed contrast and a user-chosen impact kernel $ci$
for the fixed threshold. Monotonicity drops out because \ourapproach evaluates
both counterfactual worlds inside a fitted causal model, which supplies the
cross-world quantities by simulation. The identification burden moves onto that
model (\cref{sec:discussion}). The witness mechanism then adds
context-sensitivity; \cref{sec:other-prob-ac} returns to the contrast.}
It also incorporates the \emph{witness} mechanism from actual causality
\citep{actualCausalityHalpern}, but changes its logical form: Halpern's account asks
whether \emph{some} witness set, out of exponentially many, reveals the dependency;
\ourapproach instead asks how much of a $\Gamma$-weighted distribution over
candidate-cause and witness sets reveals it, replacing an existential quantifier
with an expectation.%
\footnote{That context should be weighted is intuitive. Suppose someone drops
a lit match near a gas station and a fire results. In one scenario they are
standing right next to the fuel pump when they drop it. In another they are
two hundred feet away, far from anything flammable, but as they drop it a
squirrel happens to be running past, and the match grazes its tail, setting
it alight. Panicked, the squirrel runs toward the gas station, and its
burning tail starts the fire there. A purely existential account of actual
causality treats both scenarios the same: some witness set makes the
match-drop a cause of the fire in each case, so both get an identical
verdict. But the second scenario's causal chain runs through an improbable
fluke, and intuitively the match-dropper bears less responsibility for it
than in the first. Weighting witness sets gives \ourapproach a natural place
to encode this: a $\Gamma$ is  a distribution over how likely some features are to be a cause or 
a witness - and a $\Gamma$ distribution that would  assign little mass to freak actual contingencies
like the squirrel's detour lets the score track how ordinary the enabling
context was (and similarly, weighting alternative values based on how likely they, as encoded in $\Delta$, are can play a role in such considerations too).}
A witness set records which variables stay at their factual values while
\ourapproach varies a candidate cause, so causal paths inactive in the factual world stay
blocked even under intervention, delivering the
context-sensitivity that ATE and CATE lack.

The witness mechanism also decides how multiple, simultaneously active
candidates rank against each other. A variable that merely correlates with the
one that did the work, or is downstream of it, can outscore it. Holding the
right variables fixed as witnesses reverses that. Without witnesses,
\ourapproach makes the same mistake SHAP does, ranking Alice's credit above
her gender. With them, gender outranks credit, the correct order
(Section~\ref{sec:obcb_shap}). Witnesses turn a $0.034$ margin between a
mediator and its upstream cause into a $0.133$ one (Section~\ref{sec:signal}).
And they let \ourapproach separate a reliable cause from an unreliable one
across forensic scenarios in the desert-traveller example
(Section~\ref{sub:weak-poison}).

This move makes \ourapproach tractable, but it is also the source of its main risk. Existential search over witness sets forces AC's brute-force enumeration; an
expectation admits Monte Carlo sampling from $\Gamma$ instead, at a cost: $\Gamma$-mass on witness sets that do not reveal a dependency can swamp a single witness set that does, or sampling can simply miss it if it is rare enough. In the worst case, rare dependencies can be as expensive to find with sampling as with enumeration (Section~\ref{sec:discussion}).

Four considerations limit the damage. First, converting the search problem to a sampling problem 
lets us study statistical estimation efficiency and provides a "computational effort" knobs (the number of samples, sampling method choice, etc.)
that control success. That is, it lets us fix a computational budget and then study whether it will achieve the desired results. 
The practical aspect of this is that models with
structural regularities let sampling run far cheaper than the worst case above
suggests. In the actual-cause benchmark (Section~\ref{sec:ac_benchmark}),
for instance, any superset of a witness set witnesses the same attribution, so
restricting the search to small witness sets costs little accuracy even though
it never visits the larger sets a wider search would check. \ourapproach's
definition assumes no such regularity, but when a model has it, the sampling
budget can be spent narrowly without losing coverage.

The second is formal: switching from existential search to an expectation does
not make \ourapproach blind to real actual causes. \cref{sec:relation_with_ac}
proves, under conditions given there, that a candidate's \ourapproach score is
positive exactly when it is an actual cause in Halpern's sense.

The third is more general. Missing a rare event under finite sampling is a
familiar tradeoff, not a weakness specific to \ourapproach; every Monte Carlo
procedure faces it. Here, though, the tradeoff is explicit: the sample budget
and the witness-cardinality bound are choices the analyst makes, and
Section~\ref{sec:ac_benchmark} reports how correct-attribution rate changes as
each is loosened or tightened, so increasing either reduces the chance that
sampling misses a rare witness set.

Prior work anticipates pieces of \ourapproach's construction. Halpern's own account
already averages over exogenous noise when computing probabilities of causation
\citep{actualCausalityHalpern}. \citet{Beckers2015CombiningPC} weight alternative
replacement values by how probable they are. Both aggregate over witness sets in
restricted settings \citep{Beckers2015CombiningPC, actualCausalityHalpern}. And
\citet{chocklerResponsibilityBlameStructuralModel2004} let the size of that set
modulate the verdict: their degree of responsibility is $1/(k{+}1)$, where $k$ counts
the variables that must be shifted off their actual values before the outcome depends
counterfactually on the candidate. No prior construction combines all four into a
single estimator that scales to arbitrary variable domains. \ourapproach keeps
Chockler and Halpern's intuition that the size of the set doing the work should count,
replacing their fixed reciprocal with the cardinality band of the variable selection
distribution $\Gamma$, which the analyst sets (\cref{sec:causal_impact});
\cref{sec:recovering-ac} treats this and the other actual-causality quantities in the
same terms.
Every quantity entering the \Ourapproach definition is a standard interventional or
counterfactual quantity, so computing a score asks nothing of a modelling tool beyond
what counterfactual reasoning already asks: a way to write down a structural model,
intervene on it, and sample under the resulting counterfactual semantics. Any system
providing those operations can evaluate \Ourapproach's expectations by Monte Carlo for
any model it expresses, with no per-model derivation. We use ChiRho, a causal extension
of the probabilistic programming language Pyro,\footnote{\url{https://basisresearch.github.io/chirho/explainable_sir.html}}
though nothing in the construction depends on that choice.

That requirement is substantive. \ourapproach presumes access to a trained
probabilistic causal model that supports sampling counterfactual outcomes, whereas
Causal SHAP and the probability-of-necessity/sufficiency (PN/PS/PNS) bounds of
\citet{muellerpearl2022causes} work from observational data and a causal graph alone.
The outputs differ accordingly. Those bounds give \emph{intervals} on the probability
of causation, and \ourapproach, given a fully specified model, gives a full posterior
over its score, summarised as a point estimate via the mean.

Within that requirement, we place no strong restrictions on the form of the model, its
distributions, or its variable types. Such a model can include standard machine
learning components, neural-network-parametrized distributions or Gaussian processes
among them. A feed-forward probabilistic program of this kind is still a structural
causal model, one whose structural equations happen to be parametrized by a neural
network rather than specified by hand.

\citet{xia2022neural} show that such neural causal models, being universal
approximators, are expressive enough to represent the same nonparametric
counterfactual quantities available to an explicitly specified structural
model. This expressiveness does not by itself resolve non-identification,
though: many mechanisms can fit the same observed data equally well, and no
amount of representational flexibility changes that.

Instead, \ourapproach operates in a Bayesian setting: its score averages the
counterfactual outcome over the posterior the trained model induces, spanning both
mechanisms and their parameters, much as Bayesian model averaging marginalizes over
which of several candidate models generated the data.\footnote{\ourapproach, like
actual causality, presupposes a causal model. Whether that model is identified from the
available data, domain knowledge, or experiments is a prior question, and one this
paper inherits rather than settles.} The earlier claim of no strong restrictions is
about the model's functional form only. Obtaining a posterior that reflects the data
well is a separate, nontrivial problem for complex neural or Gaussian-process
mechanisms, and we do not attempt to solve it here.\footnote{Treating non-identification this way is a substantive position,
and its merits have been debated at length in general \citep{poirier1998revising,
gustafson2015partial, moon2012bayesian} and for causal inference specifically
\citep{rubin1978bayesian, richardson2011transparent}. We adopt it here without
defending it. A reader who declines it can still read the posterior spread of the score
as a sensitivity analysis, reporting its range in place of its mean and recovering an
interval of the kind partial-identification methods target \citep{manski2003partial,
zhang2022partial}.}

\paragraph{Paper structure.}
The paper proceeds in five phases, laid out in Figure~\ref{fig:roadmap_inspirations},
with the supporting appendix material collected in a separate band.
The first phase establishes motivation and formalism.
Section~\ref{sec:motivations} works through a running example
to draw out what actual causality, the probability of necessity/sufficiency,
and average/conditional treatment effects each get right and miss, motivating
the design choices behind \ourapproach.
Section~\ref{sec:causal_impact} then defines the method: a kernel built from a
sufficiency check at the factual configuration and a necessity check against
alternatives, integrated over a distribution of suspect/witness sets, with
three components given as user choices: the variable selection distribution
$\Gamma$, the alternative-value distribution $\Delta$, and the causal impact
function.

With \ourapproach defined, the second phase previews the payoff before the
detailed argument. \cref{sec:evaluation} reads each empirical and comparative
result at a glance and defers the full protocols to the appendices:
\ourapproach recovers the actual-causality literature's verdicts on
overdetermination and undercutting (\cref{sub:synthetic}); it diverges from
gradient-based Differential Causal Effect attribution where the local gradient
misreads the counterfactual contrast (\cref{sec:DCE}); it stays tractable where
exact actual-cause enumeration times out at $17$ variables, continuing to
roughly $73$--$145$ variables (where the tests time out after 60 minutes) and holding a $0.85$ correct-attribution rate as
far as $89$
(\cref{sec:ac_benchmark}); it delivers a graded comparison on a
continuous-outcome dynamical SIR model, assigning lockdown roughly twice the
responsibility of masking where but-for reasoning cannot separate them
(\cref{sec:sir_benchmark}); and it scales to a deployed automated valuation
model with machine-learned components trained on millions of points, where SHAP
concentrates attribution on a few downstream variables while \ourapproach
spreads it across the upstream causal structure (\cref{sec:avm}). A reader who
wants the shape of the argument can stop here; the next two phases develop each
comparison in full.

The third phase positions \ourapproach against existing attribution methods.
\cref{sec:shap_examples} compares \ourapproach with SHAP, Causal SHAP, ATE, and
CATE on two worked examples (OBCB, with two binary features, and a
signal-with-mediation model), tabulating which desiderata each satisfies or
fails, including cases where SHAP and even Causal SHAP track correlation rather than
counterfactual dependence.

The fourth phase establishes the formal relationship between \ourapproach and
actual causality. \cref{sec:relation_with_ac} shows that under specific choices of
$\Gamma$ and $\Delta$ the \ourapproach kernel is positive on a configuration
exactly when that configuration is an actual cause in Halpern's sense, so
approximate AC verdicts can be read off the kernel without enumerating witnesses.
\cref{sub:pearl_actual_cause_prob} positions \ourapproach against Pearl's
probability of actual causation: on the canonical desert-traveller example
\ourapproach recovers Pearl's within-scenario ranking with graded magnitudes, and
a weak-poison variant reveals two further responsibility intuitions
(path-reliability and cross-scenario reliability) that \ourapproach's expectation
captures and Pearl's binary indicator cannot express.

\cref{sec:discussion} closes the fifth phase with discussion. We engage related
work where it bears on the argument. The broader literature on
actual-causality-based attribution and responsibility scores (probabilities of
causation, degree of blame and harm, and score-based explanations in databases
and machine learning, surveyed by \citet{bertossiAttributionScores2023}) comes
up mainly in \cref{sec:relation_with_ac,sub:pearl_actual_cause_prob} and the
discussion.

\begin{figure}[H]
  \centering
  \includegraphics[width=\textwidth]{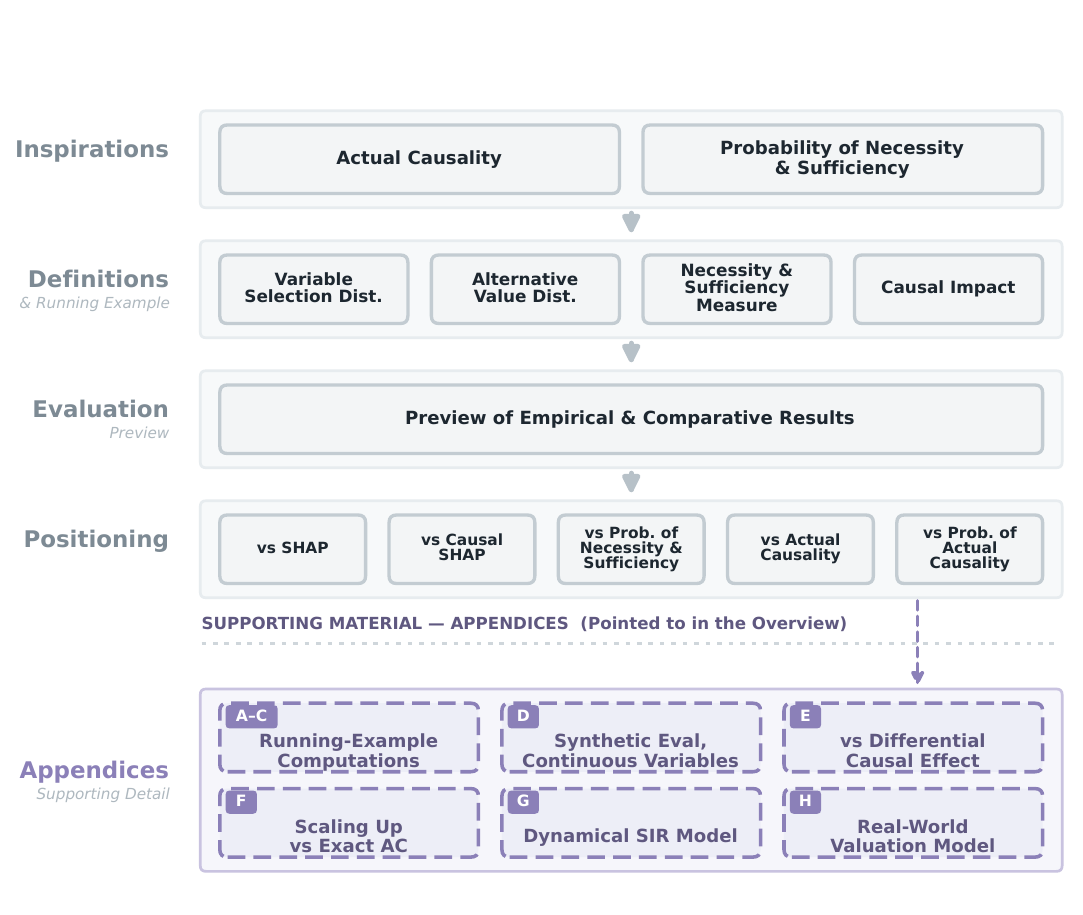}
  \caption{\textbf{Roadmap of the paper.} \ourapproach builds on two ideas,
  \emph{actual causality} and the \emph{probability of necessity and sufficiency}
  (top tier). From there the paper develops the motivations
  (\cref{sec:motivations}), defines \ourapproach (\cref{sec:causal_impact}), previews
  the empirical and comparative payoff (\cref{sec:evaluation}), positions it against
  SHAP, Causal SHAP, PNS, actual causality, and the probability of actual causation
  (\cref{sec:shap_examples,sec:relation_with_ac,sub:pearl_actual_cause_prob}), and
  closes with discussion; we defer the running-example computations and benchmark
  protocols to the appendices, to which the preview points.}
  \label{fig:roadmap_inspirations}
\end{figure}

\paragraph{Reproducibility.}
A set of companion notebooks reproduces every quantitative claim in this paper
(the worked-example numbers, the benchmark tables, and the figures), rendered
with their full outputs at
\url{https://rfl-urbaniak.github.io/explainable_paper/}. Throughout, we point to
the relevant notebook by name (for example \nb{obcb\_computations}); each name
links to its page on that site.

\section{Causal Impact: Motivations}\label{sec:motivations}

The central difficulty in causal attribution is not computing counterfactuals: it is knowing which counterfactuals to compute. Consider two questions: the \emph{type} question (does $X$ tend to cause $Y$?) and the \emph{token}, or actual-causation, question (was $X$'s contribution live in the specific situation we are explaining?). Methods built to answer the former routinely miss the latter, as Section~\ref{sec:intro}'s Alice already showed: a cause can be genuine in general yet causally inactive in one particular case, making no difference to what actually happened. The running example below makes this precise.
There are principled probabilistic theories of type-level causal strength (\citet{sprenger2018foundations}, for instance, derives a difference-making measure $p^*(E \mid C) - p^*(E \mid \neg C)$ axiomatically), but they deliberately bracket the token-level question of which cause was live in the case at hand, and amalgamate mediators just where the context-sensitivity we are after requires tracking them.
Among the closest predecessors of this approach, none individually achieves all of this: actual causality handles context sensitivity but is computationally intractable and not general enough;
the probability of necessity and sufficiency scales well, but lacks context sensitivity.
This section builds intuition for both limitations through a running example, setting up the design choices that motivate \ourapproach in Section~\ref{sec:causal_impact}.
We first revisit Halpern's actual-cause
construction on a deterministic Old Boys' Club Bank (OBCB) example to see what witnesses do and why a plain
but-for clause misses Alice's case. Turning to a stochastic OBCB, we then introduce the
probabilistic machinery actual causality lacks, Pearl's PN/PS/PNS, and show that, without
witnesses, these measures cannot register Alice's missing credit check either. A closing
``Path forward'' subsection distils what \ourapproach must inherit
from each predecessor before Section~\ref{sec:causal_impact} formalises it.

\subsection{Actual Causality}

Consider the following (deterministic, discrete) model:

\begin{example}[Bob at Old Boys' Club Bank] \label{example:bobatobcb}
Bob applied for a home mortgage loan with the local branch of OBCB, and his application was denied.
The OBCB policy requires applicants to pass a credit score check, but it only performs this check if the applicant is male; all non-male applicants are rejected outright. \label{ex:obcb} Thus, the procedure is:
\begin{align*}
\varname{check} &= (\varname{gender} = \text{male}) \\[4pt]
\varname{check\text{-}failed} &= \varname{check} \land (\varname{credit} = \text{bad}) \\[4pt]
\varname{loan} &= \lnot \varname{check\text{-}failed} \land (\varname{gender} = \text{male})
\end{align*}
\end{example}

\noindent An intuitive explanation for Bob's denial is that his credit was bad: had it been good, the outcome would have been different.
In contrast, his gender does not appear to be responsible for the rejection: had it been different,
his application still would have been denied (albeit this time due to $\varname{check}=F$).

This suggests a counterfactual approach to feature responsibility attribution: identify the features responsible for a particular outcome by asking whether it would have been different had a specific feature taken a different value.
This is called the \textit{but-for clause}: Bob wouldn’t have been denied the loan but for his bad credit.

Concretely, consider a model $M$ with \emph{exogenous variables} $\mathbf{U}$, which are the source of stochasticity, and \emph{endogenous variables} $\mathbf{V}$,  which are influenced by other variables in the model. 
Let $\mathbf{S} \subseteq \mathbf{V}$ be the variables we suspect to be causally responsible (call them \emph{causal suspects}), and 
$Y \in  \mathbf{V}$ be an outcome of interest. 
Suppose we consider only one suspect $S \in \mathbf{S}$ and mark factual values with $^\star$. 
Under a setting $\mathbf{U} = \mathbf{u}$ of the exogenous variables, $M$ produces $S = s^\star$ and $Y = y^\star$, denoted $(M, \mathbf{u}) \vDash S=s^\star, Y=y^\star$. We read $\vDash$ as ``yields'' (the model under noise $\mathbf{u}$ makes the stated equalities true), and its crossed form $\not\vDash$ as ``does not yield'' (the stated event fails to hold).
We call this the \emph{factual} scenario.
An intervened model where a variable $S$ is set to an alternative value $s'$ is denoted $(M, \mathbf{u})[S \leftarrow s']$.

A first attempt at defining a counterfactual notion of causal responsibility,
following the counterfactual-dependence tradition \citep{lewis1973causation,
actualCausalityHalpern}, is:
\begin{definition}[But-for, single]
$S=s^\star$ is causally responsible for  $Y=y^\star$  in $(M, \mathbf{u})$ if and only if $(M, \mathbf{u}) \vDash S=s^\star, Y=y^\star$ and $(M, \mathbf{u})[S \leftarrow s'] \not \vDash Y=y^\star$ for some $s' \neq s^\star$.
\end{definition}

\noindent 
In a given context, $S$ is causally responsible for an outcome only when intervening on $S$ with some alternative value would change that outcome. 
Bob’s credit score satisfies this condition: had his credit score been higher, the loan might have been approved.

However, this cause test is too coarse. Consider another case:
\begin{example}[Alice at OBCB] \label{example:aliceatobcb}
Alice is a woman with bad credit.
She applies for a loan with OBCB and is rejected.
\end{example}
\noindent Yet under the but-for clause for single cause candidates, neither her gender nor her credit score is responsible for Alice's denial.
No matter how high her credit score, her application would still have been denied because the bank does not, effectively, loan to women.
Had her gender been different, she would still have been denied for having bad credit.

To address this example, we could consider sets of factors as causes:
\begin{attempt}[But-for, sets]
Variables $\mathbf{C}\subseteq \mathbf{S}$ with factual values $\mathbf{C} = \mathbf{c}^\star$ 
are causally responsible for $Y = y^\star$ in $(M, \mathbf{u})$ if and only if 
$(M, \mathbf{u}) \vDash \mathbf{C} = \mathbf{c}^\star, Y = y^\star$ and $(M, \mathbf{u})[\mathbf{C} \leftarrow \mathbf{c'}] \not \vDash Y= y ^\star$ for some $\mathbf{c'}$ everywhere different from $\mathbf{c}^\star$, where no proper subset of $\mathbf{C}$ has this property. 
\end{attempt}

\noindent The minimality condition reflects our preference for explanations that include only relevant properties.

Under this generalization, the set  $\mathbf{c}^\star = \{\varname{gender}=\text{female}, \varname{credit}=\text{bad}\}$
 is jointly responsible for Alice's application being denied because 
 there exists $\mathbf{c}' = \{\varname{gender}=\text{male}, \varname{credit}=\text{good}\}$ that would change the decision
and $\mathbf{c}^\star$ is minimal; neither gender nor credit alone would have changed the decision. On individual features, we could say that the whole set bears responsibility but individual features do not, or that any feature in a cause set is partially and equally accountable.
Neither is satisfying: the first denies that any single feature is responsible, and the second spreads responsibility evenly between them regardless of context.
This conflicts with our intuition: OBCB did not evaluate Alice’s credit score, so her denial was caused by her gender.
This verdict does not rest on counterfactual reasoning alone. Tracing the model on Alice's inputs, her denial is produced through the $\varname{gender}$ term of the $\varname{loan}$ equation, while $\varname{credit}$ enters only through $\varname{check\text{-}failed}$, which is false and never propagates to the outcome. So on a production (causal-process) view of causation \citep{hall2004two}, credit transmits no influence to Alice's denial at all.
Actual causality's witness-based counterfactual test, developed below, recovers exactly this production-tradition verdict, so the diagnosis that gender alone is responsible is not an artifact of one chosen formalism.

The actual-causality literature calls this structural pattern \emph{preemption}, also called \emph{undercutting}~\citep{lewis1973causation,
actualCausalityHalpern}, the term we use from
Section~\ref{sec:causal_impact} onward, paired with \emph{overdetermination} as
the two archetypes the synthetic evaluation (\cref{sub:synthetic}) targets: two
distinct causal routes
might each have been sufficient to produce the outcome, but only one is actually
engaged, with the other prevented from operating.
Alice's gender preempted her credit by preventing the check from happening, so
credit had no chance to act.
The closely related notion of \emph{over-determination}~\citep{paulhall2013,
actualCausalityHalpern} covers cases in which
multiple sufficient causes coexist without one preempting the other (e.g., two
simultaneous fatal shots).
Both phenomena defeat a simple single-cause but-for test, since intervening on any one of the
candidate causes leaves the outcome unchanged.\footnote{
The canonical preemption example: Jim is both stabbed and poisoned, and dies.
Stabbing works quickly, so it kills Jim before the poison takes effect, and the
stabbing caused his death.
But-for clauses fail to break the symmetry here as well: removing the stabbing
leaves Jim dead by poison, and removing the poison leaves him dead by stab.
}

Halpern's construction offers a way out of this difficulty: hold some of the context in which counterfactuals are evaluated fixed,
calling such fixed variables \emph{witnesses}.
Let us focus on the mediator $\varname{check\text{-}failed}$ between $\varname{gender}$ and $\varname{loan}$.
If we fix this variable at the factual value by intervening, then counterfactually, Alice would have been granted the loan if she were male.
However, there is no context fixed at the actual value such that, had her credit score been higher, she would have been granted the loan.
Enumerating the candidate causes, their counterfactual treatments, the witness held fixed, and the resulting outcome makes the test concrete (Alice's factual run is $\varname{gender}{=}\text{female}$, $\varname{credit}{=}\text{bad}$, giving $\varname{check\text{-}failed}{=}0$ and $\varname{loan}{=}\text{denied}$):

\begin{center}
\begin{tabular}{llll}
\toprule
Candidate cause $\mathbf{C}$ & Treatment $\mathbf{c}'$ & Witness $\mathbf{T}=\mathbf{t}^\star$
  & Outcome ($\varname{loan}$) \\
\midrule
$\varname{gender}{=}\text{female}$ & ${\to}\,\text{male}$ & $\varname{check\text{-}failed}{=}0$
  & denied $\to$ \textbf{granted} \\
$\varname{gender}{=}\text{female}$ & ${\to}\,\text{male}$ & none (recomputed)
  & denied $\to$ denied \\
$\varname{credit}{=}\text{bad}$    & ${\to}\,\text{good}$ & any
  & denied $\to$ denied \\
\bottomrule
\end{tabular}
\end{center}

\noindent Fixing the mediator $\varname{check\text{-}failed}$ at its factual value reveals
$\varname{gender}$ as an actual cause (row~1); without the witness the intervention recomputes the
mediator and the outcome does not flip (row~2); no witness makes $\varname{credit}$ a cause (row~3).
This motivates the following formalization:
\begin{definition}[Actual cause (modified Halpern--Pearl)]\label{def:ac_intuitive}
Let $\mathbf{S}\subseteq\mathbf{V}$ be a suspect pool and
$\mathbf{W}\subseteq\mathbf{V}$ a witness pool, observed in $(M,\mathbf{u})$ at
factual values $\mathbf{S}=\mathbf{s}^\star$ and $\mathbf{W}=\mathbf{w}^\star$.
A cause set $\mathbf{C}\subseteq\mathbf{S}$ at its factual setting
$\mathbf{C}=\mathbf{c}^\star$ is an \emph{actual cause} of $Y=y^\star$ in
$(M,\mathbf{u})$ if there is a witness set $\mathbf{T}\subseteq\mathbf{W}$
disjoint from $\mathbf{C}$ such that:
\begin{itemize}[itemsep=2pt,topsep=2pt,parsep=0pt]
    \item \textbf{Factivity.} The cause, the witness pool, and the outcome
    are all observed at their factual values:
    \[ (M,\mathbf{u})\vDash \mathbf{C}=\mathbf{c}^\star,\;
                              \mathbf{W}=\mathbf{w}^\star,\;
                              Y=y^\star. \]
    \item \textbf{Witnessed necessity.} Holding $\mathbf{T}$ at its factual
    setting $\mathbf{t}^\star$ while intervening on the cause overturns the
    outcome, for some alternative $\mathbf{c}'$ distinct everywhere from
    $\mathbf{c}^\star$\footnote{Halpern's original clause (restated without
    this normalization as Definition~\ref{def:ac} in
    Section~\ref{sec:relation_with_ac}) only requires $\mathbf{c}'\neq\mathbf{c}^\star$,
    not a coordinatewise difference on every variable in $\mathbf{C}$; the two
    are equivalent given minimality, since if $\mathbf{c}'$ agrees with
    $\mathbf{c}^\star$ on some coordinate, dropping that coordinate from
    $\mathbf{C}$ yields a smaller set that already witnesses necessity, so a
    minimal $\mathbf{C}$ can always be witnessed by a fully-distinct
    $\mathbf{c}'$. We state the fully-distinct version here for a cleaner
    search procedure; Section~\ref{sec:relation_with_ac} uses Halpern's
    original statement to keep the correspondence proofs literal.}:
    \[ (M,\mathbf{u})\bigl[\mathbf{C}\leftarrow\mathbf{c}',\;
                            \mathbf{T}\leftarrow\mathbf{t}^\star\bigr]
        \not\vDash Y=y^\star. \]
    \item \textbf{Minimality.} No proper subset of $\mathbf{C}$ satisfies the
    two conditions above.
\end{itemize}
\end{definition}

\noindent Definition~\ref{def:ac_intuitive} is Halpern's \emph{modified} definition of actual
causality \citep{halpern2015modification, actualCausalityHalpern}, specialized to an explicit suspect
pool $\mathbf{S}$ and witness pool $\mathbf{W}$. \ourapproach uses Halpern's modified definition, with witnesses held at their actual values; prior probabilistic extensions of actual causation instead build on the original Halpern--Pearl definition (Section~\ref{sec:other-prob-ac}).
This definition extends but-for causality with a witness set $\mathbf{T}$ that is fixed to its factual value by
an intervention.
Under this definition, to determine the cause of an outcome, we need to find a cause set $\mathbf{C}$, alternative values 
$\mathbf{c}'$ (trivial if the variables are binary and less so if they are categorical or continuous),
and a witness set $\mathbf{T}$.
$\mathbf{C}$ and $\mathbf{T}$ are drawn from sets of size at most $2^{|\mathbf{V}|}$, so a brute-force computation
 is expensive.

\noindent The witnessed-necessity template above is the Halpern--Pearl definition, which has
been refined considerably since: \citet{beckersPrincipled2018} rebuild actual causation from a
production/dependence pair, \citet{beckersTransitivity2017} chart when it is (in)transitive and
asymmetric, and \citet{beckersSufficiency2021,beckersNESS2021} ground it in NESS-style sufficient
sets, where a cause is a necessary element of a set that suffices for the outcome; and
\citet{beckers2025nondeterministic} most recently extends it to nondeterministic
structural models. We build on the
same witnessed-necessity core. It has two halves: necessity under intervention, and
sufficiency of the held-fixed configuration. We treat both as graded quantities,
anticipating the probabilistic turn below.

Putting computational difficulties aside, the deeper limitation of the
construction so far is that it lives entirely in deterministic models.
Real-world attribution problems are typically probabilistic: policies apply
with some probability, mechanisms occasionally fail or are overridden, and the
analyst usually wants to reason about the unit-level outcome in the presence
of this noise.
One can, of course, place a distribution over the exogenous noise $\mathbf{u}$ and read
off the probability of the actual-cause verdict; this is the route taken by Pearl's
probability of causation \citep{causalityPearl} and by the degree-of-blame and harm
constructions of Chockler, Halpern, and Beckers
\citep{chocklerResponsibilityBlameStructuralModel2004, beckersHarm2022,
beckersQuantifyingHarm2023}.
The result is a probabilistic envelope around the same binary kernel, with the same computational demands.
This construction (Pearl's \emph{probability of actual causation}
\citep[Def.~10.3.5]{causalityPearl}) is also the closest neighbour of \ourapproach{}
and agrees with it in simple cases, so we defer a precise side-by-side comparison to
Section~\ref{sub:pearl_actual_cause_prob}. The computational demands it inherits from the
underlying binary kernel are one reason we
turn next to a different family of attribution tools.

\subsection{Probability of Necessity and Sufficiency}

To bring probability into the picture, we now consider a stochastic variant of the
OBCB example in which the bank's gendered policies apply only with some probability:
women are checked only sometimes, men are skipped only sometimes, and even when
the check happens its outcome is occasionally overridden by loan-officer
discretion. This is the natural setting for Pearl's probabilities of necessity,
sufficiency, and their conjunction (PN, PS, PNS)
\citep{pearl1999probabilities,tian2000probabilities},\footnote{Probabilities of
causation have also been put to work outside token-level attribution:
\citet{wangJordanDesiderata2021} formalize representation-learning desiderata
(non-spuriousness, efficiency, and disentanglement) as computable criteria built
from probabilities of causation, paralleling our use of the same primitives for
feature attribution.} introduced in this
subsection. These scale gracefully into the stochastic regime
but, without an analogue of the witness mechanism, mis-attribute Alice's denial.

\begin{example}[OBCB, stochastic]\label{ex:obcb_stochastic}
The bank performs credit checks for a randomly selected 20\% of women and skips checks for a randomly selected 10\% of men.
Unchecked applicants are rejected.
Furthermore, if a man's credit check fails, he is nevertheless accepted 5\% of the time and if a woman's credit check succeeds, she is denied 10\% of the time.
We assume that the population is evenly divided into male and female and good and bad credit.
Formally, the model, using 1 for male and 1 for good credit, is:

\begin{align*}
\varname{loan-prob}(\varname{gender}, \varname{check\text{-}failed}) &=
\begin{array}{r|cc}
 & \varname{check\text{-}failed}{=}0 & \varname{check\text{-}failed}{=}1 \\ \hline
\varname{gender}{=}0 & 0.9 & 0 \\[2pt]
\varname{gender}{=}1 & 1   & 0.05
\end{array}
\\[6pt]
\varname{credit} &\sim \mathrm{Bern}(0.5) \\
\varname{gender} &\sim \mathrm{Bern}(0.5) \\
\varname{check} \mid \varname{gender}
   &\sim \mathrm{Bern}\bigl((1-\varname{gender}) \cdot 0.2
                            + \varname{gender} \cdot 0.9\bigr) \\
\varname{check\text{-}failed} &= \varname{check} \cdot (1-\varname{credit}) \\
\varname{loan\text{-}if\text{-}checked}
   \mid \varname{gender}, \varname{check\text{-}failed}
   &\sim \mathrm{Bern}\bigl(\varname{loan-prob}(\varname{gender},
                                                \varname{check\text{-}failed})\bigr) \\
\varname{loan} &= \varname{loan\text{-}if\text{-}checked} \cdot \varname{check}
\end{align*}
\end{example}

\begin{figure}[htbp]
\centering
\includegraphics[width=0.62\linewidth]{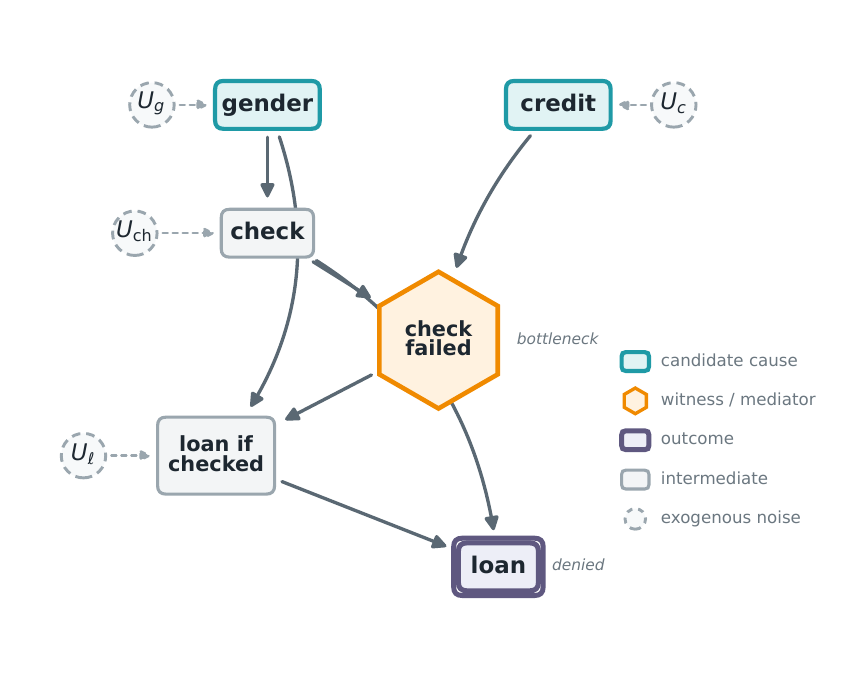}
\caption[OBCB causal structure]{Causal structure of the OBCB loan model
(Example~\ref{ex:obcb_stochastic}). Node shape and colour encode role: the
\emph{candidate causes} \varname{gender} and \varname{credit} (teal boxes), the
\emph{witness} \varname{check-failed} (gold hexagon), and the \emph{outcome}
\varname{loan} (purple double box); intermediate variables are grey. Dashed circles
are the exogenous noise $U_V$ feeding each \emph{stochastic} mechanism; the
deterministic nodes \varname{check-failed} and \varname{loan} carry none. \varname{check} reaches \varname{loan} along two routes: through the
\varname{check-failed} bottleneck and directly as the screening gate
($\varname{loan}=\varname{loan-if-checked}\cdot\varname{check}$). \ourapproach holds
the bottleneck witness fixed to keep these two routes distinct.}
\label{fig:dag_obcb}
\end{figure}

With this stochastic model in hand, we can state seven \emph{desiderata} that any
responsibility attribution for Alice and Bob should satisfy, giving every construction
in the rest of the paper an explicit target.
Let $R(V \rightsquigarrow \varname{loan} \mid \text{person})$ denote the causal responsibility of variable $V$
for the loan outcome at the individual level. $R$ is a placeholder for any attribution score, not a
specific formal quantity; \ourapproach interprets $R$ as the
PCI score $\mathrm{PCI}_{X_k}$ of Definition~\ref{def:causalimpact}.
The desiderata themselves are
method-agnostic: they encode the graded active/preempted/irrelevant ordering and the
cross-individual asymmetry that the actual-causality and preemption literature
already distinguishes \citep{lewis1973causation,hall2004two,actualCausalityHalpern}.
PN, PS, PNS, SHAP,
and Causal SHAP are held to this same independently motivated standard below and in
Section~\ref{sec:shap_examples}. Here $V$ ranges over the candidate variables; in the general development
it is the variable of interest $X_k$.
Attribution methods may produce signed values; the desiderata are stated in terms of absolute
magnitudes $|R(\cdot)|$, allowing for directionality:
\label{desid}
\begin{align*}
\textbf{D-A1} \quad & |R(\varname{gender} \rightsquigarrow \varname{loan} \mid \text{Alice})| > 0 \\[2pt]
\textbf{D-A2} \quad & |R(\varname{credit} \rightsquigarrow \varname{loan} \mid \text{Alice})| > 0 \\[2pt]
\textbf{D-A-rank} \quad & |R(\varname{gender} \rightsquigarrow \varname{loan} \mid \text{Alice})| > |R(\varname{credit} \rightsquigarrow \varname{loan} \mid \text{Alice})| \\[6pt]
\textbf{D-B1} \quad & |R(\varname{gender} \rightsquigarrow \varname{loan} \mid \text{Bob})| > 0 \\[2pt]
\textbf{D-B2} \quad & |R(\varname{credit} \rightsquigarrow \varname{loan} \mid \text{Bob})| > 0 \\[2pt]
\textbf{D-B-rank} \quad & |R(\varname{credit} \rightsquigarrow \varname{loan} \mid \text{Bob})| > |R(\varname{gender} \rightsquigarrow \varname{loan} \mid \text{Bob})| \\[6pt]
\textbf{D-comp} \quad & |R(\varname{gender} \rightsquigarrow \varname{loan} \mid \text{Alice})| > |R(\varname{gender} \rightsquigarrow \varname{loan} \mid \text{Bob})|
\end{align*}

\noindent \textbf{D-A1} and \textbf{D-A2} both require non-zero attribution for Alice because two
distinct causal routes were available. Gender filtered her out at the checking stage. Credit is not
causally inert either: the check is only \emph{probabilistic} (women are still checked with positive
probability, $\gamma_F=0.2$ in the model above), so conditioning on Alice's denial leaves nonzero
probability that she was in fact checked, in which case her bad credit was the operative cause. The
broader principle is a graded ordering of three roles. An \emph{active} cause, on the path that actually
produced the outcome, carries the most responsibility. A \emph{preempted} cause retains a
\emph{residual} share: strictly positive, yet strictly below the active cause (exactly what
\textbf{D-A-rank} demands), governed by the probability that the preemption fails to fire,
leaving the credit route live. An \emph{irrelevant} variable, with no active path to the outcome under
any context, receives the least of all, approaching zero. This is a deliberate departure from the binary
Halpern--Pearl/Beckers treatment of preemption, which collapses the first two roles by zeroing a fully
preempted cause. \textbf{D-A-rank} is the substantive constraint: because Alice never underwent a
credit evaluation, gender's causal role is immediate, whereas credit's is purely
hypothetical. A method that ranks credit above gender for Alice is misattributing the cause of her
rejection.

For Bob, \textbf{D-B2} reflects that the credit check occurred and directly produced the rejection:
the causal chain $\varname{credit} \to \varname{check\text{-}failed} \to \varname{loan}$ was
activated. \textbf{D-B1} requires strictly positive attribution for \varname{gender}: being male
gave Bob a $90\%$ probability of being checked, so gender opened the evaluation that rejected him.
This indirect causal role is non-zero; a method returning exactly zero fails to capture it.
\textbf{D-B-rank} corresponds to the intuition that credit is the proximate cause.
\textbf{D-comp} captures the cross-individual asymmetry that is hardest for naive methods to
express: gender is more causally responsible for Alice's rejection than for Bob's. For
Alice, \varname{gender}=female was overwhelmingly likely to produce the rejection on
its own (evaluation happens only $20\%$ of the time for women, so credit is rarely the
operative factor); for Bob, \varname{gender}=male reliably opened the evaluation (a
$90\%$ chance) that \varname{credit}=bad then caused.
We verify in Section~\ref{sec:causal_impact} that \ourapproach satisfies all seven
desiderata, while PN, PS, PNS, developed next in this section, fail some of them. Later,
we will also revisit this example and its desiderata in the context of SHAP and Causal
SHAP (Section~\ref{sec:shap_examples}).

\noindent Returning to the comparison between the two predecessors: the witness mechanism
works well for deterministic models, but actual causality
in its standard form offers no straightforward way to handle probabilistic ones. Before
extending the witness idea to the stochastic setting, it is instructive to examine a
probabilistic approach that scales more naturally, Pearl's probabilities of necessity
and sufficiency \citep{pearl2022probabilities}, and to ask what it would miss without
witnesses. This will let us identify what probabilistic machinery is needed, and what
context-sensitivity is lost when witnesses are absent, motivating the synthesis that
\ourapproach provides.
The following definitions allow for probabilistic models, but assume that all variables are Boolean (PCI will relax this assumption,
 allowing for continuous variables).
Pearl considers two ways that a cause may relate to an effect: it may be necessary or sufficient for the effect.
A cause is necessary if, counterfactually absent the cause, the effect is unlikely to occur; it is sufficient
if the effect is likely when the cause counterfactually occurs.
We now formalize these ideas and discuss them in the context of the running example.

\begin{definition}[Probability of Necessity]
Let the probability of necessity \citep{pearl1999probabilities} be the probability that the outcome $Y=y$ would not have occurred in the absence of  $C = c^\star$, given that both occurred: \[\pn(C = c^\star, Y = y^\star) = P\!\left(Y_{c'} \neq y^\star \;\middle|\; C = c^\star, Y = y^\star\right),\] where $Y_{c'}$ is the value of $Y$ after intervening on $C$ with $c'\neq c^\star$.\footnote{\label{fn:twin-network}Read in twin-network fashion \citep{balke1994probabilistic, causalityPearl}: one draws an exogenous noise $\mathbf{u}\sim P_{\mathbf{U}}$, computes the factual $Y$ under the unintervened structural equations and the counterfactual $Y_{c'}$ under the intervened equations, and conditions on the joint event $\{C=c^\star,Y=y^\star\}$. Both $Y$ and $Y_{c'}$ live on the same probability space and share the same $\mathbf{u}$.}
\end{definition}

Strictly speaking, the probability of necessity is less specific to a particular situation than the actual-cause test above: as written, it conditions only on the cause--outcome pair, so it asks whether being male causes \emph{someone} to be denied, not whether it caused \emph{Bob} (whose bad credit we already know) to be denied.
We therefore report two readings.
The \emph{population-level} value conditions only on the cause and outcome, exactly as the definition prescribes.
The \emph{individual-level} value additionally conditions on the subject's other observed features (for Bob, on $\varname{credit}=\text{bad}$), specializing the query to that person.
The individual-level reading is the one closer to actual causation; we report both throughout, and the same population/individual split applies to $\ps$ and $\pns$ below.

We illustrate the computation for Alice's gender. Alice is factually
$\varname{gender}=\text{female}$, $\varname{credit}=\text{bad}$,
$\varname{loan}=F$; the counterfactual intervenes to set
$\varname{gender}=\text{male}$ while leaving $\varname{credit}=\text{bad}$.
Tracing the male-intervention branches through the model
(Appendix~\ref{app:obcb}, \S\ref{app:obcb_pn}) gives
$\pn(\varname{gender}=\text{female}, \varname{loan}=F \mid \text{Alice})
= 0.9 \cdot 0.05 = 0.045$.
Computing the analogous quantities for the remaining (variable, person) pairs gives
the individual-level values shown in Table~\ref{tab:pn_ps_pns}; the table also lists
population-level values for comparison.

The PN scores partially align with intuition: Bob's bad credit is high
($\pn=0.90$), and his gender score is zero, matching the reading that his gender played
no role.
But Alice's PN values do not separate gender (0.045) from credit (0.18)
particularly well, and Alice's gender PN is only marginally above Bob's, not the
sharp distinction we would want, given that the bank never even evaluated Alice's
credit.
PN cannot register this asymmetry: it asks counterfactually what would happen
under intervention without any context about which causal paths were active.
For the full picture we need a second quantity, the probability of sufficiency.

\begin{definition}[Probability of Sufficiency]
Let the probability of sufficiency be the probability that the outcome $Y=y^\star$ would have occurred in the presence of $C = c^\star$, given that neither $C=c^\star$ nor $Y=y^\star$ occurred:
\[\ps(C = c^{\star}, Y = y^\star) =  P\!(Y_{c^\star} = y^\star \;\vert\;  C = c',\ Y \neq y^\star)\]
where $c' \neq c^\star$ and $Y_{c^{\star}}$ is the value of $Y$ after intervening on $C$ with $c^\star$.
\end{definition}

To query whether the factual cause was sufficient, we condition on the counterfactual: we assume the cause did \emph{not} take its factual value and the outcome did \emph{not} occur, then intervene to restore the factual cause value and ask whether the outcome now occurs.
Intuitively, this asks: if the cause had been absent and things had gone differently, would reinstating the cause have brought the outcome about?

We illustrate $\ps$ for Alice's gender. Following the same convention as for
$\pn$, the individual-level computation additionally conditions on Alice's
credit ($\varname{credit}=\text{bad}$): we fix her credit but flip the cause and
outcome to their non-factual values ($\varname{gender}=\text{male}$,
$\varname{loan}=T$), then intervene to restore $\varname{gender}=\text{female}$
and ask whether the loan is again denied. Both check branches yield
$\varname{loan}=F$ (Appendix~\ref{app:obcb}, \S\ref{app:obcb_ps}), so
$\ps(\varname{gender}=\text{female}, \varname{loan}=F \mid \text{Alice}) = 1$.
The remaining individual values, along with population-level values for comparison,
appear in Table~\ref{tab:pn_ps_pns}.
Two cases warrant comment: Bob's gender PS is undefined because the required
conditioning event ($\varname{gender}=\text{female}$, $\varname{credit}=\text{bad}$,
$\varname{loan}=T$) has probability zero in the model; Bob's credit PS is $0.955$
rather than $1$ because the model includes a $5\%$ exception rate for males who fail
the credit check.

To combine the insights of the two scores, we would like to find causes that are both necessary and sufficient.
This notion is  formalized as follows:

\begin{definition}[Probability of necessity and sufficiency]
Let the probability of necessity and sufficiency be the probability that the outcome $Y=y^\star$ would have occurred in the presence of $C = c^\star$ and that it would not have occurred under the intervention $C=c'$:
\[\pns(C = c^{\star}, Y = y^\star) =  P\!(Y_{c^{\star}} = y^\star, Y_{c'} \neq y^\star ).\]
This quantity factorises into a weighted sum of $\pn$ and $\ps$ by the
consistency rule of counterfactuals
\citep[\S 9.2]{causalityPearl}:
\[
  \begin{aligned}
    \pns(C = c^\star,\, Y = y^\star)
    &= P(C = c^\star,\, Y = y^\star)\,\pn(C = c^\star,\, Y = y^\star) \\
    &\quad + P(C = c',\, Y \neq y^\star)\,\ps(C = c^\star,\, Y = y^\star).
  \end{aligned}
\]
\end{definition}

\begin{table}[ht]
\centering
\small
\caption{$\pn$, $\ps$, and $\pns$ for Alice and Bob in
  Example~\ref{ex:obcb_stochastic}, with $\varname{loan}=F$ as the outcome.
  ``Population'' values condition only on the cause--outcome pair prescribed by the
  definition; ``Individual'' values additionally condition on the subject's other
  factual variable (e.g.\ for Alice's gender, on $\varname{credit}=\text{bad}$).
  A dash marks an entry that is undefined because the conditioning premises are
  inconsistent with the model.}
\label{tab:pn_ps_pns}
\vspace{4pt}
\begin{tabular}{ll @{\hspace{1.4em}} ccc @{\hspace{1.4em}} ccc}
\toprule
& & \multicolumn{3}{c}{Alice} & \multicolumn{3}{c}{Bob} \\
\cmidrule(lr){3-5}\cmidrule(lr){6-8}
Variable & Level & $\pn$ & $\ps$ & $\pns$ & $\pn$ & $\ps$ & $\pns$ \\
\midrule
$\varname{gender}$ & Population & 0.43  & 0.83 & 0.39  & 0.02  & 0.10  & 0.01 \\
$\varname{gender}$ & Individual & 0.045 & 1.00 & 0.045 & 0     & ---   & 0    \\[3pt]
$\varname{credit}$ & Population & 0.53  & 0.96 & 0.52  & 0.53  & 0.96  & 0.52 \\
$\varname{credit}$ & Individual & 0.18  & 1.00 & 0.18  & 0.90  & 0.955 & 0.86 \\
\bottomrule
\end{tabular}
\end{table}

The population- and individual-level $\pns$ values appear in the rightmost column
group of Table~\ref{tab:pn_ps_pns}.
These values capture the relative causal impact of $\varname{gender}$ and $\varname{credit}$ but are not entirely satisfying.
\Cref{tab:pn_ps_pns} reveals the problem directly: for Alice, $\pns(\varname{gender}) = 0.045 < \pns(\varname{credit}) = 0.18$, so PNS ranks credit as more causally responsible than gender.
Yet the bank never evaluated Alice's credit: the $\varname{check}$ variable was $F$ throughout.
PNS has no mechanism to register this: it asks counterfactually what would happen if credit were different, without fixing $\varname{check\text{-}failed}$, the mediating variable that blocked credit's causal path.

As an attempt to help us better understand Alice's particular situation, we might consider a modification of PNS with additional conditioning:
\[\pns_{c}(C = c^\star, Y = y^\star ~|~ Z=z^\star) = P(Y_{c^\star} = y^\star, Y_{c'} \neq y^\star ~|~ Z = z^\star)\]
Using this definition, we can compute $\pns_c(\varname{gender} = \text{female}, \varname{loan} = F ~|~ \varname{credit} = \text{bad}) = 0.045$ and $\pns_c(\varname{credit} = \text{bad}, \varname{loan} = F ~|~ \varname{gender} = \text{female}) = 0.18$.
Conditioning does not solve the problem described above (Alice's credit was in reality never evaluated) because the interventions override the mediating variable that tracks whether her credit has been checked.
When we compute $\pns_c(\varname{credit}=\text{bad}, \varname{loan}=F \mid \varname{gender}=\text{female})$, conditioning on gender being female is consistent with the factual, but when we then intervene to set credit to good, this intervention propagates downstream and changes $\varname{check\text{-}failed}$, overriding what we learned from conditioning on Alice's factual outcome.
The context we fixed is undone by the very intervention meant to probe it.
We will not consider this use of conditioning further. This observation motivates the need to intervene on (as
opposed to condition) mediating variables that provide important context, which the witness mechanism supplies.

\paragraph{ATE and CATE share the same limitation.}
The two most familiar causal quantities in the ML and statistics literature are the average treatment effect
$\mathrm{ATE}(T) = \mathbb{E}[Y \mid do(T{=}1)] - \mathbb{E}[Y \mid do(T{=}0)]$, where $do(\cdot)$ denotes
Pearl's intervention operator \citep{causalityPearl},
and the conditional version
$\mathrm{CATE}(T \mid X{=}x) = \mathbb{E}[Y \mid do(T{=}1), X{=}x] - \mathbb{E}[Y \mid do(T{=}0), X{=}x]$.
Computed on our stochastic OBCB, population ATE ranks credit ($0.518$) above gender ($0.383$),
mirroring the population PN ranking.
At the individual level for Alice, $\mathrm{CATE}(\varname{credit}\mid\varname{gender}{=}\text{female}) = 0.18$
and $\mathrm{CATE}(\varname{gender}\mid\varname{credit}{=}\text{bad}) = 0.045$.
These are exactly Alice's individual $\pns$ values: with binary outcomes and $\ps = 1$ in the
relevant branches, CATE for a single treatment collapses to $\pns_c$, so the same
$\varname{check\text{-}failed}$ propagation that undermined $\pns_c$ undermines CATE.
A second, sharper failure follows from CATE's inability to condition on the factual
treatment value: $\mathrm{CATE}(\varname{gender}\mid\varname{credit}{=}\text{bad}) = 0.045$ is identical
for Alice and Bob even though one's gender carried her denial outright while the other's
merely opened the credit check that then rejected him.
The structural limitation cuts across PNS, ATE, and CATE alike, so the witness mechanism in Section~\ref{sec:causal_impact} is not
just a fix for PNS but a general repair for the family of methods that average
counterfactual effects without intervention-based context-sensitivity.

\subsection{Path Forward}

The two predecessors examined in this section have complementary strengths and blind spots.
Halpern's actual causality is context-sensitive: by holding witnesses fixed at their factual values, it correctly diagnoses Alice's case, attributing her denial to gender rather than credit.
But in its standard form it is restricted to deterministic models, it requires a brute-force search over witness sets, and it has no native treatment of continuous variables.
Pearl's PN/PS/PNS make the converse trade-off.
They scale naturally to probabilistic models, because every counterfactual reduces to an expectation under the noise distribution.
But they have no analogue of the witness mechanism: every intervention propagates through the structural equations and overrides whatever mediator value the factual context fixed.
Table~\ref{tab:pn_ps_pns} shows the symptom: for Alice, $\pns(\varname{credit}) > \pns(\varname{gender})$, even though the credit check was never performed.

Any replacement therefore needs intervention-based context-sensitivity, holding the relevant mediators fixed by intervention.
\ourapproach{} delivers exactly this, combining three ingredients into a single estimator.
From actual causality it takes the \emph{witness mechanism}: it holds relevant context fixed by intervention, so the structural information about which causal paths were active survives the counterfactual evaluation.
From PN/PS/PNS it takes a \emph{probabilistic vocabulary}: it expresses counterfactuals as expectations under the noise distribution, so the attribution lives on the same probability space as the model and extends naturally to continuous variables.
And in place of brute-force enumeration over witness sets, it \emph{integrates} over a distribution of them, an integral estimable from samples even when the witness pool is large or the variables continuous.

Returning to Alice, \ourapproach{} reverses the ranking $\pns$ produced (Table~\ref{tab:pn_ps_pns}), recovering the intuitive verdict.
We revisit the example in full once the method is defined (Section~\ref{sec:causal_impact}, with the complete computation in Appendix~\ref{app:obcb}).

Two features distinguish \ourapproach{} from both predecessors.
Unlike actual causality, it does not ask whether \emph{some} witness set validates a counterfactual dependency; it integrates over a distribution of witness sets, weighting each by its probability.
This replaces combinatorial enumeration with statistical estimation.
The gain is also conceptual. Existential quantification certifies a cause as soon as \emph{some} witness set validates the dependence, a sensitivity that graded and normality-based accounts were introduced in part to address, by weighting alternatives according to how typical they are instead of treating any single one as decisive. Integration weights each witness set by its probability, so a variable is responsible to the degree that \emph{typical witness sets} render it efficacious.
The witness still does the work of holding context fixed. \ourapproach{} simply declines to privilege one hand-picked set, letting the structural facts decide which witnesses carry the mass (for Alice, those pinning $\varname{check\text{-}failed}$ favour gender; almost none favour credit).
As a by-product, actual causality's binary verdict becomes a graded magnitude.
Unlike PN/PS/PNS, it inherits the context-sensitivity of actual causality through this integration: it downweights causally inert variables automatically, because almost every witness set leaves them disconnected from the outcome, while it upweights variables that drive the outcome, because almost every witness set leaves them connected.
The formal definitions in Section~\ref{sec:causal_impact} make this expectation precise.

\medskip\noindent With the limitations of both predecessors in mind, we turn
in Section~\ref{sec:causal_impact} to the formal development. The next section
introduces \ourapproach piece by piece (PSCM scaffolding, the variable
selection distribution $\Gamma$, the alternative-value distribution $\Delta$, the
joint necessity-sufficiency measure, and the user-chosen causal impact function
$ci$) and verifies on the same Alice/Bob example that the resulting attribution
satisfies all seven desiderata stated above (p.~\pageref{desid}).

\section{Causal Impact: Definitions}\label{sec:causal_impact}

Explanation and causal attribution seek to connect a potentially causal \textit{event} $X=x$ to an outcome event $Y=y$.\footnote{This is quite different from type-level causal queries, where one is interested in some group-level attribution, as when one estimates average treatment effect or conditional average treatment effect.}
We return to Example~\ref{ex:obcb_stochastic} to ground the formal development that follows,
 using Alice and Bob as running illustrations throughout this section. Both are rejected for a loan; the question is
 which of their attributes (\varname{gender} and \varname{credit}) was causally responsible for each rejection,
 and to what degree. While the example is discrete, the machinery applies to continuous and mixed-type variables;
  a continuous illustration is given in Sections~\ref{sec:signal} and~\ref{sub:synthetic}.

\paragraph{Roadmap of this section.}
Section~\ref{sec:motivations} already stated seven \emph{desiderata}
(p.~\pageref{desid}) that any responsibility attribution for Alice and Bob
should satisfy; they give the formal construction below an explicit target. We
introduce the probabilistic
structural causal model (PSCM) and interventions, the substrate on which everything
else rests, and then the two distributions \ourapproach offers to the user: the
variable selection distribution $\Gamma$ over suspect/witness pairs and the
alternative-value distribution $\Delta$ over counterfactual cause settings. We combine
these into the joint necessity-sufficiency measure $P_k^{s,n}$ and the user-chosen
causal impact function $ci$ whose expectation is the \ourapproach score, showing how
the choice of $ci$ generalises the framework from binary to continuous outcomes.
Finally we compute the score on the OBCB example, verify that all seven desiderata
hold, and contrast \ourapproach's necessity/sufficiency decomposition with Pearl's
PN/PS/PNS.

Recall $R(V \rightsquigarrow \varname{loan} \mid \text{person})$, the placeholder for
the causal responsibility of variable $V$ for the loan outcome at the individual level
used to state those desiderata (Section~\ref{sec:motivations}, p.~\pageref{desid}):
\ourapproach interprets $R$ as the PCI score $\mathrm{PCI}_{X_k}$ of
Definition~\ref{def:causalimpact}, but the desiderata themselves are method-agnostic.
Here $V$ ranges over the candidate variables; in the general development
it is the variable of interest $X_k$.

We work within the context of probabilistic structural causal models \citep{causalityPearl}.
\begin{definition}[Probabilistic Structural Causal Model (PSCM)]
A probabilistic structural causal model is a tuple
\[
M = \langle \mathbf{U}, \mathbf{V}, \mathbf{F}, P_{\mathbf{U}} \rangle,
\]
where \(\mathbf{V} = \{V_1,\dots,V_n\}\) is a finite set of endogenous variables,
\(\mathbf{U} = \{U_1,\dots,U_m\}\) is a set of exogenous variables (not
necessarily one per endogenous variable: a deterministic mechanism, $f_i$
depending only on $\mathbf{Pa}_i$, needs no $U_i$ of its own),
and \(\mathbf{F} = \{f_1,\dots,f_n\}\) is a set of structural assignments
\[
V_i := f_i(\mathbf{Pa}_i, U_i), \qquad i = 1,\dots,n,
\]
with \(\mathbf{Pa}_i \subseteq \mathbf{V}\setminus\{V_i\}\) denoting the parents of \(V_i\), and
$U_i\in\mathbf{U}\cup\{\varnothing\}$ its own exogenous input, if any (a purely
deterministic $f_i$ takes $U_i=\varnothing$ and depends on $\mathbf{Pa}_i$ alone).
The distribution \(P_{\mathbf{U}}\) is a probability measure on $\dom(\mathbf{U})$,
and the assignments in \(\mathbf{F}\) are assumed to be acyclic and to have a unique solution for \(\mathbf{V}\) given \(\mathbf{U}\).\footnote{Working within the PSCM framework commits the practitioner to fully specified structural assignments $\mathbf{F}$ and an exogenous distribution $P_{\mathbf{U}}$, analogous to how Bayesian inference requires a likelihood and a prior. This enables \ourapproach to yield individual-level counterfactual attributions rather than population-level bounds, and once $\mathbf{F}$ and $P_{\mathbf{U}}$ are specified, sampling from $P_{\mathbf{U}}$ is a routine probabilistic inference operation.}
\end{definition}

\medskip\noindent A PSCM encodes which variables influence which others, 
through what mechanisms, and subject to what randomness. The structural equations specify
 each variable as a deterministic function of its causal parents and local noise; the exogenous 
 distribution captures all irreducible uncertainty in the system. Together they make every counterfactual
  question well-posed and computable.

The OBCB model (Example~\ref{ex:obcb_stochastic}) is a PSCM with endogenous
variables
\[
\mathbf{V}=\{\varname{gender},\varname{credit},\varname{check},\varname{check\text{-}failed},\varname{loan\text{-}if\text{-}checked},\varname{loan}\},
\]
and exogenous variables
\[
\mathbf{U}=\{U_{\varname{gender}},U_{\varname{credit}},U_{\varname{check}},U_{\varname{loan\text{-}if\text{-}checked}}\}
\]
which induce independent Bernoulli noise.

The structural assignments in the case at hand take the form $V_i = f_i(\mathbf{Pa}_i, U_i)$
promised by the PSCM definition above. Each stochastic node is a Bernoulli draw whose
success probability is a primitive parameter of the model (possibly depending on the
node's parents); writing $V \sim \mathrm{Bernoulli}(p)$ for a node equal to $1$ with
probability $p$, the mechanisms are
\begin{align*}
\varname{gender}        &\sim \mathrm{Bernoulli}(p_g), \\
\varname{credit}        &\sim \mathrm{Bernoulli}(p_c), \\
\varname{check} \mid \varname{gender}        &\sim \mathrm{Bernoulli}\bigl(\gamma_F(1-\varname{gender}) + \gamma_M\,\varname{gender}\bigr), \\
\varname{check\text{-}failed}  &= \varname{check}\,(1-\varname{credit}), \\
\varname{loan\text{-}if\text{-}checked} \mid \varname{gender}, \varname{check\text{-}failed} &\sim \mathrm{Bernoulli}\bigl(\mathrm{loan\text{-}prob}(\varname{gender}, \varname{check\text{-}failed})\bigr), \\
\varname{loan}          &= \varname{loan\text{-}if\text{-}checked} \cdot \varname{check}.
\end{align*}
So \varname{gender} and \varname{credit} are fair coin flips; \varname{check} fires with
probability $\gamma_F$ for women and $\gamma_M$ for men; \varname{check\text{-}failed} and
\varname{loan} are deterministic functions of the nodes above them
(Figure~\ref{fig:dag_obcb}). The Bernoulli draws are coupled across factual and counterfactual worlds, making counterfactuals well-defined. For the running example we take $p_g = p_c = 0.5$,
$\gamma_F = 0.2$, $\gamma_M = 0.9$, and the values of $\mathrm{loan\text{-}prob}$
given in Example~\ref{ex:obcb_stochastic}.

\begin{definition}[Interventions]
Given a model \(M = \langle \mathbf{U}, \mathbf{V}, \mathbf{F}, P_{\mathbf{U}} \rangle\),
an intervention on variables \(\mathbf{X} = \{X_1, \dots, X_k\} \subseteq \mathbf{V}\)
with values \(\mathbf{x} = (x_1, \dots, x_k)\) replaces the structural assignments for \(X_i\) by $X_i := x_i$, where $i = 1, \dots, k$.
We denote this intervention by either \([\mathbf{X} \leftarrow \mathbf{x}]\) or $do(\mathbf{X} = \mathbf{x})$.
\end{definition}

\medskip\noindent An intervention surgically replaces a variable's structural equation with a fixed constant, 
disconnecting it from its usual causes. The rest of the model propagates downstream as normal under the new assignment;
 the exogenous noise is unchanged.

In the OBCB model, $\mathrm{do}(\varname{gender}=1)$ replaces Alice's gender assignment with male, leaving all other structural equations intact. The downstream effect propagates: \varname{check} now draws from $\mathrm{Bern}(0.9)$ instead of $\mathrm{Bern}(0.2)$, so Alice would be checked in 90\% of noise realizations. This is the counterfactual world in which we ask whether her loan outcome would have differed.

Since \ourapproach supports arbitrary variable types, it will be helpful to define a measure-theoretic object corresponding to interventional outcomes; because outcomes are a deterministic function of exogenous noise, this object turns out to be a Dirac measure.

\begin{definition}[Pointwise Interventional Law] \label{def:interlaw}
Let $M=\langle \mathbf{U},\mathbf{V},\mathbf{F},P_{\mathbf{U}}\rangle$ be a probabilistic
structural causal model and let $Y\in\mathbf{V}$.
For an intervention $\mathrm{do}(\mathbf{X}=\mathbf{x})$, let
$Y':\dom(\mathbf{U})\to\dom(Y)$ denote the measurable map induced by the
modified structural assignments.
The \emph{pointwise interventional law} of $Y$ at fixed $\mathbf{u}\in\dom(\mathbf{U})$ is the probability measure on
$\dom(Y)$ defined by
\[
P\!\left(Y\in A \mid \mathbf{u},\mathrm{do}(\mathbf{X}=\mathbf{x})\right)
\;:=\;
\mathbb{I}\!\left\{Y'(\mathbf{u})\in A\right\}
\;=\;\delta_{Y'(\mathbf{u})}(A),
\qquad
A\subseteq\dom(Y).
\]
\end{definition}

\medskip\noindent The pointwise interventional law records the deterministic
outcome of $Y$ under $\mathrm{do}(\mathbf{X}=\mathbf{x})$ for a fixed noise realization
$\mathbf{u}$: since the modified structural equations make $Y$ a function of $\mathbf{u}$
alone, its distribution collapses to a point mass. Integrating over $P_{\mathbf{U}}$
then recovers the full interventional distribution.

The suspect and witness apparatus introduced below is built from a single primitive, defined next.

\begin{definition}[Causal set]
Given a model \(\langle \mathbf{U}, \mathbf{V}, \mathbf{F}, P_{\mathbf{U}}\rangle\), a causal set is a set of pairs $X = x$ where $X \in \mathbf{V}$ and $x \in \mathit{dom}(X)$.
\end{definition}

\medskip\noindent Informally, a causal set is a list of specific variable-value
assignments, a precise statement of which variables are set to which values. It is the
basic unit of intervention: from a causal set one can read off exactly what is being
manipulated or held fixed.

In \ourapproach, causal sets play four distinct roles, two on the cause side and two
on the context side:

\begin{description}
\item[Suspects $\mathbf{S}$.] The full set of variables suspected to bear on the
outcome. Subsets of $\mathbf{S}$ will be sampled and intervened on.
\item[Active suspects $\mathbf{C} \subseteq \mathbf{S}$.] The specific subset
intervened on in a given search step, either to alternative values (necessity
regime) or to factual values (sufficiency regime).
\item[Witnesses $\mathbf{W}$.] The full set of context variables that may be held fixed
at their factual values. Subsets of $\mathbf{W}$ will be sampled and held fixed.
\item[Active witnesses $\mathbf{T} \subseteq \mathbf{W}$.] The specific subset held
fixed in a given search step. To avoid contradicting the active intervention,
$\mathbf{T}$ is pruned so that $\mathbf{T} \cap \mathbf{C} = \emptyset$.
\end{description}

\medskip\noindent
\textbf{Why prune $\mathbf{T}$ and not $\mathbf{C}$?}
The conflict on $\mathbf{T} \cap \mathbf{C}$ has to be resolved in one
direction or the other, and pruning the witnesses is the right choice for
three reasons.
First, $\mathbf{C}$ is the object of the test: it encodes the causal claim
being evaluated, while $\mathbf{T}$ is the auxiliary device that holds
context fixed.
Pruning $\mathbf{C}$ would silently change the claim under test; pruning
$\mathbf{T}$ leaves the claim intact and merely weakens the context held fixed.
Second, the convention extends cleanly to the limit $\mathbf{C} = \mathbf{V}$,
where every variable is a suspect: the pruned $\mathbf{T}$ collapses to
$\emptyset$ and we recover the unconditional but-for test, the right
baseline behaviour.
Pruning $\mathbf{C}$ instead would leave this limit undefined whenever any
witness is sampled.
Third, the convention inherits directly from Halpern's actual-cause
definition (Definition~\ref{def:ac_intuitive}), which makes
$\mathbf{T} \cap \mathbf{C} = \emptyset$ part of the definition;
pruning post-sampling is the operational implementation of that constraint
and lets us specify $\Gamma$ on the unconstrained product
$2^{\mathbf{S}} \times 2^{\mathbf{W}}$ rather than baking disjointness into
the prior.

To evaluate the causal impact of a variable on an outcome, we consider two interventional regimes: the \emph{necessity regime}, 
in which we intervene on the variables $\mathbf{C} \subseteq \mathbf{S}$ to have an alternative value, and the sufficiency regime,
 in which we intervene so that $\mathbf{C}$ takes on factual values. 
Moreover, we hold some elements $\mathbf{T} \subseteq \mathbf{W}$ of the context fixed (i.e., intervened on to retain factual values, despite intervention on $\mathbf{C}$). This is exactly what makes the necessity component \emph{context-sensitive} in Halpern's sense \citep{actualCausalityHalpern}: the alternative cause is evaluated against a held-fixed witness configuration, not in isolation. We will reuse this terminology throughout: ``context-sensitive necessity'' refers to the necessity regime evaluated under an active witness set $\mathbf{T}$.
When looking for an explanation, we perform a search across possible cause sets and possible contexts.
Complete model samples in these regimes will be called \emph{necessity worlds}  and \emph{sufficiency worlds}, respectively.

The intuition is that 
the suspect set $\mathbf{S}$ names every variable that could plausibly have caused the outcome; the witness set $\mathbf{W}$ names context
variables to be selectively held fixed, isolating the causal channels under investigation. Intervening
on suspects tests what would have happened under an alternative cause in the necessity worlds and under the factual values
 in the sufficiency worlds; fixing witnesses
blocks confounding paths that would otherwise absorb the effect.
Responsibility attribution increases when the outcome is far from the factual value in the necessity world and decreases 
when it is far from the factual value in the sufficiency world.

For both Alice (female, bad credit, loan denied) and Bob (male, bad credit, loan denied)
we set $\mathbf{S} = \{\varname{gender}, \varname{credit}\}$ and
$\mathbf{W} = \{\varname{check\text{-}failed}\}$. The witness \varname{check\text{-}failed}
records whether the applicant was checked and had bad credit; holding it at its factual
value isolates the causal path from \varname{gender} or \varname{credit} to \varname{loan}
independently of the credit-check bottleneck. Without this witness, intervening on
\varname{credit} for Alice would leave 80\% of noise realizations unaffected (those
in which she is never checked), making credit's causal role invisible. The witness makes
the bottleneck explicit and manipulable.

In the general framework we search for causes by intervening on subsets of suspects
and holding fixed subsets of witnesses. To make this search tractable, we need a way to prioritize which subsets to consider.

\begin{definition}[Variable Selection Distribution]\label{def:variableselection}
Let $\mathbf{X}$ be a finite set of random variables. A \emph{variable selection distribution} $\Gamma$ is a probability distribution on $2^\mathbf{X}$. We write $\mathbf{Y}\sim\Gamma$ to mean $\mathbf{Y}$ is a random subset of $\mathbf{X}$ drawn from $\Gamma$, with $p^\Gamma(\mathbf{y})$ denoting the probability assigned to $\mathbf{y}\subseteq\mathbf{X}$.
\end{definition}

\medskip\noindent The variable selection distribution $\Gamma$ encodes
prior beliefs about which subsets of suspects are worth examining. A uniform distribution
treats all suspect subsets as equally plausible; a cardinality-constrained choice focuses
the search on interactions of a particular order, analogous to choosing which interaction
terms to include in a regression. More complex distributions could encode domain knowledge
about which variables are more likely to interact, or could be learned from data.

\noindent In particular, we will be using variable selection distributions for the powerset of causal suspects $\mathbf{S}$ and for the powerset of witnesses $\mathbf{W}$. One simple pattern of suspect and witness selection can be derived by setting size limits on the number of activated suspects or witnesses, and sampling uniformly otherwise.

\begin{example}[Cardinality-Constrained Uniform Selection]
Let $\mathbf{X}$ be a candidate set of random variables. Sample a target size
$k \sim \mathrm{Unif}\{J, \dots, K\}$ with $1 \leq J \leq K \leq |\mathbf{X}|$, then
sample $\mathbf{Y}$ uniformly among subsets of that size, i.e.\ from the
\emph{conditional} probability mass function
\[
p^\Gamma(\mathbf{Y} \mid k) \propto \mathbb{I}[|\mathbf{Y}| = k].
\]
Marginalising $k$ out gives the unconditional pmf actually used to define
$\Gamma(2^\mathbf{X})$,
\[
p^\Gamma(\mathbf{Y}) = \frac{1}{K-J+1}\binom{|\mathbf{X}|}{|\mathbf{Y}|}^{-1},
\qquad J \leq |\mathbf{Y}| \leq K,
\]
which is \emph{not} proportional to $\mathbb{I}[J\leq|\mathbf{Y}|\leq K]$: sizes are
weighted uniformly, but within a size, larger coalitions (of which there are more) are
individually rarer.

The parameters $J$ and $K$ determine the cardinality range of the search; specific
choices connect \ourapproach to existing methods and enable efficient Monte Carlo
approximation.\footnote{When $J=1$ and $K=|\mathbf{X}|$, this distribution coincides with the Shapley weighting kernel, which samples coalition size uniformly and then selects a coalition uniformly at that size \citep{shapley1953}. The parameters $J$ and $K$ allow the practitioner to restrict the search to cause sets of a specific cardinality range: setting $J=K=1$ confines attribution to individual variables, while $J=2, K=3$ focuses on small multi-variable interactions. Moreover, whereas exact Shapley computation requires summing over all $2^{|\mathbf{X}|}$ coalitions, \ourapproach treats $\Gamma$ as a generative model: approximation proceeds by sampling subsets from $\Gamma$ and outcomes from $P_{\mathbf{U}}$, making Monte Carlo estimation a native feature of the framework.
We note that the weighting alone does not distinguish PCI from SHAP at $J=1$,
$K=|\mathbf{X}|$: the subset weights are identical. The distinction lies in the
integrand, and it matters for what is being evaluated and what results follow. SHAP
averages marginal contributions using the observational conditional
$\mathbb{E}[f(\mathbf{X}_{\mathcal{S}} = \mathbf{x}_{\mathcal{S}}, \mathbf{X}_{\bar{\mathcal{S}}})]$, marginalising over
the empirical distribution of non-suspect features; this means a variable that is merely
correlated with a cause, but not itself causally active, can receive positive
attribution. \citet{janzing2020feature} frame exactly this as a causal problem,
arguing that dropping a feature should be \emph{interventional} rather than
observational, since the conditional version lets a feature inherit relevance from its
correlates. PCI instead evaluates counterfactual outcomes
$P(Y \in A \mid \mathbf{u},\, \mathrm{do}(\mathbf{C}=\mathbf{c}',
\mathbf{T}=\mathbf{t}^\star))$ computed via the structural model, so attribution flows
only along genuine causal paths. The witness mechanism adds a further distinction: by
holding intermediate variables fixed, PCI can resolve overdetermination and undercutting
scenarios where SHAP assigns equal or incorrect attribution. A detailed comparison of PCI,
SHAP, and Causal SHAP on concrete examples is given in
Section~\ref{sec:shap_examples}.}
\end{example}

In \ourapproach the variable selection distribution $\Gamma$ generalises
Definition~\ref{def:variableselection} (stated there for a single powerset
$2^{\mathbf{X}}$) to a joint object: it is a probability
mass function on $2^{\mathbf{S}} \times 2^{\mathbf{W}}$, governing the joint
choice of an active suspect set $\mathbf{C} \subseteq \mathbf{S}$ and an active
witness set $\mathbf{T} \subseteq \mathbf{W}$. The recommended construction
samples suspects and witnesses separately and then enforces the constraint
that no variable plays both roles in the same draw: pick a marginal $\Gamma_s$
on $2^{\mathbf{S}}$ and a marginal $\Gamma_w$ on $2^{\mathbf{W}}$, draw
$\mathbf{C}\sim\Gamma_s$ and $\mathbf{T}\sim\Gamma_w$ independently, and
\emph{reject} any draw with $\mathbf{T}\cap\mathbf{C}\neq\emptyset$,
resampling until the constraint holds. Equivalently,
\[
p^\Gamma(\mathbf{C}, \mathbf{T}) \;\propto\;
p_s^\Gamma(\mathbf{C})\,p_w^\Gamma(\mathbf{T})\,
\mathbb{I}[\mathbf{T}\cap\mathbf{C}=\emptyset].
\]
 Throughout, $\Gamma_s$ and $\Gamma_w$ are
abbreviations for the sampling marginals of this construction; the formal
primitive in Definition~\ref{def:jointnecsuf} is the joint $\Gamma$.

For the OBCB analysis, with $|\mathbf{S}|=2$, we use a $\Gamma$ built from
$\Gamma_s$ uniform over the three non-empty subsets of $\mathbf{S}$
($\{\varname{gender}\}$, $\{\varname{credit}\}$,
$\{\varname{gender}, \varname{credit}\}$, each with probability $1/3$) and
$\Gamma_w$ uniform over $\{\emptyset, \{\varname{check\text{-}failed}\}\}$,
each with probability $1/2$. Since no element appears in both $\mathbf{S}$
and $\mathbf{W}$, the rejection step is vacuous and $\Gamma$ is exactly the product of marginals $p_s^\Gamma\cdot p_w^\Gamma$. In larger suspect sets the choice of cardinality
range becomes substantive: sampling uniformly from the full powerset yields an
expected intervention size of $|\mathbf{S}|/2$, which spreads attribution
across large coalitions and may make individual feature contributions harder
to isolate; we return to this issue in \cref{sec:ac_benchmark}, where
the cardinality cap is treated as a tunable hyperparameter and a
dynamic-set-sizes ablation measures the empirical cost of bounding it.

The choice of $\Gamma$ is a design decision that shapes the search process and the
resulting attributions.
As a default, a uniform $\Gamma$ over the full powerset is the least-informative
starting point, assigning equal weight to all subset sizes, analogous to including all
interaction orders in a regression. There is no universally principled criterion for
restricting the cardinality range; we therefore recommend treating $J$ and $K$ as
explicit design choices. Two practical considerations bear on this choice, however: with $K=1$, attribution
scores are based on singleton interventions and straightforward to communicate to
stakeholders; higher $K$ averages over joint interventions on $X_k$ with other suspects
simultaneously, which is harder to explain and, in large or structurally complex models,
may amplify estimation noise rather than capturing genuine higher-order causal structure.
For applications where attribution has serious consequences, we recommend a
cardinality sensitivity analysis: if rankings are stable as $K$ increases from $1$ to
$|\mathbf{S}|$, the simpler choice is supported; if they diverge, higher-order
interactions are materially relevant and the full powerset should be used. The advantage of \ourapproach is that this commitment is made transparently rather than
absorbed into an implicit default.\footnote{SHAP faces an analogous decision without
flagging it as one: by averaging over all $2^{|N|}$ feature coalitions, plain SHAP and
Causal SHAP commit to the maximum interaction order $K=|N|$ as an unstated default,
with no diagnostic offered for whether that level of joint conditioning is right for a
given model. Restricting SHAP to $K<|N|$ would break the efficiency axiom, so the
choice is hardcoded into the framework rather than treated as a tunable. \ourapproach{}
makes the same decision an explicit choice.}

Causal attribution hinges on questions such as ``if $X$ had, contrary to fact, been fixed to $x'\ne x$ instead of $x$, what would the outcome have been?'' For non-binary causes, this is ambiguous. To resolve this in a way that does not require committing to a single, contrastive causal event \citep{kawakammi_2024_poc_for_cont_and_vec}, we rely on a \textit{distribution} over alternative values.

\begin{definition}[Alternative Value Distribution]\label{def:alt_value_dist}
Let $\mathbf{X}$ be a set of random variables with domain $\dom(\mathbf{X})$ and
let $\mathbf{X}=\mathbf{x}\in\dom(\mathbf{X})$ be a factual event.
An \emph{alternative value distribution} $\Delta(\mathbf{x})$ is a probability
measure on $\dom(\mathbf{X})$ such that $\mathbf{x} \notin \operatorname{supp}(\Delta(\mathbf{x}))$.
\end{definition}

\medskip\noindent Informally, $\Delta(\mathbf{x})$ specifies what counts as a
genuinely different value for a suspect variable. The support condition ensures no mass
falls on the factual value $\mathbf{x}$ itself, so every sampled alternative is a real
counterfactual departure rather than a noisy copy of what actually happened.

\medskip\noindent Three properties guide the choice of $\Delta$.
\emph{Plausibility under $M$:} the alternative should be a value the structural model
could plausibly produce given the context, so the necessity test interrogates a
counterfactual the model has anything to say about.
\emph{Outcome-independence:} the distribution over $\mathbf{x}'$ should not be informed
by the outcome whose causes we are explaining, on pain of circularity.
\emph{Context-sensitivity:} $\mathbf{x}'$ should reflect the rest of the factual
scenario, so the necessity test is individual-level rather than population-level.
The construction we use throughout is the structural-model posterior of $\mathbf{X}$
conditioned on upstream factual values, optionally excised near the factual value
(Example~\ref{ex:eps_excised} below); this is the natural choice satisfying all three.

\medskip\noindent Upstream-only conditioning operationalises the intuition shared with
the descriptive-normality literature \citep{Halpern2015-HALGCA,
icardNormalityActualCausal2017, hitchcock2009cause}: factual upstream values play the role of the
\emph{default} against which counterfactual alternatives are graded, the same role a
stipulated baseline plays in causal analyses of harm \citep{beckersHarm2022}, where a
default contract fixes what counts as a departure from the expected course of events.
\ourapproach does not aim to resolve the specific counterexamples motivating that
literature; the link is at the level of \emph{why} a non-uniform $\Delta$ matters at
all.
Since $\Delta$ is user-pluggable, other constructions are available when domain
knowledge motivates them: a uniform distribution over $\dom(\mathbf{X})$ trades
plausibility for breadth; the empirical marginal $p(\mathbf{x}')$ trades
context-sensitivity for simplicity; a fully conditional posterior on every observable
(including downstream evidence) trades outcome-independence for sharper conditioning.

\noindent One natural alternative value distribution takes the one already embodied in the model and excises it around the factual setting: ``alternative value'' then means ``a sufficiently different value, following the original distribution otherwise.'' In the discrete case, the analogue is simply the probability mass function conditioned on the factual value not holding.

\begin{example}[$\epsilon$-Excised Distribution]\label{ex:eps_excised}
Assume $\mathbf{X}$ is $\mathbb{R}^d$-valued with density $p$.
Given a factual event $\mathbf{X}=\mathbf{x}$, define the
$\epsilon$-excised alternative value distribution $\Delta_\epsilon(\mathbf{x})$
as the probability measure on $(\mathbb{R}^d,\mathcal{B}(\mathbb{R}^d))$ with density
\[
p_{\Delta_\epsilon}(\mathbf{x}'\mid \mathbf{x})\propto
\begin{cases}
0, & \mathbf{x}'\in B_\epsilon(\mathbf{x}),\\[4pt]
p(\mathbf{x}'), & \text{otherwise},
\end{cases}
\]
where $B_\epsilon(\mathbf{x})$ denotes the $\epsilon$-ball centered at $\mathbf{x}$.
The support is therefore
$\operatorname{supp}(\Delta_\epsilon(\mathbf{x})) = \operatorname{supp}(p) \setminus B_\epsilon(\mathbf{x})$:
the exclusion covers the entire $\epsilon$-neighbourhood of $\mathbf{x}$. This requires $p$ to place positive mass
outside $B_\epsilon(\mathbf{x})$; if $\operatorname{supp}(p)\subseteq B_\epsilon(\mathbf{x})$
(a sharply concentrated posterior, or an overly generous $\epsilon$), the
normalising constant is zero and $\Delta_\epsilon(\mathbf{x})$ is undefined. The
same degeneracy affects the discrete case: a variable whose posterior is a point
mass admits no alternative-value distribution with the support condition of
Definition~\ref{def:alt_value_dist} at all. A practical implementation should
detect this case explicitly and either signal an error, shrink $\epsilon$, or fall
back to a non-degenerate default such as the unconditional marginal.
\end{example}

\medskip\noindent With plausibility, outcome-independence, and context-sensitivity as
design criteria, and $\epsilon$-excision as a concrete construction satisfying them, we
return to the OBCB example to see what $\Delta$ looks like when the suspect variables are
binary.

In the OBCB model both \varname{gender} and \varname{credit} are binary, so $\Delta$ is
trivial. For Alice, the only alternative to $\varname{gender}=0$ (female) is
$\varname{gender}=1$ (male), and the only alternative to $\varname{credit}=0$ (bad) is
$\varname{credit}=1$ (good); for Bob, the only alternative to $\varname{gender}=1$ is
$\varname{gender}=0$. Accordingly, $\Delta(\varname{gender}=0)$ is a point mass on
$\varname{gender}=1$ and $\Delta(\varname{credit}=0)$ a point mass on
$\varname{credit}=1$: no distributional choice is required. The binary case is exactly
where $\Delta$ is unambiguous; it is for continuous suspects such as income or loan amount
that the $\epsilon$-excised distribution becomes necessary, forcing an explicit commitment
to what counts as a meaningfully different value.

\paragraph{Why $\epsilon$-excision?}
Any method that asks ``what would happen if $\mathbf{X}$ had been different'' must
operationalise what \emph{different} means. The observational marginal $p(\mathbf{x}')$ (as in kernel SHAP
\citep{lundberg2017unified}) places mass near the factual value $\mathbf{x}$,
conflating the necessity question (``what if $\mathbf{x}$ had genuinely differed?'')
with evaluating essentially the same setting. The fully conditional posterior
$p(x_k' \mid \mathbf{x}_{-k})$ conditions on all other observed features, potentially
including descendants of $X_k$, which imposes structural constraints that may rule
out otherwise plausible counterfactuals, and still concentrates mass near $x_k$ when
features are correlated. The $\epsilon$-excised distribution occupies the middle
ground: it draws from the marginal informed by upstream variables only, excising the
factual neighbourhood to guarantee genuine departure without downstream conditioning.

The choice parallels the Region of Practical Equivalence (ROPE) in Bayesian hypothesis
testing \citep{kruschke2018}: just as ROPE forces an explicit decision about what
difference is large enough to matter, $\epsilon$-excision forces an explicit decision
about what counterfactual is sufficiently far from the factual to count as genuinely
alternative. The practitioner makes this commitment regardless of method; the only
question is whether it is made transparently or absorbed into a default. When $X$ is
standardised to $\mathcal{N}(0,1)$ the choice acquires a direct probabilistic
interpretation: the excised mass $\Phi(x^\star+\epsilon)-\Phi(x^\star-\epsilon)$ is the
fraction of the distribution ruled out as insufficiently different from $x^\star$. At
$x^\star=0$, $\epsilon=0.2,\,0.5,\,0.8$ excise approximately $16\%, 38\%, 58\%$ of the
distribution, corresponding to small, medium, and large effect thresholds in the sense
of \citet{cohen1988statistical}. Alternatively, $\epsilon$ can reflect measurement
precision or a decision-relevant threshold; sensitivity analysis across $\epsilon$
values is advisable when the appropriate scale is unclear. For concrete examples of how
SHAP's implicit choice leads to problematic attributions, see
Section~\ref{sec:shap_examples}.

The PSCM and intervention apparatus give us a notion of
counterfactual outcome for any fixed noise realisation $\mathbf{u}$; $\Gamma$ tells
us which suspect/witness configurations to ask about; $\Delta$ tells us which
non-factual values to substitute when evaluating necessity. What remains is to plug
those distributions into the counterfactual mechanics and read out a number. The
next two definitions do exactly that: first the joint necessity-sufficiency
measure $P_k^{s,n}$, then its expectation against a user-chosen impact function
$ci$, with the OBCB scores and the desiderata verification immediately after.

The core idea behind \Ourapproach's main definitions is to
measure two complementary things for a variable of interest $X_k$: (a) how much $X_k$'s
factual value contributes to producing $y^\star$, the \emph{sufficiency} question,
and (b) how much replacing $X_k$'s value with alternatives would shift the outcome away
from $y^\star$, the \emph{necessity} question. We capture both by constructing two
counterfactual outcome distributions conditioned on the same noise realization $\mathbf{u}$,
then marginalising their product over $P_{\mathbf{U}}$. Because \ourapproach is not
constrained to particular variable types, we work in general measure-theoretic terms
supporting combinations of continuous and discrete variables.

\begin{definition}[Joint Necessity and Sufficiency Measure] \label{def:jointnecsuf}
Let $M=\langle \mathbf{U},\mathbf{V},\mathbf{F},P_{\mathbf{U}}\rangle$
be a probabilistic structural causal model
and $Y$ an outcome variable with domain $\dom(Y)$.
Let $p^\Gamma$ be the mass function of a variable selection distribution on
$2^{\mathbf{S}} \times 2^{\mathbf{W}}$, and let
$\Delta(\mathbf{s}^\star)$ be an alternative value distribution on $\dom(\mathbf{S})$.
Let $(\mathbf{S},\mathbf{W})=(\mathbf{s}^\star,\mathbf{w}^\star)$ be a factual event.

For a variable of
interest $X_k$, let
$2^{\mathbf{S}}_{k}:=\{\mathbf{C}\subseteq\mathbf{S}:X_k\in\mathbf{C}\}$,
and for each $\mathbf{C}\subseteq\mathbf{S}$ let
$\Delta_{\mathbf{C}}(\mathbf{s}^\star)$ denote the pushforward of
$\Delta(\mathbf{s}^\star)$ under the restriction map
$\mathbf{s}'\mapsto \mathbf{s}'\!\mid_{\mathbf{C}}$.\footnote{For an assignment $\mathbf{x}$ to a collection of variables $\mathbf{X}$ and any subset $\mathbf{Y}\subseteq\mathbf{X}$, the restriction $\mathbf{x}\!\mid_{\mathbf{Y}}$ denotes the sub-assignment of $\mathbf{x}$ to the variables in $\mathbf{Y}$. The restricted marginal $\Delta_{\mathbf{C}}(\mathbf{s}^\star)$ may place positive mass on $\mathbf{s}^\star\!\mid_{\mathbf{C}}$ even though $\Delta(\mathbf{s}^\star)$ excludes the joint factual $\mathbf{s}^\star$; the genuine-departure guarantee is on the joint alternative, not on each restricted coordinate.}
For any measurable $A\subseteq\dom(Y)$ and exogenous realization $\mathbf{u}$, define three
sub-probability measures on $\dom(Y)$ (the first two) and $\dom(Y)\times\dom(Y)$ (the third).

\textbf{(i) Sufficiency measure.} $P_k^s(\cdot\mid\mathbf{u})$ intervenes on each
active suspect set $\mathbf{C}$ to its factual values $\mathbf{s}^\star\!\mid_{\mathbf{C}}$,
holds the active witnesses $\mathbf{T}$ fixed, and takes the $\Gamma$-weighted sum of
$P(Y\in A\mid\mathbf{u},\mathrm{do}(\cdot))$ under factual cause restoration:
\begin{equation}
P_k^{s}(A\mid\mathbf{u})
=
\sum_{\substack{(\mathbf{C},\mathbf{T})\\X_k\in\mathbf{C}}}
P\!\left(Y\in A \mid \mathbf{u},
\mathrm{do}\!\left(\mathbf{C}=\mathbf{s}^\star\!\mid_{\mathbf{C}},
\mathbf{T}=\mathbf{w}^\star\!\mid_{\mathbf{T}}\right)\right)
\,p^\Gamma(\mathbf{C},\mathbf{T}).
\end{equation}

\textbf{(ii) Necessity measure.} $P_k^n(\cdot\mid\mathbf{u})$ instead integrates
over alternative values $\mathbf{c}'$ for $\mathbf{C}$ (with $\mathbf{T}$ still pinned at
factual values), and takes the $\Gamma$-weighted average over alternative cause values:
\begin{equation}
P_k^{n}(A\mid\mathbf{u})
=
\sum_{\substack{(\mathbf{C},\mathbf{T})\\X_k\in\mathbf{C}}}
\int
P\!\left(Y\in A \mid \mathbf{u},
\mathrm{do}\!\left(\mathbf{C}=\mathbf{c}',\mathbf{T}=\mathbf{w}^\star\!\mid_{\mathbf{T}}\right)\right)
\Delta_{\mathbf{C}}(\mathbf{s}^\star)(d\mathbf{c}')
\,p^\Gamma(\mathbf{C},\mathbf{T}).
\end{equation}

\textbf{(iii) Joint necessity-sufficiency measure.} On product rectangles
$A\times B\subseteq\dom(Y)\times\dom(Y)$, set
\begin{align}
P_k^{s,n}(A\times B)
\;:=\;
\int
\sum_{\substack{(\mathbf{C},\mathbf{T})\\X_k\in\mathbf{C}}}
&P\!\left(Y\in A \mid \mathbf{u},
\mathrm{do}\!\left(\mathbf{C}=\mathbf{s}^\star\!\mid_{\mathbf{C}},
\mathbf{T}=\mathbf{w}^\star\!\mid_{\mathbf{T}}\right)\right)
\nonumber\\
&\cdot\left[\int
P\!\left(Y\in B \mid \mathbf{u},
\mathrm{do}\!\left(\mathbf{C}=\mathbf{c}',\mathbf{T}=\mathbf{w}^\star\!\mid_{\mathbf{T}}\right)\right)
\Delta_{\mathbf{C}}(\mathbf{s}^\star)(d\mathbf{c}')\right]
p^\Gamma(\mathbf{C},\mathbf{T})
\;P_{\mathbf{U}}(d\mathbf{u}).
\end{align}
Both terms are evaluated at the \emph{same} drawn $(\mathbf{C},\mathbf{T})$: one
suspect/witness configuration is selected per noise realisation, and both the
sufficiency and necessity worlds are built from it. This rectangle-by-rectangle
specification extends uniquely (by Carath\'eodory's extension theorem) to a
sub-probability measure on the product $\sigma$-algebra of
$\dom(Y)\times\dom(Y)$, with total mass $\Pr_\Gamma[X_k\in\mathbf{C}] \leq 1$:
integrating out $B$ or $A$ collapses the bracketed term to $1$, leaving exactly
the single $\Gamma$-inclusion probability, not its square.
\end{definition}

\medskip\noindent
Informally: $P_k^s(\cdot\mid\mathbf{u})$ is the distribution of $Y$ in the
\emph{sufficiency world}: restore the factual value of a candidate cause set
$\mathbf{C}$ containing $X_k$, hold the witness set $\mathbf{T}$ at its factual values,
and observe where the outcome lands. $P_k^n(\cdot\mid\mathbf{u})$ is the distribution of
$Y$ in the \emph{necessity world}: replace $\mathbf{C}$'s values with alternatives
drawn from $\Delta$, hold $\mathbf{T}$ fixed, and ask how the outcome shifts.
Because only the \emph{joint} alternative is guaranteed to depart from
$\mathbf{s}^\star$, a drawn $\mathbf{c}'$ can coincide with $\mathbf{C}$'s own
factual restriction $\mathbf{s}^\star\!\mid_{\mathbf{C}}$ even though the full
$\mathbf{s}'$ it came from does not; such a draw intervenes on $\mathbf{C}$
without changing it, contributing a sufficiency-like term to the necessity
integral rather than a genuine test of necessity. This can only dilute the
necessity signal, never invalidate it: a $\Delta$ constructed to depart from
$\mathbf{s}^\star$ coordinate-wise, not just jointly, avoids the degeneracy
entirely.
The sum over suspect-containing pairs $\{(\mathbf{C},\mathbf{T}):X_k\in\mathbf{C}\}$
weights each configuration by its $\Gamma$-mass: more focused interventions
receive whatever weight the practitioner assigns them; configurations with
$X_k\notin\mathbf{C}$ would contribute $0$ to the attribution and are absent
from the sum, so $P_k^s$ and $P_k^n$ are sub-probability measures with total
mass $\Pr_\Gamma[X_k\in\mathbf{C}]\le 1$ rather than renormalized expectations. A
consequence worth flagging: because this total mass scales with how often $\Gamma$
includes $X_k$ at all, comparing raw (unnormalised) scores \emph{across suspects}
is $\Gamma$-invariant only when $\Gamma$ assigns every compared suspect the same
inclusion probability $\Pr_\Gamma[X_k\in\mathbf{C}]$, as it does under the uniform and cardinality-constrained $\Gamma$'s used throughout this paper's
examples, since those treat all suspects symmetrically. A non-uniform, asymmetric
$\Gamma$ (e.g.\ one preferring some suspects over others, as when $\Gamma$ is
learned from data) can make a suspect's score larger purely because $\Gamma$ visits
it more often, not because it is a stronger cause; cross-suspect rankings under such
a $\Gamma$ should instead compare each suspect's score after dividing by its own
$\Pr_\Gamma[X_k\in\mathbf{C}]$. In the OBCB example,
for $X_k=\varname{gender}$ under a noise realization in which Alice is credit-checked:
the sufficiency world restores $\varname{gender}=\text{female}$ while holding
$\varname{check\text{-}failed}$ fixed; the necessity world substitutes
$\varname{gender}=\text{male}$ under the same noise.

For a fixed $(\mathbf{C},\mathbf{T})$ and $\mathbf{u}$, the bracketed product inside
$P_k^{s,n}$ is legitimate because the sufficiency and necessity interventions operate
in separate counterfactual regimes with no causal connection:
$\mathrm{do}(\mathbf{C}=\mathbf{c}^\star)$ and $\mathrm{do}(\mathbf{C}=\mathbf{c}')$ act
on the same noise and the same held witnesses but in distinct hypothetical worlds, so
$Y^s$ and $Y^n$ are conditionally independent given $\mathbf{u}$ and the shared
$(\mathbf{C},\mathbf{T})$:
$P(Y^s\in A,\,Y^n\in B\mid\mathbf{u},\mathbf{C},\mathbf{T})=
P(Y^s\in A\mid\mathbf{u},\mathbf{C},\mathbf{T})\,
P(Y^n\in B\mid\mathbf{u},\mathbf{C},\mathbf{T})$.
This conditional independence is exploited configuration-by-configuration, before
summing over $(\mathbf{C},\mathbf{T})$, rather than after each side has already been
marginalised over its own copy of $\Gamma$.

\medskip\noindent\textit{A more general family.} Nothing forces the configuration
$(\mathbf{C},\mathbf{T})$ underlying $Y^s$ to coincide with the one underlying $Y^n$;
Definition~\ref{def:jointnecsuf} is one instance of a broader construction: replace
the single $\Gamma$ with a joint variable-selection distribution $\Gamma^{s,n}$ on
pairs
$((\mathbf{C}_s,\mathbf{T}_s),(\mathbf{C}_n,\mathbf{T}_n))\in(2^{\mathbf{S}}\times2^{\mathbf{W}})^2$
(each marginal recovering $\Gamma$), and weight the product of world-outcomes by
$p^{\Gamma^{s,n}}$ before integrating over $\mathbf{u}$. Definition~\ref{def:jointnecsuf}
is the \emph{diagonal} case, $\Gamma^{s,n}$ supported on
$\{((\mathbf{C},\mathbf{T}),(\mathbf{C},\mathbf{T}))\}$: both worlds share one drawn
configuration. This is the reading illustrated in Figure~\ref{fig:pci-mechanism} and
realised exactly by the per-configuration kernel $\Phi$ of
Section~\ref{sec:relation_with_ac}. The \emph{independent-product} case,
$\Gamma^{s,n}=\Gamma\otimes\Gamma$, instead marginalises each world's configuration
separately before taking the product; the two cases coincide whenever $ci$ is
additively separable in $y^s$ and $y^n$, as in the Absolute Difference score
(Example~\ref{ex:absolute_score}): linearity of expectation then lets each world's
marginal be computed on its own, independent of any coupling. They can differ when
$ci$ is not separable, as in the PNS binary score (Example~\ref{ex:pns_binary}), where
the score depends on the covariance, under $\Gamma^{s,n}$, between
sufficiency-favouring and necessity-favouring configurations. \Ourapproach{} adopts
the diagonal case as its default: it needs only one $\Gamma$-draw per Monte Carlo
sample, and every estimator in this paper (Algorithm~\ref{alg:pci_generic} and its specialisations) implements it. A user who wants
the two worlds explained independently can instantiate the independent-product case
instead.

A second, orthogonal relaxation is available within a single world's own $\Gamma$:
nothing in Definition~\ref{def:jointnecsuf} requires $p^\Gamma(\mathbf{C},\mathbf{T})$
to factor as $p^{\Gamma_s}(\mathbf{C})p^{\Gamma_w}(\mathbf{T})$, as we do from
Section~\ref{sec:relation_with_ac} onward for tractability of those results. A user who
wants witnesses selected structurally instead, e.g.\ pruned to the graph-theoretic
mediators lying on active paths from the sampled $\mathbf{C}$ to $Y$, can substitute a
$\mathbf{T}\mid\mathbf{C}$-dependent $\Gamma$ directly into
Definition~\ref{def:jointnecsuf} and Definition~\ref{def:causalimpact} unchanged, since
both are stated for a general joint mass function $p^\Gamma$ on
$2^{\mathbf{S}}\times2^{\mathbf{W}}$; only the later results that specifically assume
$\Gamma_w$-uniformity (Theorem~\ref{th:local_max_to_ac} and
Corollary~\ref{cor:marginalize}) would need re-derivation for such a $\Gamma$.

With the necessity-sufficiency measure fully defined, we are ready to characterize the causal impact of an event $X_k$. We do so by defining an arbitrary \emph{causal impact function} $ci$, the \emph{impact kernel}, whose expectation over the necessity-sufficiency measure determines the causal impact. Below we first give the formal definition, then several intuitive candidates for the impact kernel.

\begin{definition}[Causal Impact Function and Its Expectation]  \label{def:causalimpact}
Let $y^\star\in\dom(Y)$ denote the factual outcome value, and let $(Y^s,Y^n)$ be
distributed according to the sub-probability law $P_k^{s,n}$ (not a proper
probability measure in general; see below).
For any measurable function $ci: \dom(Y)^3\to\mathbb{R}$, the
\emph{(expected) causal impact} of $X_k$, written $\mathrm{PCI}_{X_k}$, is defined as
\begin{equation}\label{eq:causal-impact}
\mathbb{E}\!\left[\,ci\!\left(Y^s,Y^n,y^\star\right)\right]
\;:=\;
\iint
ci(y^s,y^n,y^\star)\,
P_k^{s,n}(d y^s, d y^n).
\end{equation}
The integral is taken against the sub-probability $P_k^{s,n}$ directly, not against
its renormalisation; the symbol $\mathbb{E}[\cdot]$ is shorthand for this
sub-probability integral throughout. We call $\mathrm{PCI}_{X_k}$ the \ourapproach score
of $X_k$; it is the concrete instance of the placeholder $R$ used in the desiderata
(p.~\pageref{desid}). Suspect configurations with
$X_k\notin\mathbf{C}$ contribute zero mass to $P_k^s$ and $P_k^n$ (and hence to
$P_k^{s,n}$), correctly capturing that those configurations make no claim on $X_k$'s
attribution.
\end{definition}

\medskip\noindent\textit{PCI in pseudocode.} Definitions~\ref{def:jointnecsuf}
and~\ref{def:causalimpact} are stated measure-theoretically because \ourapproach must
cover continuous, discrete, and mixed-type variables uniformly, and because Theorem~\ref{th:ac-exp} needs that generality. Operationally, under the
diagonal default of the remark above, the score they define reduces to a
short Monte Carlo loop:

\begin{algorithm}[htbp]
\caption{Monte Carlo estimator for $\mathbb{E}[ci]$ (diagonal default)}
\label{alg:pci_generic}
\begin{algorithmic}[1]
\Require PSCM $M$; variable of interest $X_k$; factual event $(\mathbf{s}^\star,\mathbf{w}^\star,y^\star)$;
$\Gamma$ on suspect/witness pairs; alternative-value distribution $\Delta$; impact function $ci$; sample budget $N$
\Ensure Monte Carlo estimate of $\mathbb{E}[ci(Y^s,Y^n,y^\star)]$
\For{$j=1,\dots,N$}
  \State draw $\mathbf{u}_j\sim P_{\mathbf{U}}$
  \State draw $(\mathbf{C}_j,\mathbf{T}_j)\sim\Gamma$, conditioned on
         $X_k$ being in the drawn suspect set \Comment{one shared configuration
         for both worlds, as in the remark after Def.~\ref{def:jointnecsuf}}
  \State draw $\mathbf{c}'_j\sim\Delta_{\mathbf{C}_j}(\mathbf{s}^\star)$
  \State $y^s_j \gets M.\mathrm{run}\bigl(\mathbf{u}_j,\ \mathrm{do}(\mathbf{C}_j=\mathbf{s}^\star\!\mid_{\mathbf{C}_j},\ \mathbf{T}_j=\mathbf{w}^\star\!\mid_{\mathbf{T}_j})\bigr)$ \Comment{sufficiency world}
  \State $y^n_j \gets M.\mathrm{run}\bigl(\mathbf{u}_j,\ \mathrm{do}(\mathbf{C}_j=\mathbf{c}'_j,\ \mathbf{T}_j=\mathbf{w}^\star\!\mid_{\mathbf{T}_j})\bigr)$ \Comment{necessity world, same $\mathbf{u}_j$ and $(\mathbf{C}_j,\mathbf{T}_j)$}
  \State $ci_j \gets ci(y^s_j, y^n_j, y^\star)$
\EndFor
\State \Return $\Pr_\Gamma[X_k\in\mathbf{C}]\cdot\frac{1}{N}\sum_j ci_j$
\Comment{rescales the conditional mean back to $\mathbb{E}[ci]$'s sub-probability normalisation}
\end{algorithmic}
\end{algorithm}

\noindent In words: draw noise $\mathbf{u}$; draw one suspect/witness configuration
containing $X_k$, shared by both worlds; run the model twice under
the \emph{same} $\mathbf{u}$ and the \emph{same} configuration, once restoring the
suspects to factual (sufficiency) and once setting them to a fresh alternative
(necessity); score the pair with $ci$; average.
Appendix~\ref{sec:ac_benchmark} (Algorithm~\ref{alg:pci_thin_search}) specialises this
to the necessity-only, per-suspect estimator used in the actual-causality benchmark.

\begin{figure*}[t]
\centering
\includegraphics[width=\textwidth]{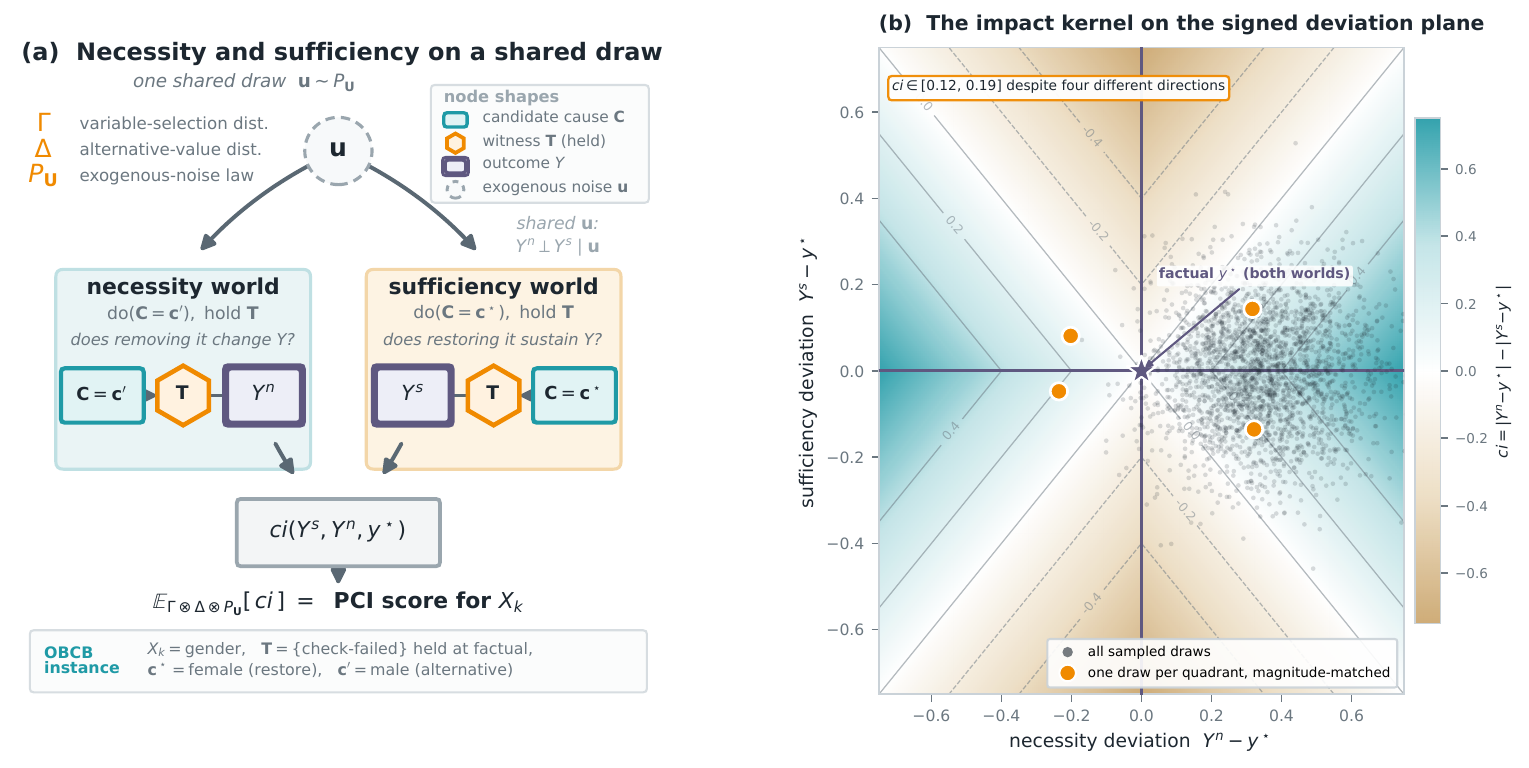}
\caption{The \ourapproach{} mechanism. \textbf{(a)} For a variable of interest
$X_k$, a single exogenous draw $\mathbf{u}\sim P_{\mathbf{U}}$ together with a
candidate cause set $\mathbf{C}\ni X_k$ and witness set $\mathbf{T}$ from
$\Gamma$ and an alternative value $\mathbf{c}'$ from $\Delta$ defines two
counterfactual worlds on the same noise and the same
$(\mathbf{C},\mathbf{T})$: a \emph{necessity world}
($\mathrm{do}(\mathbf{C}=\mathbf{c}')$, witnesses held) giving $Y^n$, and a
\emph{sufficiency world} ($\mathrm{do}(\mathbf{C}=\mathbf{c}^\star)$, witnesses
held) giving $Y^s$. Conditional on $\mathbf{u}$ and $(\mathbf{C},\mathbf{T})$ the
two worlds are independent (Definition~\ref{def:jointnecsuf}); the impact kernel
$ci$ scores the pair, and its expectation over $\Gamma\otimes\Delta\otimes
P_{\mathbf{U}}$ (Definition~\ref{def:causalimpact}) is \ourapproach{}'s realised
score. The strip grounds the schematic in the OBCB running example. \textbf{(b)} The impact kernel on the \emph{signed} plane
$(Y^n-y^\star,\,Y^s-y^\star)$, shown for the Absolute Difference score
$ci=|y^n-y^\star|-|y^s-y^\star|$, one admissible choice of kernel (the
PNS binary score and necessity-only variants are others), before either
deviation is folded by $|\cdot|$. Because $ci$ depends only on the two
absolute deviations, the field is symmetric under an independent sign flip of
either axis: four qualitatively different outcomes, necessity and sufficiency
each landing above or below the factual value, collapse onto the same score
whenever the magnitudes match. The faint cloud is a sample of $(Y^n,Y^s)$
draws; the four gold points are actual draws, one per
quadrant, with closely matched $|Y^n-y^\star|,|Y^s-y^\star|$, so their nearly
equal $ci$ values arise from the sampled data itself.}
\label{fig:pci-mechanism}
\end{figure*}

\medskip\noindent Informally, $ci$ is a user-defined scoring rule that translates
the joint sufficiency--necessity measure into a single scalar
(Figure~\ref{fig:pci-mechanism} summarises the construction: the two
counterfactual worlds in panel~(a), the kernel that scores them in
panel~(b)). The choice is not arbitrary:
the Absolute Difference example below is an instance of L$_1$ distance between outcomes, the
same metric underlying Average Treatment Effect,
 Conditional Average Treatment Effect, and SHAP.
The binary indicator score recovers this as a special
 case when outcomes are binary:
indicators are L$_1$ on $\{0,1\}$. Practitioners
 with reasons to prefer L$_2$ or another
distance measure may substitute it freely; the framework places 
no restriction on $ci$
beyond measurability.

For the OBCB analysis we instantiate $ci$ as the PNS binary score defined in the
following example, with $y^\star = \varname{loan} = 0$. This is the choice that produces
the values in Table~\ref{tab:pci_decomp}.

\begin{example}[PNS for Binary Outcomes]\label{ex:pns_binary}
Assume the outcome $Y$ is binary with $\dom(Y)=\{0,1\}$, and let $y^\star=1$ denote
the factual outcome. Let $Y^s$ and $Y^n$ denote the sufficiency and necessity
outcome variables. Define the causal impact score
\[
ci(y^s,y^n,y^\star)
=
\mathbb{I}\{y^n\neq y^\star\}\,\mathbb{I}\{y^s=y^\star\}.
\]
Then the expected causal impact reduces to
\[
\mathbb{E}\!\left[ci(Y^s,Y^n,1)\right]
=
\mathbb{P}(Y^n=0,\,Y^s=1),
\]
which conceptually coincides with Pearl's probability of necessity and sufficiency (PNS). The alignment is \emph{exact} when $\mathbf{S} = \{X_k\}$, $\mathbf{W}=\emptyset$, and $\dom(X_k)=\{0,1\}$ where, factually, $x_k^\star=1$.
\end{example}

We now illustrate the full computation for Alice's gender attribution using this $ci$.
With $\mathbf{S}=\{\varname{gender},\varname{credit}\}$,
$\mathbf{W}=\{\varname{check\text{-}failed}\}$, $\Gamma_s$ and $\Gamma_w$ uniform,
$\Delta$ a point mass on the alternative binary value, and $y^\star=\varname{loan}=0$,
the exogenous noise $U_\text{check}$ falls into three regions. (Definition~\ref{def:interlaw}
makes the interventional law a point mass given the \emph{full} noise vector
$\mathbf{u}$; below we coarsen $\mathbf{u}$ to these three $U_\text{check}$ regions and
marginalise the remaining noise, in particular $U_{\text{loan-if-checked}}$, inside each region; by iterated expectation this equals the full integral over
$P_{\mathbf{U}}$, and lets a value like ``$P(\varname{loan}=1)=0.05$'' appear
below despite Definition~\ref{def:interlaw}'s pointwise law being deterministic.)
The three regions are:
\begin{itemize}
\item $u_1$ (prob.\ 0.2): Alice is checked regardless of gender;
  $\varname{check\text{-}failed}=1$ factually.
\item $u_2$ (prob.\ 0.7): Alice is not checked as female but would be as male;
  $\varname{check\text{-}failed}=0$ factually.
\item $u_3$ (prob.\ 0.1): Alice is not checked regardless of gender.
\end{itemize}
In all regions, restoring Alice's factual values always yields a denial, so
$P_k^s(\{0\}\mid\mathbf{u})=\tfrac{2}{3}$ everywhere (four suspect-containing
$(\mathbf{C},\mathbf{T})$ combinations each with $p^\Gamma(\mathbf{C},\mathbf{T})=\tfrac{1}{3}\cdot\tfrac{1}{2}=\tfrac{1}{6}$,
since $\mathbf{S}\cap\mathbf{W}=\emptyset$ here, rejection is vacuous and
$\Gamma$ is just the product of marginals, each contributing probability~1).
The necessity measure varies.

In $u_1$, the witness holds $\varname{check\text{-}failed}=1$.
The four $(\mathbf{C},\mathbf{T})$ contributions, each weighted $\tfrac{1}{6}$, are:
\begin{itemize}
\item[$\circ$] $\mathbf{C}=\{\varname{gender}\}$, $\mathbf{T}=\emptyset$:
  male, $\varname{check\text{-}failed}=1$ propagates structurally
  $\Rightarrow P(\varname{loan}=1)=0.05$.
\item[$\circ$] $\mathbf{C}=\{\varname{gender}\}$,
  $\mathbf{T}=\{\varname{check\text{-}failed}\}$:
  male, witness holds failure
  $\Rightarrow P(\varname{loan}=1)=0.05$.
\item[$\circ$] $\mathbf{C}=\{\varname{gender},\varname{credit}\}$, $\mathbf{T}=\emptyset$:
  male and good credit, check passes
  $\Rightarrow P(\varname{loan}=1)=1.0$.
\item[$\circ$] $\mathbf{C}=\{\varname{gender},\varname{credit}\}$,
  $\mathbf{T}=\{\varname{check\text{-}failed}\}$:
  male and good credit, but witness holds failure
  $\Rightarrow P(\varname{loan}=1)=0.05$.
\end{itemize}
\[ P_k^n(\{1\}\mid u_1)
   = \tfrac{1}{6}(0.05+0.05+1.0+0.05) \approx 0.192. \]

In $u_2$, the witness holds $\varname{check\text{-}failed}=0$.
The four contributions:
\begin{itemize}
\item[$\circ$] $\mathbf{C}=\{\varname{gender}\}$, $\mathbf{T}=\emptyset$:
  male would be checked; $\varname{check\text{-}failed}=1$ propagates structurally
  $\Rightarrow P(\varname{loan}=1)=0.05$.
\item[$\circ$] $\mathbf{C}=\{\varname{gender}\}$,
  $\mathbf{T}=\{\varname{check\text{-}failed}\}$:
  male, witness holds $\varname{check\text{-}failed}=0$, check passes
  $\Rightarrow P(\varname{loan}=1)=1.0$.
\item[$\circ$] $\mathbf{C}=\{\varname{gender},\varname{credit}\}$, $\mathbf{T}=\emptyset$:
  male and good credit, check passes
  $\Rightarrow P(\varname{loan}=1)=1.0$.
\item[$\circ$] $\mathbf{C}=\{\varname{gender},\varname{credit}\}$,
  $\mathbf{T}=\{\varname{check\text{-}failed}\}$:
  male and good credit, witness holds $0$
  $\Rightarrow P(\varname{loan}=1)=1.0$.
\end{itemize}
\[ P_k^n(\{1\}\mid u_2)
   = \tfrac{1}{6}(0.05+1.0+1.0+1.0) \approx 0.508. \]

In $u_3$, male Alice is also not checked ($\varname{check}=0$ regardless of gender),
so all necessity worlds yield denial: $P_k^n(\{1\}\mid u_3)=0$.

Combining, with $ci(y^s,y^n,y^\star)=\mathbb{I}\{y^n\neq y^\star\}\,\mathbb{I}\{y^s=y^\star\}$
and $y^\star=\varname{loan}=0$:
\begin{align*}
\mathrm{PCI}_{\varname{gender}}(\text{Alice})
  &= \mathbb{E}\!\left[ci(Y^s,Y^n,y^\star)\right]
   = \mathbb{P}(Y^n=1,\,Y^s=0) \\
  &= \sum_u p(u)\,P_k^s(\{0\}\mid u)\,P_k^n(\{1\}\mid u) \\
  &= 0.2\times\tfrac{2}{3}\times 0.192
   + 0.7\times\tfrac{2}{3}\times 0.508
   + 0.1\times 0 \\
  &\approx 0.026 + 0.237 = 0.263.
\end{align*}

Table~\ref{tab:pci_decomp} gives the full results for Alice and Bob under these
parameter choices: Pearl's PN/PS/PNS, PCI without witnesses
($\mathbf{W}=\emptyset$), and PCI with the $\varname{check\text{-}failed}$ witness
active, with the latter further decomposed into its necessity and sufficiency
marginals.

\begin{table}[ht]
\centering
\small
\setlength{\tabcolsep}{4pt}
\begin{tabular}{l c c c c c c c}
\toprule
& \multicolumn{3}{c}{Pearl} & PCI & \multicolumn{3}{c}{PCI ($\mathbf{W}=\{\varname{check\text{-}failed}\}$)} \\
\cmidrule(lr){2-4} \cmidrule(lr){6-8}
Person, Variable & PN & PS & PNS & ($\mathbf{W}=\emptyset$) & Necessity & Sufficiency & Total \\
\midrule
Alice (\varname{gender}) & $0.045$ & $1.000$ & $0.045$ & $0.210$
                          & $0.394$ & $0.667$ & $\mathbf{0.263}$ \\
Alice (\varname{credit}) & $0.180$ & $1.000$ & $0.180$ & $0.240$
                          & $0.298$ & $0.667$ & $\mathbf{0.199}$ \\
Bob   (\varname{gender}) & $0.000$ & ---     & $0.000$ & $0.038$
                          & $0.030$ & $0.637$ & $\mathbf{0.019}$ \\
Bob   (\varname{credit}) & $0.900$ & $0.955$ & $0.860$ & $0.228$
                          & $0.188$ & $0.637$ & $\mathbf{0.119}$ \\
\bottomrule
\end{tabular}
\caption{Per-individual responsibility scores for \varname{gender} and \varname{credit}
on $\varname{loan}=0$ in the stochastic OBCB model. Pearl's PN/PS/PNS, PCI without
witnesses ($\mathbf{W}=\emptyset$), and PCI with the \varname{check\text{-}failed}
witness shown side by side; the rightmost three columns decompose PCI($\mathbf{W}=\{\varname{check\text{-}failed}\}$)
into its necessity and sufficiency marginals (their joint expectation is the bold
total). Here the total happens to equal the product of the two marginals because the
sufficiency measure $P_k^s(\cdot\mid\mathbf{u})$ is constant in $\mathbf{u}$ for both
individuals; in general the joint $P_k^{s,n}$ does not factor into the product of its
marginals. Bob's PS for gender is undefined because the conditioning event
$(\text{female}, \text{bad}, \varname{loan}=1)$ has probability zero in the OBCB model;
the PCI marginals are well-defined throughout, since they integrate over $P_{\mathbf{U}}$
rather than conditioning on the factual outcome.}
\label{tab:pci_decomp}
\end{table}

For Alice, PNS and PCI without witnesses both fail \textbf{D-A-rank} (p.~\pageref{desid}):
they rank \varname{credit} above \varname{gender} (0.180 vs.\ 0.045; 0.240 vs.\ 0.210).
PCI with witnesses reverses this (0.263 vs.\ 0.199), satisfying \textbf{D-A-rank} as well
as \textbf{D-A1} and \textbf{D-A2} (both values positive). Holding
\varname{check\text{-}failed} at its factual value blocks the credit-check bottleneck in
noise realizations where Alice was checked, isolating gender's structural role. The
\varname{check\text{-}failed} witness is the decisive ingredient.

For Bob, \textbf{D-B2} and \textbf{D-B-rank} hold under all three methods:
\varname{credit} is ranked above \varname{gender} throughout. \textbf{D-B1} requires
strictly positive attribution for \varname{gender}: PNS returns exactly 0.000 and
therefore fails, because $P(\varname{loan}=1\mid\varname{gender}=\text{female},
\varname{credit}=\text{bad})=0$ makes the counterfactual necessity calculation zero,
missing gender's indirect role in opening the credit evaluation. PCI with witnesses
returns 0.019, correctly capturing this indirect contribution. The cross-individual
desideratum \textbf{D-comp} holds under both PCI (0.263 $>$ 0.019) and PNS
(0.045 $>$ 0.000). A full verification of all seven desiderata is given at the end
of this section.

The PNS and PCI (no witnesses) columns differ even though neither uses witnesses,
because PCI still $\Gamma$-averages over candidate suspect sets.  With
$|\mathbf{S}|=2$ PCI averages over $\{\varname{gender}\}$, $\{\varname{credit}\}$,
and $\{\varname{gender},\varname{credit}\}$ (each weighted $\tfrac{1}{3}$),
whereas PNS evaluates a single counterfactual on one variable at a time.
Property~(i) below establishes that the two agree exactly when $|\mathbf{S}|=1$
(single suspect, no witnesses, binary outcome); the OBCB setup here instead uses
$|\mathbf{S}|=2$ deliberately, to demonstrate joint-subset attribution.

\subsection{Continuous Outcomes and the General Causal Impact Function}
\label{sub:continuous_ci}

\noindent The causal impact function $ci$ is a user-defined component of \ourapproach, and different choices encode different causal questions. In the benchmarking experiments of \cref{sec:ac_benchmark}, we instantiate $ci$ using only the necessity component, since Halpern’s actual causality has no sufficiency counterpart: necessity-only attribution allows a direct parallel comparison. The full framework incorporating both $Y^s$ and $Y^n$ is instantiated in the PNS binary
example above and in the Absolute Difference example below; the OBCB analysis of this
section (Tables~\ref{tab:pci_decomp}--\ref{tab:desiderata}) provides a concrete
validation, since the PNS binary $ci$ jointly conditions on $Y^s=y^\star$ and
$Y^n\neq y^\star$ and it is this combination that produces the correct
\textbf{D-A-rank} ordering. Further experiments using the full $ci$ on continuous outcomes are presented in
the signal-with-mediation example (Section~\ref{sec:signal}), the synthetic
evaluation (\cref{sub:synthetic}), and the SIR benchmark
(\cref{sec:sir_benchmark}).

When the outcome variable $Y$ is continuous rather than binary, indicators are replaced by a distance-based score that measures how far the necessity and sufficiency worlds land from the factual value $y^\star$.
The natural choice, paralleling L$_1$-based scores such as ATE and SHAP, is the absolute difference.

\begin{example}[Absolute Difference Impact Score] \label{ex:absolute_score}
We can develop an analogous measure for continuous outcomes where we increase attribution with distance from the factual value $y^\star$ in the necessity world and decrease impact with distance to the factual value in the sufficiency world.
\begin{align*}
ci(y^s, y^n, y^\star) & = \vert y^n - y^\star \vert   - \vert y^s - y^\star \vert.
\end{align*}
\end{example}

\noindent Two one-sided instances of this score recur in the experiments: a
necessity-only $ci_N$ and a sufficiency-only $ci_S$ (\cref{sub:synthetic}), whose sum
returns the absolute-difference $ci$ above. Each is a choice of $ci$ in the sense of
Definition~\ref{def:causalimpact}.

With the full machinery in place (structural model, variable selection, alternative
value distribution, joint necessity-sufficiency measure, and causal impact function),
\ourapproach delivers the following properties.
\begin{enumerate}[label=(\roman*)]
\item \textbf{Generalisation of PN/PS/PNS.} When $\mathbf{S}=\{X_k\}$,
  $\mathbf{W}=\emptyset$, and $\dom(X_k)=\{0,1\}$, the PNS binary $ci$ recovers Pearl's
  probability of necessity and sufficiency exactly; PN and PS are recovered by the
  corresponding one-sided $ci$ choices.
\item \textbf{Connection to actual causality.} For suitable $\Gamma$
  and $\Delta$, \ourapproach recovers Halpern's actual causality judgements
  (Section~\ref{sec:relation_with_ac}).
\item \textbf{Necessity--sufficiency decomposition.} $P_k^s$ and $P_k^n$ are defined and
  interpretable independently; practitioners may choose a $ci$ that uses either alone
  or both, depending on whether the causal question concerns necessity, sufficiency,
  or both.
\item \textbf{Witness-mediated symmetry breaking.} Holding intermediate variables fixed
  via $\mathbf{W}$ disambiguates overdetermination and undercutting scenarios where
  necessity-only methods assign equal or zero attribution, as demonstrated above for
  Alice and Bob, and further developed in Section~\ref{sec:shap_examples} (SHAP and
  Causal SHAP comparison), \cref{sub:synthetic} (synthetic overdetermination
  and undercutting archetypes), and \cref{sec:ac_benchmark} (witness ablation
  on the scaled throwing problem).
\item \textbf{Composability.} $\Gamma$ and $\Delta$ are probability distributions and
  $ci$ is a measurable function, so all three compose directly with the
  probabilistic-programming machinery discussed in Section~\ref{sec:intro};
  \cref{sec:evaluation}'s benchmarks estimate them by Monte Carlo throughout, without
  modification to the core formalism.
\end{enumerate}

\subsection{Back to the Running Example: Verifying the Desiderata}
\label{sub:verify_desiderata}

Table~\ref{tab:desiderata} verifies the seven desiderata introduced in
Section~\ref{sec:motivations} (p.~\pageref{desid}) against the values in Table~\ref{tab:pci_decomp}.
PCI with witnesses satisfies all seven; PCI without witnesses fails \textbf{D-A-rank};
PNS fails \textbf{D-A-rank} and \textbf{D-B1}.
(The table uses the shorthand $|R(V\mid\text{person})|$ for
$|R(V\rightsquigarrow\varname{loan}\mid\text{person})|$.)

\begin{table}[ht]
\centering
\resizebox{\linewidth}{!}{%
\begin{tabular}{lccc}
\toprule
Desideratum & PNS & PCI (no wit.) & PCI (with wit.) \\
\midrule
\textbf{D-A1}: $|R(\varname{gender}\mid\text{Alice})| > 0$
  & $\checkmark$\ 0.045 & $\checkmark$\ 0.210 & $\checkmark$\ 0.263 \\
\textbf{D-A2}: $|R(\varname{credit}\mid\text{Alice})| > 0$
  & $\checkmark$\ 0.180 & $\checkmark$\ 0.240 & $\checkmark$\ 0.199 \\
\textbf{D-A-rank}: $|R(\varname{gender}\mid\text{Alice})| > |R(\varname{credit}\mid\text{Alice})|$
  & $\times$\ $0.045 {<} 0.180$
  & $\times$\ $0.210 {<} 0.240$
  & $\checkmark$\ $0.263 {>} 0.199$ \\
\textbf{D-B1}: $|R(\varname{gender}\mid\text{Bob})| > 0$
  & $\times$\ 0.000 & $\checkmark$\ 0.038 & $\checkmark$\ 0.019 \\
\textbf{D-B2}: $|R(\varname{credit}\mid\text{Bob})| > 0$
  & $\checkmark$\ 0.860 & $\checkmark$\ 0.228 & $\checkmark$\ 0.119 \\
\textbf{D-B-rank}: $|R(\varname{credit}\mid\text{Bob})| > |R(\varname{gender}\mid\text{Bob})|$
  & $\checkmark$\ $0.860 {>} 0.000$
  & $\checkmark$\ $0.228 {>} 0.038$
  & $\checkmark$\ $0.119 {>} 0.019$ \\
\textbf{D-comp}: $|R(\varname{gender}\mid\text{Alice})| > |R(\varname{gender}\mid\text{Bob})|$
  & $\checkmark$\ $0.045 {>} 0.000$
  & $\checkmark$\ $0.210 {>} 0.038$
  & $\checkmark$\ $0.263 {>} 0.019$ \\
\bottomrule
\end{tabular}}
\caption{Verification of the seven desiderata (p.~\pageref{desid}) against the values in
Table~\ref{tab:pci_decomp}. PCI with witnesses satisfies all seven; PCI without witnesses
fails \textbf{D-A-rank}; PNS fails \textbf{D-A-rank} and \textbf{D-B1}.}
\label{tab:desiderata}
\end{table}

\medskip\noindent\textit{PNS fails \textbf{D-A-rank} and \textbf{D-B1}. The D-A-rank
failure reflects that without a \varname{check\text{-}failed} witness,
necessity-based attribution cannot distinguish Alice's gender (which blocked the
evaluation entirely) from her credit (which would only have mattered had she been
evaluated). The D-B1 failure reflects that PNS assigns Bob's gender exactly zero:
because $P(\varname{loan}=1\mid\text{female},\text{bad})=0$,
the counterfactual necessity calculation returns zero, missing gender's indirect role in
opening the credit evaluation. The \varname{check\text{-}failed} witness resolves both.}

To close this section, we compare PCI's decomposition with the one already
present in Pearl's framework. Pearl's framework defines three related but distinct
binary scores: the probability of necessity
$\mathrm{PN} = P(Y_{x'} = y' \mid X = x^\star, Y = y^\star)$, the probability of sufficiency
$\mathrm{PS} = P(Y_{x^\star} = y^\star \mid X = x', Y = y')$, and their joint
$\mathrm{PNS} = P(Y_{x^\star} = y^\star, Y_{x'} = y')$. The first two are conditional
probabilities; $\mathrm{PNS}$ is unconditional, and, \emph{when the cause variable
$X$ is exogenous} (as for the root variables gender and credit in OBCB, so that
$P(Y_{x^\star}=y^\star)=P(X=x^\star,Y=y^\star)$), the three satisfy
$\mathrm{PNS} = \mathrm{PN}\cdot P(Y_{x^\star}=y^\star) = \mathrm{PS}\cdot P(Y_{x'}=y')$ rather
than $\mathrm{PNS} = \mathrm{PN}\cdot\mathrm{PS}$ (which would additionally require
$Y_{x^\star}\perp Y_{x'}$, an assumption typically violated since the two potential
outcomes share exogenous noise). Without exogeneity, $P(Y_{x^\star}=y^\star)$ and
$P(X=x^\star,Y=y^\star)$ can differ and the displayed identity need not hold; the fully
general relation between $\mathrm{PNS}$, $\mathrm{PN}$, and $\mathrm{PS}$ is Pearl's
weighted-sum decomposition (\citealp[Lemma 9.2.6]{causalityPearl}),
$\mathrm{PNS} = P(x^\star,y^\star)\,\mathrm{PN} + P(x',y\neq y^\star)\,\mathrm{PS}$.
PCI admits a related split: with
$ci = \mathbb{I}\{y^n \neq y^\star\}\,\mathbb{I}\{y^s = y^\star\}$,
\[
\mathbb{E}[ci(Y^s, Y^n, y^\star)]
= \int P^s_k(\{y^\star\}\mid\mathbf{u})\,P^n_k(\{1{-}y^\star\}\mid\mathbf{u})\,dP_{\mathbf{U}},
\]
giving a \emph{sufficiency marginal} and a \emph{necessity marginal} that record how
often, on average, restoring the factual values of a suspect set containing $X_k$
reproduces the factual outcome, and how often substituting alternatives overturns it.
The two marginals are independent given $\mathbf{u}$.

The components are not identical to Pearl's: $\mathrm{PN}$ and $\mathrm{PS}$ condition
on the factual outcome and consider a single counterfactual flip; PCI's marginals
integrate over $P_{\mathbf{U}}$, average over suspect sets weighted by $\Gamma$, and
hold the witness configurations fixed when active. Table~\ref{tab:pci_decomp} (presented
earlier) places both decompositions side by side for \varname{gender} and
\varname{credit}.

For Alice's gender, $\mathrm{PN} = 0.045$ is the bare probability that a male with bad
credit would have been approved; PCI's necessity marginal is $0.394$, nearly an order of
magnitude higher, because the \varname{check\text{-}failed} witness keeps the
credit-check bottleneck active when the male counterfactual would otherwise have been
checked and might still have failed, structural information that the unconditional
$\mathrm{PN}$ cannot register. For Bob's gender, $\mathrm{PN}$ is exactly $0$ (a female
with bad credit cannot be approved in this model) and $\mathrm{PS}$ is undefined; PCI's
marginals remain well-defined at $0.030$ and $0.637$, recovering the indirect role of
gender in opening the evaluation.

The next section turns from comparing PCI with Pearl's family of counterfactual scores
to comparing it with the SHAP family, methods grounded in cooperative game theory
rather than in structural counterfactual reasoning.
Section~\ref{sec:shap_examples} re-runs the OBCB analysis under plain and Causal
SHAP, then introduces a continuous mediation example where the two SHAP variants
diverge from each other and from PCI, with explicit desiderata recorded throughout.

\section{Empirical and Comparative Evaluation: Overview}\label{sec:evaluation}

With \ourapproach now defined, we preview how it performs against the dominant
feature-attribution baselines, actual causality and Pearl's probability of
causation, a causal-effect attribution method, and problem sizes ranging from
textbook examples to a deployed system. Each subsection states one comparative
claim and points to the full treatment: the formal comparisons in
Sections~\ref{sec:shap_examples}--\ref{sub:pearl_actual_cause_prob}, and the complete
protocols in the appendices.

\subsection{\ourapproach matches actual-causality verdicts on canonical archetypes}
\label{sec:eval_verdicts}
On the discrete structural models where actual-causality verdicts are available, \ourapproach should
reproduce its verdicts; on the canonical overdetermination and undercutting
patterns it should separate a genuine cause from a preempted one. We test this on a
synthetic structural model that carries three archetypes at once, linear necessary-and-sufficient, overdetermined, and preempted (gated), plus an irrelevance control, all with an analytic ground truth
(Appendix~\ref{sub:synthetic}). Across two contrasting factual cases, all ten
archetype desiderata hold: \ourapproach scores the active cause above the
preempted one, keeps the irrelevant control at the noise floor, and, unlike the
binary actual-cause verdict, does so with graded magnitudes that extend to
continuous variables. The pass threshold is a Monte Carlo noise floor: clearing it means the gap is
distinguishable from sampling variance at the chosen budget; the gap need not
be large in absolute terms (Appendix~\ref{sub:synthetic}). The same agreement holds on Pearl's desert-traveller, where
\ourapproach reproduces his within-scenario ranking
(Section~\ref{sub:pearl_actual_cause_prob}, Appendix~\ref{app:desert}).

\begin{figure}[htbp]
\centering
\includegraphics[width=0.92\linewidth]{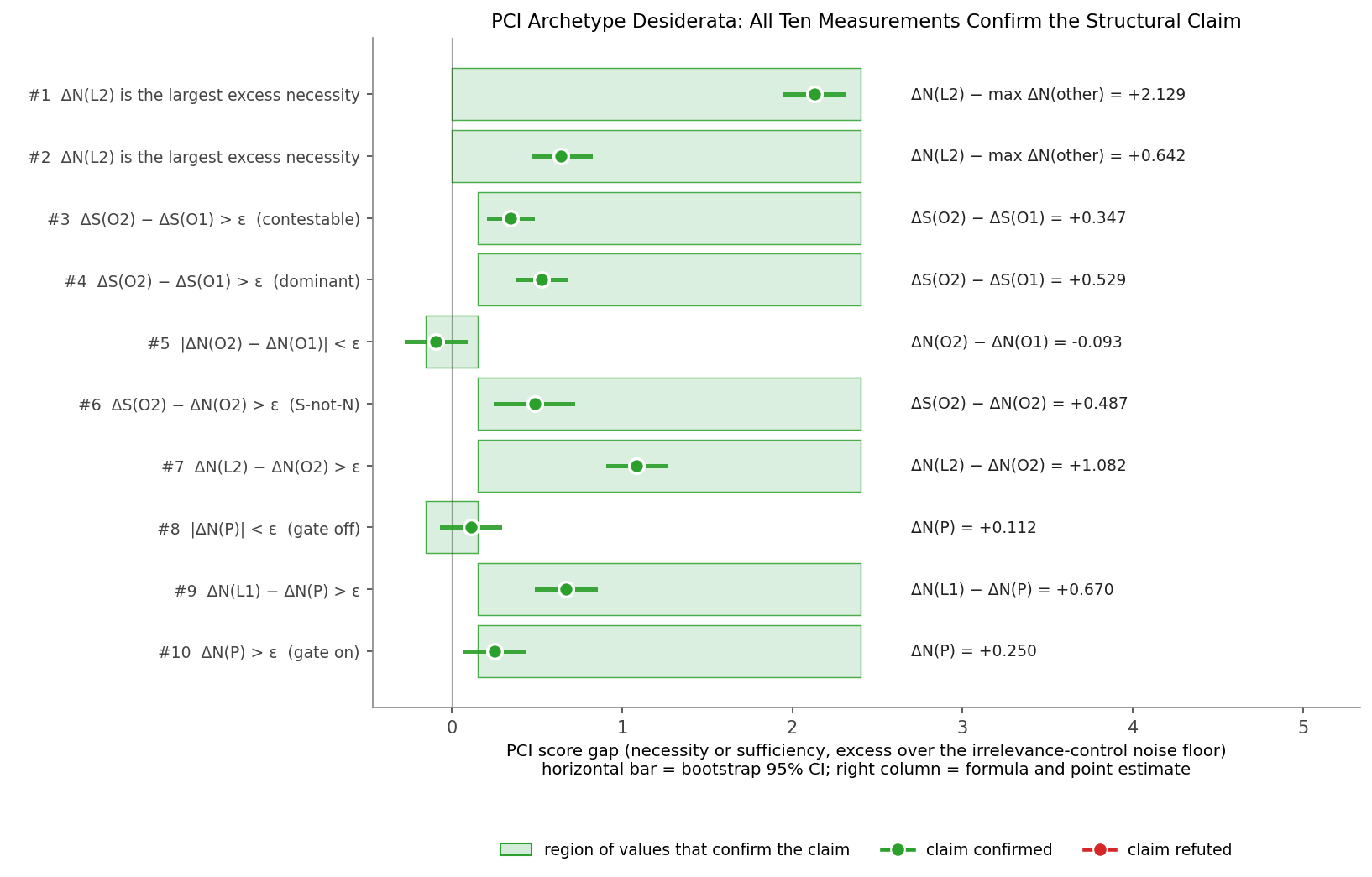}
\caption{\textbf{\ourapproach recovers the actual-causality verdicts on a
synthetic model with three archetypes and an irrelevance control.} Each row is one archetype desideratum (linear
necessity/sufficiency, overdetermination, preemption, irrelevance control); the
dot is the bootstrap point estimate of the relevant $\Delta$ score, the bar its
$95\%$ CI, and the green band the pass region given the data-derived threshold
$\epsilon \approx 0.151$. All ten dots fall inside their bands. Full model,
estimator, and per-row analysis in Appendix~\ref{sub:synthetic}.}
\label{fig:eval_archetypes}
\end{figure}

\subsection{SHAP and Causal SHAP}
\label{sec:eval_shap}
\ourapproach separates direct and indirect causal contributions; SHAP-based
attribution collapses them into one number. Both plain and Causal SHAP decompose a prediction's departure
from a population mean. Because that mean is not the realized outcome, this choice
produces two failures at once. First, they assign attribution backward, against the
arrows of the causal graph; Causal SHAP's interventional correction fixes only part
of this. Second, they tie together the direct and indirect causes of an outcome
that a context-sensitive method should keep separate. \ourapproach{}'s
realized-outcome reference and witness mechanism fix both problems, separating the
direct cause from the indirect one at every instance, as
Table~\ref{tab:eval_shap_desiderata} summarizes on the signal-with-mediation
example ($X \to M \to Y$).

\begin{table}[htbp]
\centering
\small
\begin{tabular}{lccc}
\toprule
Desideratum & Plain SHAP & Causal SHAP & \ourapproach \\
\midrule
D-XY: $X$ genuinely causes $Y$ & \checkmark & \checkmark & \checkmark \\
D-MY: $M$ genuinely causes $Y$ & \checkmark & \checkmark & \checkmark \\
D-XM: $X$ genuinely causes $M$ & \checkmark & \checkmark & \checkmark \\
D-YX: no backward attribution from $Y$ to $X$ & \texttimes & \checkmark & \checkmark \\
D-YM: no backward attribution from $Y$ to $M$ & \texttimes & \texttimes & \checkmark \\
D-MX: no backward attribution from $M$ to $X$ & \texttimes & \texttimes & \checkmark \\
D-MXY: the direct cause $M$ outranks the indirect cause $X$ & \texttimes & \texttimes & \checkmark \\
\bottomrule
\end{tabular}
\caption{Desiderata satisfied (\checkmark) or violated (\texttimes) by plain SHAP,
Causal SHAP, and \ourapproach on the signal-with-mediation example. Causal SHAP's
interventional correction repairs only D-YX. Full numeric values and witness-set
variants in Table~\ref{tab:signal_desid}.}
\label{tab:eval_shap_desiderata}
\end{table}

Section~\ref{sec:shap_examples}
states these failures as formal desiderata and works through both in full on two worked
examples (OBCB and the signal-with-mediation chain), with the formal SHAP and Causal
SHAP definitions in Appendix~\ref{app:signal}.

\subsection{The Differential Causal Effect}
\label{sec:eval_dce}
The next comparison targets a causal-effect attribution method: the
Differential Causal Effect (DCE) attributes an outcome change to features via local
derivatives of the structural map. On a model with a non-monotone internal response,
DCE and \ourapproach can disagree on sign and magnitude where the local gradient is
unrepresentative of the counterfactual contrast \ourapproach integrates over:
tracking the realized-outcome contrast keeps
\ourapproach's attribution aligned with each feature's structural role, without
depending on a single local derivative.

\begin{figure}[htbp]
\centering
\includegraphics[width=\linewidth]{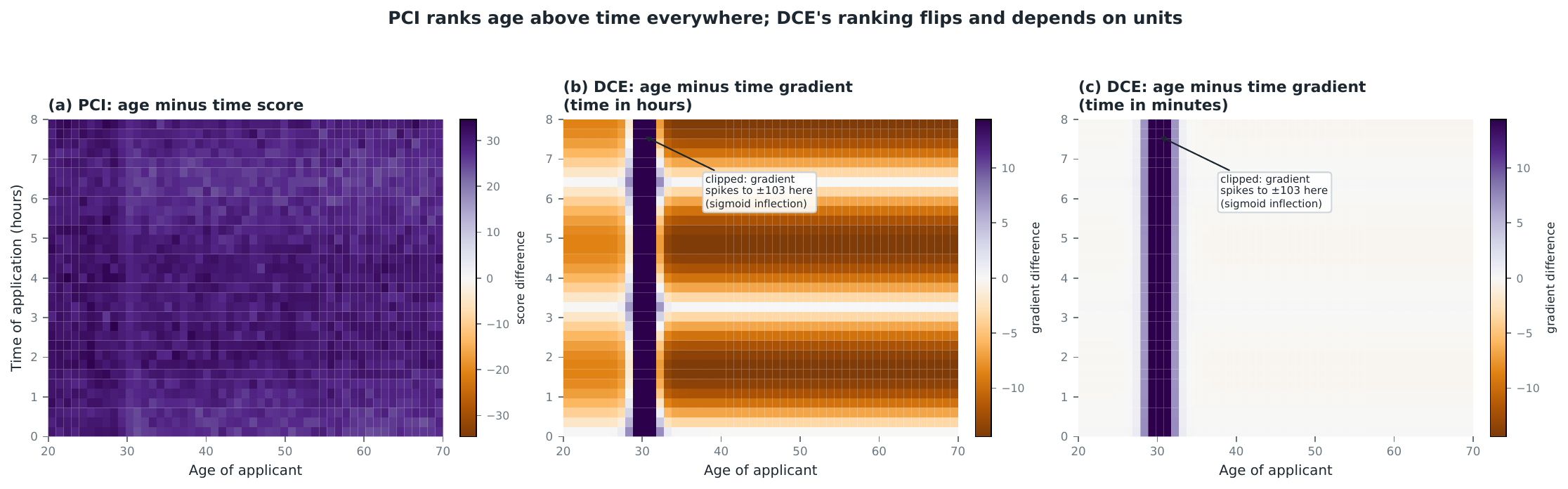}
\caption{\textbf{\ourapproach stays consistent where DCE does not.} \textbf{(a)}
\ourapproach ranks age above time of application everywhere on the credit-limit
grid; the gap is somewhat denser around the age of rapid change (the sigmoid's
steep region) but stays positive throughout. \textbf{(b), (c)} The same contrast for DCE, on the hours and minutes
time scales: DCE favours time of application over most of the grid (the
opposite of \ourapproach's verdict), except in a narrow band where its
gradient spikes at the sigmoid's inflection point, and that pattern itself
depends on which time unit is used. Full discussion in Appendix~\ref{sec:DCE}.}
\label{fig:eval_dce}
\end{figure}

Appendix~\ref{sec:DCE} shows the full contrast over a credit-limit grid.

\subsection{Scaling past exact actual causality}
\label{sec:eval_scaling}
Actual-cause computation requires enumerating counterfactual subsets, which
is exponential; \ourapproach replaces enumeration with a Monte Carlo thin search.
On a controlled family of problems (Appendix~\ref{sec:ac_benchmark}), exact subset
enumeration times out at about $17$ variables, whereas the necessity-only
\ourapproach estimator continues to roughly $73$--$145$ variables under an
undemanding sample budget and a per-size compute cap of $60$ minutes. It sustains a
correct-attribution rate above $0.85$ up to about $60$ variables while visiting
several orders of magnitude fewer cause--witness configurations.

\begin{figure}[htbp]
\centering
\includegraphics[width=0.49\linewidth]{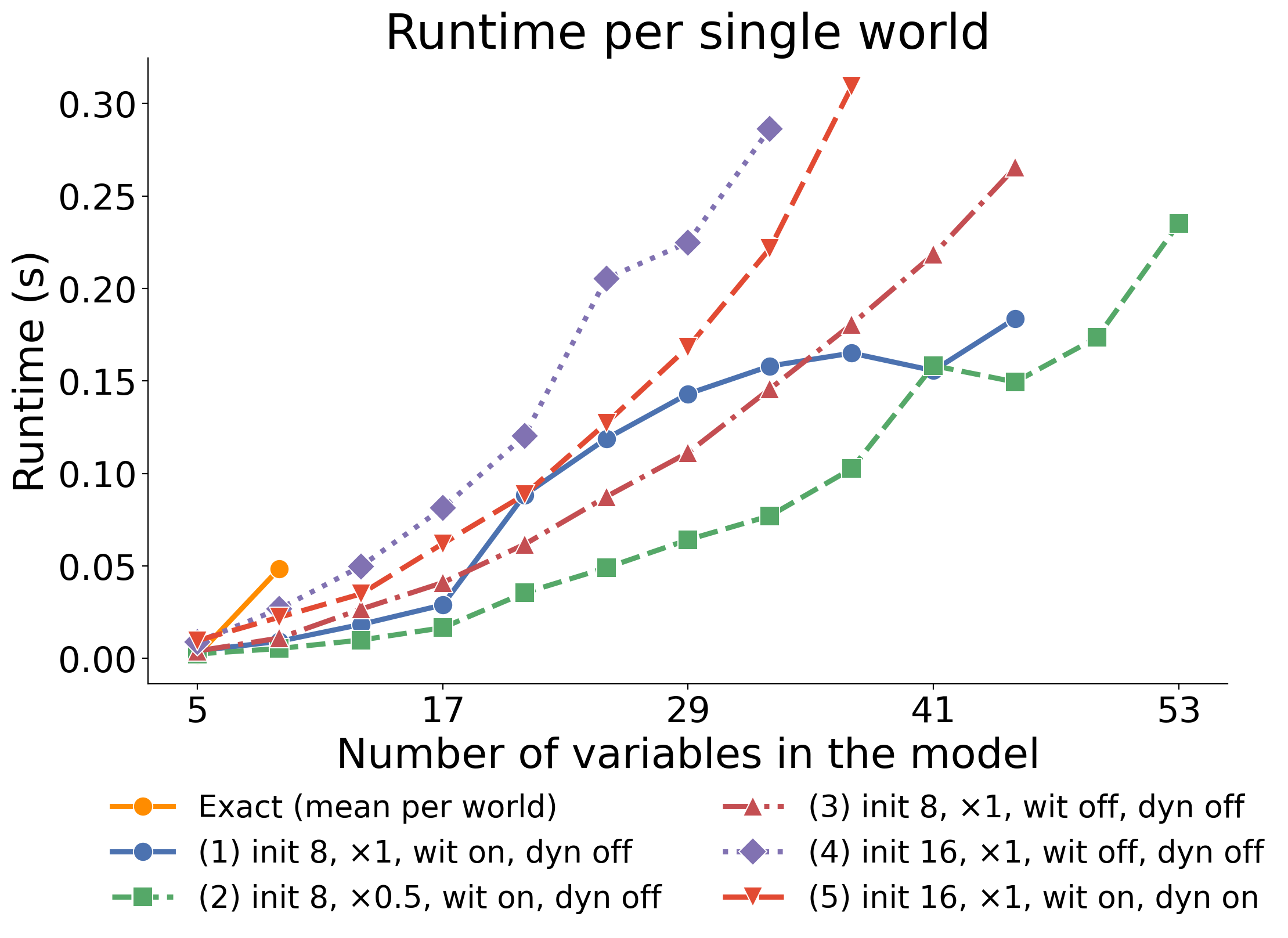}
\hfill
\includegraphics[width=0.49\linewidth]{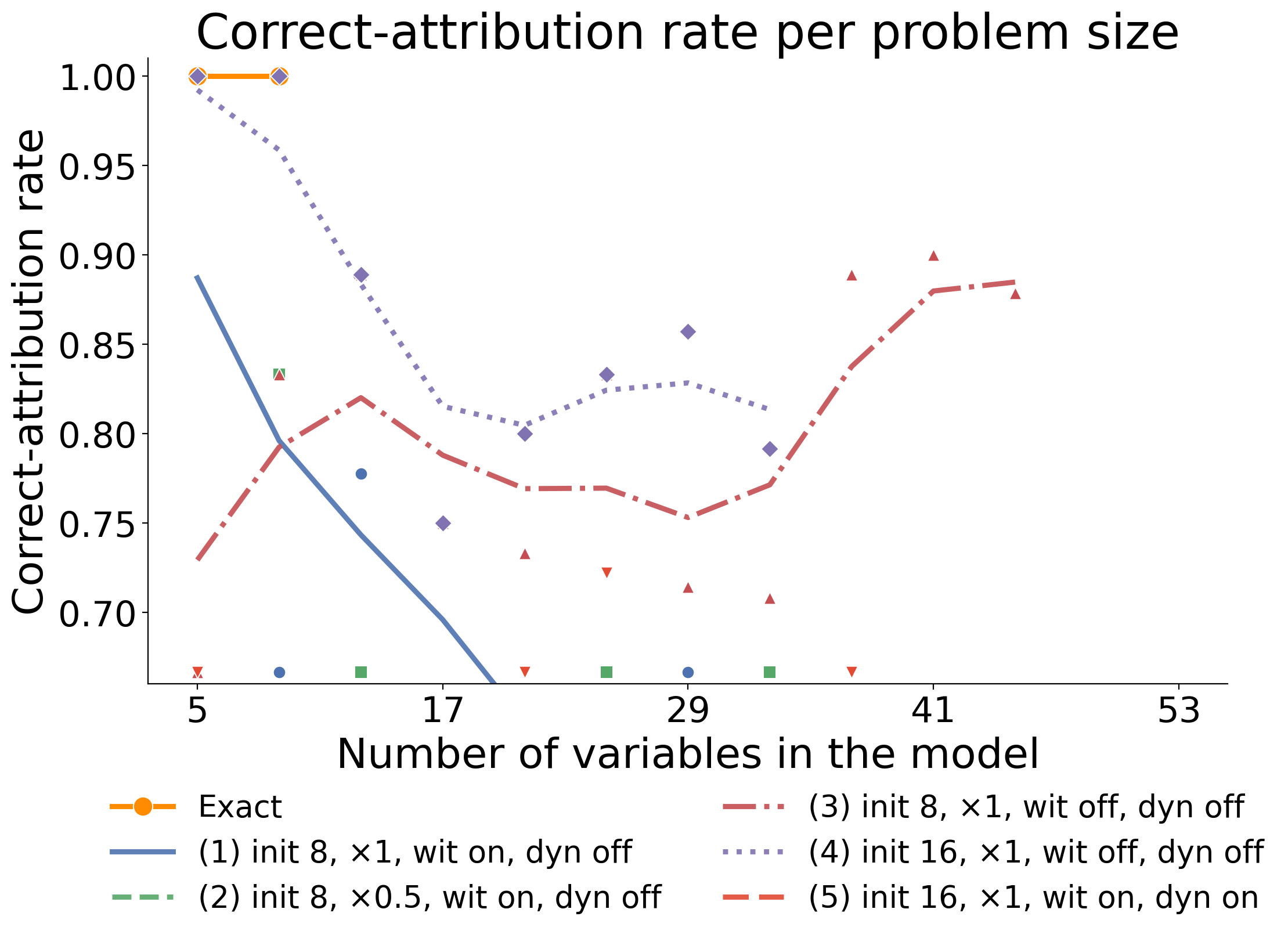}
\caption{\textbf{\ourapproach's estimator continues to work well past the point
where exact actual-cause enumeration gives out.} Left, wall-clock vs.\ problem
size: exact subset enumeration times out near $17$ variables, while the
\ourapproach estimator continues to roughly $73$--$145$. Right,
correct-attribution rate: witness-enabled configurations stay above
$0.85$ well past the exact cut-off, degrading gradually with problem size. Full
benchmark protocol in Appendix~\ref{sec:ac_benchmark}.}
\label{fig:eval_scaling}
\end{figure}

\subsection{Continuous, dynamical outcomes}
\label{sec:eval_sir}
Actual causality has a second limit besides scale: it offers no verdict when the
outcome is continuous and the causes interact through a dynamical system. On a
Bayesian SIR (susceptible--infected--recovered) model with two non-pharmaceutical
policies and a threshold (peak-overshoot) outcome (Appendix~\ref{sec:sir_benchmark}),
but-for analysis cannot separate the two policies; \ourapproach assigns
lockdown roughly twice the responsibility of masking, with the gap driven by the
necessity term under witness pinning. The verdict holds across $20$ factual worlds
drawn from the prior, with lockdown ahead in $18$ of them and a mean gap of
$+0.701$ (standard error $0.111$).

\begin{figure}[htbp]
\centering
\includegraphics[width=0.80\linewidth]{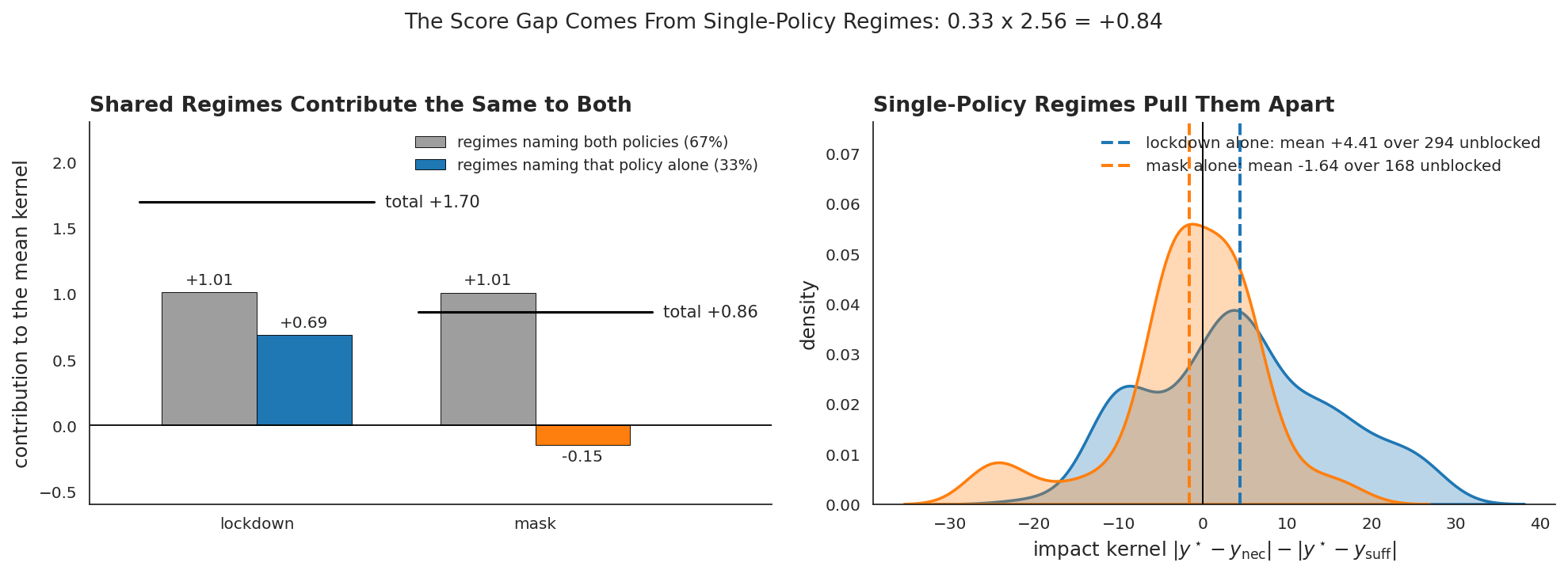}
\caption{\textbf{\ourapproach separates interacting policies on a continuous
dynamical outcome.} Joint necessity-sufficiency over the two policies for the
peak-overshoot event; lockdown carries roughly twice the responsibility of masking
where but-for analysis ties them. Full model and query in
Appendix~\ref{sec:sir_benchmark}.}
\label{fig:eval_sir}
\end{figure}

\subsection{A real-world deployed model}
\label{sec:eval_avm}
Finally, \ourapproach runs on a deployed automated valuation model (AVM) with
machine-learned components trained on millions of points, a regime where actual
causality is intractable and SHAP is the practical default
(Appendix~\ref{sec:avm}). \ourapproach returns a structured, context-sensitive
attribution across the model's internal variables; SHAP, by contrast, concentrates
almost all responsibility on a few downstream variables and places little weight
on the rest. This deployed model has no independently known ground truth.
\ourapproach and SHAP disagree here in the same qualitative direction already
validated by the synthetic benchmark's ground truth, but on its own the AVM
case demonstrates feasibility; it does not independently confirm correctness
(\cref{sec:discussion}).

\begin{figure}[htbp]
\centering
\includegraphics[width=0.49\linewidth]{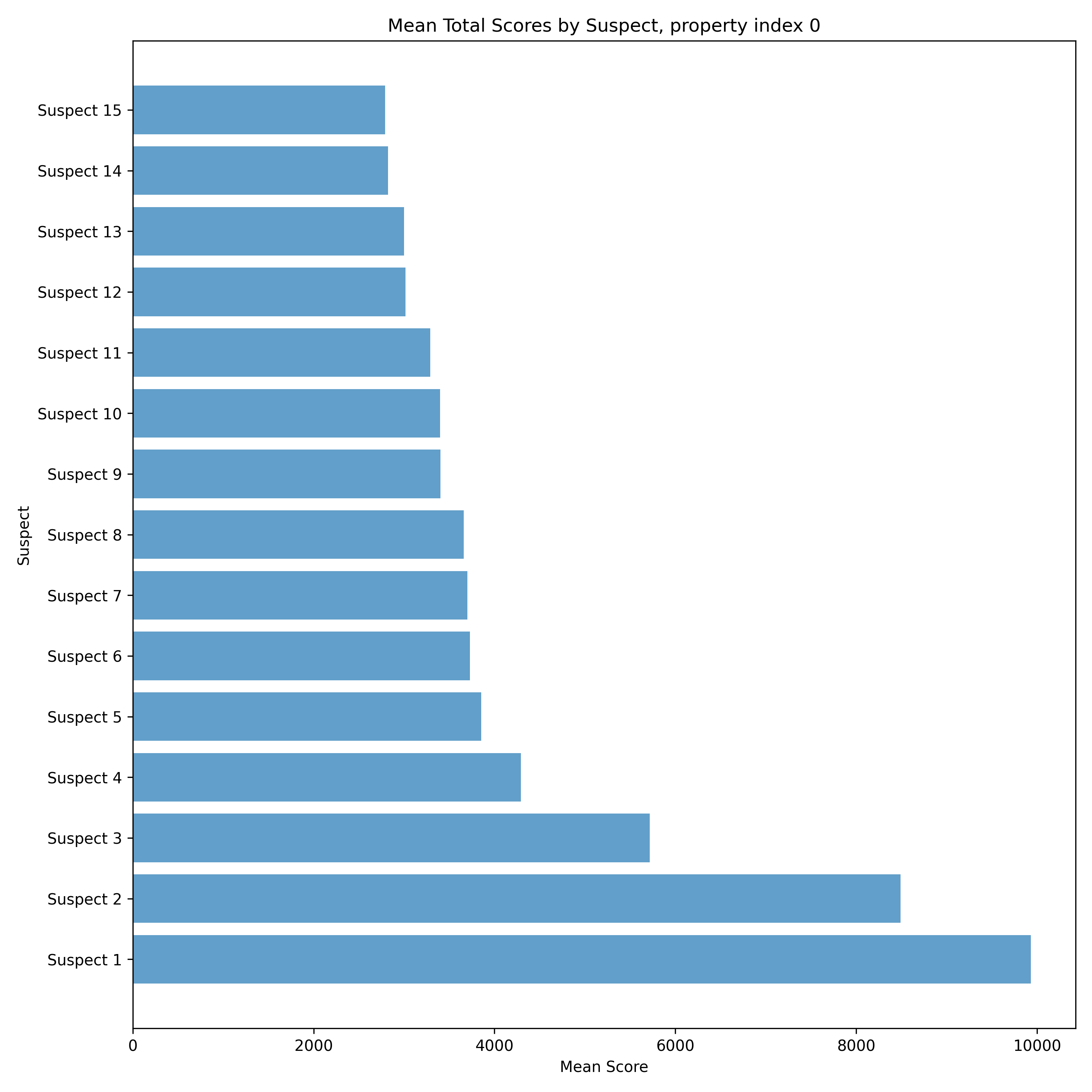}
\hfill
\includegraphics[width=0.49\linewidth]{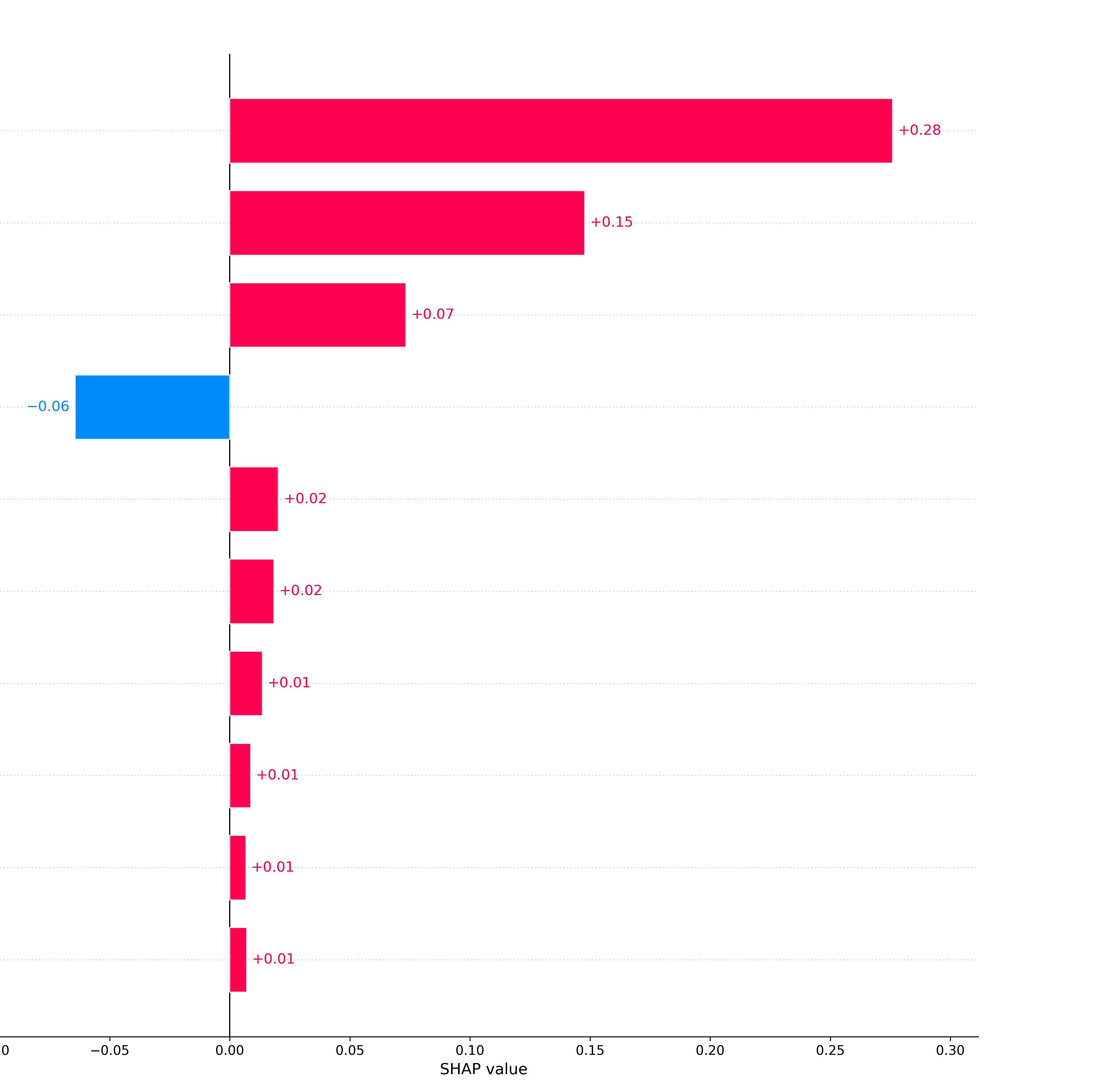}
\caption{\textbf{\ourapproach vs.\ SHAP on a deployed automated valuation model.}
Both panels show a selection of the model's internal variables; real variable
names are withheld under an NDA with the model's operator, and the two panels
rank their own top variables independently, so a given position in one panel
does not correspond to the same variable in the other. Left: \ourapproach's
context-sensitive attribution across the AVM's internal variables. Right: SHAP
concentrates responsibility on a few downstream variables. Full pipeline and
anonymized structure in Appendix~\ref{sec:avm}.}
\label{fig:eval_avm}
\end{figure}

\medskip\noindent Across these settings \ourapproach matches actual-causality verdicts on canonical archetypes and extends past them where they do not apply, at a cost that remains tractable; the full protocols, derivations, and additional figures are collected in Appendices~\ref{app:obcb}--\ref{sec:avm}. Sections~\ref{sec:shap_examples}--\ref{sub:pearl_actual_cause_prob} develop these comparative claims in full before \cref{sec:discussion} draws out the implications.

\section{Examples and Comparison to SHAP and Causal SHAP}\label{sec:shap_examples}

 Section~\ref{sec:causal_impact} defined \ourapproach
and verified, on Alice and Bob, that the resulting attribution satisfies all seven
desiderata of Section~\ref{sec:motivations} (p.~\pageref{desid}), where PN/PS/PNS
fail at least two. The natural next
question is how \ourapproach compares to SHAP and Causal SHAP, the dominant
practical alternatives for feature attribution that practitioners typically
reach for. This section runs that comparison
on two examples, recording where each method passes or fails an explicit
desideratum. Formal definitions of SHAP and Causal SHAP are in \cref{sec:shap_defs}; we re-run OBCB
(Section~\ref{sec:obcb_shap}), where the two SHAP variants coincide because the
feature set is exogenous, so the divergence between PCI and SHAP is the only
contrast to draw. We then introduce a continuous mediation example
(Section~\ref{sec:signal}) where plain and Causal SHAP do diverge (one
desideratum, D-YX, is fixed by the interventional correction; others, D-YM,
D-MX, D-MXY, are not), and tabulate the desiderata in
Section~\ref{sec:shap_signal}. Section~\ref{sec:shap_summary} traces every
failure to one of three structural gaps in the SHAP framework that PCI's witness
mechanism, suspect-set distribution $\Gamma_s$, and realised-outcome reference
jointly close.

Shapley values, grounded in cooperative game theory, decompose a model's
prediction into additive contributions from individual features and have
become one of the most widely used attribution tools in the ML literature.
\citet{janzing2020feature} argued that the conditional expectation in plain SHAP is the
conceptually wrong way to model feature removal: a dropped feature should be
marginalised under a $do$-intervention, not under conditioning.
\citet{heskes2020causal} propose Causal SHAP, which implements this prescription by
replacing the observational marginal used for features outside the active coalition
with an interventional distribution: when a dropped feature is causally downstream of
a coalition member, its distribution under the intervention differs from its marginal,
and Causal SHAP corrects for this using Pearl's do-calculus.\footnote{%
A different counterfactual line is taken by \citet{sharma2022cfshapley}, whose
CF-Shapley values use a structural causal model to attribute the change in a metric
from a single fixed reference $x^{\text{ref}}$ to the observed $x$:
$\phi_j \propto \sum_{\mathcal{S}\subseteq N\setminus\{j\}}
[f(x_\mathcal{S}, x_j^{\text{ref}}) - f(x_\mathcal{S}, x_j)]$.
This is the closest neighbour to \ourapproach{} in spirit (single contrastive
scenario, $do$-style intervention) and differs in three ways.
\emph{(i)} CF-Shapley commits to one reference $x^{\text{ref}}$, while \ourapproach{}
integrates over an alternative-value distribution $\Delta(\mathbf{x})$ and need not pick
a single contrast.
\emph{(ii)} CF-Shapley returns a one-axis contrast
$f(\cdot, x_j^{\text{ref}}) - f(\cdot, x_j)$, while \ourapproach{} reports a joint
$(Y^s, Y^n)$ measure and applies a user-chosen $ci$, decomposing necessity and
sufficiency separately.
\emph{(iii)} CF-Shapley has no witness mechanism, so it cannot break the
overdetermination and undercutting symmetries that \ourapproach{}'s witness set
$\mathbf{W}$ resolves on the OBCB and signal walkthroughs of this section.}

Despite this repair, both methods share deeper limitations. First, both plain and
Causal SHAP can assign non-zero responsibility to variables that play no causal role
with respect to the target: attribution can run against the direction of the causal
graph, a failure the interventional correction does not address. Second, both methods explain a model's prediction as a departure from that model's population-average output, not from the factual outcome an individual actually experienced. When the goal is \emph{causal responsibility
attribution} rather than prediction explanation, this mismatch in target produces
further divergences, which we make explicit below. The first limitation will surface in our signal example as violations of
D-YX, D-YM, and D-MX; the second will surface in OBCB as the D-A-rank overdetermination
effect, and in the signal example as Instance~2's budget collapse when the prediction
equals its baseline.

With these general limitations in view, we trace them through the same two
examples, comparing plain SHAP, Causal SHAP, and PCI against the desiderata of
Section~\ref{sec:causal_impact} together with the desiderata specific to each
example.

\medskip\noindent
At the level of subset weighting alone, \ourapproach{} (with $J=1$, $K=|\mathbf{S}|$) and
SHAP are not distinguishable: $\Gamma$'s subset weights match the Shapley kernel. The
distinction lies in the integrand. Plain SHAP (in the conditional reading of
Eq.~\eqref{eq:plain_shap_value}, which we use throughout except where noted) averages marginal contributions using the observational conditional
$\mathbb{E}[f(\mathbf{X}) \mid \mathbf{X}_{\mathcal{S}} = \mathbf{x}_{\mathcal{S}}]$, drawing the
non-coalition features from their distribution given the coalition; a feature merely
correlated with a true cause can therefore receive positive attribution. \ourapproach{}
evaluates counterfactual
outcomes
$P(Y \in A \mid \mathbf{u},\, \mathrm{do}(\mathbf{C}=\mathbf{c}',\,\mathbf{T}=\mathbf{t}^\star))$
computed via the structural model, so attribution flows only along genuine causal paths.
The witness mechanism adds a further distinction: by holding intermediate variables fixed,
\ourapproach{} can resolve overdetermination and undercutting scenarios where SHAP assigns
equal or incorrect attribution. The remainder of this section makes both points concrete on
two examples.

\subsection{OBCB: SHAP and Causal SHAP}\label{sec:obcb_shap}

To apply the SHAP and Causal SHAP definitions of \cref{sec:shap_defs}
to OBCB we must specify both a feature set $N$ and a function $f$ on the joint
feature space; the value functions and Shapley sums are derived from these.
The natural choice for $f$ is the conditional expectation of the outcome,
$f(x_N)=\mathbb{E}[\varname{loan}\mid X_N=x_N]$. But the OBCB PSCM has five
variables ($\varname{gender}$, $\varname{credit}$, $\varname{check}$,
$\varname{check\text{-}failed}$, $\varname{loan\_if\_checked}$), so the
choice of $N$ is non-trivial. Two options are natural:
\begin{itemize}
\item \textbf{Option A: two upstream roots.}
$N=\{\varname{gender},\varname{credit}\}$. Both features are exogenous; the
three internal variables ($\varname{check}$, $\varname{check\text{-}failed}$,
$\varname{loan\_if\_checked}$) are integrated out inside $f$. This is the
smallest sufficient feature set and matches the way the model is normally
described: predict approval from gender and credit.
\item \textbf{Option B: two roots plus the mediator.}
$N'=\{\varname{gender},\varname{credit},\varname{check\text{-}failed}\}$.
The mediator becomes an observable model input, the same status it has in the
signal-with-mediation example (\S\ref{sec:signal}). This is the comparison
PCI's witness mechanism naturally invites: PCI accesses
$\varname{check\text{-}failed}$ as a witness, so allowing SHAP to access it as
a feature is the fair foil.
\end{itemize}
We work through both. Option~A is simpler and exposes the qualitative failures
of plain SHAP. Option~B then asks whether admitting the mediator rescues Causal
SHAP.

\medskip\noindent\textbf{Option~A: two upstream roots.}
Here $f(g,c)=P(\varname{loan}=1\mid g,c)$ takes values
$f(\text{F},\text{bad})=0.000$, $f(\text{F},\text{good})=0.180$,
$f(\text{M},\text{bad})=0.045$, $f(\text{M},\text{good})=0.900$, with the three
internal variables integrated out. We apply SHAP to the rejection probability
$g(x)=1-f(x)$ for Alice and Bob from Example~\ref{ex:obcb_stochastic},
attributing responsibility for the denial outcome.\footnote{Because $g=1-f$,
the characteristic functions satisfy $v_g(\mathcal{S})=1-v_f(\mathcal{S})$ and
$\phi^g_i=-\phi^f_i$; we compute $\phi^f$ and flip signs. With $|N|=2$ and
uniform marginals $P(\varname{gender})=P(\varname{credit})=\tfrac{1}{2}$,
equation~\eqref{eq:shapley} reduces to
$\phi_i = \tfrac{1}{2}[v(\{i\})-v(\emptyset)] + \tfrac{1}{2}[v(N)-v(N\!\setminus\!\{i\})]$.}
By the SHAP definition (and equivalently by Definition~\ref{def:causalshap}
once we note that no dropped feature is a descendant of any coalition member),
the value function on the 2-feature game and its closed-form specialisation
under independent uniform marginals are derived in
Appendix~\ref{app:obcb} (\S\ref{app:obcb_vS}). Plugging in the four values of
$f$ listed at the start of Option~A yields Table~\ref{tab:vS_optionA}; pushing
those through the closed-form Shapley gives the magnitudes in
Table~\ref{tab:shap_obcb}, which compares against PNS and PCI.

\begin{table}[ht]
\centering
\begin{tabular}{lcc}
\toprule
$\mathcal{S}$ & $v(\mathcal{S})$ for Alice & $v(\mathcal{S})$ for Bob \\
\midrule
$\emptyset$                              & $0.281$ & $0.281$ \\
$\{\varname{gender}\}$                   & $0.090$ & $0.473$ \\
$\{\varname{credit}\}$                   & $0.023$ & $0.023$ \\
$\{\varname{gender},\varname{credit}\}$  & $0.000$ & $0.045$ \\
\bottomrule
\end{tabular}
\caption{Coalition values $v(\mathcal{S})=\mathbb{E}[f(X)\mid X_{\mathcal{S}}=x_{\mathcal{S}}^\star]$ for the
2-feature game ($N=\{\varname{gender},\varname{credit}\}$).
$v(\emptyset)=\mathbb{E}[f(X)]=0.281$ is the population baseline; $v(N)$ is
the factual prediction $f(x^\star)$. Plain SHAP and Causal SHAP coincide on
this game (neither feature is downstream of the other), so a single column
suffices per individual.}
\label{tab:vS_optionA}
\end{table}

\begin{table}[ht]
\centering
\small
\setlength{\tabcolsep}{3pt}
\begin{tabular}{llccccccc}
\toprule
Person & Feature & PNS & SHAP & ATE & CATE & ITE & PCI (no wit.) & PCI (with wit.) \\
\midrule
Alice & \varname{gender} & 0.045 & \phantom{$-$}0.107 & 0.383 & 0.045 & 0.045 & 0.210 & \textbf{0.263} \\
Alice & \varname{credit} & 0.180 & \phantom{$-$}0.174 & 0.518 & 0.180 & 0.180 & 0.240 & \textbf{0.199} \\
Bob   & \varname{gender} & 0.000 & $-$0.107           & 0.383 & 0.045 & 0.000 & 0.038 & \textbf{0.019} \\
Bob   & \varname{credit} & 0.860 & \phantom{$-$}0.343 & 0.518 & 0.855 & 0.895 & 0.228 & \textbf{0.119} \\
\bottomrule
\end{tabular}
\caption{Responsibility attributions for Alice and Bob.
SHAP values are signed ($\phi^g_i$ for $P(\varname{loan}=0\mid x)$); desiderata compare magnitudes $|\cdot|$.
ATE is a population quantity (identical across the two persons by feature); CATE conditions on
the \emph{other} covariate (credit for the gender rows; the person's gender for the credit rows);
ITE abducts the individual's exogenous noise from the full factual record (covariates \emph{and}
the observed denial) and takes the counterfactual contrast on that pinned noise.
PNS and PCI values from Table~\ref{tab:pci_decomp}; ATE/CATE/ITE computed by direct PSCM evaluation in
\nb{obcb\_computations}.}
\label{tab:shap_obcb}
\end{table}

ATE and CATE fail the same two individual-level desiderata that PNS fails, and
for related reasons.
ATE attributes responsibility at the population level only, so its
values are identical across Alice and Bob, immediately violating \textbf{D-comp}
($|R(\varname{gender}\mid\text{Alice})| = |R(\varname{gender}\mid\text{Bob})| = 0.383$).
CATE conditions on observable covariates but inherits PNS's blind spot via the
$\varname{check\text{-}failed}$ propagation discussed in Section~\ref{sec:motivations}:
for Alice it returns $0.045$ for gender and $0.180$ for credit, the same
ranking as $\pns_c$ and the same \textbf{D-A-rank} failure.
CATE also fails \textbf{D-comp}, since
$\mathrm{CATE}(\varname{gender}\mid\varname{credit}{=}\text{bad}) = 0.045$ is the same number
for Alice and Bob.

The individual treatment effect (ITE) is the natural endpoint of this ladder:
where ATE marginalises all exogenous noise and CATE conditions only on observed
covariates, the ITE abducts the unit's noise from its \emph{full} factual record
(covariates \emph{and} the observed denial) and then takes the
counterfactual contrast holding that abducted noise fixed, the
abduction--action--prediction reading already used for $\pn$ in
Section~\ref{sec:motivations}. This buys back exactly the information CATE
discards. The ITE separates the two applicants on \textbf{D-comp} where CATE
cannot: $\mathrm{ITE}(\varname{gender}\mid\text{Alice}) = 0.045$ but
$\mathrm{ITE}(\varname{gender}\mid\text{Bob}) = 0$, because abduction places Bob
in the checked-and-failed branch, where being male is not what blocked him:
flipping him to female (still bad credit) leaves him denied with certainty.
The ITE is also a genuine counterfactual rather than an intervention: because it
conditions on the outcome, $\mathrm{ITE}(\varname{credit}\mid\text{Bob}) = 0.895$
rather than the interventional $P(\varname{loan}{=}1\mid do(\varname{credit}{=}\text{good}),\varname{gender}{=}\text{male}) = 0.90$,
as conditioning on Bob's denial shifts posterior mass toward the unchecked branch.
But the ITE does \emph{not} fix \textbf{D-A-rank}: for Alice it still returns
$0.180$ for credit against $0.045$ for gender, the same inversion as $\pns_c$ and
CATE. Abduction pins the \emph{noise} but not the \emph{mechanism}: the
counterfactual still routes through a hypothetical credit check in the $20\%$ of
Alice's posterior in which she was checked, so credit retains spurious
responsibility. Recovering the correct ranking requires \ourapproach's witness
mechanism, which pins $\varname{check\text{-}failed}$ at its factual value
(dominantly the unchecked branch for Alice, where credit is causally
disconnected from the loan) instead of integrating over its noise. In
this sense \ourapproach is the ITE \emph{plus} the structural machinery needed to
see the mediated path: it inherits the ITE's individual-level, counterfactual
character and adds the context-fixing the ITE alone lacks.

SHAP fails two desiderata. The primary failure is \textbf{D-A-rank}: Alice's
credit outranks her gender ($0.174 > 0.107$). This is an overdetermination
effect: \varname{credit}=bad correlates with the \varname{check\text{-}failed}
mechanism that blocked Alice's credit evaluation, so SHAP's observational
marginalization inflates credit attribution above gender's. PCI includes the factual
\varname{check\text{-}failed} value directly, isolating gender's upstream causal
role and recovering the correct ranking. PNS fails \textbf{D-A-rank} for the same reason;
it also fails \textbf{D-B1} by returning exactly zero for Bob's gender (see
Table~\ref{tab:desiderata}).

The second SHAP failure is \textbf{D-comp}: $|\phi^g_{\varname{gender}}|=0.107$
for both Alice and Bob. But gender played a strictly larger causal role in Alice's
rejection than in Bob's. For Alice, \varname{gender}=female was sufficient on its
own to produce rejection: it blocked the credit evaluation entirely, so her
credit was never a factor. For Bob, \varname{gender}=male was a facilitating
condition: it triggered the credit check, but \varname{credit}=bad was the actual
driver of the denial. SHAP cannot distinguish these because it measures how far
each feature value deviates from the population prediction mean: female and
male are equidistant deviations in opposite directions, so they receive equal
magnitude regardless of their individual causal weight. This is an instance of a
deeper limitation: SHAP explains predicted outputs, not individual factual
outcomes, which we examine further in Section~\ref{sec:signal}.

Causal SHAP coincides exactly with plain SHAP under Option~A: both features
are exogenous roots, so
$P(\varname{credit}\mid do(\varname{gender}=g))=P(\varname{credit})$ and vice
versa, no dropped feature is a descendant of any coalition member, and
$v_{\mathrm{causal}}(\mathcal{S})=v_{\mathrm{plain}}(\mathcal{S})$ for every coalition. Option~A
therefore reduces to a single SHAP analysis, and the failures
above belong to both methods.

\medskip\noindent\textbf{Option~B: two roots plus the mediator.}
Throughout this subsection we abbreviate $\varname{gender}$, $\varname{credit}$, and
$\varname{check\text{-}failed}$ as $g$, $c$, and $cf$ in mathematical expressions and table
cells; running prose retains the full names.
Now we admit \varname{check\text{-}failed} as a third model input and apply
both methods to the 3-feature game $N'=\{\varname{gender},\varname{credit},\varname{check\text{-}failed}\}$.
This requires defining $f$ on $\{0,1\}^3$ rather than on $\{0,1\}^2$. The
natural observational extension is again the conditional expectation
$\tilde f(g,c,cf)=\mathbb{E}[\varname{loan}\mid \varname{gender}=g,\,\varname{credit}=c,\,\varname{check\text{-}failed}=cf]$.
This is well defined for three of the four $(c,cf)$ combinations, but not the
fourth: a failed check requires that a check happened \emph{and} the credit
was bad, so $\varname{check\text{-}failed}=1$ together with
$\varname{credit}=\text{good}$ is impossible under the PSCM and has no
observational data to condition on. We leave $\tilde f$ undefined on this cell.

Both methods need a value function $v(\mathcal{S})$: the expected value of $\tilde f$
when the features in $\mathcal{S}$ are held at their factual values $x_{\mathcal{S}}^\star$ and the
remaining features are averaged out. The methods differ in how that average
is taken. Here plain SHAP switches to the \emph{marginal-independent} reading
flagged after Eq.~\eqref{eq:plain_shap_value} (rather than the conditional
reading used elsewhere in this section), because it is this reading, not the
conditional one, that produces the impossible-cell pathology below: it
averages over each dropped feature using its own
observational marginal $P(X_i)$, with the dropped features treated as
independent of one another. Causal SHAP averages over the joint distribution
the PSCM produces under the intervention $do(X_{\mathcal{S}} = x_{\mathcal{S}}^\star)$, so the dropped
features keep whatever joint dependencies the PSCM imposes on them. Whether
either average ever lands on the impossible cell $(c{=}1, cf{=}1)$ (and so
whether $v(\mathcal{S})$ is finite) depends on both the method and on which features
are held fixed at which factual values. Table~\ref{tab:vS_optionB} works out
all eight coalition values for both individuals.

\begin{table}[ht]
\centering
\small
\begin{tabular}{l cc cc}
\toprule
& \multicolumn{2}{c}{Alice $(g,c,cf){=}(0,0,0)$} & \multicolumn{2}{c}{Bob $(g,c,cf){=}(1,0,1)$} \\
\cmidrule(lr){2-3} \cmidrule(lr){4-5}
$\mathcal{S}$ & $v_{\mathrm{plain}}$ & $v_{\mathrm{causal}}$ & $v_{\mathrm{plain}}$ & $v_{\mathrm{causal}}$ \\
\midrule
$\emptyset$    & \textbf{NaN} & $0.281$ & \textbf{NaN} & $0.281$ \\
$\{g\}$        & \textbf{NaN} & $0.090$ & \textbf{NaN} & $0.473$ \\
$\{c\}$        & $0.007$      & $0.023$ & $0.007$      & $0.023$ \\
$\{cf\}$       & $0.270$      & $0.270$ & \textbf{NaN} & \textbf{NaN} \\
$\{g,c\}$      & $0.000$      & $0.000$ & $0.014$      & $0.045$ \\
$\{g,cf\}$     & $0.090$      & $0.090$ & \textbf{NaN} & \textbf{NaN} \\
$\{c,cf\}$     & $0.000$      & $0.000$ & $0.025$      & $0.025$ \\
$N'$           & $0.000$      & $0.000$ & $0.050$      & $0.050$ \\
\bottomrule
\end{tabular}
\caption{Coalition values for the 3-feature game
$N' = \{\varname{gender},\varname{credit},\varname{check\text{-}failed}\}$.
Bold \textbf{NaN} entries flag coalitions whose value function evaluates
$\tilde f$ at the impossible cell $(c{=}1,cf{=}1)$ with positive weight.
Plain SHAP and Causal SHAP coincide whenever neither $\varname{credit}$ nor
$\varname{check\text{-}failed}$ is dropped, since the only structural coupling
in the model is between these two; they diverge whenever one is in $\mathcal{S}$ and the
other is not.}
\label{tab:vS_optionB}
\end{table}

Plain SHAP treats $\varname{credit}$ and $\varname{check\text{-}failed}$ as
independent, so whenever both are dropped from the coalition the average runs
over all four $(c, cf)$ combinations, with the impossible $(c{=}1, cf{=}1)$
combination receiving weight
$P(c{=}1)\cdot P(cf{=}1) = 0.5\cdot 0.275 \approx 0.140$,
the same nonzero weight as any other corner. For Alice, both $\varname{credit}$ and
$\varname{check\text{-}failed}$ are dropped at $\mathcal{S}\in\{\emptyset,\{\varname{gender}\}\}$,
the first two NaN
entries in Table~\ref{tab:vS_optionB}. For Bob the same two coalitions fail,
and two more break for an additional reason: at
$\mathcal{S}=\{\varname{check\text{-}failed}\}$ and
$\mathcal{S}=\{\varname{gender},\varname{check\text{-}failed}\}$ the value $cf^\star=1$
is held fixed while $\varname{credit}$ is sampled from its marginal, so the $c{=}1$ branch
enters with weight $0.5$ and again hits the impossible cell. Plain SHAP gives
undefined Shapley values for both individuals.\footnote{The underlying issue
is that plain SHAP does not see that $\varname{credit}$ and
$\varname{check\text{-}failed}$ are linked by the structural equation
$cf = \varname{check}\cdot(1 - c)$, and so places weight on a combination
the PSCM rules out.}

Causal SHAP samples $cf$ following its structural equation
$cf = \varname{check}\cdot(1 - c)$, with $\varname{check}\sim\mathrm{Bern}(p_\text{check}[g])$,
given whatever $(g, c)$ values appear in the integrand. When $c=1$ the
equation forces $cf=0$ regardless of $\varname{check}$, so the
$(c{=}1, cf{=}1)$ combination is never produced. Every Alice entry in the
$v_\mathrm{causal}$ column of Table~\ref{tab:vS_optionB} is therefore finite.
Plugging Alice's column into the Shapley formula gives magnitudes
$|\phi_{\varname{gender}}|=0.097$, $|\phi_{\varname{credit}}|=0.176$,
$|\phi_{\varname{check\text{-}failed}}|=0.007$, so
$|\phi_{\varname{credit}}|>|\phi_{\varname{gender}}|$ and \textbf{D-A-rank}
fails: credit outranks gender, exactly the same pathology plain SHAP exhibited
on the 2-feature game.

Two coalitions break for Bob: $\mathcal{S}=\{\varname{check\text{-}failed}\}$ and
$\mathcal{S}=\{\varname{gender},\varname{check\text{-}failed}\}$. Both hold
$\varname{check\text{-}failed}$ at Bob's factual value $cf^\star=1$ and average
$\varname{credit}$ over its
marginal. To see why this is fatal, apply
Definition~\ref{def:causalshap} to the first:
\[
v_\mathrm{causal}(\{cf\})
\;=\; \mathbb{E}\!\left[\tilde f(X)\,\big|\,do(cf = 1)\right]
\;=\; \tfrac{1}{4}\sum_{g\in\{0,1\}}\sum_{c\in\{0,1\}}\tilde f(g,c,1).
\]
The uniform $\tfrac{1}{4}$ weights come from $\varname{gender}$ and
$\varname{credit}$ sitting upstream of $\varname{check\text{-}failed}$ in the causal graph:
intervening on $\varname{check\text{-}failed}$ overrides the structural equation that would
normally determine it
from credit and check, but does not alter the marginal distributions of the
upstream features. Two of the four summands evaluate $\tilde f$ at
$(g, c{=}1, cf{=}1)$ (the impossible cell) with weight $\tfrac{1}{4}$
each. So $v_\mathrm{causal}(\{cf\})$ is undefined, and so is
$v_\mathrm{causal}(\{g, cf\})$ for the same reason. These two coalitions
appear in the Shapley sum for every feature, so all three of Bob's Causal
SHAP values are NaN (Bob's columns in Table~\ref{tab:vS_optionB}).

The same two coalitions arise for Alice; the difference is the value of
$cf^\star$ being held fixed. With Alice's $cf^\star = 0$, the $c{=}1$ summand
in the formula above evaluates $\tilde f(g, 1, 0) =
p_\text{check}[g]\cdot\text{loan\_prob}[g, 0]$, which is defined because the
PSCM does allow $(c{=}1, cf{=}0)$: when credit is good, $cf$ is forced to $0$
deterministically, so the conditional expectation has a value. With Bob's
$cf^\star = 1$, the $c{=}1$ summand evaluates $\tilde f(g, 1, 1)$, the
impossible cell, ``good credit and a failed check'', which the PSCM
rules out, since failing the credit check requires bad credit by the
structural equation $cf = \varname{check}\cdot(1 - c)$.

When the PSCM runs forward, $\varname{credit}$ and $\varname{check\text{-}failed}$ are
linked by $cf = \varname{check}\cdot(1 - c)$, which keeps $c{=}1$ and
$cf{=}1$ from co-occurring. When Causal SHAP intervenes via
$do(cf = cf^\star)$, this link is overridden: $\varname{check\text{-}failed}$ is fixed
externally and
$\varname{credit}$ is sampled freely. The override always produces a
$(c{=}1, cf^\star)$ combination with positive weight; whether $\tilde f$ has
a value for that combination depends on which $cf^\star$ value Causal SHAP is
intervening on.

Promoting the mediator to a feature therefore does not recover the desideratum:
Causal SHAP still misses it for Alice and is left undefined for Bob.

\medskip\noindent
The OBCB analysis exposed two structural failure modes: SHAP's
overdetermination effect on the 2-feature game, and Causal SHAP's breakdown
once the mediator is admitted as a feature under Option~B. The second of
these depends on a structural feature of the OBCB PSCM: the equation
$cf = \varname{check}\cdot(1 - c)$ ties $\varname{credit}$ and
$\varname{check\text{-}failed}$ together, so Option~B's natural feature set
contains one combination of input values (``good credit and a failed
check'') that the model is not defined on. To see whether the same families
of failure appear without this complication, we now turn to the
signal-with-mediation chain $X \to M \to Y$, where the mediator is naturally
an observable input, $\tilde f$ is defined everywhere, and no impossible-cell
issue arises. Causal SHAP's failures there are not artifacts of an undefined
$\tilde f$ but properties of the method itself.

\subsection{Signal with Mediation}\label{sec:signal}

\begin{example}[Signal with mediation]\label{ex:signal_mediation}
We work with the following \textbf{generative model}: $X$ sends a signal,
$M$ mediates it with some noise, and $Y$ is the noisy final outcome:
\begin{align}
X &\sim \mathcal{N}(0.5,\ 0.25) \label{eq:X}\\
M &= X + \varepsilon_M, \quad \varepsilon_M \sim \mathcal{N}(0,\ 0.1) \label{eq:M}\\
Y &= M + \varepsilon_Y, \quad \varepsilon_Y \sim \mathcal{N}(0,\ 0.1) \label{eq:Y}
\end{align}

\noindent The causal graph is $X \to M \to Y$, a simple mediation chain
(Figure~\ref{fig:dag_signal}). All noise terms
are independent of each other and of $X$. Since everything is jointly Gaussian, all
conditional expectations are linear functions of the conditioning variables,
and no approximations are required.

\begin{figure}[htbp]
\centering
\includegraphics[width=0.70\linewidth]{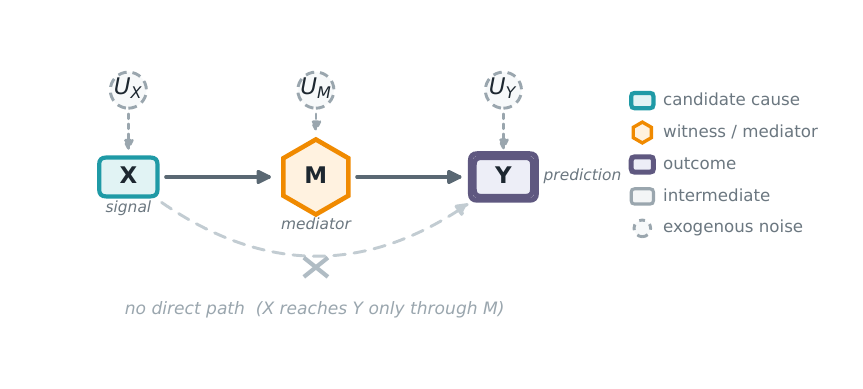}
\caption[Signal-with-mediation graph]{The signal-with-mediation model. $X$ (teal)
reaches the outcome $Y$ (purple double box) \emph{only} through the mediator $M$
(gold hexagon); the faded, crossed-out arc marks the \emph{absent} direct path
$X\to Y$. Dashed circles are the independent Gaussian noises $U_X,U_M,U_Y$ of
\crefrange{eq:X}{eq:Y}. Desideratum \textbf{D-MXY} asks an attribution to register that one missing edge: $M$ acts on $Y$ directly, whereas $X$ acts only
at one remove.}
\label{fig:dag_signal}
\end{figure}

Let $R(X \rightsquigarrow Y)$ denote the causal responsibility of $X$ for $Y$.
Following the convention of Section~\ref{sec:causal_impact}, the desiderata are
stated in terms of absolute magnitudes $|R(\cdot)|$, allowing for directionality.
The \textbf{qualitative causal responsibility desiderata} for this example are:
\begin{align*}
\textbf{D-XY} \quad & |R(X \rightsquigarrow Y)| > 0 \\[4pt]
\textbf{D-MY} \quad & |R(M \rightsquigarrow Y)| > 0 \\[4pt]
\textbf{D-XM} \quad & |R(X \rightsquigarrow M)| > 0 \\[8pt]
\textbf{D-YX} \quad & |R(Y \rightsquigarrow X)| = 0 \\[4pt]
\textbf{D-YM} \quad & |R(Y \rightsquigarrow M)| = 0 \\[4pt]
\textbf{D-MX} \quad & |R(M \rightsquigarrow X)| = 0
\end{align*}

A more subtle desideratum concerns telling apart the \emph{roles} of the two
genuine causes. $X$ reaches $Y$ only through the mediator $M$, whereas $M$ acts
on $Y$ directly; a good attribution method should be able to distinguish these
two roles rather than collapse them into the same number. One systematic way to
register the distinction is to require that the two causes receive different,
rankable scores, and we adopt the convention of assigning the larger score to
the cause closer to the outcome:
\begin{align*}
\textbf{D-MXY} \quad & |R(M \rightsquigarrow Y)| > |R(X \rightsquigarrow Y)|
\end{align*}
\noindent The direction of the inequality is a convention rather than a law of responsibility; what matters is that the method separates the direct from the indirect path at all. A method that assigns the larger score to the more distal cause would be equally principled; we adopt the direction stated here for consistency.

\noindent The \textbf{distributional facts} we will use are:
\begin{align}
\mathbb{E}[X] &= \mathbb{E}[M] = \mathbb{E}[Y] = 0.5, \\
\bigl(\text{Var}(X),\, \text{Var}(M),\, \text{Var}(Y)\bigr) &= (0.25,\, 0.35,\, 0.45), \\
\bigl(\text{Cov}(X,M),\, \text{Cov}(X,Y),\, \text{Cov}(M,Y)\bigr) &= (0.25,\, 0.25,\, 0.35).
\end{align}

\noindent For each target variable, the optimal predictor under squared loss is the
conditional expectation, computable exactly in this Gaussian model:
\begin{align}
f_Y(X, M) &= \mathbb{E}[Y \mid X, M] = M \quad (\{X, M\} \Rightarrow Y) \\
f_M(X, Y) &= \mathbb{E}[M \mid X, Y] = 0.5X + 0.5Y \quad (\{X, Y\} \Rightarrow M) \\
f_X(M, Y) &= \mathbb{E}[X \mid M, Y] = 0.5 + \frac{5}{7}(M - 0.5) \quad (\{M, Y\} \Rightarrow X)
\end{align}

\noindent The distributional facts and the three conditional expectations above
are derived in Appendix~\ref{app:signal}
(\S\ref{app:signal_distfacts}--\S\ref{app:signal_condexp}).

\end{example}

\subsection{Computations and Desiderata Analysis}\label{sec:shap_signal}

We analyze two instances. \textbf{Instance 1} sets $X^\star=M^\star=Y^\star=1$: every
variable lies above its mean, the prediction is non-trivial at every target, and SHAP,
Causal SHAP, and PCI all return non-zero values. \textbf{Instance 2} sets
$X^\star=M^\star=Y^\star=0.5$: every variable sits at its mean, the prediction equals
its baseline, and SHAP's attribution budget $f(x^\star)-\mathbb{E}[f(X)]$ collapses
to zero. We work through Instance~1 first, then return to Instance~2 to isolate what
each method is really tracking.

\paragraph{Computing the three methods.}
At Instance~1 ($X^\star=M^\star=Y^\star=1$) all three methods admit closed-form
evaluation in this Gaussian model. We defer the full per-target coalition-value
and Shapley computations for plain and Causal SHAP, and the
per-$(\mathbf{C},\mathbf{T})$ PCI derivation, to Appendix~\ref{app:signal}
(\S\ref{app:signal_plain}--\S\ref{app:signal_pci}); the resulting attributions
populate Table~\ref{tab:signal_desid}. One computed fact drives the D-YX
discussion below and is worth stating here.

\noindent Plain SHAP assigns $\phi_Y = 0.139 > 0$ for target $X$, even though $Y$ does not
cause $X$. This arises because $v(\{Y\}) = 0.778 > 0.5$: observing $Y{=}1$ is
statistically informative about $X{=}1$ through the chain $X \to M \to Y$. SHAP
cannot distinguish ``$Y$ is informative about $X$'' from ``$Y$ causes $X$''
because the value function is grounded in observational conditionals, which are
symmetric under reversal of causal direction.

\paragraph{Desiderata analysis.}\label{sec:signal_desiderata}
Following the convention of Section~\ref{sec:causal_impact}, we use $R(A \rightsquigarrow B)$
for the method-agnostic responsibility and $\phi_A^B$ exclusively for SHAP/Causal SHAP values;
prose statements about desiderata refer to magnitudes $|R(A \rightsquigarrow B)|$.
Here $A$ denotes the candidate cause and $B$ the target outcome (the generic $Y$ of
Section~\ref{sec:causal_impact}); for the PCI columns $B^n, B^s$ are its
necessity/sufficiency worlds, $B^\star$ its factual value, and
$\bar c = \mathbb{E}\!\left[\,|B^n - B^\star| - |B^s - B^\star|\,\right]$ the expected
absolute-difference impact of Example~\ref{ex:absolute_score} (equivalently
$\mathbb{E}[|B^n-B^\star|] - \mathbb{E}[|B^s-B^\star|]$, as in the caption of
Table~\ref{tab:signal_desid}).
Table~\ref{tab:signal_desid} maps the desiderata from Section~\ref{sec:signal}
to the values computed above.

\begin{table}[ht]
\centering
\footnotesize
\setlength{\tabcolsep}{4pt}
\resizebox{\textwidth}{!}{%
\begin{tabular}{llcccc}
\toprule
Desideratum & Condition & Plain SHAP & Causal SHAP
  & PCI ($\mathbf{W}{=}\emptyset$) & PCI ($\mathbf{W}{=}$3rd) \\
\midrule
\textbf{D-XY}  & $|R(X \rightsquigarrow Y)| > 0$
  & $0.250$\;\checkmark & $0.250$\;\checkmark
  & $0.249$\;\checkmark & $0.186$\;\checkmark$^\dagger$ \\
\textbf{D-MY}  & $|R(M \rightsquigarrow Y)| > 0$
  & $0.250$\;\checkmark & $0.250$\;\checkmark
  & $0.283$\;\checkmark & $0.319$\;\checkmark \\
\textbf{D-XM}  & $|R(X \rightsquigarrow M)| > 0$
  & $0.306$\;\checkmark & $0.375$\;\checkmark
  & $0.253$\;\checkmark & $0.284$\;\checkmark \\
\textbf{D-YX}  & $|R(Y \rightsquigarrow X)| = 0$
  & $0.139$\;\texttimes & $0.000$\;\checkmark
  & $0.000$\;\checkmark & $0.000$\;\checkmark \\
\textbf{D-YM}  & $|R(Y \rightsquigarrow M)| = 0$
  & $0.194$\;\texttimes & $0.125$\;\texttimes
  & $0.000$\;\checkmark & $0.000$\;\checkmark \\
\textbf{D-MX}  & $|R(M \rightsquigarrow X)| = 0$
  & $0.219$\;\texttimes & $0.357$\;\texttimes
  & $0.000$\;\checkmark & $0.000$\;\checkmark \\
\textbf{D-MXY} & $|R(M \rightsquigarrow Y)| > |R(X \rightsquigarrow Y)|$
  & $0.25{=}0.25$\;\texttimes & $0.25{=}0.25$\;\texttimes
  & $0.283{>}0.249$\;\checkmark & $0.319{>}0.186$\;\checkmark$^\ddagger$ \\
\bottomrule
\end{tabular}%
}
\caption{Desiderata satisfied (\checkmark) and violated (\texttimes) by plain SHAP,
Causal SHAP, and PCI on the signal-with-mediation example
($X\!\to\!M\!\to\!Y$), instance $X=M=Y=1$. PCI uses
$\bar c = \mathbb{E}[|B^n-B^\star|] - \mathbb{E}[|B^s-B^\star|]$ with $\Gamma_s,\Gamma_w$
uniform over non-empty subsets and rejection on
$\mathbf{C}\cap\mathbf{T}\neq\emptyset$. \textbf{W=3rd} sets the third variable
(neither $A$ nor $B$) as the witness when not in $\mathbf{C}$.
$^\dagger$~0.186 weights three suspect-containing $(\mathbf{C},\mathbf{T})$ configurations at
$1/4$ each (the rejection-sampling filter keeps $2/3$ of $\Gamma_s\times\Gamma_w$
draws, including a fourth valid pair $(\{M\},\emptyset)$ that does not contain $X$):
one pins $M$ as
witness (contribution~0, necessity and sufficiency coincide) and two let
$M$ propagate freely (contributions~0.320 and 0.425).
$^\ddagger$~The two values use different witness sets ($\mathbf{W}{=}\{M\}$
for D-XY, $\mathbf{W}{=}\{X\}$ for D-MY); this column reports the amplified
verdict under an asymmetric witness rule. The $\mathbf{W}{=}\emptyset$ column
above is the corresponding single-game ranking and already passes D-MXY.}
\label{tab:signal_desid}
\end{table}

\noindent\textbf{Positive results (D-XY, D-MY, D-XM).}
Both methods correctly assign positive attribution to every direct or upstream cause
of the target. Causal prediction models are built from observational conditionals,
which do preserve correlation from cause to effect, so these desiderata pose no
challenge to either method.

\noindent\textbf{Causal SHAP fixes D-YX, not D-YM.}
For target $X$, the interventional value $v_{\mathrm{causal}}(\{Y\})=0.5$ equals
the baseline because $M$ is not downstream of $Y$: $P(M\mid do(Y{=}1))=P(M)$,
so fixing $Y=1$ provides no interventional leverage over $f_X$, and Causal SHAP
correctly yields $\phi_Y^X=0$.
For target $M$, however, $\phi_Y^M=0.125>0$ persists.
The prediction model $f_M(X,Y)=0.5X+0.5Y$ contains $Y$ as an input because $Y$ is
observationally predictive of $M$; setting $Y=1$ raises the model output even under
the interventional regime.
Causal SHAP replaces the conditioning distribution but cannot remove a feature that
the prediction model itself uses.

\noindent\textbf{D-MX: the interventional correction reallocates the spurious attribution.}
Plain SHAP assigns $\phi_M^X=0.219$; Causal SHAP assigns $0.357$.
Fixing D-YX reduces $v_{\mathrm{causal}}(\{Y\})$ from $0.778$ to $0.5$, driving
$\phi_Y^X$ to zero, but reallocates the freed attribution to~$M$.
Since the structural noise that actually drives $X$ is unobserved, all attribution
concentrates on the only visible predictor,~$M$, even though $M$ does not cause~$X$.

\noindent\textbf{D-MXY fails for both.}
Since $f_Y=M$, the singleton values $v(\{X\})=\mathbb{E}[M\mid X{=}1]=1$ and
$v(\{M\})=1$ are identical, giving $\phi_M^Y=\phi_X^Y=0.25$ for both methods.
The Shapley formula cannot distinguish that $X$ reaches $Y$ only through~$M$:
when $X=1$, the expected prediction is already~$1$ regardless of whether $M$ is
observed or not, so both features receive the same marginal contribution.

\noindent\textbf{PCI: structural intervention separates the paths; witnesses
sharpen the gap.}
Table~\ref{tab:signal_desid} reports PCI in two configurations: without
witnesses ($\mathbf{W}=\emptyset$) and with the third variable as witness.

Without witnesses, PCI satisfies all six single-variable desiderata.
D-YX, D-YM, and D-MX are zero exactly: intervening on a variable that does not
structurally affect the target leaves both the necessity and sufficiency
worlds with identical distributions, so $ci=0$ identically.
D-XY, D-MY, and D-XM are positive.
D-MXY also \textbf{passes} at $\mathrm{PCI}(M\to Y)\approx 0.283 > \mathrm{PCI}(X\to Y)\approx 0.249$,
but only by a margin of $0.034$. The asymmetry comes from the sufficiency
world: at $C{=}\{X\}$, $X$'s path to $Y$ accumulates
$\varepsilon_M+\varepsilon_Y$ in $|Y^s-Y^\star|$, while at $C{=}\{M\}$, $M$'s
path accumulates only $\varepsilon_Y$. The necessity expectations match
across paths ($\mathrm{Var}(X)+\mathrm{Var}(\varepsilon_M)+\mathrm{Var}(\varepsilon_Y)
=\mathrm{Var}(M)+\mathrm{Var}(\varepsilon_Y)=0.45$), so the gap is carried
entirely by the sufficiency variance asymmetry $\mathrm{Var}(\varepsilon_M)$.
A small structural margin like this is fragile: under any perturbation of
the noise variances or the contrast function the $\mathbf{W}=\emptyset$
ordering could flip.

Admitting the third variable as a witness amplifies the gap by roughly
fourfold.
$\mathrm{PCI}(X\to Y\mid\mathbf{W}{=}\{M\})\approx 0.186$ weights three
suspect-containing $(\mathbf{C},\mathbf{T})$ configurations at $1/4$ each: in two cases $M$
propagates freely (contributing~$0.320$ and $0.425$) and in one case $M$ is
pinned as witness in both worlds, equalizing them and contributing~$0$.
$\mathrm{PCI}(M\to Y\mid\mathbf{W}{=}\{X\})\approx 0.319$ weights three cases
in which $M$ takes an alternative value; since $X$ does not appear in $Y$'s
structural equation, pinning $X$ has no effect on $Y^n$ or $Y^s$, and all
three cases contribute the same $0.425$.
The gap widens from $0.034$ at $\mathbf{W}=\emptyset$ to
$0.319-0.186=0.133$ at $\mathbf{W}=\{$third$\}$: the ordering is now \emph{robust}.
The comparison relies on different witness sets for the two cells, so it
remains illustrative rather than a single-game ranking.

\paragraph{Instance 2 (baseline).}\label{sec:signal_baseline}
We now return to \textbf{Instance 2}: every variable sits at its mean,
$X^\star = M^\star = Y^\star = 0.5$, and the model predicts the baseline at every
target: $f_Y(0.5,0.5)=0.5$, $f_M(0.5,0.5)=0.5$, $f_X(0.5,0.5)=0.5$. At this
instance, SHAP's attribution budget $f(x^\star)-\mathbb{E}[f(X)]$ vanishes, while
PCI's machinery is unaffected.

\begin{table}[ht]
\centering
\small
\begin{tabular}{lcccccc}
\toprule
& \multicolumn{2}{c}{Plain / Causal SHAP} & \multicolumn{2}{c}{PCI ($\mathbf{W}=\emptyset$)} & \multicolumn{2}{c}{PCI ($\mathbf{W}=\{$third$\}$)} \\
\cmidrule(lr){2-3}\cmidrule(lr){4-5}\cmidrule(lr){6-7}
desideratum & value & status & value & status & value & status \\
\midrule
D-XY & $0.000$ & $\times$ & $0.154$ & $\checkmark$ & $0.115$ & $\checkmark$ \\
D-MY & $0.000$ & $\times$ & $0.189$ & $\checkmark$ & $0.212$ & $\checkmark$ \\
D-XM & $0.000$ & $\times$ & $0.146$ & $\checkmark$ & $0.165$ & $\checkmark$ \\
D-YX & $0.000$ & $\checkmark$ & $0.000$ & $\checkmark$ & $0.000$ & $\checkmark$ \\
D-YM & $0.000$ & $\checkmark$ & $0.000$ & $\checkmark$ & $0.000$ & $\checkmark$ \\
D-MX & $0.000$ & $\checkmark$ & $0.000$ & $\checkmark$ & $0.000$ & $\checkmark$ \\
\midrule
D-MXY & \multicolumn{2}{c}{$0.000=0.000$~$\times$} & \multicolumn{2}{c}{$0.189>0.154$~$\checkmark$} & \multicolumn{2}{c}{$0.212>0.115$~$\checkmark$} \\
\bottomrule
\end{tabular}
\caption{Desiderata at the baseline instance $X^\star=M^\star=Y^\star=0.5$.
SHAP collapses to zero on every cell, assigning zero to the
non-causal triples (though here that zero reflects the vanishing attribution
budget rather than a detected absence of a causal path) while offering no
signal on the causal ones. PCI reports the same structural verdicts as at $X^\star=M^\star=Y^\star=1$.}
\label{tab:signal_desid_baseline}
\end{table}

The pattern in Table~\ref{tab:signal_desid_baseline} reflects the object each
method is decomposing. SHAP attributes the quantity
$f(x^\star)-\mathbb{E}[f(X)]$ across features. At the baseline this gap is
zero on every target, so every cell collapses to zero, uniformly, regardless
of how the realized outcome was produced. The non-causal triples (D-YX, D-YM, D-MX) are satisfied only
by this collapse: SHAP is not isolating the absence of a causal path; it is
reporting that there is nothing to attribute. The same collapse turns D-XY,
D-MY, and D-XM into violations.

PCI never invokes a population expectation. With $\mathbf{S}$ and $\mathbf{W}$
defined relative to a fixed factual instance, the question is whether varying
the suspect would move the realized outcome $y^\star$ away from itself,
not whether $f(x^\star)$ deviates from $\mathbb{E}[f(X)]$. The structural
mechanism is unchanged at the baseline, so the three causal pairs continue
to register positive values; the three non-causal pairs continue to register
zero exactly, because intervening on a variable that does not structurally
affect the target leaves both worlds with identical distributions.
SHAP is not wrong at the baseline: it correctly reports that the
factual prediction does not deviate from its mean. But that is a different
question from causal responsibility for $y^\star$, and when the prediction
equals its baseline SHAP has no budget to distribute even though the PSCM's
causal mechanism remains intact. PCI asks
the responsibility question directly, and answers it whether or not the
prediction happens to coincide with its population mean.

\subsection{Summary Across Both Examples}\label{sec:shap_summary}

The ranking failures above are not artifacts of a particular feature set or
parameter choice: each traces to a structural limitation of the SHAP/Causal
SHAP framework. Causal SHAP strictly improves on plain SHAP for one
desideratum~(D-YX), by replacing observational conditioning with
interventional distributions; the remaining failures share two sources:
prediction models include non-causal features that the conditioning
distribution cannot exclude~(D-YM, D-MX), and Shapley averaging cannot
distinguish direct from indirect causal paths~(D-MXY). The OBCB analysis of
Section~\ref{sec:obcb_shap} exhibits the same family in different surface
form: plain SHAP fails \textbf{D-A-rank} on the 2-feature game, and once the
mediator is admitted as a feature, Causal SHAP inherits that same ranking
failure for Alice. For Bob it instead becomes undefined outright, which
Section~\ref{sec:obcb_shap} traces to this particular PSCM's impossible-cell
structure rather than the ranking failure itself; the signal-mediation
example, which has no such cell, shows the same family of genuine failures
recur regardless. Across both examples,
neither SHAP variant reliably satisfies the desiderata that motivated
\Ourapproach in Section~\ref{sec:causal_impact}. Section~\ref{sec:motivations}
grounds D-A-rank, D-B-rank, and D-comp in distinctions the preemption and
actual-causality literature established well before this paper: active versus
preempted versus irrelevant causes, and direct versus indirect causal paths.
PCI addresses these gaps
through three ingredients the SHAP framework lacks: the
witness mechanism, the suspect-set distribution $\Gamma_s$, and the use of
the realized outcome rather than the prediction baseline. On top of these,
the path-noise asymmetry in the sufficiency world produces a structural gap on D-MXY,
which appears already in the witness-free single game and widens further under
a witness rule.

The two examples here are small enough to inspect
every desideratum individually; the broader empirical comparison (a synthetic
archetype catalogue and a contrast with gradient-based Differential Causal Effect
attribution) was already summarised in \cref{sec:evaluation} and is detailed in
Appendices~\ref{sub:synthetic} and~\ref{sec:DCE}. We turn first, in
\cref{sec:relation_with_ac}, to the formal relationship between \ourapproach and
actual causality.

\section{Relation with Actual Causality} \label{sec:relation_with_ac}

 \cref{sec:shap_examples} positioned \ourapproach against feature-attribution
baselines. We now turn to \ourapproach's relation to the causal-attribution
tradition it inherits the witness mechanism from: Halpern's actual causality.
This section establishes the \emph{formal} connection
(Theorems~\ref{th:ac-exp}--\ref{th:exp-ac} and Proposition~\ref{prop:filter});
\cref{sub:pearl_actual_cause_prob} adds the side-by-side comparison with Pearl's
\emph{probability of actual causation}; \cref{sec:evaluation} already previewed
the empirical comparison (runtime and verdict recovery as model size grows), with
the full protocol in Appendix~\ref{sec:ac_benchmark}. Exact AC-predicate evaluation
enumerates suspect and witness subsets, a cost that grows exponentially with model
size; the theorems below show that the expectation \ourapproach{} estimates via
Monte Carlo sampling recovers AC verdicts under the stated conditions, so the
guarantee carries over to models too large for exact enumeration.

\ourapproach{} is not meant to explicate Halpern's notion of actual causality; the goal
is to combine the intuitions underlying Halpern's notion with those motivating Pearl's
probabilities of necessity and sufficiency to provide a more general tool for causal
attribution. Nevertheless, to show that we preserve the spirit of Halpern's definition,
we now record the connection between actual causality and the joint necessity-sufficiency
machinery of Section~\ref{sec:causal_impact}. Concretely, we exhibit a
natural choice of variable selection distribution $\Gamma$ and alternative-value
distribution $\Delta$ under which \ourapproach{} verdicts agree with actual-causality
verdicts. With one caveat: probability-of-causation maximisation will not identify
subset-minimal causes that are not cardinality-minimal, a misalignment we
characterise and remedy below.

\medskip\noindent Throughout this section $\varphi$ denotes the outcome event whose
causes we attribute, $\{Y=y^\star\}$ in the binary case, or a threshold event such
as $\{Y>y_0\}$ for continuous outcomes (as in \cref{sec:sir_benchmark}), and
$\neg\varphi$ its complement. This is the event written as the measurable set
$A\subseteq\dom(Y)$ in Definition~\ref{def:jointnecsuf}.

\medskip\noindent\textit{Scope: a necessity-only correspondence.} Halpern's definition
involves factivity, context-sensitive necessity, and minimality, but no sufficiency
clause analogous to Pearl's $\mathrm{PS}$ or $\mathrm{PNS}$: a gap
\citet{beckersSufficiency2021} addresses directly, building on the counterfactual NESS
account \citep{beckersNESS2021}: a cause must be a necessary element of a set that is
sufficient for the outcome. The correspondence we establish in this section is
therefore \emph{necessity-only}: the sufficiency component of the kernel $\Phi$
defined below, the $Y^s$ factor of Definition~\ref{def:jointnecsuf}, acts here as a
factivity check (restoring factual values must preserve $\varphi$ under fixed noise).
More generally, $Y^s$ implements the graded, probabilistic analogue of Beckers' NESS
test: it restores a
candidate set $\mathbf{C}$ to factual values, holds the witnesses $\mathbf{T}$ fixed,
and checks that the outcome persists, so that where NESS returns set membership,
the joint measure returns a $\Gamma$-weighted degree, with the witness set supplying
the held-fixed context that NESS sufficient sets leave implicit. That substantive
sufficiency component, together with the $ci_S$ term in \cref{sub:synthetic}, appears
in the synthetic overdetermination and undercutting evaluation (\cref{sub:synthetic}),
the differential causal effect comparison (\cref{sec:DCE}), and the SIR benchmark
(\cref{sec:sir_benchmark}), settings where AC cannot adjudicate. The correspondence below is thus the AC-facing (necessity) half of a
construction whose sufficiency half answers to Beckers' program rather than
Halpern's. The narrowness of this section's correspondence reflects what AC itself
permits.

We first recall Halpern's definition.

\begin{definition}[Actual Cause, after \citealp{actualCausalityHalpern}] \label{def:ac}
    \(\mathbf{C} = \mathbf{c}\) is an actual cause of \(\varphi\) in the causal setting \((M, \mathbf{u})\) if the following three conditions hold:
    \begin{itemize}
        \item \textbf{Factivity}: \((M, \mathbf{u}) \vDash (\mathbf{C} = \mathbf{c}\,)\) and \((M, \mathbf{u}) \vDash \varphi\),
        \item \textbf{Context-sensitive Necessity}: There is a set of variables \(\mathbf{T} \subseteq \mathbf{V}\) and an alternative setting \(\mathbf{c'}\) of variables in \(\mathbf{C}\) such that, writing $\mathbf{t^\star}$ for the factual value of $\mathbf{T}$ in $(M,\mathbf{u})$,
            \[(M, \mathbf{u}) \vDash [\mathbf{C} \leftarrow \mathbf{c'}, \mathbf{T} \leftarrow \mathbf{t^\star}] \neg \varphi,\]
        \item \textbf{Minimality}: \(\mathbf{C}\) is minimal; there is no strict subset \(\mathbf{C_{sub}} \subset \mathbf{C}\) such that \(\mathbf{C_{sub}} = \mathbf{c_{sub}}\) satisfies the above two conditions where $\mathbf{c_{sub}}$ is the restriction of \(\mathbf{c}\) to the variables in \(\mathbf{C_{sub}}\).
        \end{itemize}
\end{definition}

\noindent To connect this to \ourapproach{}, we instantiate the joint
necessity-sufficiency machinery (Definition~\ref{def:jointnecsuf}) at a single
configuration $(\mathbf{C}, \mathbf{T})$ and a fixed noise realisation $\mathbf{u}$,
and read off the configuration-level contribution. The resulting kernel combines the
same three ingredients as Halpern's definition: a sufficiency check at factual values,
a necessity check against alternatives, and a $\Gamma$-weight that records how much
attention $\Gamma$ gives the configuration. Throughout the kernel and its proofs we write
the restored factual values as the restrictions $\mathbf{s}^\star\!\mid_{\mathbf{C}}$
and $\mathbf{w}^\star\!\mid_{\mathbf{T}}$; these are the $\mathbf{c}^\star$ and
$\mathbf{t}^\star$ of Halpern's definition above.

\begin{definition}[AC-aligned PCI kernel] \label{def:ackernel}
Fix a PSCM $M=\langle\mathbf{U},\mathbf{V},\mathbf{F},P_{\mathbf{U}}\rangle$,\footnote{%
The kernel and the theorems below take the full PSCM (structural equations $\mathbf{F}$
plus noise distribution $P_{\mathbf{U}}$) as given, since both the sufficiency and the
necessity factors require sampling counterfactual outcomes under $\mathbf{u}$. As
discussed in Section~\ref{sec:intro}, this is a stronger commitment than approaches
that bound $\mathrm{PN}/\mathrm{PS}/\mathrm{PNS}$ from observational data and a causal
graph alone, but is the price of pointwise (not just bound) verdicts at the
configuration level.} a noise
realisation $\mathbf{u}$, suspect and witness sets $\mathbf{S},\mathbf{W}\subseteq\mathbf{V}$
with factual values $\mathbf{s}^\star,\mathbf{w}^\star$, a variable selection
distribution $\Gamma$ on $2^{\mathbf{S}}\times 2^{\mathbf{W}}$ with mass function
$p^\Gamma$, and an alternative-value distribution $\Delta(\mathbf{s}^\star)$. For an
outcome event $\varphi$, active suspect set $\mathbf{C}\subseteq\mathbf{S}$ and active
witness set $\mathbf{T}\subseteq\mathbf{W}$ with $\mathbf{T}\cap\mathbf{C}=\emptyset$,
define
\begin{align}
\Phi_{\mathbf{u}}(\mathbf{C},\mathbf{T})
&\;:=\;
\underbrace{p^\Gamma(\mathbf{C},\mathbf{T})}_\text{selection weight}
\;\cdot\;
\underbrace{\mathbb{I}\!\left[(M,\mathbf{u})\vDash
[\mathbf{C}\!\leftarrow\!\mathbf{s}^\star\!\mid_{\mathbf{C}},\,
 \mathbf{T}\!\leftarrow\!\mathbf{w}^\star\!\mid_{\mathbf{T}}]\,\varphi\right]}_\text{sufficiency indicator}
\nonumber \\
&\quad\;\cdot\;
\underbrace{\int
\mathbb{I}\!\left[(M,\mathbf{u})\vDash
[\mathbf{C}\!\leftarrow\!\mathbf{c}',\,
 \mathbf{T}\!\leftarrow\!\mathbf{w}^\star\!\mid_{\mathbf{T}}]\,\neg\varphi\right]
\Delta_{\mathbf{C}}(\mathbf{s}^\star)(d\mathbf{c}')}_\text{necessity integral}.
\label{eq:ackernel}
\end{align}
\end{definition}

\medskip\noindent The kernel depends on the fixed noise realisation $\mathbf{u}$
through its two indicators; we write $\Phi_{\mathbf{u}}$ when that dependence is in play
(as in the noise-integrated score $P(\mathbf{C}\!\rightsquigarrow\!\varphi)$ below) and
abbreviate it to $\Phi$ when $\mathbf{u}$ is fixed and clear from context. Concretely,
$\Phi_{\mathbf{u}}(\mathbf{C},\mathbf{T})$ is the
per-configuration term inside the joint necessity-sufficiency measure $P_k^{s,n}$
from Definition~\ref{def:jointnecsuf}: it replaces integration over $P_{\mathbf{U}}$
with point evaluation at $\mathbf{u}$, and the sum over $(\mathbf{C},\mathbf{T})$
with evaluation at a single pair, mirroring AC's structure of
testing factivity and the alternative against a single suspect/witness pair. The
kernel is positive exactly when three conditions hold simultaneously:
$(\mathbf{C},\mathbf{T})$ has positive $\Gamma$-mass; restoring $\mathbf{C}$ to its
factual values while holding $\mathbf{T}$ fixed preserves $\varphi$ under $\mathbf{u}$
(the factivity-as-sufficiency component); and there exists a
$\Delta$-positive-density alternative $\mathbf{c}'$ for which the corresponding
intervention overturns $\varphi$ under $\mathbf{u}$ (the necessity component). The
regularity condition we introduce next ensures that the first of these three, the
$\Gamma$-selection weight, never vanishes on a legal configuration.

\begin{definition}[Regular $\Gamma$]\label{def:regular}
$\Gamma$ is \emph{regular} when $p^\Gamma(\mathbf{C}',\mathbf{T}')>0$ for every
$(\mathbf{C}',\mathbf{T}')$ with $\mathbf{C}'\subseteq\mathbf{S}$,
$\mathbf{T}'\subseteq\mathbf{W}$, $\mathbf{C}'\cap\mathbf{T}'=\emptyset$.
\end{definition}

We can now record the forward direction, under the conditions stated below:
whenever $\mathbf{C}$ is an actual cause in Halpern's sense, the matching kernel
value is strictly positive, so \ourapproach{} never scores an AC-certified cause
as zero.

\begin{theorem}\label{th:ac-exp}
    Suppose that $\mathbf{C}$ is an actual cause of $\varphi$ with witness set $\mathbf{T}$ in $(M,\mathbf{u})$. Take any potential cause set $\mathbf{S}\supseteq\mathbf{C}$ with factual values $\mathbf{S}=\mathbf{s}$, subject to the constraint that no element of $\mathbf{S}$ is a deterministic function (under $M$) of any subset of the remaining elements of $\mathbf{S}$,\footnote{Equivalently, $\mathbf{S}$ is closed under no nontrivial structural equation of $M$. The constraint excludes admitting both $S$ and a deterministic descendant $H=f(S,\ldots)$ as candidates in the same suspect set, but covers all candidate sets standardly considered in the AC literature, where structural descendants of one candidate are not admitted as separate candidates: their contribution is mediated by the equation, and AC's minimality clause already prefers the ancestor alone. The same closure condition discharges a joint-support concern: the next assumption requires $\Delta(\mathbf{s})$ to place positive density on every \emph{joint} alternative configuration $\mathbf{s}'\in\dom(\mathbf{S})\setminus\{\mathbf{s}\}$, not merely positive marginal density on each component. In a chain $S\to H$ with $H=f(S)$, joint assignments such as $(S{=}0, H{=}1)$ are impossible under the structural equations, so any $\Delta$ that placed mass on them would be incompatible with $M$. Closure rules out exactly this case: when no element of $\mathbf{S}$ is a deterministic function of the remaining elements, the joint domain $\dom(\mathbf{S})$ contains no rectangles forced to zero measure by $M$, and a $\Delta$ with positive PMF or Lebesgue density on every $\mathbf{s}'\neq\mathbf{s}$ is consistent with $M$ on the suspect set. In the stone-throwing benchmark of \cref{sec:ac_benchmark}, the suspect set is $\{A_i,B_i\}_i$ and the deterministic mediators $\{W^a_i,W^b_i\}_i$ live in the witness set, not the suspect set, exactly to keep the suspect set closure-respecting. The kernel pins witnesses at their factual values, so they are not subject to a joint-support requirement on alternatives and need no analogous condition.} and any $\mathbf{W}\supseteq\mathbf{T}$ as the potential witness set. Assume $\Gamma$ is regular and that $\Delta(\mathbf{s})$ assigns positive density to every $\mathbf{s}'\in\dom(\mathbf{S})\setminus\{\mathbf{s}\}$ (positive PMF in the discrete case, positive Lebesgue density in the continuous case). In the continuous case, assume additionally:
    \begin{itemize}
    \item[(i)] the structural equations of $M$ downstream of $\mathbf{C}$ are continuous in
    $\mathbf{c}'$ under fixed $\mathbf{u}$;
    \item[(ii)] either $\neg\varphi$ is open in $\dom(Y)$, or, more weakly, the
    boundary of $\neg\varphi$ has measure zero under the pushforward of $Y$
    \emph{and} the witnessing alternative $\mathbf{c}'$ from context-sensitive
    necessity satisfies $Y(\mathbf{c}')\in\operatorname{int}(\neg\varphi)$ (i.e.\
    the witnessed outcome lies strictly inside $\neg\varphi$, not exactly on its
    boundary).
    \end{itemize}
    Then
    \begin{align}
    \Phi(\mathbf{C},\mathbf{T}) > 0,
    \label{eq:pect}
    \end{align}
    Hypotheses (i)--(ii) are a proof device for the necessity integral inside $\Phi$;
    Halpern's Definition~\ref{def:ac} is a qualitative $\vDash$-condition with no
    continuous/discrete distinction of its own, so they play no further role once
    \eqref{eq:pect} is established. In particular, every other term
    $\Phi(\mathbf{C},\mathbf{T}')$ in the sum below is nonnegative by
    Definition~\ref{def:ackernel} regardless of them, and therefore
    \[
    \sum_{\mathbf{T}'\subseteq\mathbf{W}} \Phi(\mathbf{C},\mathbf{T}')\;>\;0.
    \]
\end{theorem}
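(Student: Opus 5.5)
The plan is to show that each of the three factors in \eqref{eq:ackernel} is strictly positive at the witnessing pair $(\mathbf{C},\mathbf{T})$ supplied by Definition~\ref{def:ac}, then conclude by nonnegativity of the remaining terms.

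\emph{Selection weight.} Since $\mathbf{C}\subseteq\mathbf{S}$, $\mathbf{T}\subseteq\mathbf{W}$, and $\mathbf{T}\cap\mathbf{C}=\emptyset$ (legality of the AC witness), regularity of $\Gamma$ (Definition~\ref{def:regular}) gives $p^\Gamma(\mathbf{C},\mathbf{T})>0$. If Halpern's $\mathbf{T}$ happens to meet $\mathbf{C}$, I would first replace it by $\mathbf{T}\setminus\mathbf{C}$. This changes nothing, because the intervention $\mathbf{C}\leftarrow\mathbf{c}'$ overrides those coordinates anyway.

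\emph{Sufficiency indicator.} I would use the fact that intervening on variables to their own factual values under fixed $\mathbf{u}$ reproduces the factual world. By acyclicity and uniqueness of solutions, the actual solution $\mathbf{v}(\mathbf{u})$ satisfies every unmodified equation, and it also satisfies the constant equations $\mathbf{C}:=\mathbf{s}^\star\!\mid_{\mathbf{C}}$ and $\mathbf{T}:=\mathbf{w}^\star\!\mid_{\mathbf{T}}$. It is therefore the unique solution of the intervened model. Factivity then yields $\varphi$, so the indicator equals $1$.

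\emph{Necessity integral.} This is the main step. Context-sensitive necessity provides some $\mathbf{c}'\neq\mathbf{c}^\star$ with $(M,\mathbf{u})\vDash[\mathbf{C}\leftarrow\mathbf{c}',\mathbf{T}\leftarrow\mathbf{t}^\star]\neg\varphi$. The integral is taken against the pushforward $\Delta_{\mathbf{C}}(\mathbf{s}^\star)$, so I must show that this pushforward charges a set of such alternatives. I would lift $\mathbf{c}'$ to joint alternatives $\mathbf{s}'=(\mathbf{c}',\mathbf{r})$ with $\mathbf{r}$ ranging over $\dom(\mathbf{S}\setminus\mathbf{C})$. All of these differ from $\mathbf{s}$ because $\mathbf{c}'\neq\mathbf{c}^\star$. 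The closure hypothesis on $\mathbf{S}$ guarantees that these joint configurations are not excluded by $M$.

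In the discrete case, positive PMF on each such $\mathbf{s}'$ gives $\Delta_{\mathbf{C}}(\{\mathbf{c}'\})>0$, so the integral is at least that mass. In the continuous case, the point $\mathbf{c}'$ alone is null, and I would use (i) and (ii). By (i), the map $\mathbf{c}''\mapsto Y$ under $[\mathbf{C}\leftarrow\mathbf{c}'',\mathbf{T}\leftarrow\mathbf{t}^\star]$ at fixed $\mathbf{u}$ is continuous, and by (ii) its value at $\mathbf{c}'$ lies in an open subset of $\neg\varphi$, namely $\neg\varphi$ itself or its interior. Its preimage is therefore an open neighbourhood $N$ of $\mathbf{c}'$ on which $\neg\varphi$ holds. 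Shrinking $N$ to avoid $\mathbf{c}^\star$, the cylinder $N\times\dom(\mathbf{S}\setminus\mathbf{C})$ has positive Lebesgue measure. Positive density of $\Delta(\mathbf{s})$ off $\mathbf{s}$ then gives it positive $\Delta$-mass, so $\Delta_{\mathbf{C}}(N)>0$ and the integral is positive.

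Multiplying the three positive factors gives \eqref{eq:pect}. The sum inequality then follows because every $\Phi(\mathbf{C},\mathbf{T}')$ is nonnegative by construction, and the term with $\mathbf{T}'=\mathbf{T}$ is strictly positive (note $\mathbf{T}\subseteq\mathbf{W}$). The obstacle I anticipate is the continuous necessity step: it needs care in converting the pointwise witness into a positive-measure set, including handling mixed discrete/continuous coordinates of $\mathbf{S}$ by applying the PMF argument to discrete coordinates and the open-set argument to continuous ones.
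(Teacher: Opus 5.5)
Your proposal is correct and follows the paper's proof essentially step for step: regularity of $\Gamma$ gives the selection weight, factual restoration under fixed $\mathbf{u}$ gives the sufficiency indicator, the positive-PMF singleton bound (discrete case) or the open preimage supplied by (i)--(ii) (continuous case) gives the necessity integral, and nonnegativity of the other $\Phi(\mathbf{C},\mathbf{T}')$ gives the sum. Two details differ only in presentation: your passage through the pushforward $\Delta_{\mathbf{C}}(\mathbf{s}^\star)$, via joint alternatives $(\mathbf{c}',\mathbf{r})$ and the cylinder $N\times\dom(\mathbf{S}\setminus\mathbf{C})$, spells out the step the paper compresses into ``$\mathbf{c}'$ lies in the support of $\Delta_{\mathbf{C}}(\mathbf{s}^\star)$,'' and your trimming of $\mathbf{T}$ to $\mathbf{T}\setminus\mathbf{C}$ stands in for the paper's remark that disjointness is already forced by well-definedness of the joint intervention.
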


\begin{proof}
Fix $\mathbf{u}$, $\mathbf{c}'$, and $\mathbf{T}$ as witnesses to the satisfaction of
the conditions in Definition~\ref{def:ac}. The context-sensitive-necessity clause
forces $\mathbf{T}\cap\mathbf{C}=\emptyset$, since otherwise the joint intervention
$[\mathbf{C}\!\leftarrow\!\mathbf{c}',\,\mathbf{T}\!\leftarrow\!\mathbf{w}^\star\!\mid_{\mathbf{T}}]$ would
be ill-defined. It remains to show that each of the three factors in
$\Phi(\mathbf{C},\mathbf{T})$ is strictly positive.
\begin{itemize}
\item \emph{Selection weight.} By regularity of $\Gamma$,
$p^\Gamma(\mathbf{C},\mathbf{T})>0$.
\item \emph{Sufficiency indicator.} With exogenous noise $\mathbf{u}$ held constant,
factivity gives $(M,\mathbf{u})\vDash\varphi$. Restoring $\mathbf{C}$ to
$\mathbf{s}^\star\!\mid_{\mathbf{C}}$ and $\mathbf{T}$ to $\mathbf{w}^\star\!\mid_{\mathbf{T}}$
leaves the model agreeing with $(M,\mathbf{u})$ on every variable, so the indicator
evaluates to $1$.
\item \emph{Necessity integral.} Context-sensitive necessity provides a specific
$\mathbf{c}'$ witnessing
$(M,\mathbf{u})\vDash[\mathbf{C}\!\leftarrow\!\mathbf{c}',\mathbf{T}\!\leftarrow\!\mathbf{w}^\star\!\mid_{\mathbf{T}}]\neg\varphi$.
The closure condition on $\mathbf{S}$ and the positive-density assumption on
$\Delta(\mathbf{s})$ imply that $\mathbf{c}'$ lies in the support of
$\Delta_{\mathbf{C}}(\mathbf{s}^\star)$. We treat the two cases separately.
\emph{Discrete case}: the singleton $\{\mathbf{c}'\}$ has positive PMF
$\Delta_{\mathbf{C}}(\{\mathbf{c}'\})>0$ by hypothesis, so the integral is at least
$\Delta_{\mathbf{C}}(\{\mathbf{c}'\})>0$. \emph{Continuous case}: hypothesis~(i) gives
continuity of the intervention map $\mathbf{c}'\mapsto Y$ at fixed $\mathbf{u}$;
hypothesis~(ii) places $Y(\mathbf{c}')$ in the interior of $\neg\varphi$, directly if $\neg\varphi$ is open (every point of an open set is interior), and by the added
clause otherwise. Pulling back
an open neighbourhood within $\neg\varphi$ yields an open neighbourhood of
$\mathbf{c}'$ on which the indicator equals $1$, and this neighbourhood has positive
Lebesgue measure and hence positive $\Delta_{\mathbf{C}}$-mass by the
positive-density hypothesis. In either case the integral is strictly positive.
\end{itemize}
The product of three strictly positive factors is strictly positive, giving
$\Phi(\mathbf{C},\mathbf{T})>0$. By Definition~\ref{def:ackernel}, every
$\Phi(\mathbf{C},\mathbf{T}')$ is itself a product of a nonnegative selection weight,
a $\{0,1\}$-valued indicator, and an integral of a $\{0,1\}$-valued indicator against
the nonnegative measure $\Delta_{\mathbf{C}}$, hence $\Phi(\mathbf{C},\mathbf{T}')\geq
0$ for every $\mathbf{T}'\subseteq\mathbf{W}$; summing over $\mathbf{T}'$ therefore
adds non-negative terms to the positive term $\Phi(\mathbf{C},\mathbf{T})$, preserving
positivity. The proof uses factivity
and context-sensitive necessity but not minimality: the same conclusion holds whenever
$\mathbf{C}$ satisfies factivity and context-sensitive necessity in $(M,\mathbf{u})$,
a fact we will use in Proposition~\ref{prop:filter}.
\end{proof}

\medskip\noindent In settings with continuous outcomes and a threshold event such as
$\varphi=\{Y>y_0\}$ (for instance the SIR overshoot benchmark of
\cref{sec:sir_benchmark}), both hypotheses are typically satisfied via the weaker
clause of (ii): $\neg\varphi=\{Y\le y_0\}$ is closed rather than open, but its
boundary is the single point $y_0$, which has measure zero under any continuous
pushforward of $Y$; the surviving requirement is that the specific witnessing
alternative satisfies $Y(\mathbf{c}')<y_0$ strictly rather than $Y(\mathbf{c}')=y_0$ exactly, generic under a continuous pushforward, but not automatic, so a witness
that lands exactly on the threshold needs a separate (measure-zero, hence
non-generic) argument.
(Literal openness of $\neg\varphi$ instead requires the strict inequality on the
$\neg\varphi$ side, i.e.\ $\varphi=\{Y\ge y_0\}$, $\neg\varphi=\{Y<y_0\}$.)
Binary-output models, where $Y$ takes values in $\{0,1\}$, satisfy
(ii) automatically (every subset of a discrete codomain is open) but may not satisfy
(i) when the binary output comes from thresholding a continuous score. In such cases
the proof falls back to the discrete branch, which only requires positive PMF on the
witnessing $\mathbf{c}'$.

The converse also holds: a strictly positive summed kernel value forces both
factivity and context-sensitive necessity, the two qualitative AC conditions the
kernel is built to detect (we treat minimality separately below). The proof reads
the argument for Theorem~\ref{th:ac-exp} in reverse on the necessity factor.

\begin{theorem}\label{th:exp-ac}
Under the same hypotheses on $\mathbf{S}$, $\Gamma$, and $\Delta$ as
in Theorem~\ref{th:ac-exp}, if
\[
\sum_{\mathbf{T}'\subseteq\mathbf{W}} \Phi(\mathbf{C},\mathbf{T}')\;>\;0,
\]
then both factivity ($(M,\mathbf{u})\vDash\varphi$) and the
context-sensitive-necessity condition in Definition~\ref{def:ac} hold for
$\mathbf{C}$ in $(M,\mathbf{u})$.
\end{theorem}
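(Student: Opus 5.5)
The plan is to run the argument for Theorem~\ref{th:ac-exp} backwards, factor by factor. All terms $\Phi(\mathbf{C},\mathbf{T}')$ are nonnegative by Definition~\ref{def:ackernel}. So if the sum over $\mathbf{T}'\subseteq\mathbf{W}$ is positive, then at least one term $\Phi(\mathbf{C},\mathbf{T}_0)$ must be positive. Fix such a $\mathbf{T}_0$; by construction of the kernel it is disjoint from $\mathbf{C}$. A product of three nonnegative factors is positive only if each factor is. I will therefore extract, in turn: $p^\Gamma(\mathbf{C},\mathbf{T}_0)>0$ (not needed further), the sufficiency indicator equal to $1$, and the necessity integral strictly positive.

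For factivity, the cleanest route is to use the factual setting of $(M,\mathbf{u})$ directly rather than the indicator. The restored values $\mathbf{s}^\star\!\mid_{\mathbf{C}}$ and $\mathbf{w}^\star\!\mid_{\mathbf{T}_0}$ are, by definition, the values these variables take in $(M,\mathbf{u})$. Intervening to set variables to the values they already have reproduces the unintervened solution: acyclicity and uniqueness of solutions in the PSCM definition mean the unintervened solution also solves the intervened system, and uniqueness then identifies the two. So the sufficiency indicator equalling $1$ is literally the statement $(M,\mathbf{u})\vDash\varphi$. Together with $(M,\mathbf{u})\vDash\mathbf{C}=\mathbf{s}^\star\!\mid_{\mathbf{C}}$, which holds by the choice of factual values, this gives factivity.

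For context-sensitive necessity, positivity of $\int \mathbb{I}[(M,\mathbf{u})\vDash[\mathbf{C}\leftarrow\mathbf{c}',\mathbf{T}_0\leftarrow\mathbf{w}^\star\!\mid_{\mathbf{T}_0}]\neg\varphi]\,\Delta_{\mathbf{C}}(\mathbf{s}^\star)(d\mathbf{c}')$ implies that the integrand's support set has positive $\Delta_{\mathbf{C}}$-measure. In particular that set is nonempty, so some $\mathbf{c}'$ overturns $\varphi$ with $\mathbf{T}_0$ held at $\mathbf{t}^\star$. Taking $\mathbf{T}:=\mathbf{T}_0\subseteq\mathbf{W}\subseteq\mathbf{V}$ together with this $\mathbf{c}'$ witnesses the clause of Definition~\ref{def:ac}. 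This direction needs neither regularity nor hypotheses (i)--(ii); a nonempty-set argument suffices in both the discrete and continuous cases.

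The main obstacle is that Definition~\ref{def:ac} asks for an \emph{alternative} setting, $\mathbf{c}'\neq\mathbf{c}^\star$. The remark after Definition~\ref{def:jointnecsuf} warns that $\Delta_{\mathbf{C}}$ may put mass on $\mathbf{s}^\star\!\mid_{\mathbf{C}}$ itself. That worry dissolves here, because at $\mathbf{c}'=\mathbf{s}^\star\!\mid_{\mathbf{C}}$ the integrand is exactly the negation of the sufficiency indicator, which we already showed equals $1$. The integrand therefore vanishes at the factual restriction, so any $\mathbf{c}'$ on which it equals $1$ is automatically distinct from $\mathbf{c}^\star$. I will make this explicit as the final step, noting that minimality is not claimed.
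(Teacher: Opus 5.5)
Your proposal is correct and follows the paper's proof step for step. Nonnegativity of $\Phi$ forces one positive term, hence three positive factors. The sufficiency indicator collapses to $(M,\mathbf{u})\vDash\varphi$ because restoring factual values is the identity intervention. Positivity of the necessity integral then yields a $\mathbf{c}'$ that, together with that $\mathbf{T}'$, witnesses context-sensitive necessity.

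You also spell out two points the paper leaves implicit. You justify the identity-intervention step from acyclicity and uniqueness of solutions, and you observe that the necessity integrand vanishes at $\mathbf{s}^\star\!\mid_{\mathbf{C}}$, so the witnessing $\mathbf{c}'$ is automatically a genuine alternative.
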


\begin{proof}
Since $\Phi$ is non-negative, the assumption forces $\Phi(\mathbf{C},\mathbf{T}')>0$
for at least one $\mathbf{T}'\subseteq\mathbf{W}$. By Definition~\ref{def:ackernel},
each factor of $\Phi(\mathbf{C},\mathbf{T}')$ is strictly positive. The sufficiency
indicator equals $1$, i.e.\
$(M,\mathbf{u})\vDash[\mathbf{C}\!\leftarrow\!\mathbf{s}^\star\!\mid_{\mathbf{C}},\,\mathbf{T}'\!\leftarrow\!\mathbf{w}^\star\!\mid_{\mathbf{T}'}]\varphi$;
since pinning variables to their factual values under fixed $\mathbf{u}$ leaves every
variable unchanged, this is equivalent to $(M,\mathbf{u})\vDash\varphi$, giving
factivity. The necessity integral being positive provides a
$\Delta_{\mathbf{C}}$-positive-density set of values $\mathbf{c}'$ for which
$(M,\mathbf{u})\vDash[\mathbf{C}\!\leftarrow\!\mathbf{c}',\,\mathbf{T}'\!\leftarrow\!\mathbf{w}^\star\!\mid_{\mathbf{T}'}]\neg\varphi$;
any such $\mathbf{c}'$, together with $\mathbf{T}'$, witnesses the existential in the
context-sensitive-necessity clause of Definition~\ref{def:ac}.
\end{proof}

\medskip\noindent\textit{Remark (deterministic intermediates and joint support).}
The hypothesis on $\mathbf{S}$ rules out the following obstruction.
Suppose the suspect set were closed under the structural equations of $M$, e.g., both
Sally's throw $S$ and the bottle's hit $H = f(S,\ldots)$ admitted as candidates. Then the
joint assignment $(S{=}0, H{=}1)$ is structurally impossible, and any alternative-value
distribution consistent with $M$ assigns it density zero, so the joint full-support
condition required to recover the witnessing $\mathbf{c}'$ from $\Phi > 0$ fails.
Restricting suspects to variables that are not deterministic functions of the others
under $M$ removes this obstruction. The constraint is also conservative with respect to
the AC literature: in the canonical stone-throwing setup, the candidate set is
$\{S_{Sally}, S_{Bill}\}$, with $H$ as the outcome, and AC's own
minimality clause already prefers ancestor candidates over their deterministic descendants
when both are admitted. Sections~\ref{sub:synthetic} and~\ref{sec:ac_benchmark} adopt the
same convention experimentally (the continuous synthetic benchmark uses the six roots
as suspects and the discrete scaled-throwing benchmark uses $\{A_i,B_i\}_i$ as suspects
with the deterministic mediators $\{W^a_i,W^b_i\}_i$ as witnesses); in both cases
deterministic intermediates are excluded from the suspect set. Cases where a user genuinely wants to attribute responsibility separately to a
deterministic intermediate (rather than to its structural ancestors) lie outside the
scope of these theorems and require an alternative reading of intervention semantics; we
treat that as a separate problem.

\medskip
With minimality, the situation is somewhat more complicated. Our choice of $\Gamma$
affects which configurations maximise $\Phi$. Moreover, $\Phi$ compares
configurations by $\Gamma$-weight, and Definition~\ref{def:ac} formulates minimality
strictly in terms of subsets, so some misalignment is to be expected. A few structural conditions salvage much of the
correspondence nonetheless, starting with the following monotonicity property.

Enlarging the candidate cause set typically makes the necessity check harder to
satisfy: flipping the outcome now requires a joint alternative across more
coordinates, which is generally a rarer draw under $\Delta$. \emph{Necessity
dilution} names the monotonicity condition under which this intuition always
holds: growing $\mathbf{C}$ never increases the necessity mass, so that
$\Gamma$'s preference for smaller sets is not fighting an opposing pull from $N$.

\begin{definition}[Necessity dilution]\label{def:necessity-dilution}
A pair $(M,\Delta)$ satisfies \emph{necessity dilution} on $\mathbf{S}$ if, writing
$N(\mathbf{C},\mathbf{T})$ for the necessity integral in
Equation~\eqref{eq:ackernel},
\[
N(\mathbf{C}^\sharp,\mathbf{T}^\sharp)\;\geq\;N(\mathbf{C}^\dagger,\mathbf{T}^\dagger)
\]
for every $\mathbf{C}^\sharp\subsetneq\mathbf{C}^\dagger\subseteq\mathbf{S}$ and
witnesses $\mathbf{T}^\sharp,\mathbf{T}^\dagger$ disjoint from the corresponding
suspect set.
\end{definition}

First, there is a simplified setting in which a positive value of $\Phi$,
under factivity, entails minimality with respect to the same parameters. Three
conditions suffice:
\begin{itemize}
\item $\Gamma_w$ is uniform over $2^{\mathbf{W}}$;
\item $\Gamma_s$ is \emph{strictly cardinality-decreasing}, i.e.\
$p^{\Gamma_s}(\mathbf{C}_1) > p^{\Gamma_s}(\mathbf{C}_2)$ whenever
$\mathbf{C}_1\subsetneq\mathbf{C}_2$ (this offsets larger suspect sets' structural
advantage: with more variables to draw on, they are easier to certify as necessary,
so without a compensating weight they would be spuriously preferred over the true
minimal cause);
\item the pair $(M,\Delta)$ satisfies necessity dilution on $\mathbf{S}$
(Definition~\ref{def:necessity-dilution}).
\end{itemize}
Section~\ref{sec:causal_impact}'s default recommendation and its OBCB illustration
both use a uniform $\Gamma_s$. The correspondence below needs the
cardinality-decreasing weighting above; a practitioner obtains this guarantee by
choosing that weighting explicitly.

The same proof strategy, extended below, shows that we can safely marginalise over
witnesses as well.

\begin{theorem}\label{th:local_max_to_ac}
Consider the set of optimisers
$\mathcal{O}^\star = \arg\max_{\mathbf{C}\subseteq\mathbf{S},\,\mathbf{T}\subseteq\mathbf{W}}
\Phi(\mathbf{C},\mathbf{T})$. Assume $(\mathbf{C}^\dagger,\mathbf{T}^\dagger)\in\mathcal{O}^\star$
with $\Phi(\mathbf{C}^\dagger,\mathbf{T}^\dagger)>0$, that factivity is satisfied for
$\mathbf{C}^\dagger$ (as in Definition~\ref{def:ac}), and that no element of
$\mathbf{S}$ is a deterministic function under $M$ of the others. Let
$\Gamma_w$ be uniform on $2^{\mathbf{W}}$ and let $\Gamma_s$ be strictly
cardinality-decreasing, assume that $(M,\Delta)$ satisfies necessity dilution on
$\mathbf{S}$, and assume $\Delta$ satisfies the positive-density hypothesis of
Theorem~\ref{th:ac-exp} (together with hypotheses (i)--(ii) there, in the continuous
case); this lets the proof below invoke Theorem~\ref{th:ac-exp} on
$\mathbf{C}^\sharp$. Then $\mathbf{C}^\dagger$ is an actual cause of $\varphi$
according to Definition~\ref{def:ac}.
\end{theorem}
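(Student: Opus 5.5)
We show that the maximiser $\mathbf{C}^\dagger$ satisfies all three clauses of Definition~\ref{def:ac}. Factivity is assumed. Context-sensitive necessity follows from Theorem~\ref{th:exp-ac}: $\Phi(\mathbf{C}^\dagger,\mathbf{T}^\dagger)>0$ and every $\Phi$ term is nonnegative, so $\sum_{\mathbf{T}'}\Phi(\mathbf{C}^\dagger,\mathbf{T}')>0$. That theorem then yields an alternative $\mathbf{c}'$ and the witness $\mathbf{T}^\dagger$ certifying necessity. All the real work is in minimality, which we prove by contradiction.

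Suppose minimality fails. Then some strict subset $\mathbf{C}^\sharp\subsetneq\mathbf{C}^\dagger$ satisfies factivity and context-sensitive necessity, with witness $\mathbf{T}^\sharp$. By the remark closing the proof of Theorem~\ref{th:ac-exp}, that theorem needs only these two conditions, not minimality. Applied to $\mathbf{C}^\sharp$ (using closure of $\mathbf{S}$, positive density of $\Delta$, and (i)--(ii) in the continuous case), it gives $\Phi(\mathbf{C}^\sharp,\mathbf{T}^\sharp)>0$. We then compare the three factors of $\Phi(\mathbf{C}^\sharp,\mathbf{T}^\dagger)$ with those of $\Phi(\mathbf{C}^\dagger,\mathbf{T}^\dagger)$. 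Reusing $\mathbf{T}^\dagger$ is legal, since $\mathbf{T}^\dagger\cap\mathbf{C}^\sharp\subseteq\mathbf{T}^\dagger\cap\mathbf{C}^\dagger=\emptyset$.

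\emph{Selection weight.} Assume $p^\Gamma(\mathbf{C},\mathbf{T})\propto p^{\Gamma_s}(\mathbf{C})\,p^{\Gamma_w}(\mathbf{T})$ on disjoint pairs. With $\Gamma_w$ uniform, the witness factor is the same constant for both configurations. The normalising constant of the rejection construction is shared across all pairs. Strict cardinality decrease then gives $p^\Gamma(\mathbf{C}^\sharp,\mathbf{T}^\dagger)>p^\Gamma(\mathbf{C}^\dagger,\mathbf{T}^\dagger)$.

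\emph{Sufficiency indicator.} Pinning factual values under fixed $\mathbf{u}$ reproduces the factual world, so both indicators equal $1$ by factivity, exactly as in the proof of Theorem~\ref{th:exp-ac}.

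\emph{Necessity integral.} Necessity dilution gives $N(\mathbf{C}^\sharp,\mathbf{T}^\dagger)\ge N(\mathbf{C}^\dagger,\mathbf{T}^\dagger)$, and the latter is $>0$.

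Multiplying the three comparisons gives $\Phi(\mathbf{C}^\sharp,\mathbf{T}^\dagger)>\Phi(\mathbf{C}^\dagger,\mathbf{T}^\dagger)$. This contradicts $(\mathbf{C}^\dagger,\mathbf{T}^\dagger)\in\mathcal{O}^\star$, so no such $\mathbf{C}^\sharp$ exists and minimality holds.

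The delicate step is the selection-weight comparison. The argument needs the rejection-sampling joint $\Gamma$ to preserve the strict ordering of the marginals for the fixed witness set $\mathbf{T}^\dagger$. I expect this to go through because the indicator $\mathbb{I}[\mathbf{T}\cap\mathbf{C}=\emptyset]$ equals $1$ for both pairs and the normaliser is global. If instead $\Gamma$ were defined with a $\mathbf{C}$-dependent renormalisation over witnesses, uniformity of $\Gamma_w$ would still have to be checked to give equal witness factors.

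A secondary subtlety is that necessity dilution is stated for arbitrary witnesses. This matters because the comparison uses $\mathbf{T}^\dagger$ for both sets, whereas the witness produced for $\mathbf{C}^\sharp$ was $\mathbf{T}^\sharp$. The dilution hypothesis is precisely what lets us avoid tracking $\mathbf{T}^\sharp$ at all: Theorem~\ref{th:ac-exp} is invoked only to confirm that $\mathbf{C}^\sharp$ is a legitimate competitor. The strict inequality itself comes entirely from $\Gamma_s$ together with dilution.
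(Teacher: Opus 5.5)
Your proof is correct, but it compares a different pair from the one the paper uses.

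The paper pits $(\mathbf{C}^\sharp,\mathbf{T}^\sharp)$ against $(\mathbf{C}^\dagger,\mathbf{T}^\dagger)$, where $\mathbf{T}^\sharp$ is the witness certifying $\mathbf{C}^\sharp$. To do this it invokes Theorem~\ref{th:ac-exp} on $\mathbf{C}^\sharp$. It uses uniformity of $\Gamma_w$ to cancel $p^{\Gamma_w}(\mathbf{T}^\sharp)$ against $p^{\Gamma_w}(\mathbf{T}^\dagger)$. It also needs the cross-witness form of dilution, $N(\mathbf{C}^\sharp,\mathbf{T}^\sharp)\ge N(\mathbf{C}^\dagger,\mathbf{T}^\dagger)$.

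You instead reuse $\mathbf{T}^\dagger$ for $\mathbf{C}^\sharp$. This is exactly the pointwise step of the paper's proof of Corollary~\ref{cor:marginalize}, specialised to $\mathbf{T}=\mathbf{T}^\dagger$. It buys strictly weaker hypotheses:
\begin{itemize}
\item The witness factor is the same number $p^{\Gamma_w}(\mathbf{T}^\dagger)>0$ on both sides, so uniformity of $\Gamma_w$ is not needed.
\item Only same-witness dilution, $N(\mathbf{C}^\sharp,\mathbf{T}^\dagger)\ge N(\mathbf{C}^\dagger,\mathbf{T}^\dagger)$, is used.
\item Your appeal to Theorem~\ref{th:ac-exp} on $\mathbf{C}^\sharp$, with its density and continuity hypotheses, is never used. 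The sufficiency indicator is $1$ by global factivity, and strictness comes from $p^{\Gamma_s}(\mathbf{C}^\sharp)>p^{\Gamma_s}(\mathbf{C}^\dagger)$ together with $N(\mathbf{C}^\dagger,\mathbf{T}^\dagger)>0$.
\end{itemize}
Your closing paragraph therefore mislocates the work. The arbitrary-witness strength of dilution is what the paper's comparison needs, not yours.

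The paper's route keeps the competitor an AC-certified configuration. Yours makes plain that this certification is idle: any strict subset to which dilution applies beats $\mathbf{C}^\dagger$ at $\mathbf{T}^\dagger$. This also shows that dilution must be read over nonempty $\mathbf{C}^\sharp$. Under factivity $N(\emptyset,\mathbf{T})=0$, so dilution at $\emptyset$ would force $N(\mathbf{C}^\dagger,\mathbf{T}^\dagger)=0$, contradicting $\Phi(\mathbf{C}^\dagger,\mathbf{T}^\dagger)>0$.
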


\begin{proof}
Factivity is assumed; by Theorem~\ref{th:exp-ac} the context-sensitive-necessity
condition holds for $\mathbf{C}^\dagger$ in $(M,\mathbf{u})$. So the only way
$\mathbf{C}^\dagger$ can fail to be an actual cause is by failing minimality. Suppose
for contradiction that minimality fails: there is
$\mathbf{C}^\sharp\subsetneq\mathbf{C}^\dagger$ satisfying factivity and
context-sensitive necessity for $\varphi$ with some witness $\mathbf{T}^\sharp$. By
Theorem~\ref{th:ac-exp} (its proof requires only factivity and necessity, not
minimality), $\Phi(\mathbf{C}^\sharp,\mathbf{T}^\sharp)>0$ as well, with the
sufficiency indicator and the necessity integral both equal to $1$ and a strictly
positive value, respectively. Both pairs therefore satisfy
$\Phi(\mathbf{C},\mathbf{T}) = p^\Gamma(\mathbf{C},\mathbf{T})\cdot N(\mathbf{C},\mathbf{T})$.

By the construction of $\Gamma$ from independent marginals
(Section~\ref{sec:causal_impact}),
\[
p^\Gamma(\mathbf{C},\mathbf{T}) \;\propto\;
p^{\Gamma_s}(\mathbf{C})\,p^{\Gamma_w}(\mathbf{T})\,
\mathbb{I}[\mathbf{T}\cap\mathbf{C}=\emptyset].
\]
The $\mathbb{I}[\cdot]$ factor is $1$ for both pairs (otherwise the contradiction
would have been immediate). Uniformity of $\Gamma_w$ on $2^{\mathbf{W}}$ makes
$p^{\Gamma_w}(\mathbf{T})=2^{-|\mathbf{W}|}$ identically, so the witness factor cancels.
Strict cardinality-decrease of $\Gamma_s$ together with
$\mathbf{C}^\sharp\subsetneq\mathbf{C}^\dagger$ gives
$p^{\Gamma_s}(\mathbf{C}^\sharp)>p^{\Gamma_s}(\mathbf{C}^\dagger)$, and necessity
dilution gives $N(\mathbf{C}^\sharp,\mathbf{T}^\sharp)\geq
N(\mathbf{C}^\dagger,\mathbf{T}^\dagger)$. Multiplying these (and noting
$N(\mathbf{C}^\sharp,\mathbf{T}^\sharp)>0$ to convert weak into strict in the
composite) gives
$\Phi(\mathbf{C}^\sharp,\mathbf{T}^\sharp)>\Phi(\mathbf{C}^\dagger,\mathbf{T}^\dagger)$,
contradicting $(\mathbf{C}^\dagger,\mathbf{T}^\dagger)\in\mathcal{O}^\star$.
\end{proof}

\begin{corollary}\label{cor:marginalize}
Keep the notation and hypotheses of Theorem~\ref{th:local_max_to_ac}. Consider the set
of optimisers
$\mathcal{O}^\Sigma = \arg\max_{\mathbf{C}\subseteq\mathbf{S}}
\sum_{\mathbf{T}\subseteq\mathbf{W}} \Phi(\mathbf{C},\mathbf{T})$. Assume
$\mathbf{C}^\dagger\in\mathcal{O}^\Sigma$ with
$\sum_{\mathbf{T}\subseteq\mathbf{W}} \Phi(\mathbf{C}^\dagger,\mathbf{T})>0$ and that
factivity is satisfied for $\mathbf{C}^\dagger$. Then $\mathbf{C}^\dagger$ is an actual
cause of $\varphi$ according to Definition~\ref{def:ac}.
\end{corollary}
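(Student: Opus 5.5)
The plan mirrors the proof of Theorem~\ref{th:local_max_to_ac}, with the per-configuration kernel replaced by its witness marginal $\Sigma(\mathbf{C}) := \sum_{\mathbf{T}\subseteq\mathbf{W}}\Phi(\mathbf{C},\mathbf{T})$.

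\textbf{Step 1: factivity and necessity.} Since $\Sigma(\mathbf{C}^\dagger)>0$, Theorem~\ref{th:exp-ac} applies directly. Together with the assumed factivity, it gives context-sensitive necessity for $\mathbf{C}^\dagger$. Only minimality remains.

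\textbf{Step 2: the contradiction hypothesis.} Suppose some $\mathbf{C}^\sharp\subsetneq\mathbf{C}^\dagger$ satisfies factivity and context-sensitive necessity with a witness $\mathbf{T}^\sharp$. By the minimality-free reading of Theorem~\ref{th:ac-exp}, $\Phi(\mathbf{C}^\sharp,\mathbf{T}^\sharp)>0$.

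\textbf{Step 3: factor the kernel on legal pairs.} Under fixed $\mathbf{u}$, factivity makes the sufficiency indicator equal to $1$ on every legal pair. So for every $\mathbf{T}$ disjoint from $\mathbf{C}$,
\[
\Phi(\mathbf{C},\mathbf{T}) = Z^{-1}\,p^{\Gamma_s}(\mathbf{C})\,2^{-|\mathbf{W}|}\,N(\mathbf{C},\mathbf{T}),
\]
and illegal pairs contribute $0$. Here $Z$ is the normalising constant of the rejection construction, which is common to all pairs.

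\textbf{Step 4: compare the two sums term by term.} The goal is to show $\Sigma(\mathbf{C}^\sharp)>\Sigma(\mathbf{C}^\dagger)$, which contradicts $\mathbf{C}^\dagger\in\mathcal{O}^\Sigma$. The natural pairing sends each $\mathbf{T}$ legal for $\mathbf{C}^\dagger$ to the same $\mathbf{T}$ for $\mathbf{C}^\sharp$. This works because $\mathbf{T}\cap\mathbf{C}^\dagger=\emptyset$ implies $\mathbf{T}\cap\mathbf{C}^\sharp=\emptyset$, so the set of legal witnesses for $\mathbf{C}^\sharp$ contains that for $\mathbf{C}^\dagger$. On each paired term:
\begin{itemize}
\item necessity dilution gives $N(\mathbf{C}^\sharp,\mathbf{T})\ge N(\mathbf{C}^\dagger,\mathbf{T})$;
\item strict cardinality-decrease gives $p^{\Gamma_s}(\mathbf{C}^\sharp)>p^{\Gamma_s}(\mathbf{C}^\dagger)$;
\item the uniform witness factor is identical on both sides.
\end{itemize}
Hence $\Phi(\mathbf{C}^\sharp,\mathbf{T})\ge\Phi(\mathbf{C}^\dagger,\mathbf{T})$ for each such $\mathbf{T}$. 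The extra witnesses legal only for $\mathbf{C}^\sharp$ contribute nonnegative terms, so $\Sigma(\mathbf{C}^\sharp)\ge\Sigma(\mathbf{C}^\dagger)$.

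\textbf{Step 5 (main obstacle): upgrading to strict inequality.} Because $\Sigma(\mathbf{C}^\dagger)>0$, some legal $\mathbf{T}_0$ has $N(\mathbf{C}^\dagger,\mathbf{T}_0)>0$. On that term, the strict $\Gamma_s$ inequality times a positive necessity value gives
\[
p^{\Gamma_s}(\mathbf{C}^\sharp)\,N(\mathbf{C}^\sharp,\mathbf{T}_0) \;\ge\; p^{\Gamma_s}(\mathbf{C}^\sharp)\,N(\mathbf{C}^\dagger,\mathbf{T}_0) \;>\; p^{\Gamma_s}(\mathbf{C}^\dagger)\,N(\mathbf{C}^\dagger,\mathbf{T}_0).
\]
That single strict term makes the whole sum strict, which is the required contradiction.

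The delicate bookkeeping is in Steps 4--5: the sums run over different witness index sets, and the strict gain must sit on a matched term rather than only on the extra ones. If a reader worries that dilution might pair witnesses inconsistently, there is a fallback. Definition~\ref{def:necessity-dilution} quantifies over arbitrary witness pairs, so it allows the choice $\mathbf{T}^\sharp=\mathbf{T}^\dagger=\mathbf{T}$ used above.
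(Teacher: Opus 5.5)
Your proposal is correct and follows the paper's proof essentially step for step. Both arguments use Theorem~\ref{th:exp-ac} for factivity and necessity, factor $\Phi = p^{\Gamma_s}\,p^{\Gamma_w}\,N$ on disjoint pairs via global factivity, pair each $\mathbf{T}$ with itself under necessity dilution and strict cardinality-decrease, extend the $\mathbf{C}^\sharp$-sum by nonnegativity, and get strictness from a witness where $N(\mathbf{C}^\dagger,\cdot)>0$. Your explicit normalising constant $Z$ from the rejection construction is slightly more careful bookkeeping of a factorisation the paper states only up to proportionality.
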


\begin{proof}
The summation hypothesis entails the existence of some
$\mathbf{T}^\dagger\subseteq\mathbf{W}$ with $\Phi(\mathbf{C}^\dagger,\mathbf{T}^\dagger)>0$;
by Theorem~\ref{th:exp-ac}, factivity and context-sensitive necessity hold for
$\mathbf{C}^\dagger$. It remains to rule out a failure of minimality. We cannot simply
invoke Theorem~\ref{th:local_max_to_ac}: $\mathbf{C}^\dagger\in\mathcal{O}^\Sigma$ does
not place any single pair $(\mathbf{C}^\dagger,\mathbf{T})$ in the joint optimiser set
$\mathcal{O}^\star$, since a candidate whose $\Gamma$-mass is spread thinly across many
good witnesses can beat, in sum, a competitor that concentrates all its mass on one
witness pair with strictly higher pointwise $\Phi$. We instead adapt the argument of
Theorem~\ref{th:local_max_to_ac} directly to the summed objective.

Since factivity is global, $(M,\mathbf{u})\models\varphi$, restoring \emph{any}
disjoint pair $(\mathbf{C},\mathbf{T})$ to its own factual values is the identity
intervention, so the sufficiency indicator in Definition~\ref{def:ackernel} equals $1$
for every such pair; hence $\Phi(\mathbf{C},\mathbf{T}) =
p^{\Gamma_s}(\mathbf{C})\,p^{\Gamma_w}(\mathbf{T})\,N(\mathbf{C},\mathbf{T})$ whenever
$\mathbf{T}\cap\mathbf{C}=\emptyset$.

Suppose for contradiction that minimality fails for $\mathbf{C}^\dagger$: there is
$\mathbf{C}^\sharp\subsetneq\mathbf{C}^\dagger$ satisfying factivity and
context-sensitive necessity with some witness $\mathbf{T}^\sharp$ (not necessarily
minimality itself). By Theorem~\ref{th:ac-exp}, whose proof requires only factivity
and necessity, not minimality (as already noted in the proof of
Theorem~\ref{th:local_max_to_ac}), $\Phi(\mathbf{C}^\sharp,\mathbf{T}^\sharp)>0$. For every
$\mathbf{T}$ with $\mathbf{T}\cap\mathbf{C}^\dagger=\emptyset$ (hence also
$\mathbf{T}\cap\mathbf{C}^\sharp=\emptyset$, since $\mathbf{C}^\sharp\subsetneq\mathbf{C}^\dagger$),
necessity dilution gives $N(\mathbf{C}^\sharp,\mathbf{T})\geq N(\mathbf{C}^\dagger,\mathbf{T})$
for that same $\mathbf{T}$, while strict cardinality-decrease gives
$p^{\Gamma_s}(\mathbf{C}^\sharp)>p^{\Gamma_s}(\mathbf{C}^\dagger)$ and uniformity of
$\Gamma_w$ makes $p^{\Gamma_w}(\mathbf{T})$ a common positive constant; together these
give $\Phi(\mathbf{C}^\sharp,\mathbf{T})\geq\Phi(\mathbf{C}^\dagger,\mathbf{T})$
pointwise, with strict inequality at $\mathbf{T}=\mathbf{T}^\dagger$ since
$N(\mathbf{C}^\dagger,\mathbf{T}^\dagger)>0$ there. Summing over
$\mathbf{T}\subseteq\mathbf{W}$ disjoint from $\mathbf{C}^\dagger$, then using
nonnegativity of $\Phi$ to extend the left-hand sum to all $\mathbf{T}$ disjoint from
$\mathbf{C}^\sharp$ (a superset of the domain on the right, since disjointness from the
smaller set $\mathbf{C}^\sharp$ is the weaker condition),
\[
\sum_{\mathbf{T}\subseteq\mathbf{W}} \Phi(\mathbf{C}^\sharp,\mathbf{T})
\;>\;
\sum_{\mathbf{T}\subseteq\mathbf{W}} \Phi(\mathbf{C}^\dagger,\mathbf{T}),
\]
contradicting $\mathbf{C}^\dagger\in\mathcal{O}^\Sigma$. So minimality holds, and
$\mathbf{C}^\dagger$ is an actual cause of $\varphi$.
\end{proof}

\medskip\noindent Necessity dilution is a property of $(M,\Delta)$ alone, not of
$\Gamma$. Definition~\ref{def:necessity-dilution} is deliberately strong: it compares
$N$ across \emph{every} pair of independently-chosen witness sets
$\mathbf{T}^\sharp,\mathbf{T}^\dagger$, not merely at a shared witness set. That
strength is exactly what licenses comparing $\mathbf{T}^\sharp$ against the optimal
$\mathbf{T}^\dagger$ directly in the proof of Theorem~\ref{th:local_max_to_ac} above
(Corollary~\ref{cor:marginalize}'s proof needs only the weaker same-$\mathbf{T}$
special case, $N(\mathbf{C}^\sharp,\mathbf{T})\geq N(\mathbf{C}^\dagger,\mathbf{T})$
for each shared $\mathbf{T}$, which the strong version implies). We write
$N(\mathbf{C})$ for $N(\mathbf{C},\mathbf{T})$ where this does not depend on $\mathbf{T}$ at all; i.e.\ pinning any admissible witness set at its factual
values leaves the necessity integral for $\mathbf{C}$ unchanged, which is a
structural property of $(M,\Delta,\mathbf{C})$ that should be checked case by case,
not assumed from the notation. The condition requires that the $\Delta$-mass of alternatives that flip the outcome does not grow as the candidate cause expands. Three settings make it automatic, and satisfy the witness-independence just described. \emph{(i) Single-pivot models}: one variable $X^\star\in\mathbf{S}$
is pivotal under $(M,\mathbf{u})$ and the others are structurally idle, so $N$ peaks at
$\{X^\star\}$ and falls off as more suspects are added. \emph{(ii) Factored $\Delta$
without Boolean-OR redundancy}: $\Delta_{\mathbf{C}}$ is a product of marginals
$\bigotimes_{X\in\mathbf{C}}\nu_X$ with $\nu_X(\dom(X))\leq 1$, and the flip event
factorises coordinate-wise; $N(\mathbf{C})$ is then a product of factors each at most
$1$, decreasing in $|\mathbf{C}|$. \emph{(iii) Conjunctive outcome events}: if
$\neg\varphi$ requires flipping every variable in a required set
$\mathbf{R}\subseteq\mathbf{S}$, the flip region's $\Delta_{\mathbf{C}}$-mass decreases
as $\mathbf{C}$ grows beyond $\mathbf{R}$. The hypothesis fails in disjunctive
overdetermination (e.g.\ $\varphi:\,A\vee B=1$ with both $A,B$ factual): no proper
subset of $\mathbf{S}$ flips $\varphi$, so $N$ is zero on subsets and positive on the
full set: the opposite of dilution. In such cases no proper subset of $\mathbf{S}$
is an actual cause anyway, so Theorem~\ref{th:local_max_to_ac} is not the appropriate
tool; Proposition~\ref{prop:filter} is.

\medskip\noindent We can now return to the misalignment mentioned earlier. Consider a
reasoning mode that looks for causes of an outcome by inspecting
the integrated probability $P(\mathbf{C}\!\rightsquigarrow\!\varphi):=
\int\sum_{\mathbf{T}}\Phi_{\mathbf{u}}(\mathbf{C},\mathbf{T})\,P_{\mathbf{U}}(d\mathbf{u})$, without knowing
the values of $\mathbf{u}$ but knowing that $\varphi$ occurred. As the next observation
shows,
$\arg\max_{\mathbf{C}\subseteq\mathbf{S}} P(\mathbf{C}\!\rightsquigarrow\!\varphi)$
can fail to include some $\mathbf{C}$ that is an actual cause of $\varphi$ for some
setting $\mathbf{u}$.

\begin{observation}\label{obs:misalignment}
Define $\Xi^{\max}=\arg\max_{\mathbf{C}\subseteq\mathbf{S}} P(\mathbf{C}\!\rightsquigarrow\!\varphi)$.
It is not the case that if $\mathbf{C}^\dagger$ is an actual cause in $(M,\mathbf{u})$
for some $\mathbf{u}$, $\mathbf{C}^\dagger\in\Xi^{\max}$.
\end{observation}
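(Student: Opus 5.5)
Since the statement is a negative (non-implication) claim, I will prove it with an explicit counterexample. The idea is a small PSCM in which the exogenous noise $\mathbf{u}$ selects which suspect is operative. Some suspect $A$ is then an actual cause under a rare noise setting, while a competing suspect dominates the noise-integrated score $P(\mathbf{C}\!\rightsquigarrow\!\varphi)$. The integration over $P_{\mathbf{U}}$ is exactly what loses the rare-setting verdict, so the counterexample only has to make that setting improbable.

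\emph{The model.} Take binary suspects $A,B$, both set to $1$ by their structural equations with no noise of their own. Add a binary exogenous selector $U\sim\mathrm{Bern}(0.1)$ and let the outcome be $Y:=A$ if $U=1$ and $Y:=B$ if $U=0$. Take $\varphi=\{Y=1\}$, which holds under every $\mathbf{u}$.

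\emph{The PCI ingredients.} Take $\mathbf{S}=\{A,B\}$ and $\mathbf{W}=\emptyset$, so the only witness set is $\mathbf{T}=\emptyset$. Let $\Gamma_s$ be uniform over the three nonempty subsets of $\mathbf{S}$; this is regular in the sense of Definition~\ref{def:regular}. Let $\Delta$ be the point mass on the flipped values, as in the OBCB discussion. The suspect set is closure-respecting, since neither suspect is a function of the other.

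First I check that $\{A\}=\{1\}$ is an actual cause in $(M,u{=}1)$ via Definition~\ref{def:ac}. Factivity is immediate. Context-sensitive necessity holds with $\mathbf{T}=\emptyset$ and $a'=0$, since this gives $Y=0$. Minimality holds because the only proper subset is $\emptyset$, and $\emptyset$ admits no alternative setting that overturns $\varphi$.

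Next I evaluate $\Phi_{\mathbf{u}}$ from Definition~\ref{def:ackernel} pointwise in $u$. The sufficiency indicator is $1$ everywhere, because restoring factual values is the identity intervention and $\varphi$ always holds. So only the necessity factor matters:
\begin{itemize}
\item For $\{A\}$, the necessity factor is $1$ iff $u=1$, giving $P(\{A\}\!\rightsquigarrow\!\varphi)=\tfrac13\cdot 0.1$.
\item For $\{B\}$, it is $1$ iff $u=0$, giving $\tfrac13\cdot 0.9$.
\item For $\{A,B\}$, flipping both suspects overturns $\varphi$ under either $u$, giving $\tfrac13$.
\end{itemize}
Hence $\{A\}\notin\Xi^{\max}$, which establishes the observation.

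The main obstacle I expect is presentational rather than computational. The argmax here is $\{A,B\}$, which is not minimal, and a reader might take that as a defect of the example. I would therefore add one remark. With a strictly cardinality-decreasing $\Gamma_s$, for instance giving $\{A,B\}$ weight below $0.9$ times the weight of a singleton, the argmax becomes $\{B\}$. Then $\{A\}$ is still excluded, while the maximiser is itself a bona fide actual cause in the dominant setting $u=0$. This shows the misalignment is driven by averaging over $P_{\mathbf{U}}$, not by the handling of minimality. It also connects to the paper's point that cardinality-based weighting, not subset-minimality, governs what maximisation can recover.
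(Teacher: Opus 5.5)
Your counterexample is correct. $\{A\}$ is an actual cause at $u=1$, and with $\mathbf{W}=\emptyset$ and the flip $\Delta$ the scores of $\{A\}$, $\{B\}$, $\{A,B\}$ are $\tfrac{1}{30}$, $\tfrac{3}{10}$, $\tfrac13$, so $\{A\}\notin\Xi^{\max}$. It does, however, exhibit a different mechanism from the paper's.

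The paper takes $Y=A\vee B$ with uniform independent roots and excludes the overdetermined pair $\{A,B\}$, which is an actual cause at $\mathbf{u}_{11}$. There the competitors $\{A,B\}$ and $\{A\}$ each collect exactly one noise realisation of probability $\tfrac14$, so noise mass is neutralised. The exclusion then comes entirely from a strictly cardinality-decreasing $\Gamma_s$ penalising a set that is subset-minimal but not cardinality-minimal, which is the diagnosis the paper draws next. You do the opposite: you put equal $\Gamma$-weight on the competing singletons and let the rarity of $u=1$ under $P_{\mathbf{U}}$ do the work.

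Each route buys something. Yours already works under uniform $\Gamma_s$, where the paper's example collapses into a three-way tie with $\{A,B\}\in\Xi^{\max}$. It also exposes a source of misalignment, dilution by rare noise, that is independent of the subset-versus-cardinality mismatch the paper blames. The paper's example works for \emph{every} strictly cardinality-decreasing $\Gamma_s$. Yours needs the singleton weights to be not too asymmetric: $\{A\}$ is the unique maximiser whenever $p^{\Gamma_s}(\{B\})<p^{\Gamma_s}(\{A\})/9$ and $p^{\Gamma_s}(\{A,B\})<p^{\Gamma_s}(\{A\})/10$, and strictly cardinality-decreasing weights permit this. So your closing claim that the misalignment is not driven by minimality holds for your example only; the paper's example shows the minimality mismatch acting alone.

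Two side remarks are slightly off, though nothing depends on them. First, uniform weight on the nonempty subsets is not regular in the sense of Definition~\ref{def:regular}, which also requires $p^\Gamma(\emptyset,\emptyset)>0$. Second, constant suspects are deterministic functions of the empty subset, which the closure condition of Theorem~\ref{th:ac-exp} arguably excludes. Giving $A$ and $B$ their own i.i.d.\ Bernoulli noise removes this second issue and rescales all three scores by the same factor.
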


\begin{proof}
Consider a model with two binary parents $A,B$ uniformly distributed and a binary
child $Y=A\vee B$. Take $\varphi:\,Y=1$ and the suspect set $\mathbf{S}=\{A,B\}$ with
$\mathbf{W}=\emptyset$, and take $\Delta$ to be the deterministic bit-flip
distribution on $\{0,1\}$ (the single-point alternative used throughout the
binary examples of this paper, e.g.\ Algorithm~\ref{alg:pci_thin_search}): for
active suspect set $\mathbf{C}$, $\Delta_{\mathbf{C}}(\mathbf{s}^\star)$ places
all its mass on the coordinatewise complement of $\mathbf{s}^\star\!\mid_{\mathbf{C}}$.
Compare $\mathbf{C}^\dagger=\{A,B\}$ at noise $\mathbf{u}_{11}$
($A=B=1$) and $\mathbf{C}^\sharp=\{A\}$ at noise $\mathbf{u}_{10}$ ($A=1,B=0$).
\begin{itemize}
\item In $\mathbf{u}_{11}$, $\mathbf{C}^\dagger$ is an actual cause: setting both
parents to $0$ yields $Y=0$, no proper subset achieves this, and the indicators in
$\Phi$ are $1$. With $\Gamma_w$ uniform on $2^{\emptyset}$ (a single configuration of
weight $1$) and a strictly cardinality-decreasing $\Gamma_s$,
$\Phi_{\mathbf{u}_{11}}(\mathbf{C}^\dagger,\emptyset)\propto p^{\Gamma_s}(\{A,B\})$.
At the other two noise settings with $Y=1$, the bit-flip alternative for
$\mathbf{C}^\dagger$ does not flip $Y$: at $\mathbf{u}_{10}$ it sends $(A,B)$ to
$(0,1)$, and at $\mathbf{u}_{01}$ to $(1,0)$, both of which keep $Y=1$. So only
$\mathbf{u}_{11}$ contributes to $\mathbf{C}^\dagger$.
\item In $\mathbf{u}_{10}$, $\mathbf{C}^\sharp$ is an actual cause:
intervening to $A=0$ (with $B$ left at its factual value $0$, since $B\notin\mathbf{C}$) flips $Y$ to $0$, the singleton is minimal, and again the
indicators are $1$. So $\Phi_{\mathbf{u}_{10}}(\mathbf{C}^\sharp,\emptyset)\propto p^{\Gamma_s}(\{A\})$.
At $\mathbf{u}_{11}$, by contrast, the bit-flip alternative for $\mathbf{C}^\sharp$
sends $A$ to $0$ while $B$ remains at its factual value $1$ (again $B\notin\mathbf{C}$
is not intervened), so $Y=0\vee 1=1$ does not flip: the necessity integral, and
hence $\Phi_{\mathbf{u}_{11}}(\mathbf{C}^\sharp,\emptyset)$, is $0$. At
$\mathbf{u}_{01}$ the same computation with $A$ flipped to $1$ also leaves $Y=1$.
So only $\mathbf{u}_{10}$ contributes to $\mathbf{C}^\sharp$.
\end{itemize}
Integrating uniformly over the four noise realisations $\{\mathbf{u}_{ij}\}$,
$P(\mathbf{C}^\dagger\!\rightsquigarrow\!\varphi)\propto p^{\Gamma_s}(\{A,B\})$ and
$P(\mathbf{C}^\sharp\!\rightsquigarrow\!\varphi)\propto p^{\Gamma_s}(\{A\})$, each
with exactly one contributing noise realisation. Since $\Gamma_s$ is
cardinality-decreasing, $p^{\Gamma_s}(\{A\})>p^{\Gamma_s}(\{A,B\})$ for every such
$\Gamma_s$, so $\mathbf{C}^\sharp$ strictly outscores $\mathbf{C}^\dagger$
regardless of the precise shape of $\Gamma_s$: $\mathbf{C}^\dagger\notin\Xi^{\max}$
even though it is an actual cause at $\mathbf{u}_{11}$, so some actual causes are
dropped. (Under a different $\Delta$ that spreads positive density over every
joint alternative rather than concentrating it on the bit flip, both $A$'s and
$B$'s complements would separately admit nonzero necessity mass at
$\mathbf{u}_{10}$ and $\mathbf{u}_{01}$, changing the counts above; the
conclusion that some actual cause is excluded from $\Xi^{\max}$ is not
sensitive to this choice, but the specific proportionality constants are.)
\end{proof}

The reason is structural: actual cause sets may be subset-minimal without being
cardinality-minimal, while integrated-probability maximisation is governed by
$\Gamma_s$, which (under the cardinality-decreasing parameterisation that aligns with
AC's minimality) penalises larger sets. The misalignment is therefore a mismatch between AC's notion of minimality (subset
relation) and $\Gamma$-weighted scoring.

The fix is to drop the maximisation step and use $\Phi$ directly to enumerate
candidate causes, then filter for subset-minimality. The filter recovers exactly the
actual causes in $\mathbf{S}$.

\begin{proposition}\label{prop:filter}
Under the hypotheses of Theorem~\ref{th:ac-exp}, define
\[
\Xi \;:=\; \Big\{\mathbf{C}^\dagger\subseteq\mathbf{S}\;:\;
\sum_{\mathbf{T}\subseteq\mathbf{W}} \Phi(\mathbf{C}^\dagger,\mathbf{T})>0,\;
\nexists\,\mathbf{C}'\subsetneq\mathbf{C}^\dagger\text{ with }
\sum_{\mathbf{T}\subseteq\mathbf{W}} \Phi(\mathbf{C}',\mathbf{T})>0
\Big\}.
\]
Then $\Xi$ contains exactly the actual causes of $\varphi$ in $(M,\mathbf{u})$ that are
subsets of $\mathbf{S}$.
\end{proposition}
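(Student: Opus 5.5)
The plan is to reduce membership in $\Xi$ to a statement purely about the two qualitative AC clauses, and then observe that the subset-minimality filter in $\Xi$ is literally Halpern's minimality clause. Write $\Sigma(\mathbf{C}):=\sum_{\mathbf{T}\subseteq\mathbf{W}}\Phi(\mathbf{C},\mathbf{T})$, and call $\mathbf{C}\subseteq\mathbf{S}$ \emph{FN} when it satisfies factivity and context-sensitive necessity in $(M,\mathbf{u})$.

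The first step is the equivalence $\Sigma(\mathbf{C})>0\iff\mathbf{C}$ is FN, for every $\mathbf{C}\subseteq\mathbf{S}$.
\begin{itemize}
\item The direction ($\Leftarrow$) is Theorem~\ref{th:ac-exp}. As noted at the end of its proof, that argument uses only factivity and necessity, not minimality.
\item The direction ($\Rightarrow$) is Theorem~\ref{th:exp-ac}.
\end{itemize}
Both theorems run under the same hypotheses on $\mathbf{S}$, $\Gamma$ and $\Delta$, and these are the hypotheses the proposition assumes. The closure condition on $\mathbf{S}$ and full-support $\Delta(\mathbf{s})$ are stated for $\mathbf{S}$ once, so they apply verbatim to every subset $\mathbf{C}$.

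One bookkeeping point needs checking here. Theorem~\ref{th:ac-exp} needs the witnessing set $\mathbf{T}$ of necessity to lie in $\mathbf{W}$ and to be disjoint from $\mathbf{C}$. Disjointness is forced by well-definedness of the joint intervention, as argued in that proof. For $\mathbf{T}\subseteq\mathbf{W}$, I will read ``actual causes'' relative to the witness pool $\mathbf{W}$, as in Definition~\ref{def:ac_intuitive}. If $\mathbf{W}$ is taken to be all of $\mathbf{V}\setminus\mathbf{C}$, this is automatic.

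In the continuous case, Theorem~\ref{th:ac-exp} also needs hypotheses (i)--(ii) for each relevant witnessing alternative $\mathbf{c}'$. I will take these as part of ``the hypotheses of Theorem~\ref{th:ac-exp}'' and apply them uniformly to every FN subset, not only to $\mathbf{C}^\dagger$. I expect this uniformity to be the main delicate point. If (ii) is only assumed for one specific witness, the forward direction could fail for a smaller FN set $\mathbf{C}'$. That would let a non-minimal $\mathbf{C}^\dagger$ slip into $\Xi$, so the statement must be read with (ii) holding for all FN subsets.

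With the equivalence in hand, the second step is a direct unfolding. $\mathbf{C}^\dagger\in\Xi$ iff $\mathbf{C}^\dagger$ is FN and no proper subset $\mathbf{C}'\subsetneq\mathbf{C}^\dagger$ is FN. The factual values restrict consistently, since $\mathbf{c}'_{\mathrm{sub}}$ is the restriction of $\mathbf{s}^\star\!\mid_{\mathbf{C}^\dagger}$, so the minimality clause of Definition~\ref{def:ac} is exactly the second conjunct. Hence $\mathbf{C}^\dagger\in\Xi$ iff $\mathbf{C}^\dagger$ is an actual cause of $\varphi$ in $(M,\mathbf{u})$.

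Both inclusions then follow. If $\mathbf{C}^\dagger\in\Xi$, it is FN and minimal, so it is an actual cause. If $\mathbf{C}^\dagger\subseteq\mathbf{S}$ is an actual cause, it is FN, so $\Sigma(\mathbf{C}^\dagger)>0$. Any proper subset with positive sum would be FN and would violate minimality, so $\mathbf{C}^\dagger\in\Xi$.
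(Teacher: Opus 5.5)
Your proof is correct and takes the same approach as the paper. The paper's two directions also combine Theorem~\ref{th:ac-exp} (which uses only factivity and necessity) and Theorem~\ref{th:exp-ac} to equate a positive summed kernel with factivity plus context-sensitive necessity, and then match the no-positive-proper-subset clause of $\Xi$ to Halpern's minimality. Your two caveats make explicit what the paper's proof assumes tacitly when it invokes Theorem~\ref{th:ac-exp} on a proper subset $\mathbf{C}_{\text{sub}}$: that the subset's witness lies in $\mathbf{W}$, and that hypothesis~(ii) holds for the subset's witnessing alternative as well as for that of $\mathbf{C}^\dagger$.
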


\begin{proof}
\emph{$\Xi$-membership $\Rightarrow$ actual cause.} Let $\mathbf{C}^\dagger\in\Xi$.
Theorem~\ref{th:exp-ac} applied to $\sum_{\mathbf{T}}\Phi(\mathbf{C}^\dagger,\mathbf{T})>0$
gives both factivity and context-sensitive necessity. For minimality, suppose
$\mathbf{C}_{\text{sub}}\subsetneq\mathbf{C}^\dagger$ satisfied factivity and
context-sensitive necessity. By the proof of Theorem~\ref{th:ac-exp} (which uses
only factivity and necessity, not minimality), we would have
$\sum_{\mathbf{T}}\Phi(\mathbf{C}_{\text{sub}},\mathbf{T})>0$, contradicting the
no-positive-proper-subset clause of $\Xi$.

\emph{Actual cause $\Rightarrow$ $\Xi$-membership.} Let $\mathbf{C}^\dagger\subseteq\mathbf{S}$
be an actual cause of $\varphi$ in $(M,\mathbf{u})$. Theorem~\ref{th:ac-exp} gives
$\sum_{\mathbf{T}}\Phi(\mathbf{C}^\dagger,\mathbf{T})>0$. For the no-positive-proper-subset
clause: suppose $\mathbf{C}'\subsetneq\mathbf{C}^\dagger$ had
$\sum_{\mathbf{T}}\Phi(\mathbf{C}',\mathbf{T})>0$. Then by Theorem~\ref{th:exp-ac},
both factivity and context-sensitive necessity would hold for $\mathbf{C}'$. So
$\mathbf{C}'$ would witness a violation of $\mathbf{C}^\dagger$'s minimality,
contradicting the assumption that $\mathbf{C}^\dagger$ is an actual cause.
\end{proof}

\noindent Proposition~\ref{prop:filter} converts the cardinality--subset misalignment
into a procedure: enumerate $\{\mathbf{C}\subseteq\mathbf{S}:\sum_{\mathbf{T}}\Phi(\mathbf{C},\mathbf{T})>0\}$
(which by Theorem~\ref{th:exp-ac} are exactly the cause sets satisfying
context-sensitive necessity), filter for factivity and subset-minimality, and read off
the actual causes. Certain features of the AC notion (in particular, the
subset-minimality clause, which does not align cleanly with $\Gamma$-weighted scoring) are
part of what motivated the more scalable \ourapproach{} formulation of
Section~\ref{sec:causal_impact} in the first place; the filter procedure here is the
right tool when a user wants AC verdicts specifically, but expectation-based
\ourapproach{} attributions are what scale to ML-grade models.

\medskip\noindent\textit{Summary of guarantees.} The section delivers, under explicit
hypotheses on $\mathbf{S}$, $\Gamma$, and $\Delta$:
(i) a forward direction (Theorem~\ref{th:ac-exp})
``actual cause $\Rightarrow\Phi>0$'';
(ii) a backward direction (Theorem~\ref{th:exp-ac})
``$\sum_{\mathbf{T}'}\Phi>0\Rightarrow$ context-sensitive necessity'';
(iii) minimality preservation under cardinality-decreasing $\Gamma_s$
(Theorem~\ref{th:local_max_to_ac}, Corollary~\ref{cor:marginalize});
(iv) a counterexample (Observation~\ref{obs:misalignment}) showing that integration
over noise misaligns with subset-minimality; and
(v) a corrective filter procedure (Proposition~\ref{prop:filter}) that recovers AC
verdicts exactly. The empirical counterpart of the correspondence, a benchmark of
the necessity-only PCI estimator against exact subset enumeration on a scaled
stone-throwing problem, is reported in \cref{sec:ac_benchmark}.

The next
section locates \ourapproach against the construction in the literature closest to
its expectation form: Pearl's \emph{probability of actual causation}. The two share
the same surface idea, ``how often, under the noise, does the actual-cause
verdict fire?'', and they coincide on standard examples, but they disagree on
graded responsibility patterns that PCI registers and Pearl's binary AC verdict
discards. Section~\ref{sub:pearl_actual_cause_prob} walks through Pearl's
canonical desert-traveller illustration and a weak-poison variant to make the
distinction concrete.

\section{Relation with the Probability of Actual Causality}
\label{sub:pearl_actual_cause_prob}

We now discuss the relation between PCI and
Pearl's probability of actual causation. On 
Pearl's Definition~10.3.5 \citep[Ch.~10]{causalityPearl},
\begin{equation}\label{eq:pearl-paac}
P(\text{caused}(x,y\mid e)) \;=\; \frac{P(U_{xy}\cap U_e)}{P(U_e)},
\qquad U_{xy}=\{\mathbf{u}\colon \mathrm{AC}(x,y;M,\mathbf{u})\},
\end{equation}
is the posterior probability, given the evidence, that we are in a noise state
for which the actual-cause verdict holds, with $\mathrm{AC}(x,y;M,\mathbf{u})$
a chosen actual-causation predicate: the causal-beam definition of
\citep[Ch.~10]{causalityPearl} or the Halpern--Pearl AC1--AC3
\citep{actualCausalityHalpern}.\footnote{\label{fn:desert_notebook}All
computations behind the comparison in this section, Pearl's original
desert-traveller and the weak-poison variant, are collected in the companion
notebook \nb{desert\_traveler}, which runs the PCI
Monte Carlo search of Section~\ref{sec:causal_impact} on a scenario-specific
structural model for each forensic posterior alongside a parallel hand-rolled
enumeration; the two agree within Monte Carlo noise on every value reported
below.}
The same construction underwrites Chockler and Halpern's degree of blame
\citep{chocklerResponsibilityBlameStructuralModel2004}, which substitutes a
graded responsibility share inside the expectation; and, more directly,
Halpern and Kleiman-Weiner's \emph{degree of blameworthiness}
\citep{halpernKleimanWeinerBlameworthiness2018}, which takes the expectation
of an actual-causation quantity over an epistemic state (a probability
distribution over causal models). \ourapproach{}'s expectation over noise
states in Definition~\ref{def:jointnecsuf} has exactly this shape. In
multi-agent settings, Friedenberg and Halpern
\citep{friedenbergHalpernBlameworthiness2019} apportion group blameworthiness
to individual agents via the Shapley value (see also
\citealp{alechinaTeamPlans2020}, who compute causality, responsibility, and
blame for multi-agent team plans); this is a \emph{causal} use of the
same game-theoretic averaging that, applied associationally to a predictive
model, underlies SHAP (though we raise our own concerns about SHAP-style
attribution in Section~\ref{sec:shap_examples}). Because
\eqref{eq:pearl-paac} is the closest construction in the literature to
\ourapproach{}'s expectation, we examine in detail how it differs from PCI
and where PCI improves on it.

We replay Pearl's desert-traveller illustration and his binary actual-cause
verdicts, then compute PCI on the same example and show it recovers Pearl's
ranking with graded magnitudes. We then introduce a weak-poison variant that
exhibits three intuitive responsibility patterns: two that Pearl's binary
verdict cannot capture but PCI's graded reading does, and one (within-scenario
ranking) that Pearl preserves. We close with other differences between PCI
and Pearl's probability of causation that make the former more general and
more scalable.

\subsection{Pearl's Worked Example: The Desert Traveller}

We start with the canonical illustration used by Pearl
\citep[\S 10.3.3]{causalityPearl}.

\begin{example}[Desert traveller]\label{ex:desert_traveler}
A traveller has two enemies. The \emph{shooter} (enemy 2) shoots and empties
the canteen ($X{=}1$); the \emph{poisoner} (enemy 1), unaware, puts cyanide in
the water ($P{=}1$). The traveller dies ($Y{=}1$). Who is the actual cause of
death?
\end{example}

The example involves one source of uncertainty packed into a binary noise
$u$:
\begin{itemize}
    \item $u=0$: traveller drank the poisoned water before the canteen was
    emptied; cyanide killed him.
    \item $u=1$: canteen was empty before he could drink; dehydration
    killed him.
\end{itemize}

\begin{figure}[htbp]
\centering
\includegraphics[width=0.62\linewidth]{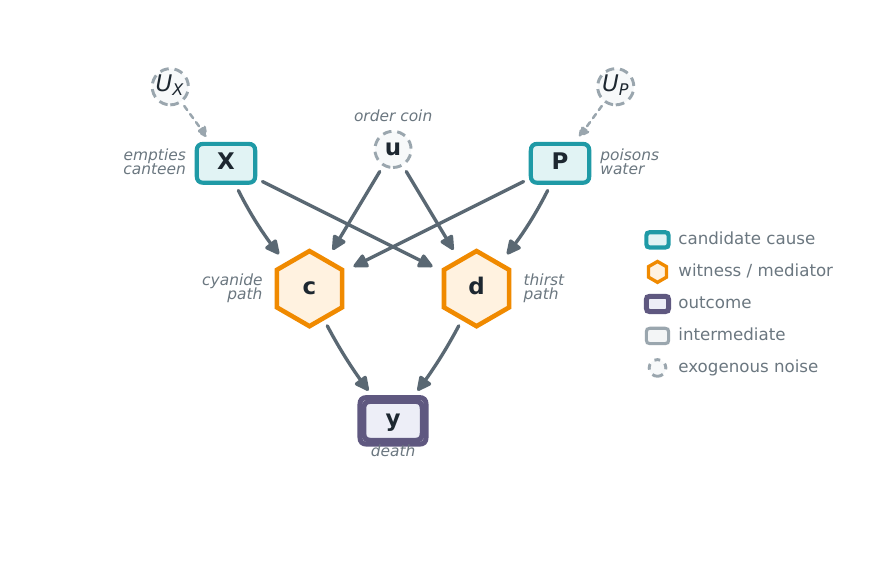}
\caption[Desert-traveller graph]{Pearl's desert traveller as a graph. The two
candidate causes (teal: $X$, who empties the canteen, and $P$, who poisons the
water) feed two competing mechanisms (gold hexagons), the cyanide path $c$ and
the thirst path $d$, whose disjunction is death $y$ (purple double box). The
exogenous noise comprises the order coin $u$ (a named exogenous variable that
enters $c$ and $d$ directly) and the coins $U_X,U_P$ realising the two enemies'
actions; the mechanisms $c,d,y$ are deterministic and carry none. It is $u$,
deciding which path operates first, that creates the preemption making this case
hard for theories of actual causation.}
\label{fig:dag_desert}
\end{figure}

There are at least two intuitions here that we would like to recover. One, in any particular case, we want
to be able to identify the actual cause. The other, slightly weaker, is that we should not call
the dormant cause completely blameless. Actual causality gives us the first half of the 
story, assigning full responsibility to one source here. Pearl's beam analysis \citep[Ch.~10]{causalityPearl} (equivalently
Halpern--Pearl AC1--AC3 of \citealp{actualCausalityHalpern}) gives the
actual-cause verdicts at each deterministic state:
\begin{itemize}
    \item In $u=0$: $P{=}1$ is the actual cause; $X{=}1$ is not.
    \item In $u=1$: $X{=}1$ is the actual cause; $P{=}1$ is not.
\end{itemize}
So $U_{x,y} = \{u=1\}$ and $U_{p,y} = \{u=0\}$. Two natural forensic reports
each push Pearl's posterior to certainty (Table~\ref{tab:pearl-pac-basic}):
toxicology finding cyanide in the body is incompatible with $u=1$, so
$P(\text{caused poisoner}\mid e_A) = 1$; a forensic report of no cyanide is
incompatible with $u=0$, so $P(\text{caused shooter}\mid e_B) = 1$.

\begin{table}[ht]
\centering
\begin{tabular}{lcc}
\toprule
Forensic posterior        & $P(\text{caused shooter})$ & $P(\text{caused poisoner})$ \\
\midrule
$e_A$: cyanide detected ($u=0$)  & $0$ & $\mathbf{1}$ \\
$e_B$: no cyanide ($u=1$)         & $\mathbf{1}$ & $0$ \\
\bottomrule
\end{tabular}
\caption{Pearl's probability of actual causality
$P(\text{caused}\mid e)$ on the basic desert traveller. The verdict saturates
at $1$ for the cause whose path operated in the forensically identified
scenario, and at $0$ for the dormant one.}
\label{tab:pearl-pac-basic}
\end{table}

\subsection{PCI on the Basic Desert Traveller}
\label{sub:pci-basic-dt}

To compare directly with Pearl, we run PCI under the same forensic
conditioning: one structural model per scenario (endogenous variables
$X, P, c, d, Y \in \{0,1\}$, with $c$ the cyanide path operating and $d$ the
dehydration path), with $u$ pinned at the scenario's forensic realisation.
Suspects are $\mathbf{S} = \{X, P\}$ with singleton candidate causes; the
witness pool is the mediators $\{c, d\}$ under a cardinality-uniform $\Gamma$;
$\Delta$ deterministically flips $1 \to 0$; PCI leaves the non-cause suspect
to the SCM rather than intervening on it; and the noise distribution
$P_{\mathbf{U}}$ is the forensic posterior, a point mass at the scenario's
$u$ (we do \emph{not} condition on the outcome $Y$ itself). Appendix~\ref{app:desert}
gives the full structural model and component specification.

For each candidate cause $C$ we report three quantities, all in $[0,1]$,\footnote{We evaluate these
for a fixed candidate cause $C$, hence normalise them to $[0,1]$; this differs
from the sub-probability $\mathbb{E}[ci]$ of Definition~\ref{def:causalimpact}, whose
total mass is $\Pr_\Gamma[X_k\in\mathbf{C}]\le 1$ because it sums only over suspect sets
that contain $X_k$.}
higher meaning more responsibility, jointly averaged over the four sources
of randomness ($\Gamma$, $\Delta$, $P_{\mathbf{U}}$, and the SCM-driven
non-cause suspect):
\begin{equation}\label{eq:nsj}
\begin{aligned}
N_C &= \mathbb{E}\bigl[\,\mathbbm{1}\{Y^n \neq y^\star\}\,\bigr]
       &&\text{(necessity: removal flips $Y$)},\\
S_C &= \mathbb{E}\bigl[\,\mathbbm{1}\{Y^s = y^\star\}\,\bigr]
       &&\text{(sufficiency: holding at factual sustains $Y$)},\\
J_C &= \mathbb{E}\bigl[\,\mathbbm{1}\{Y^n \neq y^\star\}\,\mathbbm{1}\{Y^s = y^\star\}\,\bigr]
       &&\text{(joint PNS-style)}.
\end{aligned}
\end{equation}
Thus $J_C$ is the PCI score $\mathrm{PCI}_C$ of Definition~\ref{def:causalimpact} under
the PNS-binary $ci$, and $N_C$, $S_C$ are its necessity and sufficiency marginals.

Under the spec above (computation in the companion notebook), the results are:

\begin{table}[ht]
\centering
\setlength{\tabcolsep}{4pt}
\begin{tabular}{llcccc}
\toprule
Forensic posterior & Cause & $N$ & $S$ & $J$ & Pearl $P(\text{caused})$ \\
\midrule
$e_A$: $u=0$ (cyanide) & shooter   & $1/4$ & $11/12$ & $5/24$ & $0$ \\
                       & poisoner  & $1/3$ & $1$     & $\mathbf{1/3}$ & $\mathbf{1}$ \\
\addlinespace
$e_B$: $u=1$ (dehydration) & shooter   & $1/3$ & $1$     & $\mathbf{1/3}$ & $\mathbf{1}$ \\
                                                      & poisoner  & $1/4$ & $11/12$ & $5/24$ & $0$ \\
\bottomrule
\end{tabular}
\caption{PCI and Pearl on the basic desert traveller, under the scenario's
forensic posterior. PCI's joint score $J$ ranks the same cause Pearl's
$P(\text{caused})$ does in each scenario; PCI returns a graded reading where
Pearl returns the binary $\{0, 1\}$. The cells use the diagonal coupling of
Definition~\ref{def:jointnecsuf} (Figure~\ref{fig:pci-mechanism}): $Y^s$ and
$Y^n$ share both the noise realisation $u$ and the non-cause suspect's value,
and are otherwise independent.}
\label{tab:pci-basic-dt}
\end{table}

\begin{figure}[htbp]
\centering
\includegraphics[width=0.92\linewidth]{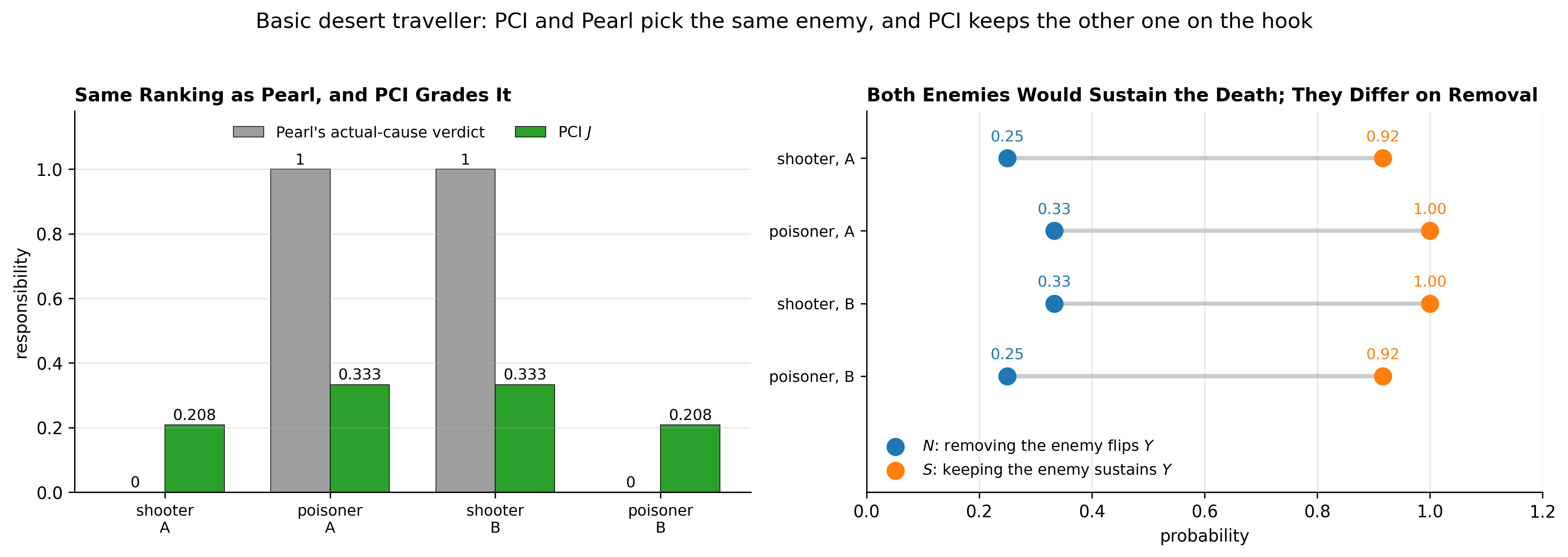}
\caption{Basic desert traveller. Left: Pearl's binary actual-cause verdict
beside \ourapproach's joint score $J$, per scenario and cause. Right: the
necessity and sufficiency components behind each $J$. Both methods name the
same enemy in each scenario; \ourapproach also keeps the dormant enemy on the
hook at $5/24$.}
\label{fig:dt-basic}
\end{figure}

Reading off Table~\ref{tab:pci-basic-dt}:

\textbf{(1) PCI and Pearl agree on the ranking.} In each scenario, $J$ is
strictly larger for the cause Pearl identifies, and strictly smaller for the
other. PCI's ordering matches the actual-cause verdict at each forensic
posterior.

\textbf{(2) PCI is graded; Pearl is binary.} Pearl's $P(\text{caused})$
saturates at $0$ or $1$. PCI's $J$ never reaches $1$ here because the
$\Gamma$-averaging over witness subsets dilutes each cause's score: the
``best'' witness subset for the operative cause does flip the outcome
deterministically and gives $J = 1$ at that single configuration, but the
cardinality-uniform average over the witness subsets (weights
$\tfrac13,\tfrac16,\tfrac16,\tfrac13$ over $\emptyset,\{c\},\{d\},\{c,d\}$)
brings the average down to $1/3$; the per-subset breakdown is in the companion notebook.

\textbf{(3) The dormant cause is not at zero.} PCI assigns $5/24 \approx 0.21$ to the dormant cause. This is not noise: in some witness configurations
the dormant cause's removal does flip $Y$, because averaging marginalises the
non-cause suspect and PCI does not artificially pin the model to ``this
scenario'' beyond the noise realisation itself. Pearl's binary indicator
throws this graded information away.

\subsection{PCI Expectation vs the AC Machinery}

The agreement is reassuring; the substantive change is in how the verdict is
computed. AC's necessity clause is an \emph{existential}:
\emph{there exists} a witness set $\mathbf{T}$ and an alternative value $c'$
such that the intervention flips $Y$. To verify the existential one in
principle enumerates the candidate $(\mathbf{T}, c')$ pairs: this is the
main source of AC's combinatorial hardness. Pearl's definition then averages
the indicator of that already-enumerated predicate over noise states,
inheriting the enumeration unchanged.

\ourapproach{} collapses both moves into one expectation, averaging $\Phi$
over the joint $(\mathbf{C}, \mathbf{T}, \mathbf{u})$ under $\Gamma \otimes
P_{\mathbf{U}}$. The existential ``does \emph{some} witness work?'' becomes a $\Gamma$-weighted measure of how much witnesses collectively support causation: a graded score that a user estimates by sampling, built from the full quantity that AC collapses to a yes/no verdict.

In the desert traveller this difference is minor: a user can trivially
enumerate the four candidate witness sets by hand. The large-$n$ gap is an
analytic fact about AC's existential check, not something this example
demonstrates: for a model with $n$ candidate witness variables, that
existential requires checking up to $2^n$ subsets per noise state by
construction, and Pearl's posterior over the AC indicator inherits that cost
at every evaluation. A user estimates PCI's expectation by Monte Carlo at a
budget they set, with variance independent of that combinatorial size.
\cref{sec:ac_benchmark} reports this gap directly as $n$ grows; the desert
traveller only fixes the small-$n$ case, where the cost difference is
negligible.

So far we have two conceptual differences: PCI's graded reading and its Monte Carlo scaling.
We have already seen that this helps recover the intuition introduced above: the shooter, in the scenario where poison killed the traveller, is less blameworthy than the poisoner but not
completely blameless. Now we turn to another example, exploiting the fact that actual causality
lacks a sufficiency component, where a small change to the model makes Pearl's binary indicator
structurally unable to track the responsibility patterns we think a responsibility attribution
method should capture.

\subsection{Weak-Poison Variant: Path-Reliability and Cross-Scenario Reliability}
\label{sub:weak-poison}

We modify the desert traveller so the cyanide path is unreliable while the
dehydration path stays deterministic. The change is small, but it brings out three
 intuitions that a graded responsibility metric
should capture and a binary actual-cause indicator structurally cannot.
Section~\ref{sec:other-prob-ac} shows this graded sensitivity is structurally
distinct from the probability-raising accounts of Fenton-Glynn and Fang, which
lack \ourapproach's continuous kernel and sufficiency component.\footnote{That
human responsibility judgments are in fact graded, and sensitive to the structure by
which individual contributions combine into a group outcome, has empirical support:
\citet{gerstenbergSpreadingBlame2010} found that across experimental conditions the
structural-model account of \citet{chocklerResponsibilityBlameStructuralModel2004}
predicted participants' responsibility allocations better than competing cognitive
models.}

\begin{example}[Desert traveller, weak poison]\label{ex:desert_traveler_weak_poison}
Same enemies as in Example~\ref{ex:desert_traveler}. The cyanide is a small
dose that turns out fatal only with probability $\alpha = 0.1$, governed by
an independent exogenous coin $\xi \sim \mathrm{Bern}(\alpha)$. The structural
equations are
\[
c = p\,(u' \vee x'), \quad v_C = c\cdot \xi, \quad d = x\,(u \vee p'),
\quad y = v_C \vee d,
\]
where primes here denote Boolean complement ($x'=1-x$, and likewise $u',p'$), not
the alternative-value $\mathbf{c}'$ of the necessity intervention, and $v_C$ is the
new ``cyanide-was-fatal'' mediator: drinking the poison
($c=1$) only kills when the fatality coin also operates ($\xi=1$).
$\alpha = 1$ recovers Pearl's original.
\end{example}

\paragraph{Three responsibility intuitions for an attribution method.}

Before any computation, the three intuitions that we expect an attribution method to recover in this case are:

\begin{enumerate}\setlength\itemsep{0.2em}
    \item\textbf{Adding noise to a path should lower its own cause's
        responsibility.} The cyanide path is now unreliable. In the
        scenario where cyanide killed (Scenario~A, $u=0, \xi=1$), the
        poisoner is still the actual cause, but \emph{less} responsible
        than in the basic deterministic version, because the killing
        depended on a coin landing favourably.
    \item\textbf{Within a scenario, the operative cause should still win.}
        In Scenario~A, even after softening, the poisoner should rank above
        the shooter. In Scenario~B, the shooter should rank above the poisoner.
    \item\textbf{A reliable cause in its own scenario should beat an
        unreliable cause in its own scenario.} Comparing across scenarios:
        the shooter in Scenario~B (dehydration deterministic) should outrank
        the poisoner in Scenario~A (cyanide stochastic). Their respective
        paths are operative in their own scenarios, but one is reliable and
        the other is not.
\end{enumerate}

Pearl's $P(\text{caused})$ captures (2) (the within-scenario ranking),
but misses (1) and (3). The AC predicate
fires deterministically once we condition on the forensic posterior: in
Scenario~A, $u=0$ and $\xi=1$ jointly imply the cyanide path was the operative
chain, so $P(\text{caused poisoner}\mid e_A) = 1$; in Scenario~B, $u=1$
implies the dehydration chain operated, so $P(\text{caused shooter}\mid e_B)
= 1$. Both scenarios saturate at $1$ for their respective operative cause,
identical to the basic version (Table~\ref{tab:pearl-pac-weak}). Pearl's
binary indicator is structurally indifferent to whether the cyanide path is
deterministic or stochastic, and blind to any cross-scenario comparison.

\begin{table}[ht]
\centering
\begin{tabular}{lcc}
\toprule
Forensic posterior & $P(\text{caused shooter})$ & $P(\text{caused poisoner})$ \\
\midrule
$e_A$: cyanide killed ($u=0, \xi=1$) & $0$ & $\mathbf{1}$ \\
$e_B$: dehydration killed ($u=1$)     & $\mathbf{1}$ & $0$ \\
\bottomrule
\end{tabular}
\caption{Pearl's $P(\text{caused})$ on the weak-poison variant. Identical to
the basic version (Table~\ref{tab:pearl-pac-basic}); the binary AC indicator
is insensitive to $\alpha$.}
\label{tab:pearl-pac-weak}
\end{table}

\paragraph{PCI on the weak-poison model.}

We run PCI exactly as in Section~\ref{sub:pci-basic-dt}, with one mechanical
change: the witness pool becomes $\mathbf{W} = \{c, d, v_C\}$ (the new mediator
$v_C$ is included), and the forensic posterior in Scenario~A pins both
$u = 0$ and $\xi = 1$ (cyanide-was-fatal is directly forensically
observable). Scenario~B's posterior pins $u = 1$; $\xi$ is structurally
irrelevant since the cyanide path is dormant.

Table~\ref{tab:pci-weak-dt} reports PCI's three quantities alongside Pearl's
verdict. The companion notebook computes the values, and a hand-rolled
enumeration matching the framework verifies them.

\begin{table}[ht]
\centering
\setlength{\tabcolsep}{4pt}
\begin{tabular}{llcccc}
\toprule
Forensic posterior & Cause & $N$ & $S$ & $J$ & Pearl $P(\text{caused})$ \\
\midrule
$e_A$: $u=0, \xi=1$ (cyanide killed) & shooter   & $1/6$ & $23/24$ & $7/48$ & $0$ \\
                                      & poisoner  & $5/24$ & $1$             & $\mathbf{5/24}$ & $\mathbf{1}$ \\
\addlinespace
$e_B$: $u=1$ (dehydration killed)    & shooter   & $1/2$           & $1$             & $\mathbf{1/2}$  & $\mathbf{1}$ \\
                                                                            & poisoner  & $1/4$           & $3/4$ & $1/8$ & $0$ \\
\bottomrule
\end{tabular}
\caption{PCI and Pearl on the weak-poison desert traveller, under the
scenario's forensic posterior. PCI's $J$ ranks the same cause Pearl's
binary verdict does in each scenario; the magnitudes also reflect the
asymmetry between the deterministic dehydration path and the stochastic
cyanide path. As in Table~\ref{tab:pci-basic-dt}, cells use the diagonal
coupling of Definition~\ref{def:jointnecsuf}: $Y^s$ and $Y^n$ share both the
noise realisation and the non-cause suspect's value.}
\label{tab:pci-weak-dt}
\end{table}

\begin{figure}[htbp]
\centering
\includegraphics[width=0.92\linewidth]{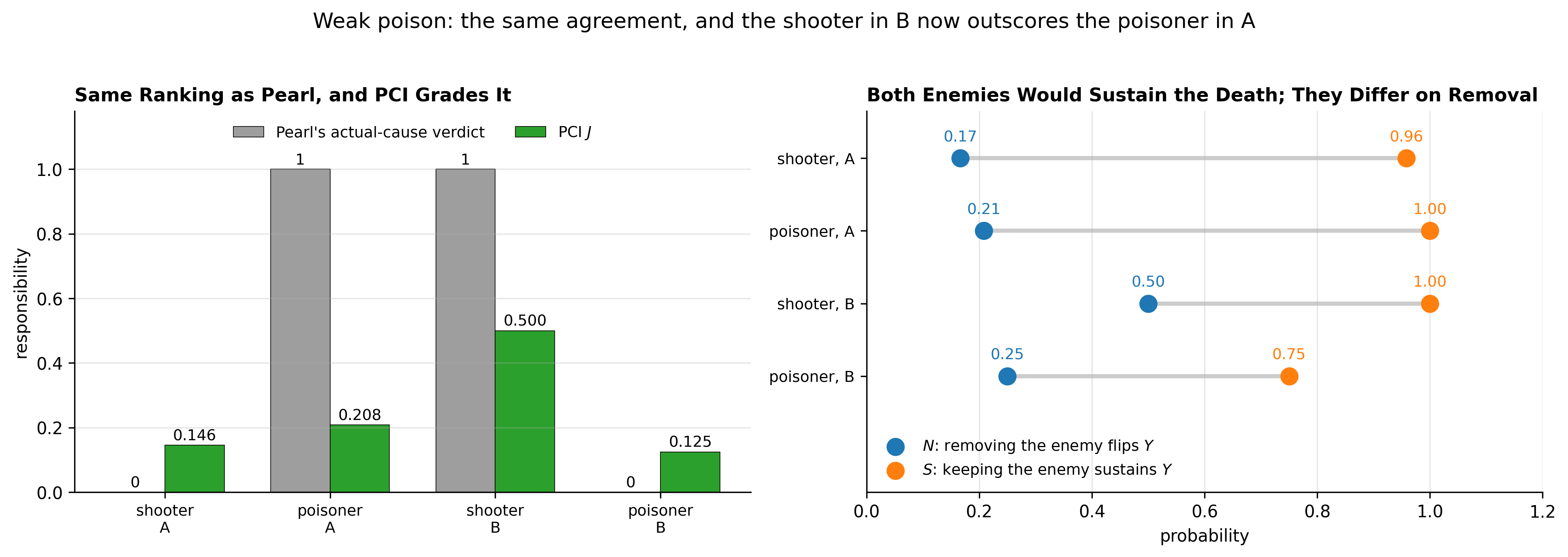}
\caption{Weak-poison variant, laid out as Figure~\ref{fig:dt-basic}. Pearl
scores both operative enemies at $1$ and cannot separate the scenarios;
\ourapproach separates them, ranking the shooter in Scenario~B ($1/2$) above
the poisoner in Scenario~A ($5/24$).}
\label{fig:dt-weak}
\end{figure}

The three intuitions are all captured by $J$.

\textbf{(1) Adding noise lowers the noisy cause's responsibility.} The
poisoner in Scenario~A drops from $J = 1/3$ in the basic version
(Table~\ref{tab:pci-basic-dt}) to $J = 5/24$ in the weak version
(Table~\ref{tab:pci-weak-dt}). The drop is a $\Gamma$ re-weighting,
not a loss of sufficiency: the poisoner's sufficiency factor is exactly $1$ in
both versions. Adding the fatality coin adds $v_C$ to the witness pool, so the
eight subsets of $\{c, v_C, d\}$ replace the four subsets of $\{c, d\}$, and the
single configuration that scores $1$ for the poisoner now carries a twelfth of
the averaging weight where it carried a sixth. The empty witness set still
contributes $1/2$ at weight $1/4$, and $\tfrac14\cdot\tfrac12 + \tfrac1{12}\cdot 1
= 5/24$.

\textbf{(2) Within a scenario, the operative cause still wins.} In
Scenario~A, $J_{\text{poisoner}} = 5/24 > J_{\text{shooter}} = 7/48$. Cyanide
was the operative path, and PCI tracks that. In Scenario~B the margin is
fourfold, $J_{\text{shooter}} = 1/2$ against $J_{\text{poisoner}} = 1/8$. At
$u=1$ with $\xi=0$ the cyanide path is dead, so nothing the poisoner did could
have killed this traveller, and his $1/8$ records no responsibility for the
death. It is the residue of marginalising over the shooter's action, which
credits the poisoner in those drawn worlds where the shooter did not act
either. PCI grades him as dormant: above a variable with no path to the
outcome, and well below the cause that did the work.

\textbf{(3) Reliable beats unreliable, across scenarios.} The shooter in
Scenario~B ($J = 1/2$) outranks the poisoner in Scenario~A ($J = 5/24$). Neither ceiling
differs: both operative causes reach $J = 1$ at their best witness
configuration, so the coin does not lower what the poisoner can attain. What
differs is how many configurations attain it. Conditional on $u=1$ the cyanide
path is dead, so removing the shooter leaves nothing that can kill and four of
the eight witness subsets score a full $1$. Conditional on $u=0,\xi=1$ the
shooter is still standing with a working dehydration path, so removing the
poisoner lets the shooter cover the death half the time, and only one subset
reaches $1$.

Pearl's $P(\text{caused})$ misses intuitions (1) and (3) for related reasons:
(1) requires sensitivity to mechanism noise, which the AC indicator does not
have once the forensic posterior is applied; (3) requires cross-scenario
comparison of \emph{strength of responsibility}, which Pearl's
$P(\text{caused})$ does not produce because each posterior independently
saturates at $0$ or $1$.

\subsection{Further Differences Between Pearl's Probability of Causation and PCI}

\medskip\noindent\textbf{Breadth of aggregation.}
Def.~10.3.5 aggregates over one dimension: the noise $\mathbf{u}$, conditional on
evidence $e$, for a claim $(X=x, Y=y)$ already fixed. PCI aggregates over four: the
suspect set $\mathbf{C}$ and witness set $\mathbf{T}$ via $\Gamma$, the alternative
value $\mathbf{c}'$ via $\Delta$, and the noise via $P_{\mathbf{U}}$. This supports per-feature comparison: the same PCI expectation, evaluated over varying
$\mathbf{C}$ inside a fixed $\mathbf{S}$, returns the relative responsibility of each
candidate cause. Def.~10.3.5 does not itself provide that comparison as a single expectation:
obtaining it requires a separate evaluation per candidate cause, each inheriting
the AC-predicate cost discussed above, tabulated by hand as in
Table~\ref{tab:pearl-pac-basic} above.

\medskip\noindent\textbf{Continuous variables.}
The AC predicates inside Def.~10.3.5 are event-shaped: they read off
$\mathrm{AC}(x,y;M,\mathbf{u})$ for propositional $x, y$, and extensions to
continuous variables require discretising the relevant event before the predicate
applies. PCI's $ci$ function is continuous-native (the absolute-difference impact
score of Section~\ref{sec:causal_impact}), and the synthetic benchmark of
\cref{sub:synthetic} exercises that continuity directly.

\medskip\noindent\textbf{Lineage in Pearl's PN/PS.}
Def.~10.3.5 takes a binary actual-causation predicate (itself a
structural-equation construction that does not appeal to PN/PS at all) and lifts
it to a probability through $P(\mathbf{u}\mid e)$; $\mathrm{PN}$, $\mathrm{PS}$, and
$\mathrm{PNS}$ appear nowhere in the construction. PCI generalises
$\mathrm{PN}$ and $\mathrm{PS}$ directly: the necessity factor $Y^n$ is the
counterfactual operation underlying $\mathrm{PN}$, the sufficiency factor $Y^s$ is
the counterfactual operation underlying $\mathrm{PS}$, both indexed by the
cause--witness--alternative--noise tuple, with $\Gamma$ and $\Delta$ exposing the
design choices $\mathrm{PN}/\mathrm{PS}$ makes implicitly. So Pearl's Def.~10.3.5
and \ourapproach{} are not competing implementations of the same idea: they inherit
from different parts of Pearl's own toolkit, with \ourapproach{} playing the role of
a context-sensitive, multi-variable extension of the $\mathrm{PN}/\mathrm{PS}$
machinery rather than a probabilistic wrapper around AC.

\medskip
Pearl's Def.~10.3.5 and \ourapproach{} share the same explanatory target,
probabilistic responsibility built on top of structural information, but
instantiate it through different mechanisms. Pearl's definition averages a binary AC
indicator against a posterior over noise; PCI integrates a continuous
necessity--sufficiency kernel against a joint distribution over suspect sets,
witness sets, alternative values, and noise. The basic desert traveller shows the
two agree on ranking with PCI providing graded magnitudes; the weak-poison variant
shows three intuitions that PCI's graded reading captures and Pearl's binary
indicator structurally cannot. That actual causation admits of degrees is not itself
new: \citet{Halpern2015-HALGCA} already make it graded and context-sensitive through
defaults and normality. That account delivers an ordinal, comparative verdict,
whereas \ourapproach supplies a cardinal probabilistic magnitude on the same structural
information. The cases where sufficiency does even more graded
work (the continuous signal-mediation walkthrough of
Section~\ref{sec:signal}, the synthetic archetypes of
\cref{sub:synthetic}, and the SIR benchmark of
\cref{sec:sir_benchmark}) are where these constructions answer
recognisably different questions; for the high-dimensional, continuous,
machine-learned models this paper targets, PCI's is the question that remains
tractable and well-defined.

\subsection{Other probabilistic extensions of actual causation}
\label{sec:other-prob-ac}

Pearl's probability of causation is not the only probabilistic reading of the
Halpern--Pearl definition. \citet{Fenton-Glynn2017-FENAPP-2} proposes a probabilistic
extension that keeps HP's witness/contingency machinery intact but replaces the
deterministic counterfactual-dependence clause with a \emph{probability-raising}
condition: with the off-path variables $\mathbf{W}$ pinned at their actual values
$\mathbf{w}^\star$, the cause $\mathbf{C}=\mathbf{c}$ qualifies when intervening to set
it raises the probability of the outcome relative to an alternative $\mathbf{c}'$,
$P(\varphi \mid \mathrm{do}(\mathbf{C}=\mathbf{c}),\mathbf{W}=\mathbf{w}^\star) >
P(\varphi \mid \mathrm{do}(\mathbf{C}=\mathbf{c}'),\mathbf{W}=\mathbf{w}^\star)$,
required to hold for every subset of the on-path mediators held at their factual values.
\citet{fang2022probabilistic} modifies this in turn, replacing the interventional
probability-raising with a \emph{counterfactual} one (conditioning on the actual
situation rather than intervening afresh) which repairs the verdicts
Fenton-Glynn's account returns on voting, trumping, and overdetermination cases.

Both are close neighbours of \ourapproach, and the contrast is worth drawing precisely.
First, each
probabilifies only the \emph{necessity} side of HP: it raises or conditions a
probability of the witnessed-necessity test, with no analogue of \ourapproach's
\emph{sufficiency} component, the $Y^s$ factor of Definition~\ref{def:jointnecsuf},
which asks whether \emph{restoring} the factual cause sustains the outcome. Second, the
contingency enters differently. Fenton-Glynn fixes the witness $\mathbf{W}$
deterministically at its actual values and quantifies \emph{universally} over the
mediator subsets (picking the alternative $\mathbf{c}'$ existentially); \ourapproach
replaces those quantifiers with \emph{distributions} (a $\Gamma$-weighted expectation
over which suspects and witnesses to activate, and an alternative-value distribution
$\Delta$), both presented as user-facing knobs and approximated by Monte Carlo rather
than enumerated. Third, \ourapproach integrates a \emph{continuous} necessity--sufficiency kernel where those accounts threshold a binary actual-cause indicator.
\ourapproach is therefore not a third probabilistic-HP variant but a PN/PS-style
necessity-and-sufficiency measure that \emph{contains} the witnessed-necessity test they
probabilify as one regime: the actual-causality analogue of the contrast drawn just
above with Pearl.

These accounts have begun to see applied use. \citet{maldonado2025robot} instantiate
Fenton-Glynn's definition in a robot-pouring task, learning the causal graph by
discovery and the conditional distributions with neural networks, and use it to
identify the cause of spillage and to select a corrective action, searching by hand
over contrastive values and a probability threshold for an alternative that would avert
the outcome. That search is precisely the alternative-value reasoning \ourapproach
packages into $\Delta$ and a causal-impact threshold, and it inherits the differences
just drawn: it remains necessity-only, with no sufficiency check, and it fixes the
contingency rather than averaging over it. The application is nonetheless evidence that
tractable probabilistic actual causation is in demand in exactly the ML-driven settings
\ourapproach targets.

\paragraph{Necessity and sufficiency as an XAI method.}
Closer to the ML audience's likely prior exposure, \citet{watson2021necessity} adapt
Pearl's PN/PS directly into a local explanation method for black-box classifiers:
given a specified causal structure over input features, they define probabilistic
necessity and sufficiency scores for a feature (or feature subset) with respect to a
model's prediction, and use them to generate minimal sufficient explanations and
necessary-feature rankings, benchmarked against LIME, Anchors, and SHAP-style
attributions. The relation to \ourapproach is close in spirit to the relation with
Fenton-Glynn and Fang drawn above: both build directly on Pearl's PN/PS rather than
Halpern--Pearl's witnessed-necessity machinery, and neither packages the witness set
as a user-facing, $\Gamma$-sampled search dimension the way \ourapproach does; unlike
\ourapproach, \citeauthor{watson2021necessity}'s method targets model explanation
specifically (attributing a classifier's prediction) rather than causal responsibility
attribution for a realized outcome in a structural model, the distinction drawn in
Section~\ref{sec:causal_impact}'s discussion of SHAP and Causal SHAP.

\paragraph{Continuous and vector-valued PNS.}
\citet{kawakammi_2024_poc_for_cont_and_vec} extend Tian and Pearl's binary PNS, PN, and
PS to continuous and vector-valued treatments and outcomes, defining necessity and
sufficiency via an order relation on the outcome space ($Y_{x_0} < y \le Y_{x_1}$ for a
threshold $y$) rather than a fixed binary event. Under exogeneity together with a monotonicity
assumption, they identify PNS, PN, and PS nonparametrically from observational
distributions alone, with no need for a fitted causal model. They offer several
monotonicity variants and identify under one stated on the potential outcomes
themselves. That is a genuine extension to continuous
domains, but it inherits Tian and Pearl's population-level framing: their PNS asks how
often a random individual's outcome crosses an externally chosen threshold under two
treatment levels, not whether \emph{this} instance's already-realized outcome depended
on \emph{this} instance's cause, which is the single-case reading of PN/PS
\citep{pearl2022probabilities} that \ourapproach builds on. Like ATE and CATE, it also
has no analogue of the witness mechanism, so it cannot hold mediators fixed at their
factual values and register preemption. \Ourapproach reaches the same domains by a
different route, and the difference starts in the definition. Definition~3.1 fixes a
contrast pair $(x_0,x_1)$ and a threshold $y$, so a single inequality settles necessity
and sufficiency. \Ourapproach puts an alternative-value distribution $\Delta$ where the
fixed contrast was, and a user-chosen impact kernel $ci$ where the threshold was. One
construction then yields a continuous necessity--sufficiency pair rather than a
binarised event. Monotonicity drops out too: \ourapproach evaluates both counterfactual
worlds inside a fitted model, which supplies the cross-world quantities by simulation
and asks nothing of the observational distribution. The burden moves onto that model.
\citeauthor{kawakammi_2024_poc_for_cont_and_vec} buy identification with restrictions on
the true system; \ourapproach buys it by assuming the fitted model is that system, which
for flexible structural equations such as neural networks is the harder warrant
(\cref{sec:discussion}). The witness mechanism then supplies the context-sensitivity
their population-level framing forfeits.

\paragraph{FANS and LEWIS.}
Two recent attribution methods also build on Pearl's probabilities of necessity and sufficiency, bringing them closer still to \ourapproach. FANS \citep{chenFANS2024} explains a
prediction by searching a small region around the input for the largest PNS attainable there, estimating counterfactual probabilities by resampling. LEWIS \citep{galhotraLewis2021} works from a
causal graph and input--output data, computing necessity, sufficiency, and combined
scores for a black-box decision at the population or subgroup level, and turns them into recourse. Both are well suited to their stated aims and share our starting point
that attribution is a probability of causation. They differ from \ourapproach{} on four
design choices, each reflecting our narrower focus on token-level, context-sensitive
attribution when a full model is available.

First, we keep necessity and sufficiency apart. FANS combines them into a single PNS
number and LEWIS reports them as separate population averages, whereas \ourapproach{}
carries both as a joint pair $(Y^s, Y^n)$ and applies the user's impact function $ci$
only afterwards. As the synthetic models of \cref{sub:synthetic} illustrate, a cause can
be necessary without being sufficient, or the reverse, and keeping the two components
together lets a single sweep tell these archetypes apart.

Second, we average rather than optimise for a best case. FANS reports the maximum PNS over the local regions it searches
and uses gradient-based optimisation to extract the single feature subset with the highest
PNS; LEWIS scores individual attributes or a user-specified set. \ourapproach{} averages
instead: the selection distribution $\Gamma$ samples suspect sets within a chosen
cardinality band, so a feature's responsibility reflects its participation across many
sets rather than its most favourable one, with the Shapley kernel arising as a special
case.

Third, we use witnesses where FANS uses a local region and LEWIS a population average.
When two causes overdetermine an outcome, or one acts only through a mediator, recovering
the intended verdict means holding the inactive path fixed at its factual value, which a
ball around the input cannot express and a population average smooths over. \ourapproach{}
holds witness variables $\mathbf{W}$ fixed and averages over which variables serve as
witnesses, as the OBCB and signal walkthroughs of \cref{sec:shap_examples} illustrate.

Fourth, we compare against a distribution of alternatives.
FANS measures change from one baseline input and LEWIS from one contrast value;
\ourapproach{} averages over an alternative-value distribution $\Delta$, removing the
dependence on a chosen reference at the cost of assuming more of the model. That cost is a
deliberate trade: LEWIS recovers its scores from a graph and observational data at the
population level; \ourapproach{} assumes a fully specified model and returns a
value for the individual case.

\paragraph{Probabilistic versus nondeterministic models.}
\ourapproach lives in a probabilistic world: we push all the indeterminism into the
exogenous noise $\mathbf{U}$, so that once a noise value $\mathbf{u}$ is fixed the
mechanisms are ordinary functions, each necessity or sufficiency question becomes a
plain interventional expectation, and we average those over $P_{\mathbf{U}}$.
\citet{beckers2025nondeterministic} takes a different route out of determinism. His
nondeterministic causal models let a mechanism return a \emph{set} of admissible values
rather than a single one, with no probabilities attached, and he reads actual causation
off two modalities: a cause \emph{must} make a difference if it does so in every
admissible resolution, and \emph{might} make one if it does so in some. He notes in
passing that this brings the logical and probabilistic pictures of causation closer
together than they are in the deterministic case, since the order-of-operations
distinction that matters for Pearl's probabilities of counterfactuals resurfaces here
as a distinction between these two modalities.

It is tempting to read the two pictures as lining up, though we do not try to make this
precise. If one were to turn a nondeterministic mechanism into a probabilistic one, by
placing some full-support distribution over its admissible values, then \emph{might}
would seem to answer to ``the outcome changes with positive probability'' and
\emph{must} to ``the outcome changes with probability one.'' On that reading the modal
verdict would depend only on which alternatives are possible and not on how they are
weighted, so any full-support choice (the uniform one being the most agnostic)
would give the same yes-or-no answer, with a graded score like \ourapproach's filling
in the range between. Where genuine probabilities are not on hand, the modal account is
the natural tool and \ourapproach has nothing to add; where they are, \ourapproach
trades the bare list of possibilities for a plausibility-weighted magnitude. We mention
this only to locate the two frameworks relative to one another, not to claim a
correspondence.

\subsection{Actual-causality-adjacent quantities}
\label{sec:recovering-ac}

The actual-causality tradition has accumulated a family of named quantities beyond
causation itself. \citet{chocklerResponsibilityBlameStructuralModel2004} introduce
\emph{degree of responsibility} and \emph{degree of blame}; \citet{actualCausalityHalpern}
develops \emph{causal explanation} and \emph{explanatory power} at book length, and
\citet{beckersExplanationsXAI2022} recasts both for explainable AI;
\citet{beckersSufficiency2021} and \citet{beckersNESS2021} reconstruct causation through
\emph{causal sufficiency} and a counterfactual NESS condition; and
\citet{beckersHarm2022} define \emph{harm}, with \citet{beckersQuantifyingHarm2023} adding
its graded refinement and \citet{beckersMoralResponsibility2023} an allied account of
moral responsibility. Each comes with its own definitional apparatus (a
counterfactual test, a minimisation over contingencies, an averaging over an agent's
epistemic state, or a stipulated baseline), and the relations among them remain largely
implicit.

We conjecture that most of these are not independent primitives but
\emph{reparametrisations of the single counterfactual contrast} that \ourapproach already
makes explicit; or, where a control is missing, notions that fold into the same framework by an
appropriate generalisation, \emph{mutatis mutandis}. The construction of Section~\ref{sec:causal_impact} carries a small number of
user-facing knobs: the variable selection distribution $\Gamma$ (which and how many
suspects enter a candidate cause set $\mathbf{C}$), the witness set $\mathbf{T}$ (which
context a user holds fixed, in the contingency sense of \citealp{actualCausalityHalpern}), the
alternative value distribution $\Delta$ (what counts as a counterfactual departure,
equivalently the \emph{default} against which PCI grades alternatives), the causal impact
function $ci$ (how PCI scores a sufficiency--necessity pair,
Definition~\ref{def:causalimpact}), and the exogenous law $P_{\mathbf{U}}$ (what PCI
averages the attribution over). Read against these knobs, the named quantities look like
settings rather than rivals. \emph{Actual causation} in the sense of
\citet{actualCausalityHalpern} is the necessity component at a single-suspect $\Gamma$ with
witnesses admitted; the \emph{causal explanation} of \citet{actualCausalityHalpern} (a
context that, given what the agent already takes for granted, makes the cause necessary)
is recovered in intent by the held-fixed witness configuration $\mathbf{T}$ together with the
marginalisation over $P_{\mathbf{U}}$ that stands in for the agent's epistemic state; the
sufficient and counterfactual \emph{explanations} of \citet{beckersExplanationsXAI2022} are
organised around a held-fixed witness that coincides with \ourapproach's witness set
$\mathbf{T}$; both descend from Halpern's modified definition. His additional
\emph{safeguarded network} (variables protected from intervention, as opposed to held at their
factual values) is not among the present knobs, and would enter only as a natural
generalisation, a set safeguarded from intervention alongside the witnesses held at factual
values. Where he rates an
explanation only as \emph{good} or not (non-domination, i.e.\ minimality), a $ci$ on the
necessity--sufficiency differential of Definition~\ref{def:jointnecsuf} would supply a graded
refinement rather than recover an existing measure; the \emph{degree of blame} of
\citet{chocklerResponsibilityBlameStructuralModel2004} is the same
attribution marginalised over a prior on models alongside $P_{\mathbf{U}}$, with their
\emph{degree of responsibility} its single-model special case; the \emph{harm} of \citet{beckersHarm2022}
and \citet{beckersQuantifyingHarm2023} is \ourapproach under a $\Delta$ pinned to a default
contract together with a utility-valued $ci$; and the \emph{causal sufficiency} of
\citet{beckersSufficiency2021} is the sufficiency regime itself, the necessity of $X_k$
\emph{within} a sufficient set $\mathbf{C}$.

The limits of the reframing. Each notion's specific convention, the $1/(N{+}1)$
responsibility weight of
\citet{chocklerResponsibilityBlameStructuralModel2004},\footnote{That this
particular weight reflects a modelling choice rather than a necessity has precedent:
\citet{salimiQuantifyingCausalEffects2016}, applying Chockler--Halpern
responsibility to database query answering, found the $1/(N{+}1)$ measure
gave unintuitive rankings in their setting and replaced it with a graded \emph{causal-effect}
measure that, they argue, recovers the intended results.} the all-or-nothing causation
threshold, the non-domination criterion that rates an explanation only \emph{good} or not
\citep{actualCausalityHalpern,beckersExplanationsXAI2022}, or the stipulated default in the harm contrast of
\citet{beckersHarm2022}, encodes a deliberate modelling commitment suited to that
work's aims, summarising a graded, model-relative phenomenon under a single label.
\ourapproach retains the underlying graded probability and relocates
each of these to an \emph{explicit} downstream choice: a cardinality band in $\Gamma$, a
threshold on $ci$, a prior over models, a particular $\Delta$. On this view the named
quantities are complementary readings of one underlying measure, a vantage point on
the original constructions rather than a shortcoming in them.

We state this only as a conjecture and do not develop it here. We raise it because the
knobs we introduced to separate necessity from sufficiency appear, prima facie, to be the
degrees of freedom along which the quantities of
\citet{chocklerResponsibilityBlameStructuralModel2004}, \citet{actualCausalityHalpern}, and
\citet{beckersSufficiency2021,beckersHarm2022} themselves vary. We leave a full accounting
to future work.

Sections~\ref{sec:relation_with_ac}--\ref{sub:pearl_actual_cause_prob}
have laid out the formal relation between \ourapproach and actual causality and
its Pearlian probabilistic wrapper, completing the comparative story previewed in
\cref{sec:evaluation}: a claim-by-claim comparison against feature-attribution
baselines, the Differential Causal Effect, exact actual-cause enumeration, a
continuous dynamical model, and a deployed valuation system, with the full
protocols collected in the appendices. \cref{sec:discussion} closes the paper.

\section{Discussion and Conclusions}
\label{sec:discussion}

We can now state the main differences between \ourapproach and the
two approaches that inspired it, against the background of what the rest of the paper has
shown. The main lesson from the literature on actual causality is that intuitive causal
attribution often requires holding some endogenous variables, members of a ``witness
set'', fixed at their factual values. This ensures context sensitivity by controlling
complicating causal paths. We treat witnesses differently: instead of looking for
a single witness set that reveals a causal relationship, we use a distribution over
witness sets and integrate the causal impact score across them, which lets
\ourapproach account for the number of witness sets and the strength of
within-witness-set effects. The construction also admits off-the-shelf probabilistic
inference, avoiding worst-case subset enumeration. The relationship between the two
views is now formal where it was once gestural: under matching choices of $\Gamma$ and
$\Delta$, the support of $\sum_{\mathbf{T}}\Phi$ coincides with factivity and
context-sensitive necessity (Theorems~\ref{th:ac-exp}--\ref{th:exp-ac}), and a
subset-minimality filter over that support recovers AC verdicts exactly
(Proposition~\ref{prop:filter}); the empirical companion in \cref{sec:ac_benchmark}
confirms this at scales where AC enumeration is intractable.

The main concept borrowed from Pearl's probability of necessity and sufficiency is the
joint distribution over outcomes from two interventional regimes, pushed forward through
a user-chosen causal impact function. The difference is that we use witness sets to
allow for context-sensitivity and to break symmetries present in otherwise pathological
cases like overdetermination and undercutting. Section~\ref{sec:shap_examples} works
through the binary instantiation in which the necessity and sufficiency components
coincide with PN and PS, and \cref{sub:synthetic} reads the two components apart on
a continuous synthetic model where each is needed to bring out a different causal
archetype.

The further generalisations are mostly bookkeeping in the same spirit: a single
alternative value for a potential cause becomes a factual-excised alternative-value
distribution we integrate over, which generalises immediately to continuous variables;
asking whether a particular set is an actual cause becomes sampling suspect sets and
gauging a feature's responsibility by its participation in sets with causal impact; and
asking whether the outcome differs becomes measuring how much, via a customisable causal
impact function $ci$ that reads differences across counterfactual worlds, including
between continuous variables. The variable-selection distribution $\Gamma$, the
alternative-value distribution $\Delta$, and the impact function $ci$ become explicit,
user-facing components of the framework, no longer implicit defaults.

Against SHAP-family attributions, the framework's central commitment is to
\emph{intervention with witnesses} over \emph{observational marginalisation}. The
controlled comparisons of Section~\ref{sec:shap_examples} show that this commitment does
real work on canonical overdetermination and continuous-mediation examples, where SHAP
and Causal SHAP fail in characteristic ways that \ourapproach avoids. The synthetic
benchmark of \cref{sub:synthetic} confirms this on a model with known ground
truth, and the case study of \cref{sec:avm} reports the same qualitative pattern
on a production-grade trained model with millions of data points. A separate comparison
to gradient-based attribution, where the difference is one of differentiability and locality,
as distinct from the causal/non-causal axis, is reported in \cref{sec:DCE}. The
thread connecting these results is structural: the witness mechanism that satisfies
D-A-rank for Alice in the binary OBCB walkthrough of
Section~\ref{sec:shap_examples} is the same machinery driving the upstream-versus-downstream
attribution split on the AVM in \cref{sec:avm}.

A broader test is how \ourapproach fares against the desiderata that the XAI
literature typically asks counterfactual-explanation methods to meet, of which
\cite{verma2022recourses} list five. \emph{Sparsity}, the requirement that
explanations change as few features as possible, follows directly from the
cardinality cap on suspect sets in $\Gamma$, which biases the search toward small
$\mathbf{C}$. \emph{Multiple counterfactuals}, the ability to return several
alternative explanations, falls out of the Monte Carlo construction: a single sweep
over $\Gamma$ and $\Delta$ produces a population of
$(\mathbf{C}, \mathbf{c}', \mathbf{T})$ triples.
\emph{Causal-constraint satisfaction}, the demand that explanations respect
downstream propagation rather than treat features as independently editable, is
automatic once suspect interventions are routed through the causal model: changing
$\mathbf{C}$ propagates to descendants by construction. \emph{Model-agnosticism}
(applicability to arbitrary predictors), \ourapproach satisfies only partially: it
can wrap a black-box predictor, but at the cost of the causal sensitivity that
motivates the framework in the first place.

We part company with the literature on \emph{actionability}, the requirement that explanations not recommend changes to features users cannot in practice change. The standard recommendation is to fold ease-of-change into the
cause-identification procedure itself, weighting features by mutability so that
immutable ones are never returned as causes (\cite{verma2022recourses}; cf.\
\cite{Karimi2021}, who shift the focus from explanations altogether to interventions).
We think this conflates two distinct questions. A procedure that suppresses gender as
a possible cause whenever a loan decision is at stake will help applicants satisfy the
existing system without surfacing the kind of structural
problem that responsibility attribution can in principle reveal. Cost-of-action
layers can sit downstream of \ourapproach for users who need actionable
recommendations, but they should compose with responsibility attribution instead of
replacing it.

Set against that favourable comparison, several limitations temper the picture.
\Ourapproach presumes access to a trained
probabilistic causal model from which it can sample counterfactual outcomes, a
stronger requirement than methods that work from observational data and a causal graph
alone. It also inherits whatever error is in that model: a misspecified structural
equation or noise distribution yields systematically wrong attributions, with
nothing in the construction to flag this from the scores alone, since \ourapproach
evaluates counterfactuals under the model the user supplies, without checking that
model against data. The Monte Carlo variance of the estimator grows with model size and with the
cardinalities of suspect and witness sets; \cref{sec:ac_benchmark} reports
empirical scaling and an ablation that isolates the contribution of witness pinning.
In the worst case, a single revealing witness set among $2^{|\mathbf{W}|}$ candidates
under uniform $\Gamma_w$ costs as many samples in expectation as exhaustive
enumeration would. The escape is that pinning a variable that is causally inert under
the tested intervention is a no-op, so revealing witness sets come in neighbourhoods
of size $2^m$ for $m$ such inert candidates, giving revealing mass at least
$2^{m-|\mathbf{W}|}$ rather than $2^{-|\mathbf{W}|}$. Whether this rescues sampling
generically, or only in graphs sparse relative to the tested intervention, is a
sharper open question than we have characterised here. A $\Gamma$ built from the
graph, weighting witness candidates by their relevance to the tested intervention,
would exploit exactly this sparsity, but building it requires
the graph's relevance structure to be legible in the first place, which trades away
desideratum~\hyperref[des:broad]{(iii)}'s reach into grey-box models where structural
facts are not cheaply available.
Exposing $\Gamma$, $\Delta$, and $ci$ as user-facing choices is also a burden:
a practitioner with no strong prior view on which suspect sets or alternative values are plausible has nothing
principled to fall back on, and a bad choice can silently shift which variables
the score favours, with no signal in the output that anything went wrong.

\paragraph{Choosing $\mathbf{S}$ and $\mathbf{W}$ in practice.} Every worked
example in this paper hand-picks the suspect and witness pools with full
knowledge of the model's structure, which is not available to a practitioner
starting from a large learned PSCM. Our default recommendation: take
$\mathbf{S}$ to be whichever variables are under live consideration as
candidate causes for the question at hand (the same choice a practitioner
would have to make for but-for testing or SHAP alike), and take $\mathbf{W}$
to be all remaining endogenous non-outcome variables not already in
$\mathbf{S}$, i.e.\ every variable Definition~\ref{def:ac_intuitive}'s witness
pool could legally contain. This costs the most sampling (\cref{sec:ac_benchmark}
quantifies the resulting variance), but it never silently suppresses a genuine
cause by omitting a witness that would have revealed it, since the joint
$\Gamma$ marginalises over which subset of $\mathbf{W}$ is actually pinned on
each draw rather than committing to one witness set up front. The failure
mode to watch for is the opposite direction: if $\mathbf{S}$ itself omits a
variable that mediates between two included suspects, PCI cannot detect that
mediation (it can only witness against it), so a practitioner who suspects
an omitted mediator should move it into $\mathbf{S}$, not $\mathbf{W}$, and
compare the two runs directly, exactly as \cref{sec:obcb_shap}'s Option~B does
for \varname{check-failed}. When sampling cost makes the full-$\mathbf{W}$
default impractical, restricting $\mathbf{W}$ to variables structurally
downstream of $\mathbf{S}$ and upstream of $Y$ (candidate mediators and
confounding paths) is the next-cheapest defensible choice, since variables
off every $\mathbf{S}\to Y$ path are causally inert witnesses whose pinning
or freeing cannot change any counterfactual outcome (\cref{sec:discussion}'s
sampling-cost discussion above formalises why inert witnesses are cheap to
average over rather than enumerate).

Finally, we deliberately limited what the AVM case study discloses: the apples-to-apples
SHAP-vs-PCI comparison rests on the synthetic benchmark of \cref{sub:synthetic},
with the production model serving as a feasibility witness, not as quantitative
evidence.

\Ourapproach provides a context-sensitive causal explanation method for causal
probabilistic machine-learning models that stays tractable, at the sampling cost
discussed above, past the point where exact actual-causality enumeration does not
(\cref{sec:ac_benchmark}). It arrives at this as a natural probabilistic
generalisation of ideas already in the literature, yielding an estimator that
pairs with standard inference algorithms. We benchmarked it against exact
actual-causality computations, characterised its qualitative behaviour on synthetic
models with known ground truth and on canonical SHAP failure cases, and demonstrated
computational feasibility on a production-grade model.

This estimator also sits within a longer line of efforts to mechanise
actual-causality reasoning in software: the Actual Causality Canvas of
\citet{ibrahimCanvas2020} and the SAT- and optimization-based computations of
\citet{ibrahimChecking2020} operationalise the Halpern--Pearl definition for
forensic investigation, fault diagnosis, and explainable AI; those tools are built
for \emph{discrete} (typically binary) structural models, where the actual cause is
recovered by combinatorial search over candidate sets. The complementary regime of large,
continuous-valued models has received less attention, and it is there that the exact
notions become harder to apply directly. \Ourapproach targets exactly that regime,
turning approximate causal attribution into something workable for users of models of
non-trivial size, with $\Gamma$, $\Delta$, and $ci$ exposed as knobs for the user to set
rather than defaults buried in the implementation.

The same mechanisation goal motivates a parallel literature in databases, where
responsibility and causal-effect measures explain query answers over relational
data rather than outcomes of a structural causal model (surveyed by
\citealp{bertossiAttributionScores2023}; \citealp{salimiQuantifyingCausalEffects2016},
already noted in \cref{sub:pearl_actual_cause_prob}, is one instance). In that
vocabulary, \ourapproach's witness-integrated score is a graded, model-relative
responsibility measure of the same family, specialised to probabilistic structural
causal models.

\bibliographystyle{plainnat}
\bibliography{references}

\begin{thebibliography}{68}
\providecommand{\natexlab}[1]{#1}
\providecommand{\url}[1]{\texttt{#1}}
\expandafter\ifx\csname urlstyle\endcsname\relax
  \providecommand{\doi}[1]{doi: #1}\else
  \providecommand{\doi}{doi: \begingroup \urlstyle{rm}\Url}\fi

\bibitem[Alechina et~al.(2017)Alechina, Halpern, and
  Logan]{alechinaTeamPlans2020}
Natasha Alechina, Joseph~Y. Halpern, and Brian Logan.
\newblock Causality, responsibility and blame in team plans.
\newblock In \emph{Proceedings of the 16th International Conference on
  Autonomous Agents and Multiagent Systems (AAMAS)}, pages 1091--1099, 2017.

\bibitem[Balke and Pearl(1994)]{balke1994probabilistic}
Alexander Balke and Judea Pearl.
\newblock Probabilistic evaluation of counterfactual queries.
\newblock In \emph{Proceedings of the Twelfth AAAI National Conference on
  Artificial Intelligence (AAAI)}, pages 230--237, 1994.

\bibitem[Beckers(2021{\natexlab{a}})]{beckersNESS2021}
Sander Beckers.
\newblock The counterfactual {NESS} definition of causation.
\newblock In \emph{Proceedings of the AAAI Conference on Artificial
  Intelligence}, volume~35, pages 6210--6217, 2021{\natexlab{a}}.
\newblock \doi{10.1609/aaai.v35i7.16772}.

\bibitem[Beckers(2021{\natexlab{b}})]{beckersSufficiency2021}
Sander Beckers.
\newblock Causal sufficiency and actual causation.
\newblock \emph{Journal of Philosophical Logic}, 50:\penalty0 1341--1374,
  2021{\natexlab{b}}.
\newblock \doi{10.1007/s10992-021-09601-z}.

\bibitem[Beckers(2022)]{beckersExplanationsXAI2022}
Sander Beckers.
\newblock Causal explanations and {XAI}.
\newblock In \emph{Proceedings of the First Conference on Causal Learning and
  Reasoning (CLeaR)}, volume 177 of \emph{PMLR}, pages 90--109, 2022.

\bibitem[Beckers(2023)]{beckersMoralResponsibility2023}
Sander Beckers.
\newblock Moral responsibility for {AI} systems.
\newblock In \emph{Advances in Neural Information Processing Systems
  (NeurIPS)}, volume~36, 2023.

\bibitem[Beckers(2025)]{beckers2025nondeterministic}
Sander Beckers.
\newblock Actual causation and nondeterministic causal models.
\newblock In \emph{Proceedings of the Fourth Conference on Causal Learning and
  Reasoning (CLeaR)}, volume 275 of \emph{PMLR}, pages 514--532, 2025.

\bibitem[Beckers and Vennekens(2015)]{Beckers2015CombiningPC}
Sander Beckers and Joost Vennekens.
\newblock Combining probabilistic, causal, and normative reasoning in
  {{CP-logic}}.
\newblock \emph{ArXiv}, abs/1503.01051, 2015.

\bibitem[Beckers and Vennekens(2017)]{beckersTransitivity2017}
Sander Beckers and Joost Vennekens.
\newblock The transitivity and asymmetry of actual causation.
\newblock \emph{Ergo}, 4\penalty0 (1):\penalty0 1--27, 2017.
\newblock \doi{10.3998/ergo.12405314.0004.001}.

\bibitem[Beckers and Vennekens(2018)]{beckersPrincipled2018}
Sander Beckers and Joost Vennekens.
\newblock A principled approach to defining actual causation.
\newblock \emph{Synthese}, 195\penalty0 (2):\penalty0 835--862, 2018.
\newblock \doi{10.1007/s11229-016-1247-1}.

\bibitem[Beckers et~al.(2022)Beckers, Chockler, and Halpern]{beckersHarm2022}
Sander Beckers, Hana Chockler, and Joseph~Y. Halpern.
\newblock A causal analysis of harm.
\newblock In \emph{Advances in Neural Information Processing Systems
  (NeurIPS)}, volume~35, 2022.

\bibitem[Beckers et~al.(2023)Beckers, Chockler, and
  Halpern]{beckersQuantifyingHarm2023}
Sander Beckers, Hana Chockler, and Joseph~Y. Halpern.
\newblock Quantifying harm.
\newblock In \emph{Proceedings of the Thirty-Second International Joint
  Conference on Artificial Intelligence (IJCAI)}, pages 363--371, 2023.
\newblock \doi{10.24963/ijcai.2023/41}.

\bibitem[Bertossi(2023)]{bertossiAttributionScores2023}
Leopoldo Bertossi.
\newblock Attribution-scores in data management and explainable machine
  learning, 2023.
\newblock Paper associated with a tutorial at ADBIS 2023.

\bibitem[Butler et~al.(2022)Butler, Feng, and Djurić]{butler2022differential}
K.~Butler, G.~Feng, and P.~M. Djurić.
\newblock A differential measure of the strength of causation.
\newblock \emph{IEEE Signal Processing Letters}, 29:\penalty0 2208--2212, 2022.

\bibitem[Chen et~al.(2024)Chen, Cai, Huang, Zhu, Horwood, Hao, Li, and
  Hern\'andez-Lobato]{chenFANS2024}
Xuexin Chen, Ruichu Cai, Zhengting Huang, Yuxuan Zhu, Julien Horwood, Zhifeng
  Hao, Zijian Li, and Jos\'e~Miguel Hern\'andez-Lobato.
\newblock Feature attribution with necessity and sufficiency via dual-stage
  perturbation test for causal explanation.
\newblock In \emph{Proceedings of the 41st International Conference on Machine
  Learning (ICML)}, volume 235 of \emph{Proceedings of Machine Learning
  Research}. PMLR, 2024.

\bibitem[Cheshire and Magrini(2009)]{CheshireMagrini2009}
Paul Cheshire and Stefano Magrini.
\newblock Growing urban gdp or attracting people? different causes, different
  consequences.
\newblock In Charlie Karlsson, A{\r{a}}ke~E. Andersson, Paul~C. Cheshire, and
  Roger~R. Stough, editors, \emph{New Directions in Regional Economic
  Development}, Advances in Spatial Science, pages 291--315. Springer, Berlin,
  Heidelberg, 2009.
\newblock ISBN 978-3-642-01016-3, 978-3-642-01017-0.
\newblock \doi{10.1007/978-3-642-01017-0_16}.

\bibitem[Chockler and
  Halpern(2004)]{chocklerResponsibilityBlameStructuralModel2004}
H.~Chockler and J.~Y. Halpern.
\newblock Responsibility and {{Blame}}: {{A Structural-Model Approach}}.
\newblock \emph{Journal of Artificial Intelligence Research}, 22:\penalty0
  93--115, October 2004.
\newblock ISSN 1076-9757.
\newblock \doi{10.1613/jair.1391}.

\bibitem[Cohen(1988)]{cohen1988statistical}
Jacob Cohen.
\newblock \emph{Statistical Power Analysis for the Behavioral Sciences}.
\newblock Lawrence Erlbaum Associates, Hillsdale, NJ, 2nd edition, 1988.

\bibitem[Eiter and Lukasiewicz(2002)]{eiterComplexityResultsStructurebased2002}
Thomas Eiter and Thomas Lukasiewicz.
\newblock Complexity results for structure-based causality.
\newblock \emph{Artificial Intelligence}, 142\penalty0 (1):\penalty0 53--89,
  November 2002.
\newblock ISSN 00043702.
\newblock \doi{10.1016/S0004-3702(02)00271-0}.

\bibitem[Fang(2022)]{fang2022probabilistic}
Jingzhi Fang.
\newblock \emph{On probabilistic reasoning of actual causation}.
\newblock Doctoral thesis, Lingnan University, Hong Kong, 2022.
\newblock Retrieved from \url{https://commons.ln.edu.hk/otd/155/}.

\bibitem[Fenton{-}Glynn(2017)]{Fenton-Glynn2017-FENAPP-2}
Luke Fenton{-}Glynn.
\newblock A proposed probabilistic extension of the halpern and pearl
  definition of ?actual cause?
\newblock \emph{British Journal for the Philosophy of Science}, 68\penalty0
  (4):\penalty0 1061--1124, 2017.
\newblock \doi{10.1093/bjps/axv056}.

\bibitem[Friedenberg and Halpern(2019)]{friedenbergHalpernBlameworthiness2019}
Meir Friedenberg and Joseph~Y. Halpern.
\newblock Blameworthiness in multi-agent settings.
\newblock In \emph{Proceedings of the Thirty-Third AAAI Conference on
  Artificial Intelligence (AAAI)}, 2019.

\bibitem[Galhotra et~al.(2021)Galhotra, Pradhan, and Salimi]{galhotraLewis2021}
Sainyam Galhotra, Romila Pradhan, and Babak Salimi.
\newblock Explaining black-box algorithms using probabilistic contrastive
  counterfactuals.
\newblock In \emph{Proceedings of the 2021 ACM SIGMOD International Conference
  on Management of Data}, 2021.

\bibitem[Gerstenberg and Lagnado(2010)]{gerstenbergSpreadingBlame2010}
Tobias Gerstenberg and David~A. Lagnado.
\newblock Spreading the blame: The allocation of responsibility amongst
  multiple agents.
\newblock \emph{Cognition}, 115\penalty0 (1):\penalty0 166--171, 2010.
\newblock \doi{10.1016/j.cognition.2009.12.011}.

\bibitem[Gustafson(2015)]{gustafson2015partial}
Paul Gustafson.
\newblock \emph{Bayesian Inference for Partially Identified Models: Exploring
  the Limits of Limited Data}.
\newblock Monographs on Statistics and Applied Probability. Chapman and
  Hall/CRC, 2015.

\bibitem[Hall(2004)]{hall2004two}
Ned Hall.
\newblock Two concepts of causation.
\newblock In John Collins, Ned Hall, and L.~A. Paul, editors, \emph{Causation
  and Counterfactuals}, pages 225--276. MIT Press, 2004.

\bibitem[Halpern(2015)]{halpern2015modification}
Joseph~Y. Halpern.
\newblock A modification of the {Halpern--Pearl} definition of causality.
\newblock In \emph{Proceedings of the 24th International Joint Conference on
  Artificial Intelligence (IJCAI)}, pages 3022--3033, 2015.

\bibitem[Halpern(2016)]{actualCausalityHalpern}
Joseph~Y. Halpern.
\newblock \emph{Actual Causality}.
\newblock The MIT Press, 2016.
\newblock ISBN 9780262035026.
\newblock URL \url{http://www.jstor.org/stable/j.ctt1f5g5p9}.

\bibitem[Halpern and Hitchcock(2015)]{Halpern2015-HALGCA}
Joseph~Y. Halpern and Christopher Hitchcock.
\newblock Graded causation and defaults.
\newblock \emph{British Journal for the Philosophy of Science}, 66\penalty0
  (2):\penalty0 413--457, 2015.
\newblock \doi{10.1093/bjps/axt050}.

\bibitem[Halpern and
  Kleiman-Weiner(2018)]{halpernKleimanWeinerBlameworthiness2018}
Joseph~Y. Halpern and Max Kleiman-Weiner.
\newblock Towards formal definitions of blameworthiness, intention, and moral
  responsibility.
\newblock In \emph{Proceedings of the Thirty-Second AAAI Conference on
  Artificial Intelligence (AAAI)}, 2018.

\bibitem[Heckman and Pinto(2022)]{HeckmanPinto2022CausalPolicyAnalysis}
James~J. Heckman and Rodrigo Pinto.
\newblock The econometric model for causal policy analysis.
\newblock \emph{Annual Review of Economics}, 14\penalty0 (1):\penalty0
  893--923, 2022.
\newblock \doi{10.1146/annurev-economics-051520-015456}.
\newblock URL \url{https://doi.org/10.1146/annurev-economics-051520-015456}.

\bibitem[Hensman et~al.(2013)Hensman, Fusi, and Lawrence]{hensman2013gaussian}
James Hensman, Nicol{\`o} Fusi, and Neil~D Lawrence.
\newblock Gaussian processes for big data.
\newblock In \emph{Proceedings of the Twenty-Ninth Conference on Uncertainty in
  Artificial Intelligence}, pages 282--290, 2013.

\bibitem[Heskes et~al.(2020)Heskes, Sijben, Bucur, and
  Claassen]{heskes2020causal}
Tom Heskes, Evi Sijben, Ioan~Gabriel Bucur, and Tom Claassen.
\newblock Causal {Shapley} values: Exploiting causal knowledge to explain
  individual predictions of complex models.
\newblock In \emph{Advances in Neural Information Processing Systems},
  volume~33, pages 4778--4789, 2020.

\bibitem[Hitchcock and Knobe(2009)]{hitchcock2009cause}
Christopher Hitchcock and Joshua Knobe.
\newblock Cause and norm.
\newblock \emph{The Journal of Philosophy}, 106\penalty0 (11):\penalty0
  587--612, 2009.
\newblock \doi{10.5840/jphil20091061128}.

\bibitem[Ibrahim and Pretschner(2020)]{ibrahimChecking2020}
Amjad Ibrahim and Alexander Pretschner.
\newblock From checking to inference: Actual causality computations as
  optimization problems.
\newblock In \emph{Automated Technology for Verification and Analysis (ATVA)},
  volume 12302 of \emph{Lecture Notes in Computer Science}. Springer, 2020.

\bibitem[Ibrahim et~al.(2020)Ibrahim, Klesel, Zibaei, Kacianka, and
  Pretschner]{ibrahimCanvas2020}
Amjad Ibrahim, Tobias Klesel, Ehsan Zibaei, Severin Kacianka, and Alexander
  Pretschner.
\newblock Actual causality canvas: A general framework for explanation-based
  socio-technical constructs.
\newblock In \emph{Proceedings of the 24th European Conference on Artificial
  Intelligence (ECAI)}, volume 325 of \emph{Frontiers in Artificial
  Intelligence and Applications}, pages 2978--2985. IOS Press, 2020.
\newblock \doi{10.3233/FAIA200472}.

\bibitem[Icard et~al.(2017)Icard, Kominsky, and
  Knobe]{icardNormalityActualCausal2017}
Thomas~F. Icard, Jonathan~F. Kominsky, and Joshua Knobe.
\newblock Normality and actual causal strength.
\newblock \emph{Cognition}, 161:\penalty0 80--93, April 2017.
\newblock ISSN 00100277.
\newblock \doi{10.1016/j.cognition.2017.01.010}.

\bibitem[Janzing et~al.(2020)Janzing, Minorics, and
  Bl{\"o}baum]{janzing2020feature}
Dominik Janzing, Lenon Minorics, and Patrick Bl{\"o}baum.
\newblock Feature relevance quantification in explainable {AI}: A causal
  problem.
\newblock In \emph{International Conference on Artificial Intelligence and
  Statistics ({AISTATS})}, volume 108 of \emph{PMLR}, pages 2907--2916, 2020.

\bibitem[Karimi et~al.(2021)Karimi, Sch\"{o}lkopf, and Valera]{Karimi2021}
Amir-Hossein Karimi, Bernhard Sch\"{o}lkopf, and Isabel Valera.
\newblock Algorithmic recourse: from counterfactual explanations to
  interventions.
\newblock In \emph{Proceedings of the 2021 ACM Conference on Fairness,
  Accountability, and Transparency}, FAccT ’21, page 353–362. ACM, March
  2021.
\newblock \doi{10.1145/3442188.3445899}.
\newblock URL \url{http://dx.doi.org/10.1145/3442188.3445899}.

\bibitem[Kawakami et~al.(2024)Kawakami, Kuroki, and
  Tian]{kawakammi_2024_poc_for_cont_and_vec}
Yuta Kawakami, Manabu Kuroki, and Jin Tian.
\newblock Probabilities of {{Causation}} for {{Continuous}} and {{Vector
  Variables}}.
\newblock In Negar Kiyavash and Joris~M. Mooij, editors, \emph{Proceedings of
  the Fortieth Conference on Uncertainty in Artificial Intelligence}, volume
  244 of \emph{Proceedings of Machine Learning Research}, pages 1901--1921.
  PMLR, July 2024.

\bibitem[Kruschke(2018)]{kruschke2018}
John~K. Kruschke.
\newblock Rejecting or accepting parameter values in {Bayesian} estimation.
\newblock \emph{Advances in Methods and Practices in Psychological Science},
  1\penalty0 (2):\penalty0 270--280, 2018.
\newblock \doi{10.1177/2515245918771304}.

\bibitem[Lewis(1973)]{lewis1973causation}
David Lewis.
\newblock Causation.
\newblock \emph{The Journal of Philosophy}, 70\penalty0 (17):\penalty0
  556--567, 1973.

\bibitem[Lundberg and Lee(2017)]{lundberg2017unified}
Scott~M Lundberg and Su-In Lee.
\newblock A unified approach to interpreting model predictions.
\newblock In \emph{Advances in Neural Information Processing Systems},
  volume~30, pages 4765--4774, 2017.

\bibitem[Maldonado et~al.(2025)Maldonado, Krumme, Zetzsche, Didelez, and
  Schill]{maldonado2025robot}
Jaime Maldonado, Jonas Krumme, Christoph Zetzsche, Vanessa Didelez, and Kerstin
  Schill.
\newblock Robot pouring: Identifying causes of spillage and selecting
  alternative action parameters using probabilistic actual causation.
\newblock \emph{Frontiers in Cognition}, 4:\penalty0 1565059, 2025.
\newblock \doi{10.3389/fcogn.2025.1565059}.

\bibitem[Manski(2003)]{manski2003partial}
Charles~F. Manski.
\newblock \emph{Partial Identification of Probability Distributions}.
\newblock Springer Series in Statistics. Springer, 2003.

\bibitem[Moon and Schorfheide(2012)]{moon2012bayesian}
Hyungsik~Roger Moon and Frank Schorfheide.
\newblock Bayesian and frequentist inference in partially identified models.
\newblock \emph{Econometrica}, 80\penalty0 (2):\penalty0 755--782, 2012.
\newblock \doi{10.3982/ECTA8360}.

\bibitem[Mueller et~al.(2022)Mueller, Li, and Pearl]{muellerpearl2022causes}
Scott Mueller, Ang Li, and Judea Pearl.
\newblock Causes of effects: Learning individual responses from population
  data.
\newblock In \emph{Proceedings of the Thirty-First International Joint
  Conference on Artificial Intelligence (IJCAI)}, pages 2712--2718, 2022.
\newblock \doi{10.24963/ijcai.2022/376}.

\bibitem[Ness(2025)]{ness2025causal}
Robert~Osazuwa Ness.
\newblock \emph{Causal AI}.
\newblock Manning Publications, 2025.

\bibitem[Papageorgiou et~al.(2022)Papageorgiou, Theodosiou, Rapti,
  Papageorgiou, Dimitriou, Tzovaras, and Margetis]{Papageorgiou2022}
Konstantinos Papageorgiou, Theodosios Theodosiou, Aikaterini Rapti, Elpiniki~I.
  Papageorgiou, Nikolaos Dimitriou, Dimitrios Tzovaras, and George Margetis.
\newblock A systematic review on machine learning methods for root cause
  analysis towards zero-defect manufacturing.
\newblock \emph{Frontiers in Manufacturing Technology}, Volume 2 - 2022, 2022.
\newblock ISSN 2813-0359.
\newblock \doi{10.3389/fmtec.2022.972712}.
\newblock URL
  \url{https://www.frontiersin.org/journals/manufacturing-technology/articles/10.3389/fmtec.2022.972712}.

\bibitem[Paul and Hall(2013)]{paulhall2013}
L.~A. Paul and Ned Hall.
\newblock \emph{Causation: A User's Guide}.
\newblock Oxford University Press, 2013.

\bibitem[Pearl(1999)]{pearl1999probabilities}
Judea Pearl.
\newblock Probabilities of causation: Three counterfactual interpretations and
  their identification.
\newblock \emph{Synthese}, 121\penalty0 (1--2):\penalty0 93--149, 1999.
\newblock \doi{10.1023/A:1005233831499}.

\bibitem[Pearl(2009)]{causalityPearl}
Judea Pearl.
\newblock \emph{Causality: Models, Reasoning and Inference}.
\newblock Cambridge University Press, USA, 2nd edition, 2009.
\newblock ISBN 052189560X.

\bibitem[Pearl(2022)]{pearl2022probabilities}
Judea Pearl.
\newblock Probabilities of causation: three counterfactual interpretations and
  their identification.
\newblock In \emph{Probabilistic and Causal Inference: The Works of Judea
  Pearl}, pages 317--372. 2022.

\bibitem[Poirier(1998)]{poirier1998revising}
Dale~J. Poirier.
\newblock Revising beliefs in nonidentified models.
\newblock \emph{Econometric Theory}, 14\penalty0 (4):\penalty0 483--509, 1998.
\newblock \doi{10.1017/S0266466698144043}.

\bibitem[Richardson et~al.(2011)Richardson, Evans, and
  Robins]{richardson2011transparent}
Thomas~S. Richardson, Robin~J. Evans, and James~M. Robins.
\newblock Transparent parametrizations of models for potential outcomes.
\newblock In \emph{Bayesian Statistics 9}, pages 569--610. Oxford University
  Press, 2011.

\bibitem[Rubin(1978)]{rubin1978bayesian}
Donald~B. Rubin.
\newblock Bayesian inference for causal effects: The role of randomization.
\newblock \emph{The Annals of Statistics}, 6\penalty0 (1):\penalty0 34--58,
  1978.
\newblock \doi{10.1214/aos/1176344064}.

\bibitem[Salimi et~al.(2016)Salimi, Bertossi, Suciu, and Van~den
  Broeck]{salimiQuantifyingCausalEffects2016}
Babak Salimi, Leopoldo Bertossi, Dan Suciu, and Guy Van~den Broeck.
\newblock Quantifying causal effects on query answering in databases.
\newblock In \emph{Proceedings of the 8th USENIX Workshop on the Theory and
  Practice of Provenance (TaPP)}, 2016.

\bibitem[Shapley(1953)]{shapley1953}
Lloyd~S. Shapley.
\newblock A value for $n$-person games.
\newblock In Harold~W. Kuhn and Albert~W. Tucker, editors, \emph{Contributions
  to the Theory of Games, Volume {II}}, volume~28 of \emph{Annals of
  Mathematics Studies}, pages 307--317. Princeton University Press, Princeton,
  1953.

\bibitem[Sharma et~al.(2022)Sharma, Li, and Jiao]{sharma2022cfshapley}
Amit Sharma, Hua Li, and Jian Jiao.
\newblock The counterfactual-{S}hapley value: Attributing change in system
  metrics.
\newblock \emph{arXiv preprint arXiv:2208.08399}, 2022.

\bibitem[Shepherd et~al.(2024)Shepherd, Amor, and
  Moreno‐Betancur]{Shepherd2024}
Daisy~A. Shepherd, David~J. Amor, and Margarita Moreno‐Betancur.
\newblock Statistical analysis of observational studies in disability research.
\newblock \emph{Developmental Medicine \& Child Neurology}, 66\penalty0
  (11):\penalty0 1408--1418, 2024.
\newblock \doi{10.1111/dmcn.15948}.

\bibitem[Sprenger(2018)]{sprenger2018foundations}
Jan Sprenger.
\newblock Foundations of a probabilistic theory of causal strength.
\newblock \emph{The Philosophical Review}, 127\penalty0 (3):\penalty0 371--398,
  2018.
\newblock \doi{10.1215/00318108-6718797}.

\bibitem[Tian and Pearl(2000)]{tian2000probabilities}
Jin Tian and Judea Pearl.
\newblock Probabilities of causation: Bounds and identification.
\newblock \emph{Annals of Mathematics and Artificial Intelligence}, 28\penalty0
  (1--4):\penalty0 287--313, 2000.
\newblock \doi{10.1023/A:1018912507879}.

\bibitem[Titsias(2009)]{titsias2009variational}
Michalis Titsias.
\newblock Variational learning of inducing variables in sparse {G}aussian
  processes.
\newblock In \emph{Artificial Intelligence and Statistics}, pages 567--574.
  PMLR, 2009.

\bibitem[Verma et~al.(2024)Verma, Boonsanong, Hoang, Hines, Dickerson, and
  Shah]{verma2022recourses}
Sahil Verma, Varich Boonsanong, Minh Hoang, Keegan Hines, John Dickerson, and
  Chirag Shah.
\newblock Counterfactual explanations and algorithmic recourses for machine
  learning: A review.
\newblock \emph{ACM Computing Surveys}, 56\penalty0 (12):\penalty0 1--42, 2024.

\bibitem[Wang and Jordan(2021)]{wangJordanDesiderata2021}
Yixin Wang and Michael~I. Jordan.
\newblock Desiderata for representation learning: A causal perspective, 2021.

\bibitem[Watson et~al.(2021)Watson, Gultchin, Taly, and
  Floridi]{watson2021necessity}
David~S. Watson, Limor Gultchin, Ankur Taly, and Luciano Floridi.
\newblock Local explanations via necessity and sufficiency: unifying theory and
  practice.
\newblock In \emph{Proceedings of the Thirty-Seventh Conference on Uncertainty
  in Artificial Intelligence}, volume 161 of \emph{Proceedings of Machine
  Learning Research}, pages 1382--1392. PMLR, 2021.

\bibitem[Xia et~al.(2023)Xia, Pan, and Bareinboim]{xia2022neural}
Kevin Xia, Yushu Pan, and Elias Bareinboim.
\newblock Neural causal models for counterfactual identification and
  estimation.
\newblock In \emph{International Conference on Learning Representations
  (ICLR)}, 2023.
\newblock arXiv:2210.00035.

\bibitem[Zhang et~al.(2022)Zhang, Tian, and Bareinboim]{zhang2022partial}
Junzhe Zhang, Jin Tian, and Elias Bareinboim.
\newblock Partial counterfactual identification from observational and
  experimental data.
\newblock In \emph{Proceedings of the 39th International Conference on Machine
  Learning}, volume 162 of \emph{Proceedings of Machine Learning Research},
  pages 26548--26558. PMLR, 2022.

\end{thebibliography}

\appendix
\section{Computations for the OBCB Running Example}\label{app:obcb}

This appendix collects the worked computations for the stochastic OBCB model of
Example~\ref{ex:obcb_stochastic} that the main text summarises: the
per-branch traces behind Alice's individual $\pn$ and $\ps$ values in
Table~\ref{tab:pn_ps_pns} (Section~\ref{sec:motivations}), and the closed-form
SHAP value function behind Tables~\ref{tab:vS_optionA}--\ref{tab:shap_obcb}
(Section~\ref{sec:obcb_shap}). We obtain the remaining individual and population
entries of Table~\ref{tab:pn_ps_pns} analogously; the notebook
\nb{obcb\_computations} reproduces all values.

\subsection{Alice's probability of necessity}\label{app:obcb_pn}

Alice is factually $\varname{gender}=\text{female}$, $\varname{credit}=\text{bad}$,
$\varname{loan}=F$; the counterfactual intervenes to set
$\varname{gender}=\text{male}$ while leaving $\varname{credit}=\text{bad}$.
Tracing through the model:
\begin{itemize}
\item Under the intervention, $\varname{check}\sim\mathrm{Bern}(0.9)$: men are
checked $90\%$ of the time.
\item If checked, $\varname{check\text{-}failed}=1$ (Alice's credit is bad), and from
the loan-probability table
$P(\varname{loan\text{-}if\text{-}checked}=1\mid\varname{gender}=\text{male},\varname{check\text{-}failed}=1)=0.05$.
\item If not checked, $\varname{loan}=0$ deterministically since
$\varname{loan}=\varname{loan\text{-}if\text{-}checked}\cdot\varname{check}$.
\end{itemize}
Combining the two branches:
\begin{multline*}
\pn(\varname{gender}=\text{female}, \varname{loan}=F \mid \text{Alice}) \\
= P(\varname{loan}_{\varname{gender}=\text{male}}=T
\mid \varname{gender}=\text{female}, \varname{credit}=\text{bad}) \\
= 0.9 \cdot 0.05 = 0.045.
\end{multline*}

\subsection{Alice's probability of sufficiency}\label{app:obcb_ps}

Following the same convention as for $\pn$, the individual-level computation
additionally conditions on Alice's credit ($\varname{credit}=\text{bad}$).
The PS conditional then has Alice's credit fixed but the cause and outcome flipped
to their non-factual values ($\varname{gender}=\text{male}$, $\varname{loan}=T$); we
intervene to restore $\varname{gender}=\text{female}$ and ask whether the loan is
again denied:
\begin{itemize}
\item With $\varname{gender}=\text{female}$ and $\varname{credit}=\text{bad}$,
$\varname{check}\sim\mathrm{Bern}(0.2)$ \emph{unconditionally}; but PS conditions on
$\varname{gender}=\text{male}, \varname{loan}=T$, which (since $\varname{loan}=T$
forces $\varname{check}=1$) restricts $U_{\varname{check}}$ to the region
compatible with $\varname{check}=1$ under $\varname{gender}=\text{male}$, measure
$0.9$. Under the twin-network coupling of Footnote~\ref{fn:twin-network} (both
worlds share $U_{\varname{check}}$, so $\varname{check}=1$ iff
$U_{\varname{check}}<\gamma_M=0.9$ for men and $U_{\varname{check}}<\gamma_F=0.2$
for women), restoring $\varname{gender}=\text{female}$ on this restricted
$U_{\varname{check}}$ gives $\varname{check}=1$ with probability
$0.2/0.9=2/9$, not the unconditional $0.2$; this does not change the answer below,
since both branches deny the loan regardless of the exact branch probability.
\item If checked, $\varname{check\text{-}failed}=1$, and from the loan-probability table
$P(\varname{loan\text{-}if\text{-}checked}=1\mid\varname{gender}=\text{female},\varname{check\text{-}failed}=1)=0$.
\item If not checked, $\varname{loan}=0$ deterministically.
\end{itemize}
Either branch gives $\varname{loan}=F$, so
\begin{multline*}
\ps(\varname{gender}=\text{female}, \varname{loan}=F \mid \text{Alice}) \\
= P(\varname{loan}_{\varname{gender}=\text{female}}=F \mid \varname{gender}=\text{male},
\varname{loan}=T, \varname{credit}=\text{bad}) = 1.
\end{multline*}

\subsection{SHAP value function for Option~A}\label{app:obcb_vS}

For the 2-feature game $N=\{\varname{gender},\varname{credit}\}$ with
$f(g,c)=P(\varname{loan}=1\mid g,c)$, by the SHAP definition (and equivalently by
Definition~\ref{def:causalshap} once we note that no dropped feature is a
descendant of any coalition member), the value function is
\[
v(\mathcal{S}) \;=\; \mathbb{E}\!\left[f(X)\,\big|\,X_{\mathcal{S}} = x_{\mathcal{S}}^\star\right]
\;=\; \sum_{x_{N\setminus\mathcal{S}}\in\{0,1\}^{|N\setminus\mathcal{S}|}} P(X_{N\setminus\mathcal{S}}=x_{N\setminus\mathcal{S}})\,f(x_{\mathcal{S}}^\star,\,x_{N\setminus\mathcal{S}}).
\]
Under independent uniform marginals
$P(\varname{gender})=P(\varname{credit})=\tfrac{1}{2}$ this specialises to
\begin{align*}
v(\emptyset)                              &= \tfrac{1}{4}\sum_{g,c\in\{0,1\}} f(g,c),
&&[\text{both dropped: weight } \tfrac{1}{2}\cdot\tfrac{1}{2}] \\
v(\{\varname{gender}\})                   &= \tfrac{1}{2}\!\left[f(g^\star,0)+f(g^\star,1)\right],
&&[\text{only credit dropped: weight } P(c) = \tfrac{1}{2}] \\
v(\{\varname{credit}\})                   &= \tfrac{1}{2}\!\left[f(0,c^\star)+f(1,c^\star)\right],
&&[\text{only gender dropped: weight } P(g) = \tfrac{1}{2}] \\
v(\{\varname{gender},\varname{credit}\})  &= f(g^\star,c^\star).
&&[\text{nothing dropped: no average}]
\end{align*}
Plugging in the four values
$f(\text{F},\text{bad})=0.000$, $f(\text{F},\text{good})=0.180$,
$f(\text{M},\text{bad})=0.045$, $f(\text{M},\text{good})=0.900$ yields the
coalition values of Table~\ref{tab:vS_optionA}; pushing those through the
closed-form Shapley
$\phi_i = \tfrac{1}{2}[v(\{i\})-v(\emptyset)] + \tfrac{1}{2}[v(N)-v(N\!\setminus\!\{i\})]$
(and flipping signs, since we attribute rejection $g=1-f$) gives the magnitudes
in Table~\ref{tab:shap_obcb}.

\section{SHAP and Causal SHAP: Definitions and Signal-with-Mediation Computations}\label{app:signal}

This appendix collects the closed-form derivations for the
signal-with-mediation example of Section~\ref{sec:signal}: the Gaussian chain
$X\to M\to Y$ of Eqs.~\eqref{eq:X}--\eqref{eq:Y}, with
$X\sim\mathcal{N}(0.5,0.25)$, $M=X+\varepsilon_M$, $Y=M+\varepsilon_Y$, and
independent noise $\varepsilon_M,\varepsilon_Y\sim\mathcal{N}(0,0.1)$. The
distributional facts and conditional expectations stated in
Example~\ref{ex:signal_mediation}, and the per-target SHAP and per-pair PCI
computations summarised in Table~\ref{tab:signal_desid}, are derived here.
Unless noted otherwise the factual instance is Instance~1,
$X^\star=M^\star=Y^\star=1$.

\subsection{SHAP and Causal SHAP}\label{sec:shap_defs}

In this section $\mathcal{S} \subseteq N$ denotes a SHAP coalition (a subset of features held
to their factual values during evaluation), playing the same role here that
$\mathbf{C} \subseteq \mathbf{S}$ played in Section~\ref{sec:causal_impact}. We use the
calligraphic $\mathcal{S}$ to keep the SHAP coalition typographically distinct from PCI's
suspect set $\mathbf{S}$ on the printed page; the SHAP literature's plain $S$ is the same
object.

\begin{definition}[SHAP]
A cooperative game is a pair $(N, v)$,
where $N = \{1, \ldots, n\}$ is the set of players and $v: 2^N \to \mathbb{R}$
is the \emph{characteristic function} mapping each subset
(coalition) $\mathcal{S} \subseteq N$ to the value that coalition can produce. The Shapley value
\citep{shapley1953} assigns each player $i$ a payoff $\phi_i$, its share of the total
value $v(N) - v(\emptyset)$, and is the unique attribution satisfying four axioms:
\begin{enumerate}
\item \textbf{Efficiency}: $\sum_i \phi_i = v(N) - v(\emptyset)$
\item \textbf{Symmetry}: If $v(\mathcal{S} \cup \{i\}) = v(\mathcal{S} \cup \{j\})$ for all
      $\mathcal{S} \subseteq N \setminus \{i,j\}$, then $\phi_i = \phi_j$
\item \textbf{Dummy}: If $v(\mathcal{S} \cup \{i\}) = v(\mathcal{S})$ for all $\mathcal{S}$, then $\phi_i = 0$
\item \textbf{Linearity}: For two games $v_1, v_2$: $\phi_i(\alpha_1 v_1 + \alpha_2 v_2)
      = \alpha_1 \phi_i(v_1) + \alpha_2 \phi_i(v_2)$
\end{enumerate}
The unique solution is:
\begin{equation}
\phi_i = \sum_{\mathcal{S} \subseteq N \setminus \{i\}}
\frac{|\mathcal{S}|!\,(|N|-|\mathcal{S}|-1)!}{|N|!}
\bigl[v(\mathcal{S} \cup \{i\}) - v(\mathcal{S})\bigr]
\label{eq:shapley}
\end{equation}
The weight $\frac{|\mathcal{S}|!(|N|-|\mathcal{S}|-1)!}{|N|!}$ equals the probability that, in a uniformly
random ordering of all players, exactly the members of $\mathcal{S}$ precede $i$. Equivalently,
$\phi_i$ is the average marginal contribution of $i$ across all orderings.
\citet{lundberg2017unified} apply Shapley values to explain model predictions by taking
players to be input features, and the characteristic function to be the expected model
output given a subset of features held at their factual values:
\begin{equation}
v(\mathcal{S}) = \mathbb{E}\bigl[f(X) \,\big\vert\, X_{\mathcal{S}} = x_{\mathcal{S}}\bigr]
\label{eq:plain_shap_value}
\end{equation}
\noindent where $\bar{\mathcal{S}} = N \setminus \mathcal{S}$ and the missing features
$X_{\bar{\mathcal{S}}}$ are drawn from their distribution \emph{conditional} on
$X_{\mathcal{S}} = x_{\mathcal{S}}$, as in \citeauthor{lundberg2017unified}'s original
formulation. This is not the only reading in use: \citet{janzing2020feature} single
out exactly this choice as contested, and a common alternative (also called ``plain'' or ``interventional'' SHAP in parts of the literature) instead averages
each dropped feature over its own \emph{marginal} distribution $P(X_i)$, treating the
dropped features as mutually independent and independent of $X_{\mathcal{S}}$. The two
coincide whenever $X_{\mathcal{S}}$ and $X_{\bar{\mathcal{S}}}$ are independent, but
differ in general. Throughout this paper, ``plain SHAP'' denotes the conditional form
of Eq.~\eqref{eq:plain_shap_value} unless stated otherwise; the one exception is
\S\ref{sec:obcb_shap}'s Option~B analysis, which switches explicitly to the
marginal-independent form to expose a separate failure mode (undefined values on
coalitions whose marginal average places positive weight on a structurally impossible
combination of feature values), and says so at the point of use.
\end{definition}

\begin{definition}[Causal SHAP]\label{def:causalshap}
Causal SHAP replaces the characteristic function with an interventional
expectation \citep{heskes2020causal}:
\begin{equation}
v_{\text{causal}}(\mathcal{S}) = \mathbb{E}\bigl[f(X) \mid do(X_{\mathcal{S}} = x_{\mathcal{S}})\bigr]
= \int P(X_{\bar{\mathcal{S}}} \mid do(X_{\mathcal{S}} = x_{\mathcal{S}}))\, f(X_{\bar{\mathcal{S}}}, x_{\mathcal{S}})\, dX_{\bar{\mathcal{S}}}
\label{eq:causal_v}
\end{equation}
By the rules of do-calculus, intervening on $X_{\mathcal{S}}$ only affects the distribution of
descendants of $X_{\mathcal{S}}$. For non-descendants, $P(X_j \mid do(X_{\mathcal{S}} = x_{\mathcal{S}})) = P(X_j)$.
Writing $X_{\bar{\mathcal{S}},\,\mathrm{nd}}$ for the features in $\bar{\mathcal{S}}$ that are not
descendants of any member of $\mathcal{S}$, and $X_{\bar{\mathcal{S}},\,\mathrm{d}}$ for those that are,
equation~\eqref{eq:causal_v} is equivalent to:
\begin{equation}
v_{\text{causal}}(\mathcal{S}) = \mathbb{E}_{X_{\bar{\mathcal{S}},\,\mathrm{nd}} \sim P(\cdot),\;
X_{\bar{\mathcal{S}},\,\mathrm{d}} \sim P(\cdot \mid do(X_{\mathcal{S}}=x_{\mathcal{S}}))}\bigl[f(X_{\bar{\mathcal{S}}}, x_{\mathcal{S}})\bigr]
\end{equation}
The Shapley formula~\eqref{eq:shapley} is otherwise unchanged. The method retains all
four Shapley axioms.
\end{definition}

\subsection{Distributional facts}\label{app:signal_distfacts}

From the structural equations,
$\mathrm{Var}(M)=\mathrm{Var}(X)+\mathrm{Var}(\varepsilon_M)=0.25+0.1=0.35$ and
$\mathrm{Var}(Y)=\mathrm{Var}(M)+\mathrm{Var}(\varepsilon_Y)=0.35+0.1=0.45$.
For the covariances,
$\mathrm{Cov}(X,M)=\mathrm{Cov}(X,X+\varepsilon_M)=\mathrm{Var}(X)=0.25$;
$\mathrm{Cov}(X,Y)=\mathrm{Cov}(X,M+\varepsilon_Y)=\mathrm{Cov}(X,M)=0.25$;
and
$\mathrm{Cov}(M,Y)=\mathrm{Cov}(M,M+\varepsilon_Y)=\mathrm{Var}(M)=0.35$.

\subsection{Conditional expectations}\label{app:signal_condexp}

$\mathbb{E}[Y\mid X,M]=M$ since $Y=M+\varepsilon_Y$ with $\varepsilon_Y\perp(X,M)$.
For $\mathbb{E}[M\mid X,Y]$, the covariance matrix of $(X,Y)$ is
$\Sigma_{XY}=\bigl(\begin{smallmatrix}0.25&0.25\\0.25&0.45\end{smallmatrix}\bigr)$
with $\det(\Sigma_{XY})=0.05$, so
$\Sigma_{XY}^{-1}=20\bigl(\begin{smallmatrix}0.45&-0.25\\-0.25&0.25\end{smallmatrix}\bigr)$;
the regression coefficients are
$[\mathrm{Cov}(M,X),\,\mathrm{Cov}(M,Y)]\,\Sigma_{XY}^{-1}
=[0.25,\,0.35]\cdot 20\bigl(\begin{smallmatrix}0.45&-0.25\\-0.25&0.25\end{smallmatrix}\bigr)
=[0.5,\,0.5]$,
giving $\mathbb{E}[M\mid X,Y]=0.5+0.5(X-0.5)+0.5(Y-0.5)=0.5X+0.5Y$.
For $\mathbb{E}[X\mid M,Y]$, by $d$-separation, $X\perp Y\mid M$ in the chain
$X\!\to\!M\!\to\!Y$, so $\mathbb{E}[X\mid M,Y]=\mathbb{E}[X\mid M]
=0.5+\tfrac{\mathrm{Cov}(X,M)}{\mathrm{Var}(M)}(M-0.5)
=0.5+\tfrac{0.25}{0.35}(M-0.5)=0.5+\tfrac{5}{7}(M-0.5)$.

\subsection{Plain SHAP}\label{app:signal_plain}

\textit{Target $Y$, features $\{X, M\}$.} First, compute the coalition values:
\begin{align}
v(\emptyset) &= \mathbb{E}[M] = 0.5 \\
v(\{X\}) &= \mathbb{E}[M \mid X=1] = 1 \\
v(\{M\}) &= \mathbb{E}[M \mid M=1] = 1 \\
v(\{X,M\}) &= f_Y(1,1) = 1
\end{align}

Next, apply equation~\eqref{eq:shapley} with $|N|=2$:
\begin{equation}
\phi_X = \tfrac{1}{2}(1 - 0.5) + \tfrac{1}{2}(1 - 1) = 0.25
\qquad
\phi_M = \tfrac{1}{2}(1 - 0.5) + \tfrac{1}{2}(1 - 1) = 0.25
\end{equation}

\textit{Target $M$, features $\{X,Y\}$.}
$v(\emptyset)=0.5$;\enspace $v(\{X\})=1$;\enspace
$v(\{Y\})=0.5\,\mathbb{E}[X\mid Y{=}1]+0.5=0.889$
(using $\mathbb{E}[X\mid Y{=}1]=0.5+\tfrac{0.25}{0.45}(0.5)=0.778$);\enspace
$v(\{X,Y\})=1$.
\begin{align}
\phi_X &= \tfrac{1}{2}(1-0.5)+\tfrac{1}{2}(1-0.889)=0.306, \\
\phi_Y &= \tfrac{1}{2}(0.889-0.5)+\tfrac{1}{2}(1-1)=0.194.
\end{align}

\textit{Target $X$, features $\{M,Y\}$.}
$v(\emptyset)=0.5$;\enspace $v(\{M\})=0.5+\tfrac{5}{7}(0.5)=0.857$;\enspace
$v(\{Y\})=\mathbb{E}[X\mid Y{=}1]=0.778$ (as above);\enspace
$v(\{M,Y\})=\mathbb{E}[X\mid M{=}1]=0.857$ (since $X\perp Y\mid M$).
\begin{align}
\phi_M &= \tfrac{1}{2}(0.857-0.5)+\tfrac{1}{2}(0.857-0.778)=0.219, \\
\phi_Y &= \tfrac{1}{2}(0.778-0.5)+\tfrac{1}{2}(0.857-0.857)=0.139.
\end{align}

\subsection{Causal SHAP}\label{app:signal_causal}

We apply Causal SHAP (Definition~\ref{def:causalshap}) and compare against
plain SHAP.

\textit{Target $Y$, features $\{X,M\}$.}
$v_{\mathrm{causal}}(\emptyset)=0.5$;\enspace
$v_{\mathrm{causal}}(\{X\})=\mathbb{E}[M\mid do(X{=}1)]=1$
(M is downstream of X: $P(M\mid do(X{=}1))=\mathcal{N}(1,0.1)$);\enspace
$v_{\mathrm{causal}}(\{M\})=1$ ($do(M{=}1)$ sets $f_Y=M=1$ directly,
$P(X\mid do(M{=}1))=P(X)$ since X has no causal parents);\enspace
$v_{\mathrm{causal}}(\{X,M\})=1$.
All values match plain SHAP, so $\phi_X=\phi_M=0.25$.

\textit{Target $M$, features $\{X,Y\}$.}
$v_{\mathrm{causal}}(\emptyset)=0.5$;\enspace
$v_{\mathrm{causal}}(\{X\})=1$ ($Y$ is downstream of $X$, same as plain SHAP);\enspace
$v_{\mathrm{causal}}(\{Y\})=\mathbb{E}_{X\sim P(X)}[0.5X+0.5{\cdot}1]=0.75$
($Y$ does not cause $X$, so $P(X\mid do(Y{=}1))=P(X)$;
differs from plain SHAP's $0.889$);\enspace
$v_{\mathrm{causal}}(\{X,Y\})=1$.
\begin{align}
\phi_X &= \tfrac{1}{2}(1-0.5)+\tfrac{1}{2}(1-0.75)=0.375, \\
\phi_Y &= \tfrac{1}{2}(0.75-0.5)+\tfrac{1}{2}(1-1)=0.125.
\end{align}

\textit{Target $X$, features $\{M,Y\}$.}
$v_{\mathrm{causal}}(\emptyset)=0.5$;\enspace
$v_{\mathrm{causal}}(\{M\})=0.857$ ($Y$ is downstream of $M$ but $f_X$
does not depend on $Y$ since $X\perp Y\mid M$, so
$\mathbb{E}_{Y\sim P(Y\mid do(M{=}1))}[f_X(1,Y)]=f_X(1,\cdot)=0.857$;
same as plain SHAP);\enspace
$v_{\mathrm{causal}}(\{Y\})=\mathbb{E}_{M\sim P(M)}[f_X(M,1)]=0.5$
($M$ is not a descendant of $Y$, so $P(M\mid do(Y{=}1))=P(M)$
and $\mathbb{E}[f_X(M,1)]=0.5+\tfrac{5}{7}(\mathbb{E}[M]-0.5)=0.5$;
differs from plain SHAP's $0.778$);\enspace
$v_{\mathrm{causal}}(\{M,Y\})=0.857$.
\begin{align}
\phi_M &= \tfrac{1}{2}(0.857-0.5)+\tfrac{1}{2}(0.857-0.5)=0.357, \\
\phi_Y &= \tfrac{1}{2}(0.5-0.5)+\tfrac{1}{2}(0.857-0.857)=0.
\end{align}

\subsection{PCI}\label{app:signal_pci}

We work at the factual instance $X^\star = M^\star = Y^\star = 1$ with the
absolute-difference score (Example~\ref{ex:absolute_score}):
$ci(y^s, y^n, y^\star) = |y^n - y^\star| - |y^s - y^\star|$. Both worlds run
the PSCM with fresh exogenous noise drawn from $P_{\mathbf{U}}$: the necessity
world replaces the suspect $A$ with $A' \sim P(A)$ and propagates; the
sufficiency world restores it to $a^\star$ and propagates. For target $B$,
$\mathbf{S}$ is the two input features of $B$ and $\mathbf{W}$ the remaining
input; $\Gamma_s$ is uniform over the three non-empty subsets of $\mathbf{S}$
and $\Gamma_w$ uniform over $2^{\mathbf{W}}$.

\textit{$R(X \rightsquigarrow Y)$.}
$\mathbf{S} = \{X, M\}$, $\mathbf{W} = \{M\}$, $w^\star = m^\star = 1$, and
$X' \sim \mathcal{N}(0.5,\, 0.25)$. Three of the four accepted
$(\mathbf{C}, \mathbf{T})$ pairs intervene on $X$, each with weight
$\tfrac{1}{4}$; the fourth, $(\mathbf{C}{=}\{M\}, \mathbf{T}{=}\emptyset)$,
does not touch $X$ and contributes nothing. The
witness pair $(\mathbf{C}{=}\{X\}, \mathbf{T}{=}\{M\})$ pins $M$ at $m^\star$
in both worlds, which equalizes their distributions and zeros out $\bar c$;
the other two pairs let $M$ propagate freely. Computing
$\mathbb{E}[|Y^n - 1|]$ and $\mathbb{E}[|Y^s - 1|]$ in closed form via the
folded-normal formula gives:
\begin{center}
\begin{tabular}{lccc}
\toprule
$(\mathbf{C}, \mathbf{T})$ & $\mathbb{E}[|Y^n - 1|]$ & $\mathbb{E}[|Y^s - 1|]$ & $\bar c$ \\
\midrule
$(\{X\}, \emptyset)$              & $0.677$ & $0.357$ & $0.320$ \\
$(\{X\}, \{M\})$ \emph{[witness]} & $0.252$ & $0.252$ & $0.000$ \\
$(\{X, M\}, \emptyset)$           & $0.677$ & $0.252$ & $0.425$ \\
\bottomrule
\end{tabular}
\end{center}
Combining at weight $\tfrac{1}{4}$ each,
\[
R(X \rightsquigarrow Y \mid \mathbf{W}{=}\{M\})
= \tfrac{1}{4}(0.320 + 0 + 0.425) \approx 0.186.
\]

\textit{$R(Y \rightsquigarrow X)$.}
$X$ is exogenous: no $do$-intervention on $\{M, Y\}$ alters its structural
equation, so necessity and sufficiency distributions coincide and
$R(Y \rightsquigarrow X \mid \mathbf{W}{=}\{M\}) = 0$.

\textit{Other cells.}
The remaining entries of Table~\ref{tab:signal_desid} (D-MY, D-XM, D-YM, D-MX,
D-MXY, and the $\mathbf{W} = \emptyset$ column) follow the same closed-form
machinery; full per-pair computations are in the companion notebook
\nb{signal\_mediation\_computations}.

\section{Setup for the Desert-Traveller PCI Computation}\label{app:desert}

This appendix records the structural model and the full PCI specification behind
the desert-traveller comparison of
Section~\ref{sub:pearl_actual_cause_prob}. The basic-variant results
(Table~\ref{tab:pci-basic-dt}) and the weak-poison results
(Table~\ref{tab:pci-weak-dt}) are produced by running the PCI Monte Carlo
search of Section~\ref{sec:causal_impact} on the scenario-specific model below,
with the component choices listed here; a parallel hand-rolled enumeration matches every
value within Monte Carlo noise (companion notebook
\nb{desert\_traveler}).

\subsection{Structural model}\label{app:desert_model}

To compare directly with Pearl, we run PCI under the same forensic
conditioning: one structural model per scenario, with $u$ pinned at the
scenario's forensic realisation. The endogenous variables are $X, P, c, d,
Y \in \{0,1\}$ with
\[
c = P\,(u' \vee X'), \qquad d = X\,(u \vee P'), \qquad y = c \vee d,
\]
where primes denote Boolean complement ($u'=1-u$, etc.), $c$ indicates the cyanide
path operating (poisoned water drunk) and $d$
the dehydration path operating (drank from empty canteen).

\subsection{PCI components}\label{app:desert_components}

The PCI components, all specified in advance:
\begin{itemize}\setlength\itemsep{0.1em}
    \item Suspects $\mathbf{S} = \{X, P\}$; candidate causes are the
        singletons $\{X\}$ and $\{P\}$.
    \item Witness pool $\mathbf{W} = \{c, d\}$ (the mediators); four subsets
        $\mathbf{T} \subseteq \mathbf{W}$, weighted by $\Gamma$
        cardinality-uniform (uniform on $|\mathbf{T}|$, then uniform within
        each cardinality class).
    \item We do \emph{not} intervene on or hard-code the non-cause suspect
        (the one not in $\mathbf{C}$); it follows the SCM under the noise
        prior on its own coin, consistent with the formal definition's
        requirement that we intervene only on $\mathbf{C}$ and $\mathbf{T}$
        (Definition~\ref{def:jointnecsuf}).
    \item Alternative-value distribution $\Delta$ deterministically flips
        $1 \to 0$.
    \item Noise distribution $P_{\mathbf{U}}$ is the forensic posterior, a
        point mass at the scenario's $u$. We do \emph{not} condition on the
        outcome $Y$ itself; the forensic evidence sits above $Y$ in the SCM,
        so conditioning on it is not the ``fully conditional'' anti-pattern of
        Section~\ref{sec:causal_impact}.
\end{itemize}

For the weak-poison variant (Section~\ref{sub:weak-poison}) the only mechanical
changes are that the witness pool becomes $\mathbf{W} = \{c, d, v_C\}$ (the new
``cyanide-was-fatal'' mediator $v_C$ is included) and that the Scenario~A
forensic posterior pins both $u = 0$ and $\xi = 1$; Scenario~B's posterior pins
$u = 1$, with $\xi$ structurally irrelevant since the cyanide path is dormant.

\section{Synthetic Evaluation Across Linear, Overdetermined, and Undercutting Archetypes}
\label{sub:synthetic}

This appendix gives the full model, estimator, and per-row analysis behind the synthetic-archetype result summarised in \cref{sec:evaluation}.

Section~\ref{sec:shap_examples} has compared \ourapproach
with SHAP and Causal SHAP on two small examples and shown where the SHAP
framework's desiderata fail. Those examples cover only two causal patterns:
overdetermination through a gating mediator (OBCB) and correlated mediation
(signal). This appendix stress-tests \ourapproach across a wider catalogue of
patterns, with an analytic ground truth available throughout: the question is
whether \ourapproach correctly separates linear, overdetermined, and preempted
contributions on a single structural model, and stays at the noise floor on a
variable that plays no role.

To evaluate \ourapproach in a setting that exhibits multiple distinct causal patterns
simultaneously, we construct a synthetic structural causal model carrying three archetypes,
linear necessary-and-sufficient, overdetermined-not-necessary, and preempted
(the undercutting archetype named in the section title, implemented here via a
gate),
together with an irrelevance control.\footnote{The classical motivating example is
Sally and Billy throwing stones at a bottle: Sally's stone arrives first and shatters it;
Billy's would have done the same had Sally's not preempted it. The case is canonical in
the actual-causality literature \citep{actualCausalityHalpern} and motivates the preempted
archetype below; we use generic letters rather than personal names to keep variable
labelling tight.}

The model has six independent root variables and three deterministic branches feeding the
outcome $E$ (the generic outcome $Y$ of Section~\ref{sec:causal_impact}, written $E$
here to match the model diagram; its factual value is $y^\star$ and its
necessity/sufficiency worlds $Y^n, Y^s$):
\begin{align*}
L_1, L_2 &\sim \mathcal{N}(0, 1), \quad O_1, O_2 \sim \mathcal{N}(1, 1), \quad
  P, D \sim \mathcal{N}(0, 1), \\
\mathrm{lin}      &= 5 L_1 + 10 L_2, \\
\mathrm{od}       &= \max(5 O_1,\, 5 O_2), \\
\mathrm{gate}     &= \mathbbm{1}\{\lvert L_2 \rvert \le \tau\}, \\
\mathrm{p\_branch}&= 5 P \cdot \mathrm{gate}, \\
E &= \mathrm{lin} + \mathrm{od} + \mathrm{p\_branch}.
\end{align*}
We pick $\tau = 0.674$ so the gate is on / off about half the time
($\Pr(\lvert L_2 \rvert \le 0.674) \approx 0.5$ for $L_2 \sim \mathcal{N}(0,1)$).
The DAG is shown in Figure~\ref{fig:archetypes_dag}. The variable $D$ is sampled but never
enters $E$, serving as an irrelevance control, so any non-zero score for $D$ is the
noise floor of the procedure.

\begin{figure}[htbp]
\centering
\includegraphics[width=0.78\linewidth]{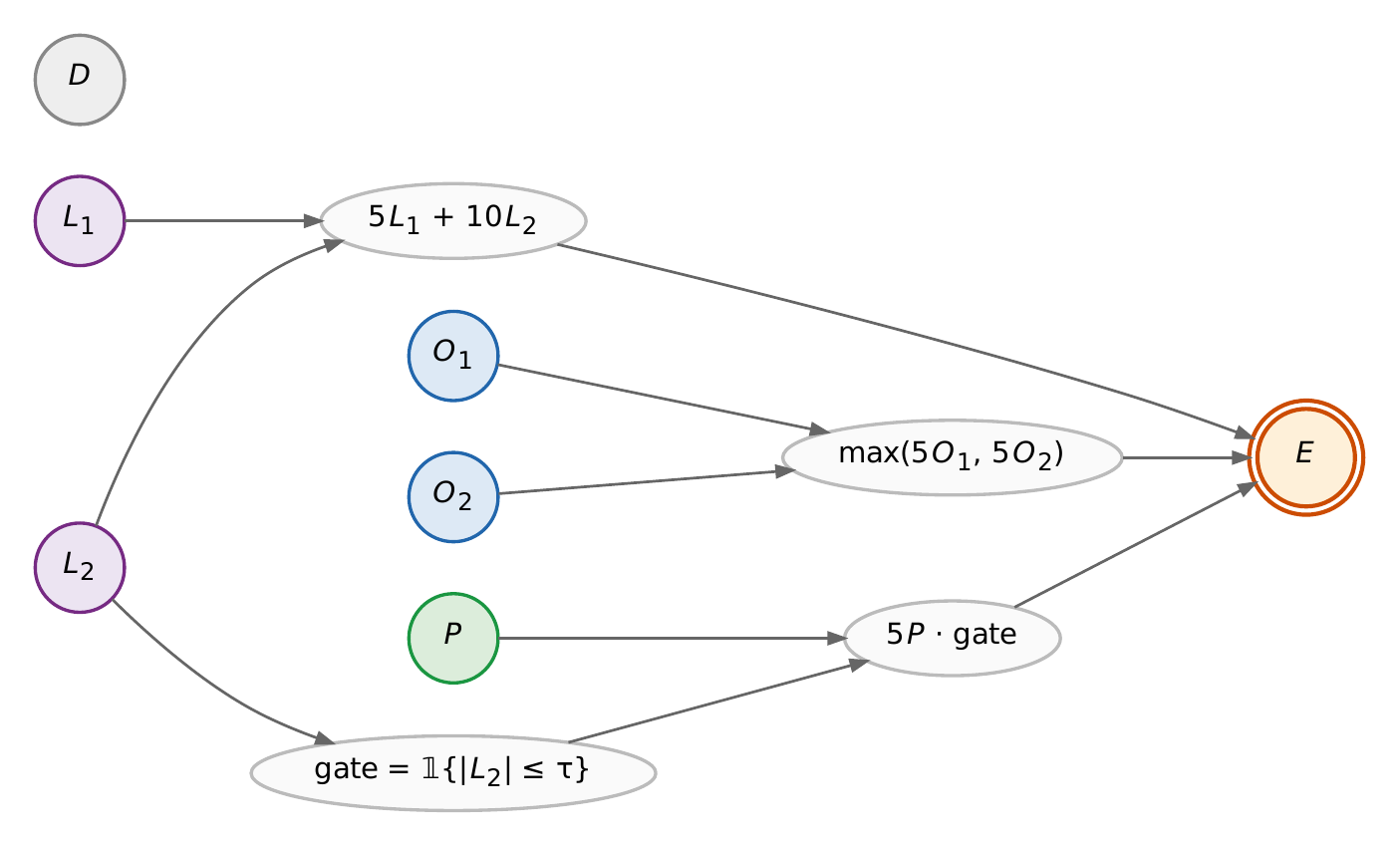}
\caption{Synthetic archetype model. Roots (circles) are independent draws:
$L_1, L_2 \sim \mathcal{N}(0,1)$ (linear, purple); $O_1, O_2 \sim \mathcal{N}(1,1)$
(overdetermined via $\max$, blue); $P \sim \mathcal{N}(0,1)$ (gated by
$\lvert L_2 \rvert \le \tau$, green); $D \sim \mathcal{N}(0,1)$ (irrelevance control,
grey). Mediators (white ellipses) carry the deterministic functions of their parents.
Outcome $E$ (orange double-circle) is the sum
$\mathrm{lin} + \mathrm{od} + \mathrm{p\_branch}$. The variable $D$ has no outgoing edges
by construction.}
\label{fig:archetypes_dag}
\end{figure}

The \ourapproach evaluation treats the six roots as the suspect set; deterministic
mediators are not suspects. Restricting suspects to roots loses no causal information here
since downstream values are determined by the roots (formal statement deferred to
Section~\ref{sec:causal_impact}).

\paragraph{Causal impact, decomposed.}
We follow the impact function $ci(y^s, y^n, y^\star) = \lvert y^n - y^\star \rvert -
\lvert y^s - y^\star \rvert$ defined in Section~\ref{sec:causal_impact}, but report its
two components separately as $ci_N$ and $ci_S$:
\begin{align*}
ci_N(V) &= \mathbb{E}\!\left[\,\lvert Y^n - y^\star \rvert \,\big|\, V \text{ intervened}\right]
  \quad \text{(necessity)}, \\
ci_S(V) &= -\,\mathbb{E}\!\left[\,\lvert Y^s - y^\star \rvert \,\big|\, V \text{ pinned at factual}\right]
  \quad \text{(sufficiency)},
\end{align*}
with $ci(V) = ci_N(V) + ci_S(V)$. Higher $ci_N$ means more necessary; higher $ci_S$
(closer to zero) means more sufficient. The raw scores carry a co-intervention baseline
that does not vanish for $D$, so we report \emph{excess} over $D$ within each case:
\[
\Delta_N(V) = ci_N(V) - ci_N(D),
\qquad
\Delta_S(V) = ci_S(V) - ci_S(D).
\]
$D$ is the noise floor; any other variable's $\Delta$ is its causal signal above it.

\paragraph{Two factual cases.}
We instantiate the model with one batch of $500$ events and select two factual
observations to read all archetypes off.

\textit{Case 1 (preempted, contestable $O$ pair).} Filter:
$\lvert L_2 \rvert > \tau$ (gate off, so $\mathrm{p\_branch} = 0$ regardless of $P$) and
$\lvert O_1 - O_2 \rvert < \kappa$ with $\kappa = 1$ (the two overdetermined contributors
are close: the winner is barely the winner; we call such a pair \emph{contestable},
since a small perturbation to either contributor could flip which one wins
$\max(5O_1, 5O_2)$). Linear N+S on the $L$s, between-variable
sufficiency asymmetry on a contestable $O$ pair, and noise floor for both $P$ and $D$
should all be readable from this single observation.

\textit{Case 2 (unpreempted, dominant $O$ winner).} Filter:
$\lvert L_2 \rvert \le \tau$ (gate on, $P$ branch active) and
$\lvert O_1 - O_2 \rvert > \kappa$. The dominant winner makes the within-variable $S$-not-$N$ gap visible: the winner is large enough that alternatives
rarely beat it (high $\Delta_S$), and the loser provides a partial floor when the winner
is intervened (reduced $\Delta_N$). A contestable pair cannot show this within-variable
gap, since pinning either one still lets the partner flip the max.

The two cases together test the structural-role-not-identity claim: the same variable $P$
should sit at the noise floor in Case~1 and mirror $L_1$ in Case~2.

\paragraph{Desiderata and measurements.}
Table~\ref{tab:archetypes_desiderata} pairs each archetype expectation with the formal
inequality on the excess scores $\Delta_N, \Delta_S$, the measured value at the
production sample budget ($N = 20\,000$ Monte Carlo draws per intervention), and a
bootstrap 95\% confidence interval (4\,000 resamples, joint across $V$ and $D$ so
per-sample correlations are preserved). The noise-floor variable $D$ itself drifts
across the two cases: co-intervention variance is structurally larger in the gate-on
regime than in the gate-off regime, so $\mathrm{ci}_N(D)$ shifts between regimes by a
non-negligible amount. This is exactly why every archetype claim is stated on the
excess scores $\Delta_N, \Delta_S$ rather than raw ci values: subtracting $D$ within
each case normalises out the regime-dependent floor, so the archetype patterns can be
read off stably regardless of $D$'s absolute drift.

\paragraph{The threshold comes from the data.}
The directional desiderata of the form $\Delta > \epsilon$ or $\lvert \Delta \rvert <
\epsilon$ all depend on a single tolerance $\epsilon$. We do not hand-pick it; we
derive it from the Monte Carlo budget. For every $\Delta$ score appearing in any
desideratum we bootstrap a standard error $\sigma_V$ and take
\[
\epsilon \;=\; 1.645 \cdot \sigma_{\max},
\qquad
\sigma_{\max} \;=\; \max_{V,\text{case}}\, \sigma_V,
\]
the one-sided $95\%$ Z-quantile times the largest standard error across all measured
$\Delta$s. Under the null hypothesis $\Delta = 0$, fewer than $5\%$ of measurements
would exceed $\epsilon$; a $\Delta$ that clears $\epsilon$ is therefore not Monte Carlo
chatter at the chosen budget. At $N = 20\,000$ samples we measure $\sigma_{\max} \approx
0.092$, giving $\epsilon \approx 0.151$. Smaller budgets give a noisier $\epsilon$ that
defeats the point (e.g.\ at $N = 5\,000$, $\sigma_{\max} \approx 0.18$ would push
$\epsilon$ into the signal range); this fixes the production sample size for the
section. Ten of the twelve desiderata we considered survive this
threshold; the two that do not (cross-regime $P$ flip, and quantitative match of
$\Delta_N(P)$ to $\Delta_N(L_1)$ in Case~2) are discussed below as diagnostics.

\begin{table}[ht]
\centering
\footnotesize
\setlength{\tabcolsep}{4pt}
\begin{tabular}{r p{7.5cm} r r r}
\toprule
\# & Formal desideratum & Measured & 95\% CI & Result \\
\midrule
1  & $\Delta_N(L_2)$ is the largest excess necessity, Case~1
   & $+2.129$ & $[+1.953,\,+2.300]$ & \checkmark \\
2  & $\Delta_N(L_2)$ is the largest excess necessity, Case~2
   & $+0.642$ & $[+0.476,\,+0.813]$ & \checkmark \\
3  & $\Delta_S(O_w) - \Delta_S(O_l) > \epsilon$, Case~1 (contestable)
   & $+0.347$ & $[+0.216,\,+0.473]$ & \checkmark \\
4  & $\Delta_S(O_w) - \Delta_S(O_l) > \epsilon$, Case~2 (dominant)
   & $+0.529$ & $[+0.387,\,+0.667]$ & \checkmark \\
5  & $\lvert \Delta_N(O_w) - \Delta_N(O_l) \rvert < \epsilon$, Case~1
   & $-0.093$ & $[-0.266,\,+0.081]$ & \checkmark \\
6  & $\Delta_S(O_w) - \Delta_N(O_w) > \epsilon$, Case~2
       \emph{(within-variable $S$-not-$N$ signature)}
   & $+0.487$ & $[+0.256,\,+0.712]$ & \checkmark \\
7  & $\Delta_N(L_2) - \Delta_N(O_w) > \epsilon$, Case~2
   & $+1.082$ & $[+0.915,\,+1.251]$ & \checkmark \\
8  & $\lvert \Delta_N(P) \rvert < \epsilon$, Case~1 (gate off)
   & $+0.112$ & $[-0.058,\,+0.280]$ & \checkmark \\
9  & $\Delta_N(L_1) - \Delta_N(P) > \epsilon$, Case~1
   & $+0.670$ & $[+0.499,\,+0.843]$ & \checkmark \\
10 & $\Delta_N(P) > \epsilon$, Case~2 (gate on)
   & $+0.250$ & $[+0.079,\,+0.422]$ & \checkmark \\
\bottomrule
\end{tabular}
\caption{Desiderata satisfaction on the synthetic archetype model.
Threshold $\epsilon = 1.645 \cdot \sigma_{\max} \approx 0.151$ is the smallest $\Delta$
gap that, at $N = 20\,000$ samples, exceeds Monte Carlo noise at one-sided $95\%$
confidence. Subscripts $w/l$ denote the winner / loser of the
$\max(5 O_1, 5 O_2)$ branch in the given case. Per-row check identifiers and the
underlying numerical computations are listed in the companion notebook
\nb{responsibility\_archetypes}.}
\label{tab:archetypes_desiderata}
\end{table}

Figure~\ref{fig:eval_archetypes} visualises the same ten claims as a forest
plot: each row carries a plain-language statement of the structural expectation; the
dot is the point estimate of the relevant $\Delta$ score (or difference of $\Delta$
scores); the horizontal bar is its bootstrap $95\%$ CI; and the green band is the row's
\emph{pass region}, i.e.\ the set of $\Delta$ values that would confirm that row's
claim given $\epsilon$. The pass region depends on the claim type: $\Delta > \epsilon$
claims pass to the right of $\epsilon$; $\lvert \Delta \rvert < \epsilon$ closeness
claims pass inside the strip $(-\epsilon, +\epsilon)$; rank claims (\#1 and \#2) pass
to the right of $0$, since they compare $\Delta_N(L_2)$ against the largest competing
suspect. We count a row as confirmed when the point estimate lies in its green band; we show the
CI separately so the reader can see when a measurement is statistically tight
and when its CI crosses the pass boundary. All ten dots fall inside their pass bands.
The tightest row is \#10 ($\Delta_N(P) > \epsilon$ in Case~2, gate on): the point
estimate $+0.250$ clears $\epsilon \approx 0.151$ by roughly one $\sigma$, with its CI
lower edge at $+0.079$ still inside the band on the right side of zero.

\paragraph{Diagnostics that do not survive the bootstrap-derived threshold.}
Two further claims about the preempted variable $P$
fail at this threshold; we report them as diagnostics rather than as numbered
desiderata. (D1) The cross-regime flip
$\Delta_N(P,\text{Case~2}) - \Delta_N(P,\text{Case~1})$ measures $+0.139$ with CI
$[-0.081, +0.365]$: the CI straddles zero, so we cannot statistically resolve $P$'s
rise across regimes on its own at the production budget. The qualitative content of
this claim is already carried by desiderata \#8 (P at the noise floor in Case~1) and
\#10 (P above $\epsilon$ in Case~2): if both pass, the regime flip is a corollary, not
an independent claim. (D2) The quantitative match
$\lvert \Delta_N(P) - \Delta_N(L_1) \rvert < \epsilon$ in Case~2: the signed gap
$\Delta_N(P) - \Delta_N(L_1)$ measures $-0.267$ with
CI $[-0.441, -0.106]$; the point estimate's magnitude $0.267$ lies above $\epsilon$, though
the interval's near edge ($0.106$) dips below it. At this sample budget,
gated $P$ does not quantitatively mirror ungated $L_1$; what we can claim is the
weaker, qualitative statement that both clear the noise floor in Case~2, which is
exactly desideratum \#10 for $P$ and built into Case~2 by construction for $L_1$.

The archetype evaluation establishes that \ourapproach
recovers the right qualitative pattern on every standard causal motif against an
analytic ground truth. \cref{sec:DCE} now turns to a complementary
continuous-input comparison: how does \ourapproach behave next to gradient-based
attribution (Differential Causal Effect of \citealp{butler2022differential}) when
the same continuous cause is rescaled or moved out of the steep region of the
response surface, where the gradient itself collapses but causal responsibility
arguably should not?

\section{Comparison to Differential Causal Effect}\label{sec:DCE}

This appendix details the Differential Causal Effect comparison summarised in \cref{sec:evaluation}.

Having tested \ourapproach against ground truth in
\cref{sub:synthetic}, we now compare it with gradient-based
attribution on a continuous-input credit-limit example.

A natural baseline for continuous-input attribution is to read causal influence off
the gradient of an interventional mean function. The motivation is direct:
$\mathbb{E}\!\bigl[Y\mid do(X{=}1)\bigr]-\mathbb{E}\!\bigl[Y\mid do(X{=}0)\bigr]$
is the binary ATE, and for a continuous cause one ``arrives at derivative
calculus'' \citep[Section~7.4.3]{ness2025causal} by letting the contrast shrink:
$\partial_x\,\mathbb{E}\!\bigl[Y\mid do(X{=}x)\bigr]$. Generalised to a vector
$\mathbf{x}=(x_1,\ldots,x_d)$ with interventional mean
$m(\mathbf{x})=\mathbb{E}\!\bigl[Y_{\mathbf{X}=\mathbf{x}}\bigr]$, the
\emph{causal gradient} is
\[
\nabla_{\mathbf{x}}\,m(\mathbf{x})
=\Bigl(\partial_{x_1}\mathbb{E}\!\bigl[Y_{\mathbf{X}=\mathbf{x}}\bigr],\;\ldots,\;
        \partial_{x_d}\mathbb{E}\!\bigl[Y_{\mathbf{X}=\mathbf{x}}\bigr]\Bigr),
\]
and each partial is the \emph{differential causal effect} (DCE) of a single
variable at $\mathbf{x}$. \citet{butler2022differential} introduced this
quantity in the Gaussian Process regression setting, where, with the posterior
expected mean expressible as $\sum_n k(x,x^{(n)})\alpha_n$, the DCE has the
closed form $\partial_{x_i}\hat{F}=\sum_n \partial_{x_i}k(x,x^{(n)})\alpha_n$.

\ourapproach and DCE are not direct competitors (DCE estimates a rate of
change, \ourapproach estimates individual-level responsibility), but
\ourapproach inherits the territory of continuous attribution, so a contrast
on a single example is informative. We highlight the differences:
\begin{enumerate}
\item DCE requires $y=F(\mathbf{X},\epsilon)$ with $F$ differentiable;
  \ourapproach requires only that the causal model be sampleable, and the
  inference algorithm for the expanded model is the user's choice (we use Monte
  Carlo throughout).
\item DCE inherits the units of both $\mathbf{X}$ and $Y$, so cross-feature
  comparison is unit-sensitive. The \ourapproach score integrates over the
  necessity- and sufficiency-world densities, so changes to input scale leave
  the score essentially unchanged.\footnote{This unit-invariance is contingent
  on how the alternative-value distribution $\Delta$ is specified. The
  $\epsilon$-excised $\Delta$ of Section~\ref{sec:causal_impact} is invariant to
  a reparameterisation of the input scale precisely when $\epsilon$ is fixed by
  the \emph{probability mass} excised around the factual value (equivalently, in
  standardised units), as in the ROPE/effect-size reading given there, rather
  than as an absolute distance in the raw units of $\mathbf{X}$: an
  absolute-distance $\epsilon$ held fixed across a rescaling would itself become
  unit-dependent and reintroduce the very sensitivity PCI otherwise avoids.}
\item DCE varies abruptly with the local geometry of $F$. The \ourapproach
  score depends on where the factual value sits relative to the
  necessity/sufficiency densities and is therefore non-zero in flat regions
  while remaining locally informative.
\end{enumerate}

\begin{example}[credit-limit assignment]\label{ex:age}
We model an applicant's approved credit limit as a function of \emph{age} and
the \emph{time of application}. The deterministic component is a sigmoidal
rise in age with a small sinusoidal modulation in time-of-day; intuitively,
age should dominate. We consider four settings on two axes: time measured in
hours vs.\ minutes (a units rescaling that should not affect attribution), and
an age prior centred at $30$ vs.\ $40$ (a genuine change in what counts as
unusual). The deterministic component is shown in
Figure~\ref{fig:internal}. Computational details and code are in
\texttt{docs/source/gradient\_based\_attribution.ipynb}.
\end{example}

\begin{figure}[t]
    \centering
    \includegraphics[width=\linewidth]{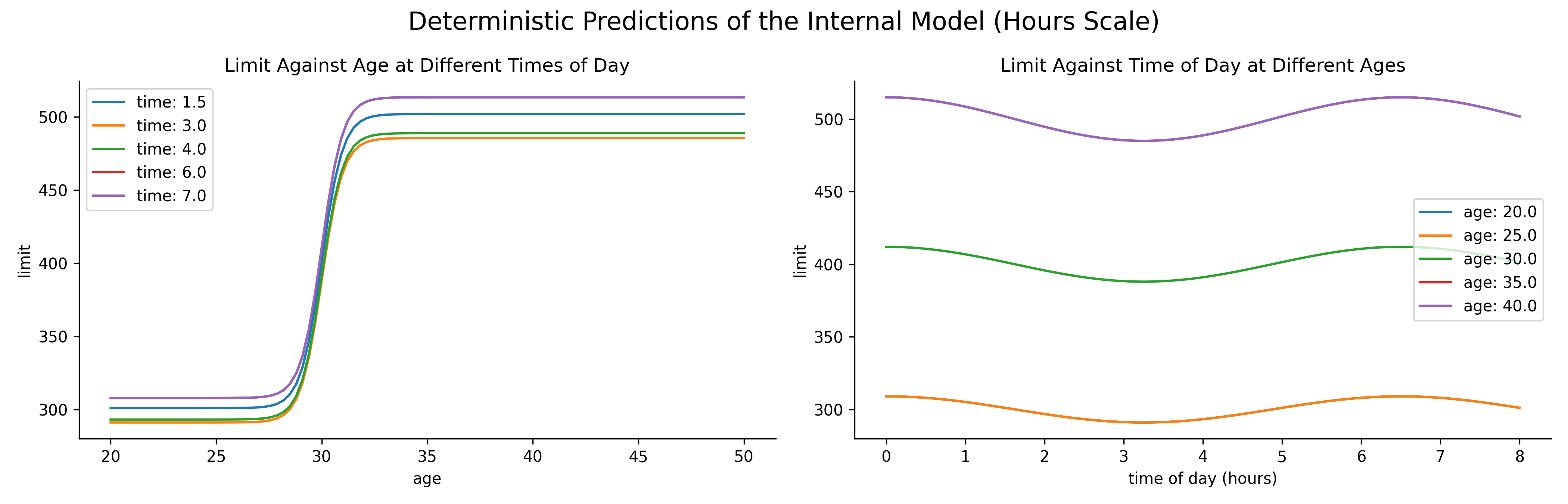}
    \caption{Deterministic component of the credit-limit model
    (Example~\ref{ex:age}), on the hours scale: limit vs.\ age at several times
    of day (left) and limit vs.\ time of day at several ages (right). Age
    dominates the limit, with a small sinusoidal modulation from time of day;
    the minute scale is identical modulo a linear reparameterisation (verified
    in the notebook).}\label{fig:internal}
\end{figure}

\paragraph{\ourapproach scores.} We instantiate $ci$ as the absolute-difference
score from Example~\ref{ex:absolute_score}, $ci(y^s,y^n,y^\star)=|y^n-y^\star|-|y^s-y^\star|$,
and run the search of Section~\ref{sec:causal_impact} on each setting,
conditioning on age and on time of application as singleton suspects.
Figure~\ref{fig:mechanism} shows the mechanism for the canonical setting
(age-$30$ prior, hours): when age is the active suspect the necessity-world
outcome tracks the deterministic curve while the sufficiency-world outcome
stays near factual, and the same picture, with a smaller displacement, holds
for time of application. Table~\ref{tab:dce_means} reports the mean total
scores across all four settings. Two patterns stand out. First, age outscores
time of application in every setting, by roughly a factor of $3/2$, matching the
intuition that the age sigmoid moves the outcome by a much larger absolute
amount than the daily modulation does. Second, the scores are essentially
unchanged between the hours and minutes columns, confirming the unit-invariance
discussed above, whereas the shift from the age-$30$ to the age-$40$ prior
raises every score, since a more unusual factual age means a larger search-space
displacement.

\begin{figure}[t]
    \centering
    \includegraphics[width=\linewidth]{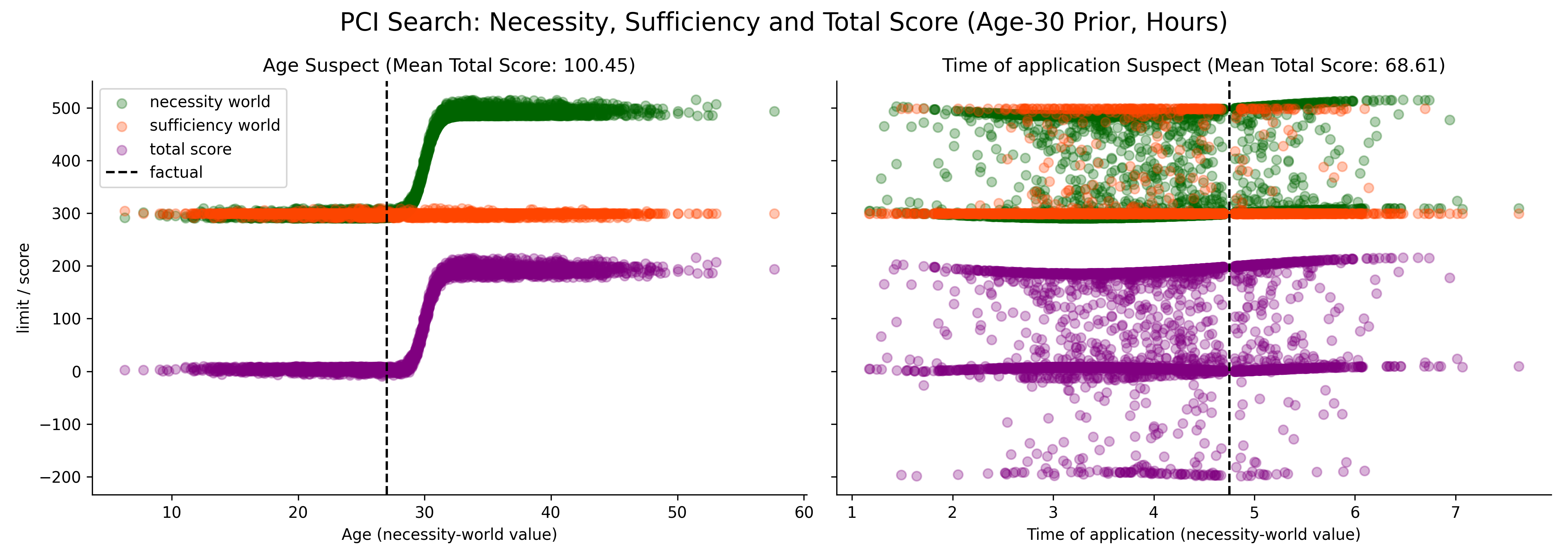}
    \caption{PCI search for the canonical setting (age-$30$ prior, hours), with
    age (left) and time of application (right) as the active suspect.
    Necessity-world (green) and sufficiency-world (orange) limit samples and the
    total score (purple) are plotted against the necessity-world value of the
    suspect; the dashed line marks the factual. The necessity world departs from
    factual along the deterministic response while the sufficiency world stays
    near it; each panel title gives the mean total score.}\label{fig:mechanism}
\end{figure}

\begin{table}[t]
\centering
\caption{Mean \ourapproach total scores for age and time of application across
the four settings (time unit $\times$ age prior), at the factual instance. Age
outscores time in every setting; the hours and minutes columns agree
(unit-invariance), while the age-$40$ prior raises all scores relative to
age-$30$ (a more unusual factual age).}
\label{tab:dce_means}
\vspace{4pt}
\begin{tabular}{l cc cc}
\toprule
& \multicolumn{2}{c}{age-$30$ prior} & \multicolumn{2}{c}{age-$40$ prior} \\
\cmidrule(lr){2-3}\cmidrule(lr){4-5}
Suspect & hours & minutes & hours & minutes \\
\midrule
Age                  & $100.45$ & $97.95$  & $177.87$ & $176.56$ \\
Time of application  & $68.61$  & $66.51$  & $117.97$ & $118.42$ \\
\bottomrule
\end{tabular}
\end{table}

\paragraph{PCI vs.\ DCE across the input grid.}
Figure~\ref{fig:eval_dce} (\cref{sec:evaluation}) contrasts the two methods on
the same grid of factual values (age-$30$ prior); the other three settings
give qualitatively the same picture. Panel (a) plots the difference between
the \ourapproach scores for age and for time of application: it is positive
everywhere, so age is consistently the more responsible variable. Panels (b)
and (c) plot the corresponding difference in DCE magnitudes, on the hours and
minutes scales respectively, and two differences with \ourapproach stand out.
First, on the hour scale the DCE for time of application exceeds the DCE for
age over most of the grid, the opposite of \ourapproach's verdict; only a
narrow band near the sigmoid's inflection point favours age, and only because
the gradient spikes sharply at that single point. Second, switching the time
unit from hours to minutes flattens the time DCE by roughly a factor of $60$,
collapsing almost all of panel (b)'s time-favouring region to near zero in
panel (c) and making the unit sensitivity explicit. Both are direct
consequences of DCE's role as a rate-of-change estimator rather than a
responsibility score, and both make it harder to compare across features and
across reparameterisations.

\FloatBarrier  

\section{Scaling Actual-Cause Computation: An Empirical Comparison}\label{sec:ac_benchmark}

This appendix details the actual-cause scaling benchmark summarised in \cref{sec:evaluation}.

The correspondence in Theorems~\ref{th:ac-exp}--\ref{th:exp-ac} and
Proposition~\ref{prop:filter} is a statement about verdicts: under the stated
hypotheses, AC actual causes coincide with the support of $\sum_{\mathbf{T}}\Phi$.
This appendix reports an empirical companion to the correspondence. We ask whether
the necessity-only PCI kernel of Definition~\ref{def:ackernel}, instantiated with a
finite Monte Carlo estimator, recovers AC verdicts at problem sizes where exact
subset enumeration is no longer tractable. The full implementation, raw outputs and
plotting code are in the companion notebook
\nb{actual\_causality\_benchmark}.

\medskip\noindent\textit{Generalised throwing problem.}
For a size parameter $n$ we replicate the canonical Sally/Billy structure $n$ times
in parallel. Binary suspects $A_1,\dots,A_n,B_1,\dots,B_n\in\{0,1\}$ feed
deterministic mediators
\(W^a_i = A_i,\quad W^b_i = B_i \land \lnot A_i\)
that capture overdetermination via preemption: $B_i$ contributes only if $A_i$ does
not. The outcome is the conjunction $Y = \bigwedge_{i=1}^{n} (W^a_i \lor W^b_i)$, and
the model size grows as $4n+1$ variables. Ground-truth responsibility at site $i$ is
read off during the forward pass: $A_i$ is responsible if $W^a_i=1$, $B_i$ is
responsible if $W^b_i=1$. The structure scales the canonical example without softening it: overdetermination,
undercutting, and witness-mediated symmetry-breaking are present at every site,
while keeping a closed-form ground truth available for evaluation.

The variable partition takes the roots $\{A_i,B_i\}_i$ as suspects and the
deterministic mediators $\{W^a_i,W^b_i\}_i$ as witnesses, with no suspect a
deterministic function of the others. This partition satisfies the closure
condition required by Theorem~\ref{th:ac-exp} (see the Remark on deterministic
intermediates and joint support, which addresses precisely the scenario in which
both a root and its structural descendant are admitted as suspects). The
benchmark is therefore within the scope of the formal correspondence; in
particular, joint full support of the alternative-value distribution holds on the
admissible suspect set $\{A_i,B_i\}_i$ even though the model itself is
deterministic, because witnesses are not themselves part of the alternative-value
distribution.

\medskip\noindent\textit{Exact subset enumeration (AC reference).}
The exact procedure mechanises Definition~\ref{def:ac}: for each non-empty cause
subset $\mathbf{C}\subseteq\{A_i,B_i\}_i$ and each non-empty witness subset
$\mathbf{T}\subseteq\{W^a_i,W^b_i\}_i$, we intervene
$[\mathbf{C}\!\leftarrow\!\mathbf{1}-\mathbf{c}^\star,\mathbf{T}\!\leftarrow\!\mathbf{w}^\star]$,
record whether the outcome flips, and accept $\mathbf{C}$ as actual cause if some
witness $\mathbf{T}$ achieves a flip and no proper subset of $\mathbf{C}$ does.
The search space has $\bigl(2^{2n}-1\bigr)^2$ cause-witness pairs ($9, 225, 3969,
65025$ for $n=1,2,3,4$), which already determines the ceiling reached by the exact
method.

\medskip\noindent\textit{Approximate PCI estimator.}
The PCI estimator instantiates the kernel of Definition~\ref{def:ackernel}
via the Monte Carlo search procedure of Section~\ref{sec:causal_impact}. Algorithm~\ref{alg:pci_thin_search} is the
explicit procedure we use to produce the curves in
Figure~\ref{fig:eval_scaling} and Figure~\ref{fig:ac_search_space}.
In words: for each of $N$ draws it picks a random small set of suspects to flip and a
random set of witnesses to pin at their factual values, then runs the model twice under the
\emph{same} noise, once with the suspects restored to factual (the sufficiency world) and
once with them flipped to their alternatives (the necessity world), and records whether
the flip changed the outcome. Each suspect's score is the fraction of its draws in which
flipping it changed the outcome, and at each site the procedure returns the higher-scoring
of the two rivals $A_i, B_i$ as the cause. The procedure transcribes the \texttt{ThinSearchSampler}
used in the companion notebook \nb{actual\_causality\_benchmark}.

\begin{algorithm}[ht]
\caption{Approximate PCI estimator on the scaled throwing problem.}
\label{alg:pci_thin_search}
\begin{algorithmic}[1]
\Require PSCM $M$; factual world
$(\mathbf{s}^\star,\mathbf{w}^\star,y^\star)$; cardinality bounds
$n_s, n_w$; sample budget $N$; witness flag
$\mathrm{wit}\in\{\text{on},\text{off}\}$
\Ensure per-site verdicts $v_i\in\{A_i,B_i\}$ for $i=1,\dots,n$
\If{$\mathrm{wit} = \text{off}$}
  \State $\mathbf{W} \gets \emptyset$ \Comment{no witness candidates}
\EndIf
\For{each $X\in\mathbf{S}$}
  \State $L[X] \gets [\,]$ \Comment{per-suspect score list}
\EndFor
\For{$j = 1,\dots,N$}
  \State draw $\mathbf{u}_j \sim P_{\mathbf{U}}$
  \State sample $k_s \sim \mathrm{Unif}\{1,\dots,n_s\}$,\quad
         $k_w \sim \mathrm{Unif}\{0,\dots,n_w\}$
  \State sample $\mathbf{C}_j \subseteq \mathbf{S}$ uniformly with
         $|\mathbf{C}_j| = k_s$
  \State sample $D_j \subseteq \mathbf{W}$ uniformly with $|D_j| = k_w$
         \Comment{witnesses dropped}
  \State $\mathbf{T}_j \gets \mathbf{W} \setminus D_j$
         \Comment{active witnesses}
  \State $\mathbf{c}'_j \gets \mathbf{1} - \mathbf{s}^\star\!\mid_{\mathbf{C}_j}$
         \Comment{bit-flip on $\{0,1\}$}
  \State $y^s_j \gets M.\mathrm{run}\bigl(\mathbf{u}_j,\,
         \mathrm{do}(\mathbf{C}_j = \mathbf{s}^\star\!\mid_{\mathbf{C}_j},\,
         \mathbf{T}_j = \mathbf{w}^\star\!\mid_{\mathbf{T}_j})\bigr)$
         \Comment{sufficiency}
  \State $y^n_j \gets M.\mathrm{run}\bigl(\mathbf{u}_j,\,
         \mathrm{do}(\mathbf{C}_j = \mathbf{c}'_j,\,
         \mathbf{T}_j = \mathbf{w}^\star\!\mid_{\mathbf{T}_j})\bigr)$
         \Comment{necessity, shared $\mathbf{u}_j$}
  \For{each $X \in \mathbf{C}_j$}
    \State append $|y^n_j - y^\star|$ to $L[X]$
  \EndFor
\EndFor
\For{each $X\in\mathbf{S}$}
  \State $\mathrm{score}[X] \gets \mathrm{nanmean}(L[X])$
\EndFor
\For{$i = 1,\dots,n$}
  \State $v_i \gets \arg\max\bigl(\mathrm{score}[A_i],\,\mathrm{score}[B_i]\bigr)$
\EndFor
\State \Return $(v_1, \dots, v_n)$
\end{algorithmic}
\end{algorithm}

The algorithm draws $N$ intervention configurations under shared noise.
Line~10 draws the active suspect set $\mathbf{C}_j$ at uniformly chosen
cardinality $k_s \in \{1,\dots,n_s\}$; line~11 draws the dropped witness set $D_j$
the same way at $k_w \in \{0,\dots,n_w\}$, and the active
witnesses on line~12 are the complement $\mathbf{W}\setminus D_j$, so
$n_w$ bounds the number of witnesses \emph{dropped}, not the number kept,
and the active witness set has size $|\mathbf{W}|-n_w$ or larger. Line~2
handles the witness ablation flag $\mathrm{wit}=\text{off}$ once,
by emptying $\mathbf{W}$; this collapses the witness machinery on
lines~11--12 and 14--15 to no-ops, producing the dotted accuracy curves of
Figure~\ref{fig:eval_scaling}.

Alternative values on line~13 are bit flips: on $\{0,1\}$ the
$\epsilon$-excised alternative-value distribution has a single support
point and $\epsilon$ plays no role; we preserve the explicit sampling form
so that the same routine runs on continuous benchmarks. The two
interventional worlds on lines~14--15 share the noise realisation
$\mathbf{u}_j$ drawn on line~8, the standard counterfactual coupling,
which lets us read the per-suspect score on lines~16--18 off as the
nan-mean of $|y^n_j - y^\star|$ over the draws in which
$X\in\mathbf{C}_j$. The algorithm computes the sufficiency outcome $y^s_j$ but does not
consume it: the AC verdict against which we benchmark has no sufficiency
clause, so we read only the necessity world. We exercise the joint estimator with
sufficiency active in Appendices~\ref{sub:synthetic}, \ref{sec:DCE}, and~\ref{sec:sir_benchmark}, where AC has no comparable
verdict to benchmark against. At each site
$i = 1,\dots,n$, line~24 returns the suspect, $A_i$ or $B_i$, with
the higher mean.

\medskip\noindent\textit{Benchmark protocol.}
For each problem size $n$ we sample up to ten distinct factive worlds (with
\(Y=1\)), instantiate the model, run both methods to verdicts, and compare to the
analytic ground truth. We grant each method a 60-minute global budget per run; if
the budget runs out mid-loop, the largest completed $n$ caps the curve. We exercise the
approximate estimator in five configurations that vary along three axes
identified in Section~\ref{sec:causal_impact}:
(i) initial sample budget $N_0\in\{250,500\}$, scaled linearly with $n$ by a factor
$\alpha\in\{0.5,1\}$ so that the total sample at size $n$ is $\alpha\cdot N_0\cdot n$;
(ii) presence of witness interventions ($\mathrm{wit}\in\{\text{on},\text{off}\}$),
ablating $\mathbf{T}\!\leftarrow\!\mathbf{w}^\star$ to test whether witness pinning is
decisive; (iii) cardinality bounds $n_s,n_w$, fixed at $4$ in the default runs
(answering ``does some witness set of size $\geq|\mathbf{W}|-4$ work?'') and grown
linearly as $4n$ in the dynamic-set-size configuration (recovering the unrestricted
subset search at the cost of higher variance). The cardinality bound is a tunable
approximation knob, not a hidden assumption: the dynamic-set-size run measures the
empirical cost of bounding $n_s,n_w$ at $4$ on this problem, and the ablation measures the gap between the two configurations. Section~\ref{sec:causal_impact} gives the conceptual case for
constraining cardinality at all: uniform sampling over the full powerset
biases attributions toward large-coalition interventions, analogous to
declining to enumerate all $n$-way regression interaction terms.

\medskip\noindent\textit{Results.}
The results cover runtime, search-space size, and accuracy. First, runtime
(Figure~\ref{fig:eval_scaling}): the exact procedure times out at $4n+1=17$ variables
($n=4$) once the cause-witness search space crosses sixty-five thousand pairs, while
the approximate methods continue to roughly $73$--$145$ variables depending on
sample budget, with the frugal $\alpha{=}0.5$ run reaching the largest sizes.
Second, search-space size (Figure~\ref{fig:ac_search_space}, log scale): the exact
method visits the full powerset, while the approximate methods visit at most a few
hundred unique configurations per problem size: the gap is several orders of
magnitude and widens with $n$. Third, accuracy (Figure~\ref{fig:eval_scaling}): the
exact method is correct by construction up to its early demise; the witness-using
approximate methods stay above $0.85$ correct-attribution rate up to $73$--$89$
variables and degrade gracefully beyond that, with the dynamic-set-size run
sustaining among the highest accuracies at moderate problem sizes. Witness-free
runs (dotted) underperform at nearly every problem size, starting near $0.67$
already at $n=1$; at matched budget ($250$ samples, $\alpha{=}1$) the witness runs
average $0.941$ against $0.861$ and lead at $22$ of the $26$ shared sizes. The
ablation confirms a structural claim of Section~\ref{sec:causal_impact}: pinning
factual witness values is doing real work.

\begin{figure}[htbp]
\centering
\includegraphics[width=0.92\linewidth]{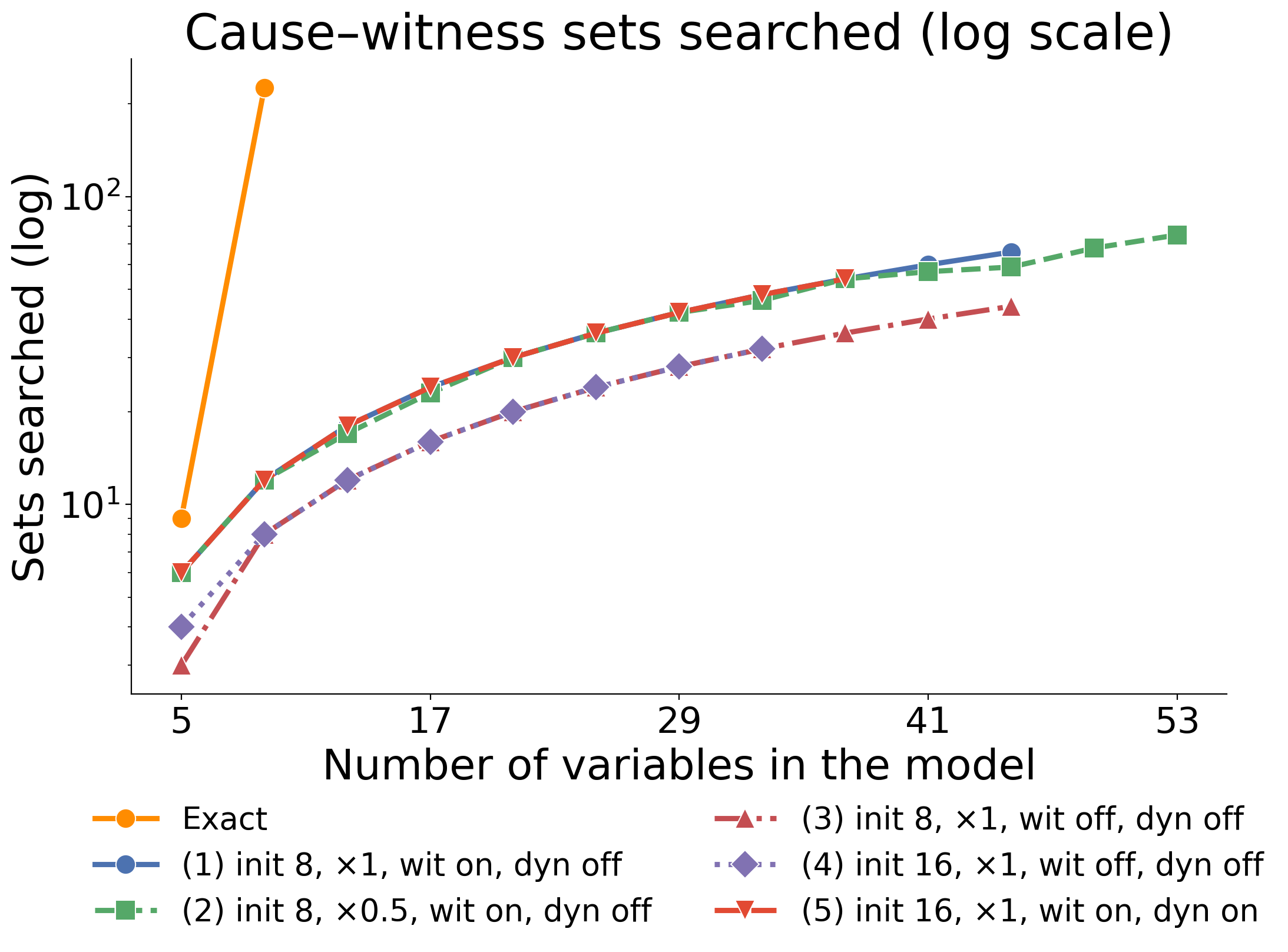}
\caption{Number of cause-witness configurations explored per problem size, log
scale. The exact method visits the full powerset; the approximate methods visit a
few hundred unique configurations per problem size regardless of $n$, a gap of
several orders of magnitude that grows with $n$.}
\label{fig:ac_search_space}
\end{figure}

\medskip\noindent\textit{Scope of the conclusion.}
The benchmark establishes that the necessity-only PCI estimator tracks AC verdicts
on a faithful scaling of the canonical example with several orders of magnitude
fewer evaluated configurations. The problem itself, however, is structured:
responsibility attribution is independent across sites, exactly one of $A_i,B_i$ is responsible at
each site given $C{=}1$, supersets of any witness set witness the same attribution
(so a small witness-cardinality bound costs little), and all variables are binary.
These regularities account for some of the favourable scaling: in particular,
the cardinality-bounded sampling losing little to dynamic-set-sizes sampling on
this problem. Two extensions are natural follow-ups. First, we defer a comparison
against SHAP and causal SHAP on the same scaled-throwing problem to a
companion benchmark (\cref{sec:shap_examples} gives the SHAP-vs-\ourapproach{}
comparison on the smaller overdetermination and continuous-mediation examples); the sufficiency component of PCI distinguishes it from those baselines, and a clean head-to-head
requires the joint kernel rather than the necessity-only restriction used here.
Second, benchmarks that break each of the structural regularities above,
particularly continuous-variable extensions where AC's mechanical enumeration is
no longer even well-defined, would test the estimator outside its current
favourable regime. As a sanity check that the correspondence in
Theorems~\ref{th:ac-exp}--\ref{th:exp-ac} is empirically operational at scale, the
present comparison is sufficient.

The AC benchmark exercises the necessity-only
specialisation of \ourapproach on a discrete-outcome problem with a closed-form
ground truth. \cref{sec:sir_benchmark} shifts to the regime where AC has no
verdict to compare against: a Bayesian dynamical SIR model with a continuous
outcome and asymmetrically interacting policies. The point is to exercise the joint
necessity-sufficiency machinery on threshold events that the actual-causality
machinery cannot adjudicate, and to show that PCI separates lockdown and masking
where but-for analysis collapses them.

\section{Dynamical SIR Benchmark: PCI on Continuous Outcomes}\label{sec:sir_benchmark}

This appendix details the continuous-outcome SIR benchmark summarised in \cref{sec:evaluation}.

The actual-causality benchmark of \cref{sec:ac_benchmark} exercises the
necessity-only specialisation of \ourapproach on a discrete outcome with a
closed-form ground truth. This appendix shifts to the regime where AC has no
verdict to compare against: a Bayesian dynamical model with a continuous
outcome and asymmetrically interacting causes. The benchmark mirrors a public
ChiRho tutorial%
\footnote{\url{https://basisresearch.github.io/chirho/explainable_sir.html}}
in setup and query. We keep the same model and
the same query but replace the explanatory machinery with the \ourapproach
thin-search sampler. The full implementation, raw outputs and plotting code
are in the companion notebook
\nb{sir\_benchmark}.

\medskip\noindent\textit{Model.}
The dynamics are the standard SIR system
\(\dot S = -\beta S I,\, \dot I = \beta S I - \gamma I,\, \dot R = \gamma I\),
with Bayesian priors $\beta\sim\mathrm{Beta}(18,600)$ and
$\gamma\sim\mathrm{Beta}(1600,1600)$. Two non-pharmaceutical policies, each
enacted with prior probability $1/2$, modulate the transmission rate via an
intervention strength $l\in[0,1]$ that scales $\beta_0$ to $(1-l)\beta_0$.
Lockdown alone has efficiency $0.6$; masking efficiency depends on lockdown
($0.1$ under lockdown, $0.45$ on its own); the joint efficiency is the
clamped sum, capped at $0.95$. Lockdown is enacted at $t=1$, masking at
$t=1.5$. The outcome is the \emph{overshoot}, the peak-to-final $S$ gap
of the simulated trajectory, and the undesirable event is
$\mathrm{os\_too\_high}=\mathbbm{1}[\mathrm{overshoot}>24]$ on a $100$-person
population. A policy can \emph{raise} this probability relative to
no intervention at all: suppressing or flattening the infectious peak leaves
more susceptibles unspent at the peak, so they are drawn down more slowly
afterward, which can widen the peak-to-final gap that overshoot measures even
though the intervention is clearly the safer outcome in absolute infection
terms. A high $\Pr(\mathrm{overshoot}>24)$ under intervention is therefore not
a sign error; it is a property of this particular summary statistic.

\medskip\noindent\textit{But-for analysis gives a symmetric verdict.}
Conditioning on each pair of policy decisions and drawing $100$ predictives
yields $\Pr(\mathrm{overshoot}>24) \approx 0.07$ with no intervention, $0.76$
with both policies, $0.84$ with lockdown only, and $0.81$ with mask only; the two single-policy regimes are statistically indistinguishable by
but-for probability alone at this sample budget (a $0.03$ gap on $100$
predictive draws is well within Monte Carlo noise), which is exactly the
symmetry \ourapproach is designed to break.
The mechanism is the asymmetric joint efficiency: lockdown and mask together
reach a transmission reduction of only $0.7$, well short of the model's $0.95$
clamp, and mask's efficiency itself depends on whether lockdown is also
active (falling from $0.45$ alone to $0.1$ under lockdown). But-for cannot
disentangle this context dependence: it treats each policy as a binary cause
and reports an aggregate effect, with no way to express that lockdown's
causal contribution depends on whether masking is in force. Figure~\ref{fig:sir_trajectories}
visualises the four interventional regimes: slowing transmission delays the
infectious peak and leaves more susceptibles unspent when it arrives, so all
three intervention regimes show a similarly elevated overshoot relative to no
intervention, mirroring the near-equal but-for overshoot probabilities.

\begin{figure}[htbp]
\centering
\includegraphics[width=0.92\linewidth]{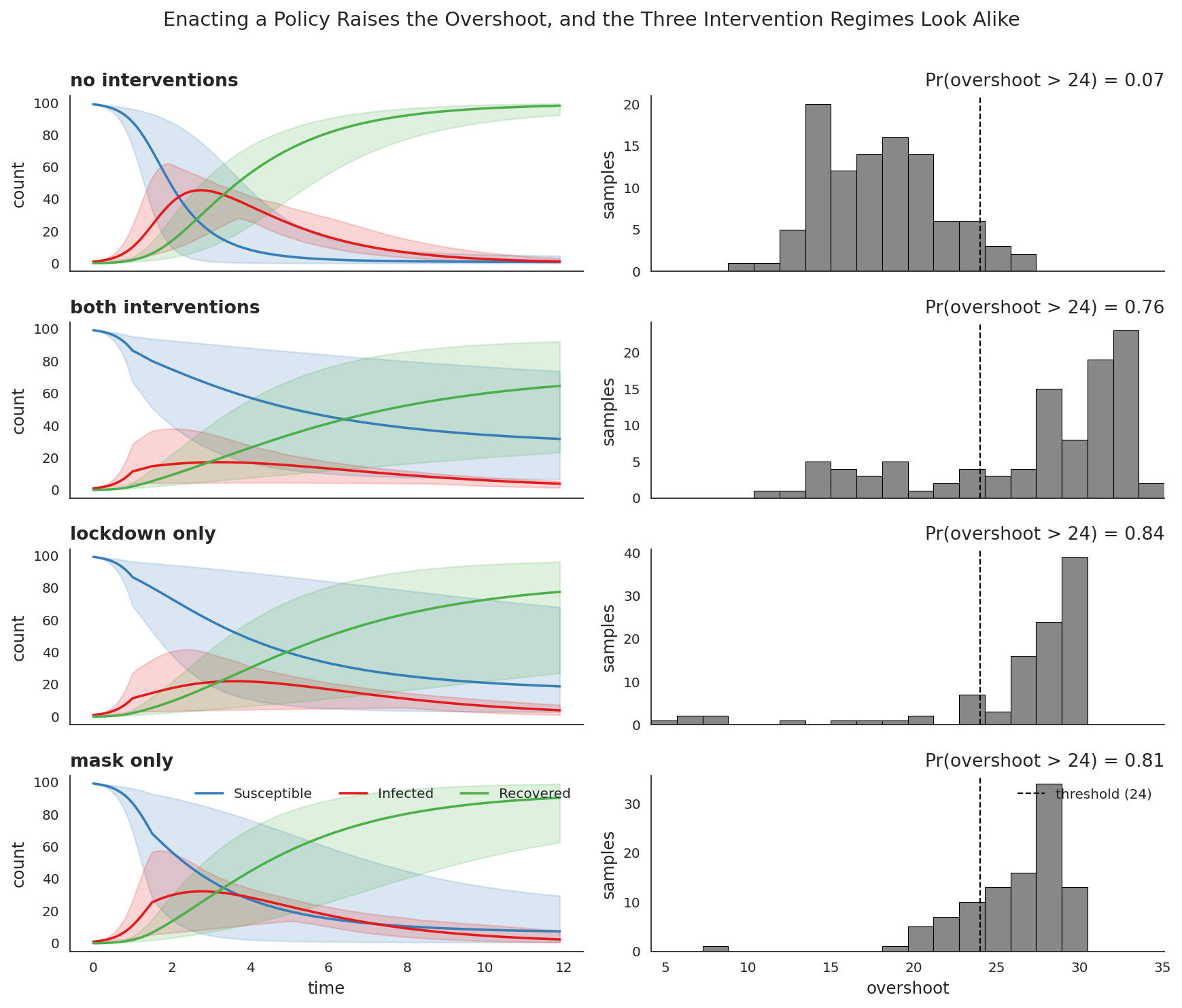}
\caption{SIR trajectories and overshoot distributions under the four
interventional regimes (no interventions, both policies, lockdown only, mask
only). Left: posterior bands for susceptible (blue), infectious (red) and
recovered (green) over time, with the lockdown enacted at $t=1$ and the mask
mandate at $t=1.5$. Right: per-regime histograms of the overshoot, with the
$24$-overshoot threshold marked. Delaying the infectious peak raises the
overshoot in every intervention regime, and the lockdown-only and mask-only
overshoot probabilities are near-indistinguishable: the asymmetric joint
efficiency that \ourapproach disentangles.}
\label{fig:sir_trajectories}
\end{figure}

\medskip\noindent\textit{The qualitative reading the model supports.}
The but-for numbers are symmetric, but the model is not, and the asymmetry
points to a clear qualitative reading that any responsibility method ought to
recover. First, \emph{lockdown is the proximate cause of excessive overshoot}:
it is the policy that does the real work of suppressing the infectious peak,
and the joint regime clears the $24$-overshoot threshold only because lockdown
is in force. Second, \emph{masking is at most a context-sensitive cause}: its
contribution depends on whether lockdown is also active. Masking is
comparatively effective only when lockdown is absent (efficiency $0.45$ versus
$0.1$ under lockdown), so once lockdown is present masking adds little. A good
attribution method should therefore rank lockdown above mask and, more finely,
flag mask's causal status as contingent on context rather than robust. These
two intuitions are the target of the analysis below; they follow from the
model's asymmetric joint efficiency, not from any particular tool, and they
coincide with the verdict the ChiRho tutorial reaches via classical
actual-cause probabilities. We state them here so the comparison that follows
does not depend on consulting that tutorial.

\medskip\noindent\textit{\ourapproach thin-search procedure.}
The suspects are $\{\mathrm{lockdown},\mathrm{mask}\}$. The witnesses are
drawn from the policy efficiencies
$\{\mathrm{lockdown\_efficiency}, \mathrm{mask\_efficiency},
\mathrm{joint\_efficiency}\}$, with the suspect itself excluded from the
witness set on each draw. The factual world fixes both policies on
($\mathrm{lockdown}=1, \mathrm{mask}=1$), so the factual outcome
$y^\star=\mathrm{overshoot}$ is read from the same trajectory. For each
suspect the sampler draws $2{,}500$ regimes, scoring each with the
$|y^\star-y_{\mathrm{nec}}| - |y^\star-y_{\mathrm{suff}}|$ statistic and
averaging across regimes. The $\mathrm{nec}$ term rewards regimes where
removing the suspect moves the outcome \emph{away} from $y^\star$; the
$\mathrm{suff}$ term rewards regimes where fixing the suspect at factual
keeps the outcome \emph{close} to $y^\star$.

\medskip\noindent\textit{The joint necessity--sufficiency plane.}
Before turning to numbers, it helps to see what the search produces. For each
suspect the sampler returns, per regime, a pair of outcomes: the
\emph{necessity-world} overshoot $y_{\mathrm{nec}}$ (the focal suspect set to
its alternative, any other active suspects in $\mathbf{C}\setminus\{X\}$
intervened, witnesses pinned at factual) and the \emph{sufficiency-world}
overshoot $y_{\mathrm{suff}}$ (the focal suspect held at factual, other active
suspects intervened, witnesses at factual). These are the necessity- and
sufficiency-world outcomes $y^n, y^s$ of Section~\ref{sec:causal_impact}; we
write $y_{\mathrm{nec}}, y_{\mathrm{suff}}$ here to match the figure axes. Plotting one point per regime in the
$(y_{\mathrm{nec}}, y_{\mathrm{suff}})$ plane gives a per-suspect cloud
(Figure~\ref{fig:eval_sir}) that exposes the score geometry before any
averaging. Three reference lines orient the plane: the dotted red cross-hair
marks the factual outcome $y^\star$ at the both-policies-on world
($\mathrm{lockdown}=\mathrm{mask}=1$); because that world is the high-overshoot
scenario, $y^\star$ sits near the upper-right of the cloud rather than at its
centre. The dash-dot blue line marks the $24$-overshoot threshold, and the
dashed black diagonal is $y_{\mathrm{nec}}=y_{\mathrm{suff}}$. The per-regime
score $|y^\star-y_{\mathrm{nec}}| - |y^\star-y_{\mathrm{suff}}|$ is positive
exactly when a point lies \emph{further from the cross-hair along the necessity
axis than along the sufficiency axis}: necessity pulls the outcome away from
factual (evidence the suspect was needed), while sufficiency keeps it close
(evidence the suspect alone sustains the outcome). A cause therefore shows up as
a cloud displaced toward smaller $y_{\mathrm{nec}}$ at fixed $y_{\mathrm{suff}}$ (the \emph{cause signature} on this geometry), whereas a non-cause hugs the
diagonal. Reading the two panels this way, the lockdown cloud is clearly
displaced above the diagonal toward smaller $y_{\mathrm{nec}}$, while the mask
cloud sits closer to it: the visual form of the verdict the next paragraph
quantifies.

\medskip\noindent\textit{Single-world results.}
\ourapproach ranks lockdown well above mask, a gap well outside Monte Carlo
noise:
\begin{center}
\small
\begin{tabular}{lccccc}
\toprule
suspect & $\overline{|y^\star\!-\!y_{\mathrm{nec}}|}$ &
          $-\overline{|y^\star\!-\!y_{\mathrm{suff}}|}$ &
          $\overline{ci}$ & $q$ & \ourapproach $=q^2\,\overline{ci}$ \\
\midrule
lockdown & $8.276$ & $-6.578$ & $1.698$ & $0.754$ & $\mathbf{0.965 \pm 0.091}$ \\
mask     & $7.536$ & $-6.673$ & $0.863$ & $0.754$ & $\mathbf{0.491 \pm 0.079}$ \\
\bottomrule
\end{tabular}
\end{center}
Here $q = \Pr_\Gamma[X_k\in\mathbf{C}] = 0.754$ is the sub-probability mass of
the suspect-containing regimes, so the reported score is the $q^2$-scaled
expectation of Definition~\ref{def:causalimpact} rather than a renormalised
conditional mean; the quarter of draws that leave the suspect out contribute
zero and stay in the average. The gap is driven mostly by the necessity
term: removing lockdown moves the overshoot further from $y^\star$ than
removing mask does ($8.276$ vs $7.536$ in mean absolute distance, $0.740$ of
the $0.835$ total gap), while the sufficiency terms differ by only $0.095$. This is consistent with the but-for finding that either
policy \emph{alone} is roughly equally bad. What distinguishes the suspects
is that lockdown's removal \emph{under context} (with witnesses pinned to
factual) pulls the world further from $y^\star$ than mask's removal does: the lockdown cloud's displacement above the diagonal toward smaller
$y_{\mathrm{nec}}$ in Figure~\ref{fig:eval_sir}, against the mask cloud's
near-diagonal position.

\begin{figure}[htbp]
\centering
\includegraphics[width=0.92\linewidth]{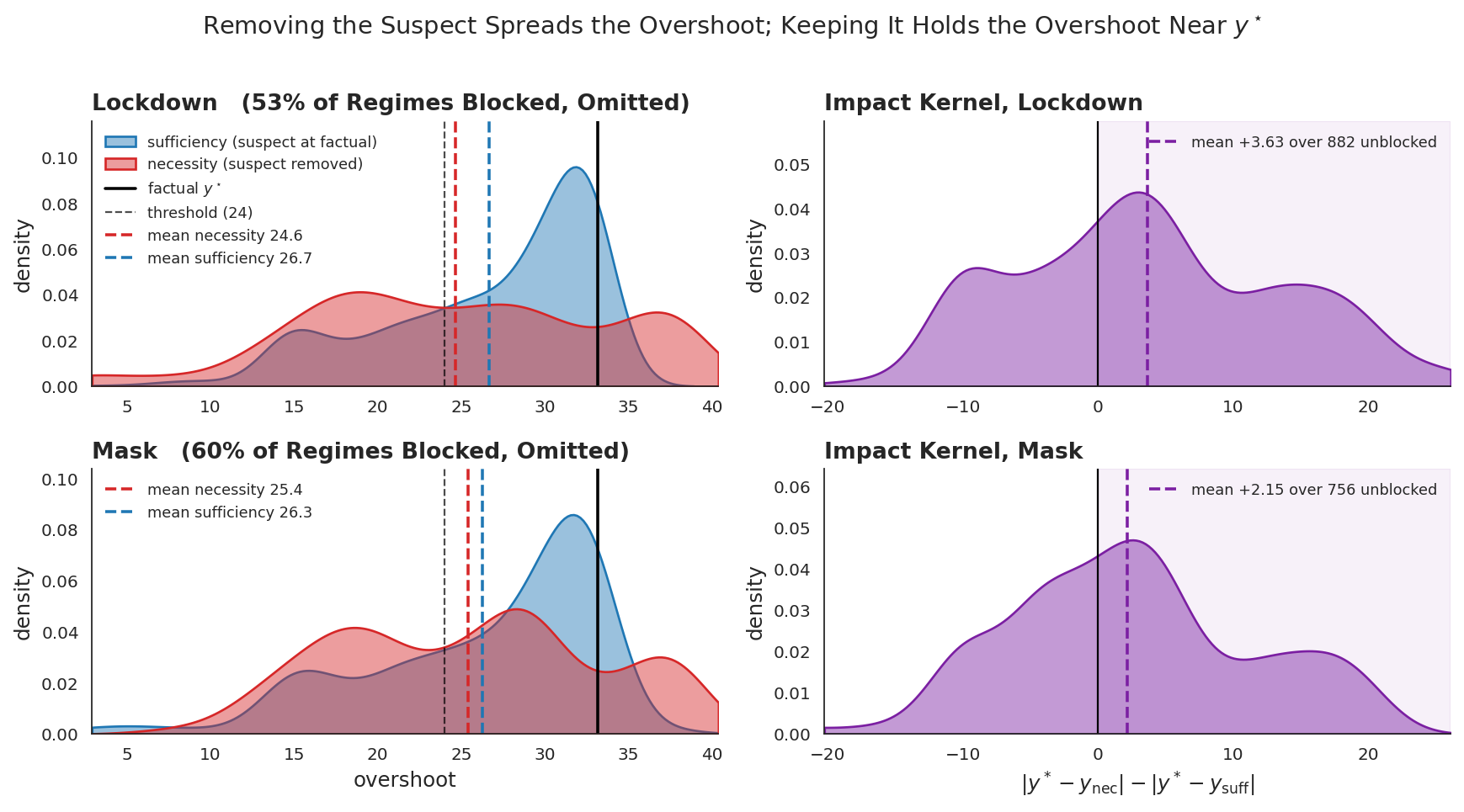}
\caption{Impact-kernel densities per suspect. Left: the necessity world
(suspect removed) spreads the overshoot away from $y^\star$, while the
sufficiency world (suspect held at factual) keeps it near $y^\star$; the
printed means are the two columns of the table above. Right: the resulting
per-draw impact kernel $|y^\star-y_{\mathrm{nec}}|-|y^\star-y_{\mathrm{suff}}|$,
whose $q^2$-scaled mean is the \ourapproach score. The quarter of draws whose
candidate set omits the suspect contribute zero and are excluded from the
density.}
\label{fig:sir_kernels}
\end{figure}

\medskip\noindent\textit{Robustness across factual worlds.}
We replicate the analysis across $20$ factual worlds drawn from the prior
(rather than the single hand-set world above), running thin search on each
with $200$ regimes per suspect. The draws span $\beta \in [0.0160, 0.0398]$
and $\gamma \in [0.481, 0.525]$, giving factual overshoots from $9.86$ to
$33.80$ people, $12$ of the $20$ above the threshold. Lockdown averages
$+0.819$ against mask's $+0.118$, a mean gap of $+0.701$ with a standard
error of $0.111$, and lockdown leads in $18$ of the $20$ worlds. The verdict
is not an artifact of the chosen factual.

\begin{figure}[htbp]
\centering
\includegraphics[width=0.92\linewidth]{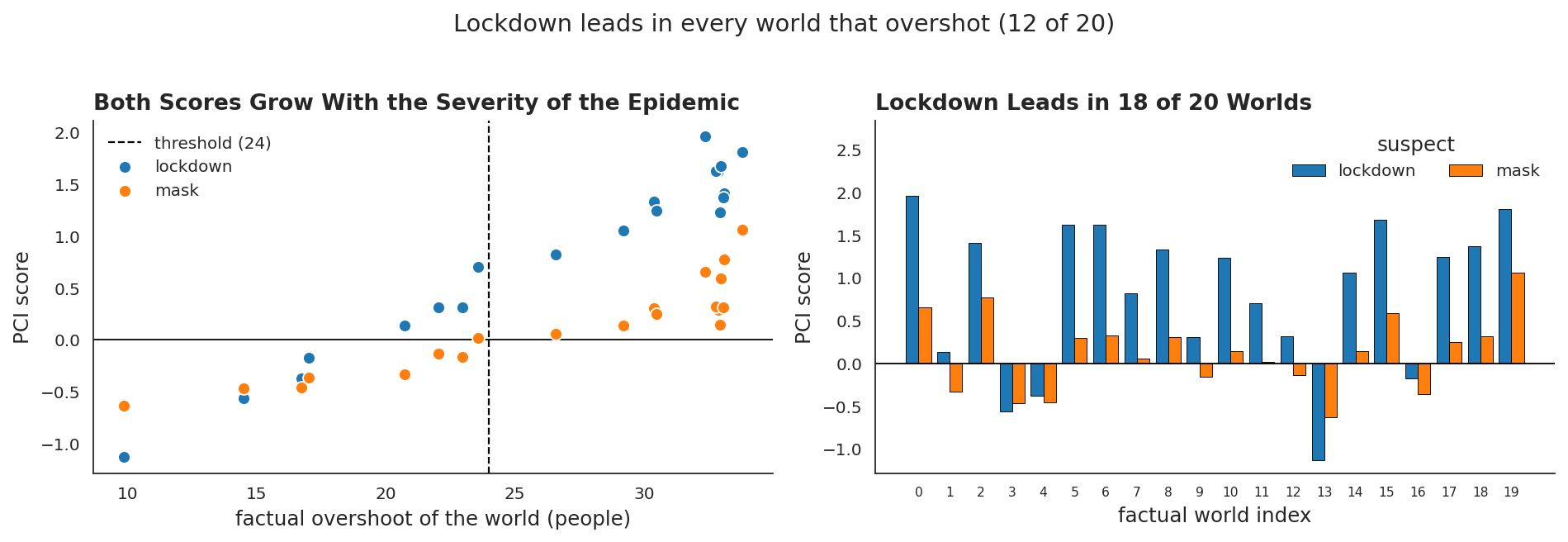}
\caption{The single-world verdict across $20$ factual worlds drawn from the
prior. Left: \ourapproach score against the world's factual overshoot, for
each suspect. Right: the per-world scores in draw order. Lockdown leads in
$18$ of the $20$ worlds, with a mean gap of $+0.701$ (standard error
$0.111$).}
\label{fig:sir_many_worlds}
\end{figure}

\medskip\noindent\textit{Effect of context.}
The ChiRho tutorial isolates one downstream variable at a time, asking
whether the verdict survives when that variable is held fixed at factual
versus left free. We replicate the experiment by partitioning the
\ourapproach regimes by whether the partner-policy efficiency variable was
sampled into the witness set
(Figure~\ref{fig:sir_contexts}). For lockdown, with
$\mathrm{mask\_efficiency}$ \emph{fixed} as a witness the score is $0.494$;
\emph{free}, it rises to $1.684$. For mask, with $\mathrm{lockdown\_efficiency}$
\emph{fixed} the score is $-0.161$ (mask fails to register as a cause at
all under that context regime) and \emph{free} it rises to $1.497$.
This experiment partitions the regimes differently from the single hand-set
world reported earlier, so the fixed/free scores are not expected to average
to that world's scores (lockdown $0.965$, mask $0.491$).
Lockdown therefore registers as a cause in both context regimes, while
mask's status as a cause is contingent on its partner being allowed to
adapt. \ourapproach thus expresses both \emph{which} policy is the dominant cause and \emph{how robustly} that causal claim holds across
counterfactual contexts: a finer-grained verdict than the binary
``actual cause / not'' that classical AC produces.

\begin{figure}[htbp]
\centering
\includegraphics[width=0.92\linewidth]{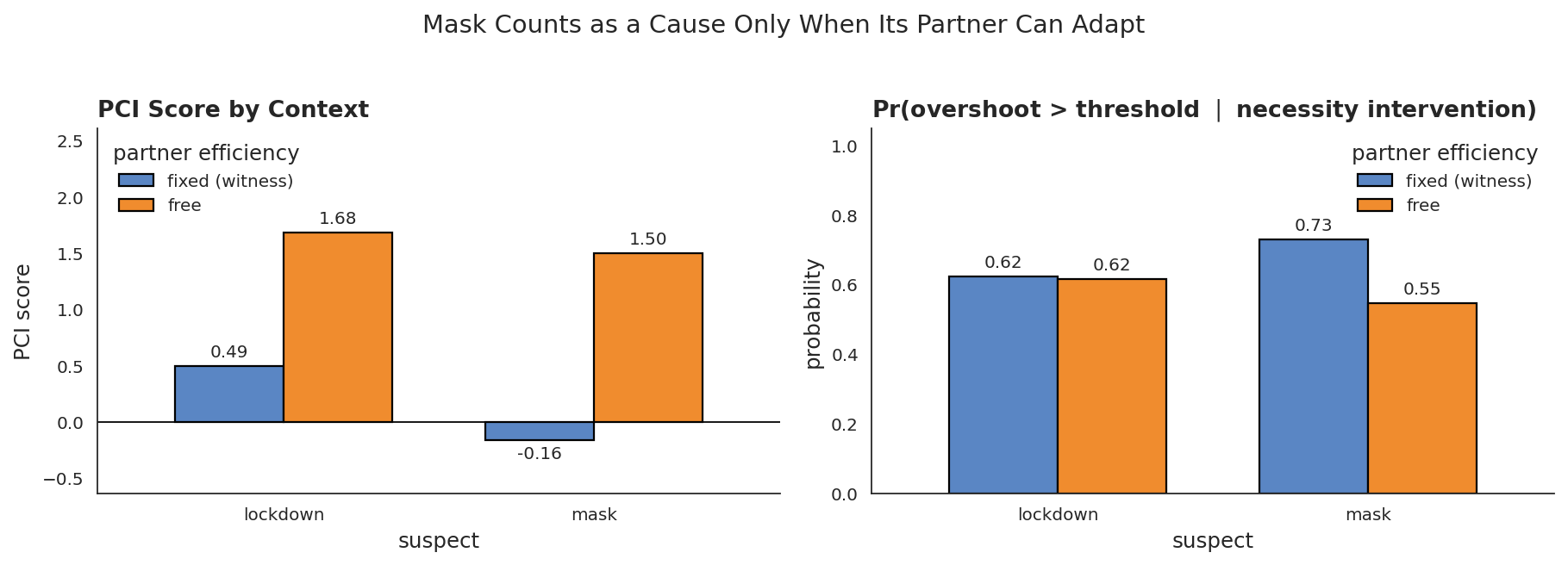}
\caption{Different-contexts experiment. Bars compare PCI total score (left)
and $\Pr(\mathrm{overshoot}>24\mid\mathrm{necessity})$ (right) when the
partner-policy efficiency variable is fixed as a witness versus left free.
Lockdown's score is positive in both regimes; mask's score flips sign
across regimes, indicating that mask is a context-sensitive rather than a
robust cause.}
\label{fig:sir_contexts}
\end{figure}

\medskip\noindent\textit{Takeaway.}
On the dynamical SIR setting, \ourapproach recovers both intuitions set out
at the start of this appendix (lockdown is the proximate cause of excessive overshoot and masking only a context-sensitive one, despite both policies appearing similar under but-for analysis), and expresses them on a
continuous, decomposable scale. The
necessity-vs-sufficiency decomposition distinguishes suspects whose
isolation effects are similar but whose removal-under-context effects
diverge; the contexts experiment further distinguishes robust from
context-sensitive causes. The same machinery applies to any ChiRho
probabilistic program with a continuous outcome and a tractable enumeration
over interventional regimes.

The SIR benchmark scales \ourapproach to a
research-grade Bayesian dynamical model and demonstrates that the joint kernel
captures patterns that AC cannot adjudicate. \cref{sec:avm} pushes the
demonstration one step further: a deployed automated valuation model trained on
millions of points, with deep neural-network components and a sparse variational
GP outcome head. The question there is no longer whether \ourapproach agrees with
a known ground truth (on a proprietary production model none is available),
but whether the computation is feasible at all, and whether the qualitative
SHAP-vs-PCI divergences observed on the synthetic side persist on a real trained
system.

\section{Scaling \ourapproach to a Real-World Automated Valuation Model}\label{sec:avm}

 This appendix details the deployed-AVM study summarised in \cref{sec:evaluation}.

The synthetic archetypes of \cref{sub:synthetic}, the AC benchmark of
\cref{sec:ac_benchmark}, and the SIR benchmark of \cref{sec:sir_benchmark} all
controlled the model and the ground truth, so that \ourapproach could be judged
against an analytic verdict or a competing AC enumeration. Here we drop both,
applying \ourapproach to a deployed valuation model for residential housing on
which neither ground truth nor exact AC enumeration is available. Disclosure
constraints on the trained model and its training data prevent a quantitative
accuracy comparison on this system, so the quantitative validation of
\ourapproach{} lives entirely in the controlled synthetic benchmark of
\cref{sub:synthetic}. The role of this appendix is correspondingly narrow: to
report (i) that \ourapproach{} is computationally feasible at production scale on a
model with millions of training points, and (ii) that the qualitative SHAP-vs-PCI
divergence observed in \cref{sec:shap_examples} persists when the model is a real
trained one: the shape of the two attribution distributions and where
attribution mass lands, not the substantive content of the features they assign to.

\paragraph{Setting.}
The unit of analysis is a residential property transaction. \ourapproach is run
against the full country-level production AVM; what we show here is an
illustrative prototype based on the state of New York rather than
the full national model structure, which we cannot disclose in detail. The NY
prototype shares the architecture and the causal-kernel construction described
below with the full system but is smaller in scope and DAG complexity. The
computational-feasibility claim refers to the full national system; the
qualitative SHAP-vs-PCI behaviour we illustrate is a property of the shared
architecture and carries over from the prototype. The
outcome variable $y$ is the (log-epsilon-standardized) sale price. Transactions are
described by a set of categorical variables $x_{cat} = x_{1}, \dots, x_k$ and continuous
(transformed) variables\footnote{Usually log-epsilon-standardized,
standardized-truncated, or standardized.} $x_{con} = x_{k+1}, \dots, x_n$.

\paragraph{Causal model.}
A causal DAG $\mathcal{G}$ expresses our causal assumptions about the relationships between
variables. We wrote it manually in collaboration with domain experts, then refined it by
adding edges to break implied conditional independencies not upheld by the
data. The simplified NY-prototype DAG has 25 nodes and 99 edges; topology with anonymised node
labels is shown in Figure~\ref{fig:avm_dag}; the full national DAG is larger and is
not shown. As demonstrated in \cref{sec:ac_benchmark}, even the prototype is
already large enough that exact actual-causality enumeration is computationally
infeasible, and the same conclusion applies a fortiori to the national system.

Variables $x_i$ are drawn from distributions $\mathrm{Dist}_i$ parameterized by functions
$f_i(\mathrm{pa}_i, \varphi_i)$ of the parent variables
$\mathrm{pa}_i$\footnote{For root nodes, $\mathrm{pa}_i$ is empty.}:
$x_i \sim \mathrm{Dist}_i(f_i(\mathrm{pa}_i, \varphi_i))$. We call these distributions
\emph{causal kernels}. The distributions are either (truncated) normal or categorical; we choose the
family based on the domain of the variable it models. The functions $f_i$ are deep
neural networks with parameters $\varphi_i$, which we optimize using cross-entropy
loss for categorical distributions and average negative log likelihood for
continuous distributions. The outcome is modeled by a sparse variational
Gaussian Process (SVGP; \citealp{titsias2009variational,hensman2013gaussian}),
$x_{gp} = (x_{con}, \eta(x_{cat}, \varphi_{cat}))$,
$y \sim \mathcal{SVGP}\!\left(\mu(x_{gp}, \varphi_\mu), \kappa(x_{gp}, \varphi_\kappa)\right)$,
where $\eta$ is a linear embedding of the categorical variables into the continuous
space, jointly optimized with the GP.

\begin{figure}[htbp]
\centering
\includegraphics[width=0.85\linewidth]{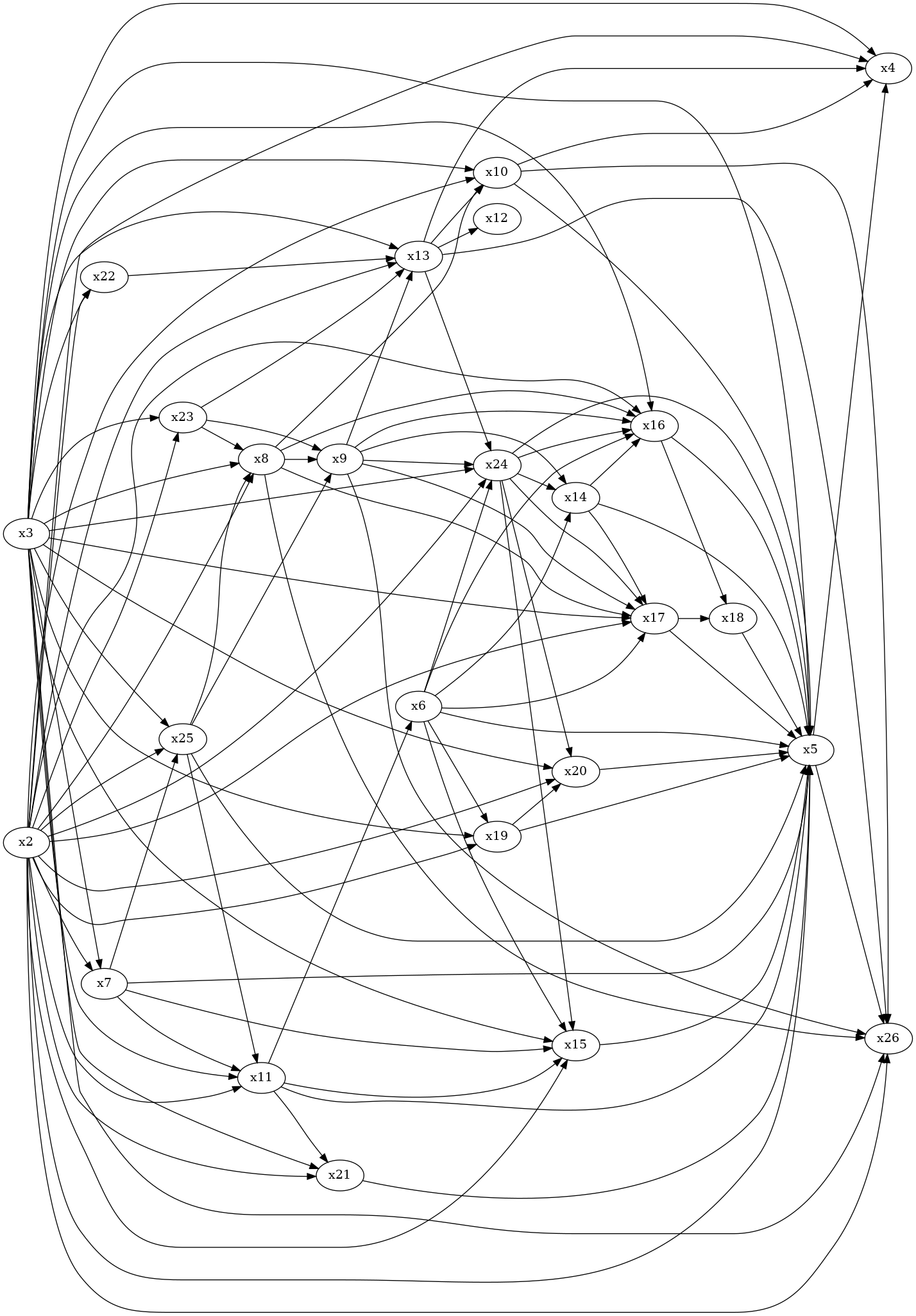}
\caption{Topology of the AVM causal DAG (NY-prototype version), with node labels
anonymised. The graph has 25 nodes and 99 edges; the full national DAG is larger
and is not shown. Several variables sit deep in the DAG with high in-degree; these
are the nodes towards which SHAP allocates disproportionate attribution mass (see
Figure~\ref{fig:eval_avm}).}
\label{fig:avm_dag}
\end{figure}

\paragraph{Feasibility at production scale.}
To apply \ourapproach we transform the model as described in
Section~\ref{sec:causal_impact} and evaluate the impact of features from a list of
potential causal suspects, using the absolute difference impact score of
Example~\ref{ex:absolute_score}. We draw Monte Carlo samples from the expanded model.
Conditioning on a given suspect $X_i$ being active gives us a sample of the scores
corresponding to that feature, which we use to estimate the expected value thereof. The
estimation converges at around \num{25000} samples; for a batch of 50 data points this
takes around 60 minutes on commodity hardware. We need to sample only once; we can then reuse the samples to compute the
attribution score of any suspect. This
establishes that the method is computationally tractable at production scale, not a
foregone conclusion, since the expanded-model construction in
Section~\ref{sec:causal_impact} introduces additional latent dimensions per suspect and
per witness.

\paragraph{Qualitative comparison to SHAP.}
We compare \ourapproach attributions to SHAP scores \citep{lundberg2017unified} on the
same trained model. SHAP heavily weights variables that are downstream of many other
variables in the causal graph, effectively assigning attribution to what amount to
summary statistics of the outcome. \ourapproach distributes attribution more evenly
across the upstream causal structure. The pattern is consistent with what
Section~\ref{sec:shap_examples} establishes on controlled examples: SHAP's observational
marginalisation allocates weight differently from intervention-and-witness-based PCI
scores, and on a deep DAG this difference manifests as concentration on
summary-statistic-like nodes. Figure~\ref{fig:eval_avm} shows the two attribution
distributions with real feature identities withheld under NDA; the two panels
rank their own top variables independently, so only the shape of each
distribution is comparable, not which specific features receive attribution in
one panel versus the other.

\end{document}